\definecolor{tumblue}{RGB}{0, 101, 189}
\definecolor{tumdarkblue}{RGB}{0, 82, 147}
\definecolor{tumlightblue}{RGB}{100, 160, 200}
\definecolor{tumlighterblue}{RGB}{152, 198, 234}
\definecolor{tumgray}{RGB}{153, 153, 153}
\definecolor{tumorange}{RGB}{227,114,34}
\definecolor{tumgreen}{RGB}{162,173,0}
\definecolor{tumlightgray}{RGB}{218, 215, 203}
\definecolor{darkblue}{rgb}{0.0,0.0,0.65}
\definecolor{darkred}{rgb}{0.68,0.05,0.0}
\definecolor{darkgreen}{rgb}{0.0,0.5,0.3}
\definecolor{darkpurple}{rgb}{0.47,0.09,0.29}
\newtheorem{theorem}{Theorem}[section]
\newtheorem*{theorem*}{Theorem}
\newtheorem{proposition}[theorem]{Proposition}
\newtheorem{definition}[theorem]{Definition}
\newtheorem{corollary}[theorem]{Corollary}
\newtheorem{lemma}[theorem]{Lemma}
\theoremstyle{remark}
\def\moverlay{\mathpalette\mov@rlay}
\def\mov@rlay#1#2{\leavevmode\vtop{%
   \baselineskip\z@skip \lineskiplimit-\maxdimen
   \ialign{\hfil$\m@th#1##$\hfil\cr#2\crcr}}}
\newcommand{\charfusion}[3][\mathord]{
    #1{\ifx#1\mathop\vphantom{#2}\fi
        \mathpalette\mov@rlay{#2\cr#3}
      }
    \ifx#1\mathop\expandafter\displaylimits\fi}
\newcommand\numberthis{\addtocounter{equation}{1}\tag{\theequation}}
\newcommand*\circled[1]{\tikz[baseline=(char.base)]{
            \node[shape=circle,draw,inner sep=0.5pt] (char) {#1};}}
\newcommand\restr[2]{{
  \left.\kern-\nulldelimiterspace 
  #1 
  \vphantom{\big|} 
  \right|_{#2} 
  }}
\newcommand{\newterm}[1]{{\textbf{#1}}}
\def\eqref#1{equation~\ref{#1}}
\def\1{\bm{1}}
\def\va{{\bm{a}}}
\def\vb{{\bm{b}}}
\def\vd{{\bm{d}}}
\def\vf{{\bm{f}}}
\def\vh{{\bm{h}}}
\def\vx{{\bm{x}}}
\def\vy{{\bm{y}}}
\def\mA{{\bm{A}}}
\def\mF{{\bm{F}}}
\def\mG{{\bm{G}}}
\def\mH{{\bm{H}}}
\def\mI{{\bm{I}}}
\def\mN{{\bm{N}}}
\def\mS{{\bm{S}}}
\def\mT{{\bm{T}}}
\def\mU{{\bm{U}}}
\def\mW{{\bm{W}}}
\def\mX{{\bm{X}}}
\def\mY{{\bm{Y}}}
\DeclareMathAlphabet{\mathsfit}{\encodingdefault}{\sfdefault}{m}{sl}
\SetMathAlphabet{\mathsfit}{bold}{\encodingdefault}{\sfdefault}{bx}{n}
\def\emA{{A}}
\def\emW{{W}}
\newcommand{\R}{\mathbb{R}}
\newcommand{\N}{\mathbb{N}}
\DeclareMathOperator*{\esssup}{ess\,sup}
\newcommand{\indfct}{\mathds{1}}
\newcommand{\kernel}{\mathcal{W}}
\newcommand{\graphon}{\mathcal{W}_0}
\newcommand{\graphonsignal}[1]{\mathcal{WS}_{#1}}
\newcommand{\igraphon}{\widetilde{\mathcal{W}}_0}
\newcommand{\igraphonsignal}[1]{\widetilde{\mathcal{WS}}_{#1}}
\newcommand{\graphondistH}[2]{\mathbb{H}_{#1}(#2)}
\newcommand{\graphondistG}[2]{\mathbb{G}_{#1}(#2)}
\newcommand{\nonlinearity}{\varrho}
\newcommand{\normsm}[1]{\|{#1}\|}
\newcommand{\norm}[1]{\left\|{#1}\right\|}
\newcommand{\abssm}[1]{|{#1}|}
\newcommand{\abs}[1]{\left|{#1}\right|}
\newcommand{\mpf}{\overline{S}_{[0,1]}}
\newcommand{\mpbij}{S_{[0,1]}}
\newcommand{\mpbijsignal}{S'_{[0,1]}}
\newcommand{\mpfg}[1]{\overline{S}_{#1}}
\newcommand{\Lp}[1]{L^{#1}}
\newcommand{\Lpui}[1]{L^{#1}[0,1]}
\newcommand{\Lpuir}[2]{L^{#1}_{#2}[0,1]}
\newcommand{\Lpuig}[2]{L^{#1}[0,1]^{#2}}
\newcommand{\Lpg}[2]{L^{#1}\!\left(#2\right)}
\newcommand{\Lpgsm}[2]{L^{#1}(#2)}
\newcommand{\boundedlin}[2]{\mathcal{L}({#1}, {#2})}
\newcommand{\LE}[2]{\textnormal{\textsf{LE}}_{{#1} \to {#2}}}
\newcommand{\LEg}[3]{\textnormal{\textsf{LE}}^{#1}_{{#2} \to {#3}}}
\newcommand{\WL}{\textnormal{WL}} 
\newcommand{\FWL}{\textnormal{FWL}} 
\newcommand{\multiset}[1]{\{\!\!\{{#1}\}\!\!\}}
\newcommand{\NNfct}[2]{\mathcal{N}^{#1}_{#2}}
\newcommand{\NNfctclass}[2]{\mathfrak{F}^{#1}_{#2}}
\newcommand{\cWL}{\mathfrak{C}}
\newcommand{\homc}{\mathsf{hom}}
\newcommand{\bell}{\mathsf{bell}}
\newcommand{\tw}{\mathsf{tw}}
\newcommand{\step}{s}
\newcommand{\tstep}{S}
\newcommand{\LinOp}{T}
\def\eqref#1{(\ref{#1})}
\title{Higher-Order Graphon Neural Networks: \\ Approximation and Cut Distance}
\author{Daniel Herbst$^1$ and Stefanie Jegelka$^{1,2,3}$ \\
$^1$TUM, School of CIT $\;$ $^2$TUM, MCML and MDSI $\;$ $^3$MIT, Department of EECS and CSAIL \\
\texttt{\{daniel.herbst, stefanie.jegelka\}@tum.de} \\
}
\begin{document}

\maketitle

\begin{abstract}
Graph limit models, like \emph{graphons} for limits of dense graphs, have recently been used to study size transferability of graph neural networks (GNNs). While most literature focuses on message passing GNNs (MPNNs), in this work we attend to the more powerful \emph{higher-order} GNNs. First, we extend the $k$-WL test for graphons \citep{boker_weisfeiler-leman_2023} to the graphon-signal space and introduce \emph{signal-weighted homomorphism densities} as a key tool. 
As an exemplary focus, we generalize \emph{Invariant Graph Networks} (IGNs) to graphons, proposing \emph{Invariant Graphon Networks} (IWNs) defined via a subset of the IGN basis corresponding to bounded linear operators. Even with this restricted basis, we show that IWNs of order $k$ are at least as powerful as the $k$-WL test, and we establish universal approximation results for graphon-signals in $\Lp{p}$ distances. This significantly extends the prior work of \citet{cai_convergence_2022}, showing that IWNs---a subset of their \emph{IGN-small}---retain effectively the same expressivity as the full IGN basis in the limit. In contrast to their approach, our blueprint of IWNs also aligns better with the geometry of graphon space, for example facilitating comparability to MPNNs. We highlight that, while typical higher-order GNNs are discontinuous w.r.t.\ cut distance---which causes their lack of convergence and is inherently tied to the definition of $k$-WL---transferability remains achievable.
\end{abstract}

\section{Introduction}

Graph Neural Networks (GNNs) have emerged as a powerful tool for machine learning on complex graph-structured data, driving advances in fields like social network analysis \citep{fan_graph_2019}, weather prediction \citep{lam_graphcast_2023} or materials discovery \citep{merchant_scaling_2023}. Message Passing GNNs (MPNNs) \citep{gilmer_neural_2017, kipf_semi-supervised_2017, velickovic_graph_2018, xu_how_2019}, which update node features by neighborhood aggregations, are a popular paradigm.

The question of \emph{size transferability}---whether an MPNN generalizes to larger graphs than those in the training set---has recently gained attention. Unlike \emph{extrapolation} \citep{xu_how_2021, yehudai_local_2021, jegelka_theory_2022}, where generalization to arbitrary graph topologies is considered, size transferability typically assumes structural similarities between the training and evaluation graphs, such as them being sampled from the same random graph model \citep{keriven_convergence_2020}, topological space \citep{levie_transferability_2021}, or graph limit model \citep{ruiz_graphon_2020, ruiz_transferability_2023, ruiz_graph_2021, maskey_generalization_2024, le_limits_2023}. For limits of dense graphs, \emph{graphons} \citep{lovasz_limits_2006, lovasz_large_2012}, which extend graphs to node sets on the unit interval and have been used to study extremal graph theory with analytic techniques, have also become a popular choice for studying transferability. In contrast to sparse graph limits \citep{lovasz_large_2012, backhausz_action_2022}, they offer an established framework with powerful tools, such as embedding the set of all graphs into a compact space and having favorable spectral properties \citep{ruiz_graphon_2021}. In such transferability analyses, a GNN is extended to a function of graphons, and regularity properties of the GNN are then used to bound the difference between outputs of the GNN applied to samples of different sizes from a graphon.

Existing works on transferability have been almost exclusively limited to MPNNs. However, their expressive power is constrained by the 1\emph{-dimensional Weisfeiler-Leman graph isomorphism test} (1-WL), also known as the \emph{color refinement algorithm} \citep{xu_how_2019, morris_weisfeiler_2019}. Hence, standard MPNNs even fail at straightforward tasks such as counting simple patterns like cycles. This motivates extending generalization analyses to more powerful architectures. Most prominent among these are higher-order extensions of MPNNs that are as powerful as the $k$-WL test, $k > 1$, which iteratively colors $k$-tuples of nodes \citep{morris_weisfeiler_2019}. \emph{Invariant} and \emph{Equivariant Graph Networks} (IGNs/EGNs) \citep{maron_invariant_2019}, which serve as an exemplary focus in this work, are a popular architectural choice, in which adjacency matrices and node signals are processed through higher-order tensor operations that maintain permutation equivariance. IGNs/EGNs universally approximate any permutation in-/equivariant graph function, and are as powerful as $k$-WL with orders $\leq k$ \citep{maron_universality_2019, keriven_universal_2019, maehara_simple_2019, azizian_expressive_2021}.

The expressive power of a GNN can also be judged via its \emph{homomorphism expressivity}, i.e., its ability to count the number of homomorphisms from fixed graphs into the input graph. E.g., 1-WL corresponds to counting homomorphisms w.r.t.\ trees, and its higher-order extensions are related to counting homomorphisms w.r.t.\ graphs of bounded treewidth \citep{dvorak_recognizing_2010, dell_lovasz_2018}. In the graphon case, similar results exist for \emph{homomorphism densities} \citep{boker_fine-grained_2023, boker_weisfeiler-leman_2023}. 

In this work, we study expressivity, continuity, and transferability properties of higher-order GNN extensions to graphons.
\citet{maehara_simple_2019} note that their IGN/EGN universality proof---using a parametrization relying on explicit \emph{simple} homomorphism densities---extends to graphons. They regard the use of graphons for such analyses as promising. \citet{keriven_universality_2021} study the convergence of a universal class of GNNs based on node IDs and identify the analysis of higher-order GNNs as an open direction.
The closest related work by \citet{cai_convergence_2022} investigates the convergence of IGNs to a limit graphon via a \emph{partition norm}---a vector of norms over all diagonals of a graphon. They observe that IGNs on graphon-sampled graphs do not always converge. They propose a reduced model class \emph{IGN-small}, which enables convergence after estimating edge probabilities under certain regularity conditions. They also show that IGN-small retains sufficient expressiveness to approximate spectral GNNs. We argue that considering diagonals of graphons (which would lead to a limit object of self-looped graphs) is somewhat nonstandard in the larger body of work in graphon theory, limiting its applicability to their version of IGN limits. Furthermore, the expressivity analysis of IGN-small is rather limited, given that IGNs are typically universal GNN architectures.

\textbf{Contributions.} A first focus of this work is to extend the $k$-WL test \citep{boker_weisfeiler-leman_2023} and homomorphism densities from graphons to graphon-signals \citep{levie_graphon-signal_2023}, i.e., node-attributed graphons. For the extension of homomorphism densities, we introduce signal weighting and show that \emph{signal-weighted homomorphism densities} inherit most topological properties from their graphon equivalent.

We generalize IGNs to graphon-signals, introducing \emph{Invariant Graphon Networks (IWNs)}. In contrast to \citet{cai_convergence_2022}, we restrict linear equivariant layers to \emph{bounded operators}, and, thus, our IWNs can be analyzed using $\Lp{p}$ and cut distances, enhancing comparability to the existing graphon literature. 
Using only this reduced basis, we show that IWNs of order up to $k$ are as powerful as the $k$-WL test for graphon-signals and we establish universal approximation results in $\Lp{p}$ distances.
As IWNs are a subset of IGN-small, this significantly extends the work of \citet{cai_convergence_2022}, resolving the open questions posed in their conclusion: We show that the restriction to IGN-small comes at no cost in terms of expressivity. We carry out expressivity analyses using our notion of homomorphism expressivity via signal-weighted homomorphism densities.
IWNs are discontinuous w.r.t.\ the cut distance and only continuous in the finer topologies induced by $\Lp{p}$ distances, which do not represent our intuitive notion of graph similarity \citep{levie_graphon-signal_2023}. This discontinuity is not unique to IWNs, but inherently tied to the way in which $k$-WL processes edge weights, and results in a large class of higher-order GNNs defined via color refinement exhibiting an absence of convergence under sampling simple graphs. Yet, despite this discontinuity, transferability is still possible.

To the best of our knowledge, this work is the first to extend higher-order GNNs to graphons in a way that facilitates a systematic study of \emph{continuity}, \emph{expressivity}, and \emph{transferability} in comparison to MPNNs, addressing the aforementioned challenges of \citet{cai_convergence_2022} while building on the foundational work of \citet{boker_weisfeiler-leman_2023}.
In summary, we make the following contributions:

\vspace{-4pt}
\begin{itemize}
    \setlength{\itemsep}{0pt}
    \setlength{\parskip}{0pt}
    \item We define \emph{signal-weighted homomorphism densities}, link them to a natural extension of the $k$-WL test to graphon-signals, and show how they capture graphon-signal topology.
    \item We introduce \emph{Invariant Graphon Networks (IWNs)}, restricting linear equivariant layers to bounded operators. We show that IWNs of order $k$ are at least as powerful as $k$-WL for graphon-signals, and establish universal approximation, extending  \citet{cai_convergence_2022}.
    \item We point out the cut distance discontinuity of typical higher-order GNNs and demonstrate that such models are still transferable despite not converging to their graphon limit.
\end{itemize}

\section{Background}\label{sec:background}

In this section, we provide background on graphon theory, homomorphism expressivity, and the $k$-WL test for graphons, as well as on how to extend graphons to incorporate node signals. Contents of \autoref{subsec:background_graphons} and \autoref{subsec:homomorphism_expressivity} are mostly drawn from \citet{lovasz_large_2012, janson_graphons_2013, zhao_graph_2023}, while in \autoref{subsec:homomorphism_expressivity} we also refer to \citet{boker_weisfeiler-leman_2023}. In \autoref{subsec:background_graphon_signals} we summarize key results of \citet{levie_graphon-signal_2023}.

For $n \in \N$, write $[n] := \{1, \dots, n\}$. Unless stated otherwise, a graph always refers to a \emph{simple graph}, meaning an undirected graph $G = (V, E)$ with a finite node set $V(G) = V$ and edge set $E(G) = E \subseteq \binom{V}{2}$. Define also $v(G) := \abs{V(G)}$, $e(G) := \abs{E(G)}$. We will also consider \emph{multigraphs}, for which the edges are a \emph{multiset}.
Write $\lambda^k$ for the $k$-dimensional Lebesgue measure; $\lambda := \lambda^1$. See \autoref{apd:notation} for a table of notation and \autoref{apdsubsec:topology_measure_theory} for background on topology and measure theory.

\subsection{General Background on Graphon Theory}\label{subsec:background_graphons}

\vspace{-3pt}
\paragraph{Graphons.}

Informally, a graphon can be seen as a graph with node set $[0,1]$, and the adjacency matrix being represented by a function on the unit square. Intuitively, graphons can be obtained by taking the limit of adjacency matrices of \emph{dense} graph sequences as the number of nodes grows.
Formally, we first define a \newterm{kernel} as a bounded symmetric measurable function $W: [0, 1]^2 \to \R$. A \newterm{graphon} is a kernel mapping to $[0,1]$. Write $\kernel$ for the space of kernels; $\graphon$ for graphons. Define the \newterm{cut norm} of a kernel as
\begin{equation}
    \normsm{W}_\square := \sup_{S, T \subseteq [0,1]} \abs{\int_{S \times T} W \mathrm{d} \lambda^2}, \label{eq:cut_norm}
\end{equation}
where $S, T$ are tacitly assumed measurable.
Let $\mpbij$ be the set of measure preserving maps $\varphi: [0,1] \to [0,1]$ that are bijective almost everywhere; that is, for every measurable $A \subseteq [0,1]$, we have $\lambda(\varphi^{-1}(A)) = \lambda(A)$. Write $\mpf$ for the set of \emph{all} measure preserving functions and define $W^\varphi(x, y) \, := \, W(\varphi(x), \varphi(y))$ for $\varphi \in \mpf$. Since the specific ordering of the graphon values does not matter, we work with the \newterm{cut distance} \citep[§ 8.2]{lovasz_large_2012} between two graphons, defined as
\begin{equation}
    \delta_\square(W, V)  \, := \, \inf_{\varphi \in \mpbij} \normsm{W - V^\varphi}_\square \, = \, \min_{\varphi, \psi \in \mpf} \normsm{W^\varphi - V^\psi}_\square, \label{eq:cut_distance}
\end{equation}
where the infimum is only guaranteed to be attained in the latter expression.
Analogously, we can define distances $\delta_p$ on graphons based on $\Lp{p}$ norms, $p \in [1, \infty]$. Note that $\delta_\square \leq \delta_1$ (which follows from moving $\abs{\cdot}$ into the integral in \eqref{eq:cut_norm}) and $\delta_p \leq \delta_q$ for $p \leq q$. $\delta_\square$ and $\{\delta_p\}_p$, $p < \infty$, vanish simultaneously. However, the topology induced by $\delta_\square$ is strictly coarser than that induced by $\{\delta_p\}_{p \in [1, \infty)}$ (all of which coincide), which is in turn coarser than the topology induced by $\delta_\infty$.
The most commonly used among $\{\delta_p\}_p$ is $\delta_1$, which corresponds (up to an absolute constant; see \citet[§ 9]{lovasz_large_2012}) to the edit distance on graphs.
We identify \newterm{weakly isomorphic} graphons of distance 0 to form the space $\igraphon$ of \emph{unlabeled graphons}. The stricter concept of (strong) isomorphism, namely that the minimum over $\mpbij$ in \eqref{eq:cut_distance} is attained and zero, is less practical. The usefulness of $\delta_\square$ over any $\delta_p$ lies in the fact that $(\igraphon, \delta_\square)$ forms a \emph{compact space} \citep[Theorem 9.23]{lovasz_large_2012}.

\vspace{-3pt}
\paragraph{Discretization and sampling.}

Any labeled graph $G$ with adjacency $\mA \in \R^{n \times n}$ can be identified with its \emph{induced step graphon} $W_G := \sum_{j=1}^n \sum_{k=1}^n \emA_{jk} \indfct_{I_j \times I_k}$ for a regular partition $\{I_j\}_{j=1}^n$ of the unit interval, and finite graphs are dense in the graphon space \citep[Theorem 4.2.8]{zhao_graph_2023}.
Graphons can also be seen as random graph models: Draw $\mX \sim U(0,1)^n$, and let $W(\mX)$ be a graph with edge weights $\emW_{ij} = W(X_i, X_j)$. If $e_{ij} \sim \mathrm{Bernoulli}(W_{ij})$ is further sampled, we obtain an unweighted graph $\mathbb{G}(W, \mX)$. Write $\graphondistH{n}{W}$ and $\graphondistG{n}{W}$ for the respective distributions.
We have $\delta_\square(\graphondistH{n}{W}, W) \leq \delta_1(\graphondistH{n}{W}, W) \to 0$ \citep[Lemma 4.9.4]{zhao_graph_2023}. Also $\delta_\square(\graphondistG{n}{W}, W) \to 0$ \citep[Proposition 11.32]{lovasz_large_2012}, but this does not hold for $\delta_1$: Take, e.g., $W \equiv 1/2$, then $\delta_1(W, \graphondistG{n}{W}) = 1/2$ for all $n \in \N$.

\vspace{-3pt}
\paragraph{Homomorphism densities.} Let $\homc(F, G)$ denote the number of homomorphisms from a graph $F$ into a graph $G$. The corresponding \newterm{homomorphism density} is defined as $t(F, G) := \homc(F,G)/v(G)^{v(F)}$, i.e., the proportion of homomorphisms among all maps $V(F) \to V(G)$. We define the homomorphism density of a (multi)graph $F$ with $V(F) = [k]$ to a graphon $W$ by
\begin{equation}
    t(F, W) := \int_{[0,1]^k} \prod_{\{i, j\} \in E(F)} W(x_i, x_j) \, \mathrm{d} \lambda^k(\vx),
\end{equation}
where factors may appear multiple times for a multigraph.
This generalizes the discrete concept in the sense that $t(F, G) = t(F, W_G)$.
For a sequence $(W_n)_n$ of graphons, $\delta_\square(W_n, W) \to 0$ iff $t(F, W_n) \to t(F, W)$ for all \emph{simple} graphs $F$, and thus two graphons $W, V$ are weakly isomorphic iff $t(F, W) = t(F, V)$ for all simple graphs $F$. Hence, homomorphism densities can be seen as a counterpart of moments of a real random variable for $W$-random graphs, as they fix the distribution $\graphondistG{n}{W}$ similarly as the moments would for a well-behaved real random variable \citep{zhao_graph_2023}.

\subsection{$k$-WL and Homomorphism Expressivity}\label{subsec:homomorphism_expressivity}

In the discrete setting, the \emph{1-dimensional Weisfeiler-Leman (1-WL) graph isomorphism test} (or \emph{color refinement algorithm}) and its multidimensional extensions are widely used to judge the expressive power of a GNN model. Alternatively, the model's \newterm{homomorphism expressivity}, i.e., its ability to count the number of homomorphisms from smaller graphs, called \emph{patterns}, into the input graph, can be considered. Through homomorphic images \citep[§ 6.1]{lovasz_large_2012}, this also relates to subgraph counting \citep{chen_can_2020, tahmasebi_power_2023, jin_homomorphism_2024}. 1-WL expressivity corresponds precisely to distinguishing graphs for which the values of $\homc(T, \cdot)$ differ if $T$ are trees. More generally, $k$-WL can be precisely characterized as being able to compute $\{\homc(F, \cdot)\}_F$, with $F$ ranging over all simple graphs of treewidth bounded by $k$ \citep{dvorak_recognizing_2010, dell_lovasz_2018}. See also \autoref{apdsubsec:tree_decomposition} for more information on treewidth and the tree decomposition of a graph. A finer characterization for various GNN architectural choices was recently shown by \citet{zhang_beyond_2024}.

The $1$-WL test has been extended to graphons by \citet{grebik_fractional_2022} through the concept of \newterm{iterated degree measures (IDMs)}. These serve as the continuous counterpart of the color space used in the color refinement algorithm for graphons and are represented by sequences $(\alpha_n)_n$ of colors after $n$ refinement rounds. The \emph{distribution} of such colors, akin to the multiset of all assigned colors per node, represents the result of the isomorphism test. As in the discrete case, two graphons are $1$-WL indistinguishable iff their tree homomorphism densities match.

Recently, \citet{boker_weisfeiler-leman_2023} developed a $k$-WL test for graphons using distributions of \emph{$k$-$\WL$ measures}. Intuitively, for a given graphon $W$, the mapping $\cWL_{W}^{k\text{-}\WL}$ assigns a color to every $k$-tuple of nodes in $[0,1]^k$; these colors are elements of a topological space $\mathbb{M}^k$ called \newterm{$k$-WL measures}. The resulting \newterm{$k$-WL distribution}, defined as the pushforward $\nu_{W}^{k\text{-}\WL} := (\cWL_{W}^{k\text{-}\WL})_\ast \lambda^k \in \mathcal{P}(\mathbb{M}^k)$, is a Borel probability measure on $\mathbb{M}^k$ that captures the test's output. Notably, the homomorphism characterization of this natural $k$-WL test is given in terms of \emph{multigraph} homomorphism densities w.r.t.\ patterns of bounded treewidth (rather than \emph{simple} graphs as for 1-WL). See also \autoref{apd:k_wl_idms}. Note that in this work, $k$-WL always refers to the \emph{oblivious} $k$-WL instead of the \emph{Folklore} $k$-WL test, which also processes $k$-tuples but is $(k+1)$-WL expressive \citep{cai_optimal_1992, grohe_pebble_2015, jegelka_theory_2022}.

\subsection{Extension to Graphon-Signals}\label{subsec:background_graphon_signals}

Most common GNNs take a graph-signal $(G, \vf)$ as inputs, i.e., a graph $G$ with node set $[n] := \{1, \dots, n\}$ and a signal $\vf \in \R^{n \times k}$, with $k$ being the number of features. \citet{levie_graphon-signal_2023} extends this definition to graphons with one-dimensional node signals. They fix $r > 0$, consider signals in $\Lpuir{\infty}{r} := \{f \in \Lpui{\infty} \,|\, \norm{f}_\infty \leq r\}$, and set $\norm{f}_\square := \sup_{S \subseteq [0,1]} \abs{\int_S f \, \mathrm{d} \lambda}$ with $S$ measurable. Note that this is equivalent to the signal $\Lp{1}$ norm.
They then let $\graphonsignal{r} := \graphon \times \Lpuir{\infty}{r}$ and define the \emph{cut norm} $\normsm{(W, f)}_\square := \norm{W}_\square + \norm{f}_\square$.
Define $\delta_\square$ and $\delta_p$, step graphon-signals, and sampling from graphon-signals analogously to the standard case. E.g., write $\graphondistG{n}{W, f}$ for the distribution of $(\mathbb{G}(W, \mX), f(\mX))$, $\mX \sim U(0, 1)^{n}$. Also, identify weakly isomorphic graphon-signals of cut distance zero to obtain the space $\igraphonsignal{r}$ of \emph{unlabeled} graphon-signals.
Central to their contribution, \citet{levie_graphon-signal_2023} establishes the compactness of $(\igraphonsignal{r}, \delta_\square)$ and bounds its covering number (cf.\ \autoref{apdsubsec:graphon_signal_space}). They also derive a sampling lemma: For $r > 1$, $(W, f) \in \graphonsignal{r}$, and large $n \in \N$,
\vspace{2pt}
\begin{equation}
    \mathbb{E} \left[ \delta_\square\big( (W,f), \mathbb{G}_n(W, f)) \big) \right] \, \leq \, 15 \, (\log n)^{-1/2}. \label{eq:graphon_signal_sampling_lemma}
\end{equation}

\section{Signal-Weighted Homomorphism Densities}\label{sec:signal_weighted_homomorphism_densities}

It is important to note that \citet{boker_fine-grained_2023, boker_weisfeiler-leman_2023} focus exclusively on graphons and do not consider graphon-signals, and \citet{levie_graphon-signal_2023} does not introduce a notion of homomorphism densities for graphon-signals either. However, since most GNN architectures in the literature operate on node-featured graphs, we need a concept of homomorphism densities that reflects the properties of the graphon-signal space well. This could then, e.g., be applied to characterize the homomorphism expressivity of GNN models on graphon-signals, similar to the approach for graphons outlined in \autoref{subsec:homomorphism_expressivity}. As in \citet[§ 5.2]{lovasz_large_2012} for finite graphs, we introduce weighting by signals.
\begin{definition}[Signal-weighted homomorphism density]\label{def:signal_weighted_hom_density}
    Let $F$ be a multigraph with $V(F) = [k]$, $\vd \in \N_0^k$, and let $(W, f) \in \graphonsignal{r}$. We set
    \vspace{-1pt}
    \begin{equation}
        t((F, \vd), (W, f)) \, := \, \int_{[0,1]^k} \Big( \prod_{i \in V(F)} f(x_i)^{d_i} \Big) \Big( \prod_{\{i, j\} \in E(F)} W(x_i, x_j) \Big) \, \mathrm{d} \lambda^k(\vx), \label{eq:homomorphism_density_signal}
    \end{equation}
    \vspace{-1pt}
calling the functions $t((F, \vd), \cdot)$ \newterm{signal-weighted homomorphism densities}.
\end{definition}
Note that setting $\vd=\mathbf{0} \in \N_0^k$ recovers the graphon homomorphism densities $t((F, \mathbf{0}), (W, f)) = t(F, W)$. $\vd \neq \mathbf{0}$ will allow us to consider moments of the signal, which could alternatively be viewed as a multiset of \emph{nodes}, similarly to homomorphism densities of multigraphs. This enables us to capture the distribution of the signal, coupled with the graph structure, which will be crucial for \eqref{eq:homomorphism_density_signal} to separate non-weakly isomorphic graphon-signals. In contrast to common approaches in the GNN literature, only considering $\vd = \bm{1}$ does not suffice in our case, as this only distinguishes graphs under twin reduction. Restricting the exponents to be the same across all nodes as in \citet{nguyen_graph_2020} results in $\{t((F, \vd), \cdot)\}_{F, \vd}$ not being closed under multiplication, which would later pose challenges. The finite-graph approach of enforcing homomorphisms to respect signal values does not extend to graphon-signals---the level sets \( \{ f = t \} \) may all have measure zero, making homomorphism densities degenerate. A possible workaround is incorporating \enquote{similarity kernels} in \eqref{eq:homomorphism_density_signal} for approximate matches, but this also appears to be less practical.

The definition of signal weighting assumes a scalar-valued signal $f$, with integer node features $\vd$ acting via $x \mapsto x^{d_i}$. This could be straightforwardly extended to signals mapping into a compact space $K$, where we define signal-weighted homomorphism densities by replacing pattern node features with continuous functions $\bm{\omega} \in C(K)^{v(F)}$ applied pointwise to $f(x_i)$. To uniquely determine a graphon-signal, it suffices to use a dense subset of the continuous functions $C(K)$, such as polynomials for $K = [-r, r]$, which form algebras and enable the use of the Stone-Weierstrass theorem.

As a first step, we derive a counting lemma (cf.\ \autoref{apdsubsec:counting_lemma_graphon_signals}) similar to the standard graphon case \citep[Lemma 10.23]{lovasz_large_2012}, which shows that signal-weighted homomorphism densities from \emph{simple} graphs into a graphon-signal are Lipschitz continuous w.r.t.\ cut distance. A similar statement can also be shown for all \emph{multigraphs} using $\delta_1$. However, the main justification for \autoref{def:signal_weighted_hom_density} is \autoref{thm:cut_distance_equivalences} (akin to Theorem 8.10 from \citet{janson_graphons_2013}), as well as \autoref{cor:cut_distance_convergence}, demonstrating how signal-weighted homomorphism densities capture weak isomorphism and the topological structure of the graphon-signal space in a similar way as do homomorphism densities for graphons:
\begin{restatable}[Characterizations of weak isomorphism for graphon-signals]{theorem}{cutdistanceequivalences}\label{thm:cut_distance_equivalences}
    Fix $r>1$ and let $(W, f), (V, g) \in \graphonsignal{r}$. Then, the following statements are equivalent:
    \vspace{-2pt}
    \begin{enumerate}[(1)]
        \setlength{\itemsep}{0pt}
        \setlength{\parskip}{0pt}
        \item $\delta_p((W, f), (V, g)) = 0$ for any $p \in [1, \infty)$;
        \item $\delta_\square((W, f), (V, g)) = 0$;
        \item $t((F, \vd), (W, f)) = t((F, \vd), (V, g))$ for all multigraphs $F$, $\vd \in \N_0^{v(F)}$;
        \item $t((F, \vd), (W, f)) = t((F, \vd), (V, g))$ for all simple graphs $F$, $\vd \in \N_0^{v(F)}$;
        \item $\graphondistH{k}{W, f} \overset{\mathcal{D}}{=} \graphondistH{k}{V, g}$ for all $k \in \N$;
        \item $\graphondistG{k}{W, f} \overset{\mathcal{D}}{=} \graphondistG{k}{V, g}$ for all $k \in \N$.
    \end{enumerate}
\end{restatable}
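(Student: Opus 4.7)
The plan is to establish the six-way equivalence via a chain centered on the cut distance: trivial inclusions dispatch (1) $\Rightarrow$ (2), (3) $\Rightarrow$ (4), and (5) $\Rightarrow$ (6); the counting lemma and a moment-determinacy argument close (2) $\Rightarrow$ (4) $\Rightarrow$ (6); the sampling lemma \eqref{eq:graphon_signal_sampling_lemma} then closes (6) $\Rightarrow$ (2); and the measure-preserving characterization of weak isomorphism propagates (2) to the remaining statements (1), (3), (5). The direct steps are: (1) $\Rightarrow$ (2) from $\delta_\square \leq \delta_1 \leq \delta_p$ (the second inequality via Jensen on the probability space $[0,1]$, with the graphon-signal cut and $\Lp{p}$ norms decomposing additively across the kernel and signal components); (3) $\Rightarrow$ (4) since simple graphs are a subclass of multigraphs; and (5) $\Rightarrow$ (6) since $\graphondistG{k}{W, f}$ is obtained from $\graphondistH{k}{W, f}$ by independent Bernoulli sampling of each edge weight.

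For (2) $\Rightarrow$ (4), I would apply the signal-weighted counting lemma (derived earlier in the paper), which asserts Lipschitz continuity of $t((F, \vd), \cdot)$ in cut distance for simple $F$. The main new step is (4) $\Rightarrow$ (6): the law of $\graphondistG{k}{W, f}$ is a Borel probability measure on the compact space $\{0, 1\}^{\binom{k}{2}} \times [-r, r]^k$, and its joint polynomial moments satisfy
\begin{equation*}
    \mathbb{E}\Big[\prod_{\{i,j\} \in S} e_{ij} \prod_{i \in [k]} f(X_i)^{d_i}\Big] \,=\, t((F_S, \vd), (W, f)),
\end{equation*}
where $F_S$ is the simple graph on $[k]$ with edge set $S$, and higher powers of $e_{ij} \in \{0, 1\}$ collapse to the first power. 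By Stone-Weierstrass, polynomials are uniformly dense in $C(\{0, 1\}^{\binom{k}{2}} \times [-r, r]^k)$, so matching moments force matching distributions.

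To close the loop, I would derive (6) $\Rightarrow$ (2) from the sampling lemma: for every $k$, applying the triangle inequality for $\delta_\square$ pointwise to a random sample $G$ and taking expectation yields
\begin{equation*}
    \delta_\square\big((W, f), (V, g)\big) \,\leq\, \mathbb{E}\Big[\delta_\square\big((W, f), \graphondistG{k}{W, f}\big)\Big] + \mathbb{E}\Big[\delta_\square\big(\graphondistG{k}{W, f}, (V, g)\big)\Big],
\end{equation*}
where the rightmost expectation can be re-expressed using (6) as $\mathbb{E}[\delta_\square(\graphondistG{k}{V, g}, (V, g))]$. Both terms are bounded by $15/\sqrt{\log k}$ and vanish as $k \to \infty$; since the left-hand side is independent of $k$, it must be zero.

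The remaining implications (2) $\Rightarrow$ (1), (2) $\Rightarrow$ (3), and (2) $\Rightarrow$ (5) all reduce to the measure-theoretic characterization of weak isomorphism for graphon-signals: $\delta_\square((W, f), (V, g)) = 0$ implies the existence of $\varphi, \psi \in \mpf$ with $W^\varphi = V^\psi$ and $f^\varphi = g^\psi$ almost everywhere. Given this, all $\Lp{p}$ distances of the differences vanish under the same pullback, signal-weighted homomorphism densities of arbitrary multigraphs agree by the invariance of Lebesgue integrals under measure-preserving transformations, and the sampling distributions coincide because the joint law of $(W(\mathbf{X}), f(\mathbf{X}))$ depends only on $(W, f)$ up to such relabeling. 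I expect this joint characterization to be the main obstacle: the classical Borgs-Chayes-Lovász-Szegedy / Janson argument handles the kernel alone, but in the graphon-signal setting the pullback must align $W$ with $V$ and $f$ with $g$ under the \emph{same} measure-preserving maps. I would handle this either by invoking the corresponding statement from the graphon-signal framework of \citet{levie_graphon-signal_2023}, or by adapting the standard weak-$\ast$ compactness and sub-$\sigma$-algebra argument to the $\sigma$-algebra jointly generated by the kernel and the signal on each side.
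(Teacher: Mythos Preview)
Your proposal is correct and structurally matches the paper's proof: both use the counting lemma for (2) $\Rightarrow$ (4), the sampling lemma for (6) $\Rightarrow$ (2), and the smooth-invariant-norm/pullback characterization (the graphon-signal extension of Lov\'asz--Janson) as the hinge for (1) $\Leftrightarrow$ (2) and the remaining implications. Two minor differences are worth noting. For (4) $\Rightarrow$ (6), the paper conditions on the sampled graph structure, recovers $\mathbb{P}(G \cong F)$ by inclusion--exclusion over simple supergraphs, and then identifies the conditional signal moments; your direct computation of all joint moments of $\graphondistG{k}{W,f}$ as simple-graph densities via $e_{ij}^m = e_{ij}$ is more economical and avoids the inclusion--exclusion detour. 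For reaching (3) and (5), the paper routes through (1) $\Rightarrow$ (3) using $\delta_1$-Lipschitzness of multigraph densities and then (3) $\Rightarrow$ (5) by recognizing multigraph densities as the moments of $\graphondistH{k}{\cdot}$, whereas you derive both directly from the a.e.\ pullback identity $W^\varphi = V^\psi$, $f^\varphi = g^\psi$; both routes work, and both ultimately rest on the same extension of \citet[Theorem 8.13]{lovasz_large_2012} that you correctly flag as the main technical ingredient.
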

See \autoref{apdsubsec:pf_cut_distance_equivalences} for the proof. The equivalence \textbf{(1)} $\Leftrightarrow$ \textbf{(2)} in \autoref{thm:cut_distance_equivalences}, which we show in \autoref{apdsubsec:smooth_invariant_norms} by extending the argument of \citet[Theorem 8.13]{lovasz_large_2012}, reveals that any $\{\delta_p\}_{p \in [1, \infty)}$ distance could be alternatively used to define weak isomorphism of two graphon-signals. Thus, any $\delta_p$ can also be seen as a metric on the space of unlabeled graphon-signals. The other equivalences show that weak isomorphism of two graphon-signals can be alternatively characterized by them having the same signal-weighted homomorphism densities, and the same random graph distributions. Specifically, $\{t((F, \vd), \cdot)\}_{F, \vd}$ fixes the distribution of $(W, f)$-random graphs similarly as do homomorphism densities for $W$-random graphs or moments for real-valued random variables. We also remark that the condition $r>1$ stems from the fact that we use the graphon-signal sampling lemma \citep[Theorem 3.7]{levie_graphon-signal_2023}. The following corollary shows that signal-weighted homomorphism densities of \emph{simple} graphs precisely characterize cut distance convergence (refer to \autoref{apdsubsec:pf_cut_distance_convergence} for the proof):
\begin{restatable}[Convergence in graphon-signal space]{corollary}{cutdistanceconvergence}\label{cor:cut_distance_convergence}
    For $(W_n, f_n)_n$, $(W, f) \in \graphonsignal{r}$ and $r>1$,
    \begin{equation}
    \delta_\square((W_n, f_n), (W, f)) \to 0 \quad \Leftrightarrow \quad t((F, \vd), (W_n, f_n)) \to t((F, \vd), (W, f)) \;\; \forall F, \vd \in \N_0^{v(F)}
    \end{equation}
    as $n \to \infty$, with $F$ ranging over all simple graphs.
\end{restatable}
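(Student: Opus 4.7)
The plan is to prove the two directions separately, using the counting lemma for the forward direction and a compactness-plus-subsequence argument for the reverse direction, which is the only nontrivial half.

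\textbf{Forward direction ($\Rightarrow$).} This is immediate from the counting lemma alluded to immediately before \autoref{thm:cut_distance_equivalences}. That lemma states that for every simple graph $F$ and every $\vd \in \N_0^{v(F)}$, the map $(W,f) \mapsto t((F,\vd),(W,f))$ is Lipschitz continuous on $(\graphonsignal{r}, \delta_\square)$ with a constant depending only on $F$, $\vd$, and $r$. Hence $\delta_\square((W_n,f_n),(W,f)) \to 0$ immediately gives convergence of the signal-weighted homomorphism densities for all simple patterns.

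\textbf{Reverse direction ($\Leftarrow$).} I would argue by contradiction using the compactness of $(\igraphonsignal{r}, \delta_\square)$ established in \citet{levie_graphon-signal_2023}. Suppose $t((F,\vd),(W_n,f_n)) \to t((F,\vd),(W,f))$ for every simple graph $F$ and $\vd \in \N_0^{v(F)}$, but $\delta_\square((W_n,f_n),(W,f)) \not\to 0$. Then there exist $\varepsilon > 0$ and a subsequence $(W_{n_k}, f_{n_k})_k$ with $\delta_\square((W_{n_k},f_{n_k}),(W,f)) \geq \varepsilon$ for all $k$. By compactness of the unlabeled graphon-signal space, this subsequence has a further subsequence (which I still denote $(W_{n_k},f_{n_k})$) converging in cut distance to some $(V,g) \in \graphonsignal{r}$. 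Applying the already-proved forward direction to this convergent sub-subsequence yields
\begin{equation}
    t((F,\vd),(V,g)) \; = \; \lim_{k\to\infty} t((F,\vd),(W_{n_k},f_{n_k})) \; = \; t((F,\vd),(W,f))
\end{equation}
for every simple $F$ and $\vd$. By \autoref{thm:cut_distance_equivalences}, specifically the implication $\textbf{(4)} \Rightarrow \textbf{(2)}$, this forces $\delta_\square((V,g),(W,f)) = 0$. But then $\delta_\square((W_{n_k},f_{n_k}),(W,f)) \to \delta_\square((V,g),(W,f)) = 0$ by the triangle inequality, contradicting $\delta_\square((W_{n_k},f_{n_k}),(W,f)) \geq \varepsilon$.

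\textbf{Main obstacle.} There is no real new difficulty here: both halves simply bundle together results already established or cited. The only subtlety is that the reverse direction is not self-contained in cut-distance language alone; it genuinely needs both (i) sequential compactness of $(\igraphonsignal{r}, \delta_\square)$ from \citet{levie_graphon-signal_2023} and (ii) the separation result $\textbf{(4)} \Rightarrow \textbf{(2)}$ of \autoref{thm:cut_distance_equivalences}, whose own proof (in \autoref{apdsubsec:pf_cut_distance_equivalences}) does most of the heavy lifting. Restricting $F$ to simple graphs is admissible because that restriction already appears in clause \textbf{(4)} of \autoref{thm:cut_distance_equivalences}, and the hypothesis $r > 1$ is inherited transparently through the use of that theorem.
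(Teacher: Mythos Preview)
Your proof is correct and follows essentially the same approach as the paper: the forward direction via the counting lemma (\autoref{prop:counting_lemma_signals}), and the reverse direction via compactness of $(\igraphonsignal{r},\delta_\square)$ combined with the implication \textbf{(4)} $\Rightarrow$ \textbf{(2)} of \autoref{thm:cut_distance_equivalences}. Your contradiction framing with the $\varepsilon$-bounded subsequence is in fact slightly more explicit than the paper's own write-up, which invokes the subsequence principle more tersely.
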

\vspace{-3pt}
Finally, we show that signal-weighted homomorphism densities also make sense on the granularity level of the $k$-WL hierarchy, in the way that their indistinguishability is equivalent to the equality of a natural generalization of $k$-WL distributions as defined in \citet{boker_weisfeiler-leman_2023} to graphon-signals.
\begin{restatable}[$k$-WL for graphon-signals, informal]{theorem}{graphonsignalkWL}\label{thm:graphon_signal_kWL}
    Two graphon-signals $(W, f)$ and $(V, g)$ are $k$-$\WL$ indistinguishable if and only if $t((F, \vd), (W, f)) = t((F, \vd), (V, g))$ for all multigraphs $F$ of treewidth $\leq k-1$, $\vd \in \N_0^{v(F)}$.
\end{restatable}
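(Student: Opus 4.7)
The plan is to extend the characterization of $k$-WL on graphons via multigraph homomorphism densities of bounded treewidth, as developed by \citet{boker_weisfeiler-leman_2023}, to the signal-weighted setting. First, I would formalize $k$-WL for graphon-signals by enriching the initial coloring of a $k$-tuple $\vx \in [0,1]^k$ to include both the induced edge weights $(W(x_i, x_j))_{i,j \in [k]}$ and the signal values $(f(x_1), \ldots, f(x_k)) \in [-r, r]^k$; iterative refinement then proceeds exactly as in Böker's construction, producing a pushforward distribution $\nu_{(W,f)}^{k\text{-}\WL}$ on an appropriate Polish space $\mathbb{M}^k_r$ of signal-enhanced $k$-WL measures. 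Two graphon-signals are declared $k$-WL indistinguishable precisely when these distributions coincide. The theorem then amounts to an equivalence between equality of these distributions and equality of all signal-weighted densities $t((F, \vd), \cdot)$ with $\tw(F) \leq k-1$.

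For the direction ``$k$-WL indistinguishable $\Rightarrow$ densities match'', I would show that each $t((F, \vd), \cdot)$ with $\tw(F) \leq k-1$ is expressible as a bounded continuous functional of $\nu^{k\text{-}\WL}_{(W,f)}$. The argument proceeds via tree decomposition: fix a rooted tree decomposition of $F$ of width $\leq k-1$, and rewrite the integral in \eqref{eq:homomorphism_density_signal} as an iterated integration along the decomposition tree. At each bag, only $\leq k$ coordinates of $\vx$ are involved simultaneously; the integrand contributes one factor $f(x_i)^{d_i}$ for each vertex $i$ of $F$ and one factor $W(x_i, x_j)$ for each edge $\{i,j\}$, and both types of factors can be attributed to a single bag containing the vertex or edge. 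Since each bag's contribution depends only on the $k$-WL color of its tuple (which already records edge weights and signal values), the entire density factors through the $k$-WL distribution.

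For the converse direction, I would adapt Böker's Stone-Weierstrass style argument. The key claim is that the functionals $\Phi_{F, \vd}: \mathbb{M}^k_r \to \R$ associated with signal-weighted densities for multigraphs $F$ with $\tw(F) \leq k-1$ and $\vd \in \N_0^{v(F)}$ span a subalgebra of $C(\mathbb{M}^k_r)$ that separates points and contains the constants, hence is dense; the equality of all $t((F, \vd), \cdot)$ then forces $\nu^{k\text{-}\WL}_{(W,f)} = \nu^{k\text{-}\WL}_{(V,g)}$ by the Riesz representation theorem. Closure under sums and scalar multiples is immediate; closure under products follows because the product of two signal-weighted densities corresponds to the disjoint union of patterns with $\vd$-vectors concatenated, and disjoint unions preserve the treewidth bound. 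Continuity and well-definedness on $\mathbb{M}^k_r$ follow from the counting lemma for simple patterns and its $\delta_1$-variant for multigraphs, together with density of polynomials in $C([-r, r])$ by the Weierstrass theorem applied to the signal coordinate.

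The main obstacle will be the careful formal setup of signal-enhanced $k$-WL: ensuring that the extended measure space $\mathbb{M}^k_r$ is Polish, that the refinement operator is well-defined and measurable, and that the pushforward construction $\nu^{k\text{-}\WL}_{(W,f)}$ inherits the compatibility with weak isomorphism, so that Böker's machinery transfers verbatim once signals are incorporated. A second subtlety is propagating the signal-exponent vectors $\vd$ correctly through the tree decomposition so that each factor $f(x_i)^{d_i}$ is attributed to a bag containing vertex $i$; this is straightforward for any individual $\vd$, but must be done uniformly in $\vd$ so that the Stone-Weierstrass argument covers all exponents at once, which is ultimately where the restriction to scalar signals (or, in the extension sketched after \autoref{def:signal_weighted_hom_density}, signals into a compact space) is used.
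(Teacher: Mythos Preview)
Your overall approach mirrors the paper's: enrich the initial $k$-WL coloring with the signal values $(f(x_1),\dots,f(x_k))$, express each $t((F,\vd),\cdot)$ with $\tw(F)\leq k-1$ as a continuous functional on the color space via a tree decomposition, and then run a Stone--Weierstrass argument to show these functionals are dense. The paper formalizes the same plan through \emph{tri-labeled graphs}---bi-labeled graphs of \citet{boker_weisfeiler-leman_2023} augmented by an exponent vector $\vd$---together with atomic \emph{signal graphs} $\mS_\vd^{(k)}$ whose associated graphon-signal operators multiply by $\prod_i f(x_i)^{d_i}$; it then remarks that B\"oker's arguments transfer verbatim once signals are recorded in the initial step.

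There is, however, a genuine confusion in your closure-under-products step. You write that ``the product of two signal-weighted densities corresponds to the disjoint union of patterns''. That identity holds for the densities as functions on $\igraphonsignal{r}$, but it is \emph{not} the product relevant to Stone--Weierstrass on $C(\mathbb{M}^k)$. Your functionals $\Phi_{F,\vd}$ live on the color space, and the \emph{pointwise} product $\Phi_{F_1,\vd_1}\cdot\Phi_{F_2,\vd_2}$ on $\mathbb{M}^k$ does not correspond to disjoint union: integrating this pointwise product against $\nu^{k\text{-}\WL}_{(W,f)}$ does not factor as $t((F_1,\vd_1),\cdot)\,t((F_2,\vd_2),\cdot)$ in general. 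In the paper's framework the pointwise product of realizing functions corresponds to the \emph{Schur product} of tri-labeled graphs, i.e.\ gluing $F_1$ and $F_2$ along their $k$ labeled input vertices (and adding the $\vd$-vectors on those vertices). The resulting graph still has treewidth $\leq k-1$ because the $k$ shared vertices form a common bag joining the two tree decompositions, so your conclusion survives, but the mechanism is different from what you describe. Two smaller points: continuity and point separation of the $\Phi_{F,\vd}$ on $\mathbb{M}^k$ do not come from the counting lemma (that concerns continuity on graphon-signal space) but from an inductive argument over the refinement levels $\mathbb{M}^k_\step$; and once density in $C(\mathbb{M}^k)$ is established, equality of the distributions follows directly from the definition of the weak topology rather than from Riesz representation.
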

\vspace{-4pt}
Due to their technical nature, all details, including the formal definition of the $k$-WL color space for graphon-signals as well as a formal statement of \autoref{thm:graphon_signal_kWL}, are deferred to \autoref{apd:k_wl_idms}. 

\section{Invariant Graphon Networks}\label{sec:iwn}

In this section, we introduce \emph{Invariant Graphon Networks (IWNs)} as an exemplary higher-order architecture on the graphon-signal space. The key components of IWNs are the \emph{linear equivariant layers}, which we extend from the original framework of \citet{maron_invariant_2019} to arbitrary measure spaces. We also determine the dimension of these layers and establish a canonical basis. We then proceed to define multilayer IWNs, highlight connections to the work of \citet{cai_convergence_2022}, and analyze the expressivity of IWNs.

\subsection{Linear Equivariant Layers}\label{subsec:linear_equivariant_layers}

We start with generalizing the building blocks of IGNs---namely, the linear equivariant layers. For IGNs, these are linear functions $\LinOp: \R^{n^k} \to \R^{n^\ell}$, such that $\LinOp$ is equivariant w.r.t.\ all \emph{permutations} acting on the $n$ coordinates. We now extend this notion from the set $[n]$ to arbitrary measure spaces. The suitable generalization of permutations will be measure preserving maps:
\begin{definition}[Linear equivariant layer]\label{def:le}
    Let $(\mathcal{X}, \mathcal{A}, \mu)$ be a measure space, simply denoted by $\mathcal{X}$, and let $\mpfg{\mathcal{X}}$ be the set of measure-preserving functions $\varphi: \mathcal{X} \to \mathcal{X}$. Let $k, \ell \in \N_0$. Write $\mathcal{X}^k$ for $(\mathcal{X}^k, \mathcal{A}^{\otimes k}, \mu^{\otimes k})$ and note that $\Lpg{2}{\mathcal{X}}^{\otimes k} \cong \Lpgsm{2}{\mathcal{X}^k}$. Define the \newterm{linear equivariant layers}
\begin{equation}
    \LEg{\mathcal{X}}{k}{\ell} \, := \, \left\{ \LinOp \in \boundedlin{\Lpgsm{2}{\mathcal{X}^k}}{\Lpgsm{2}{\mathcal{X}^\ell}} \:|\: \forall \varphi \in \mpfg{\mathcal{X}}: \LinOp(U^\varphi) = \LinOp(U)^\varphi \text{ a.e.} \right\} \label{eq:le_lk}
\end{equation}
as the space of all bounded linear operators that are equivariant w.r.t.\ all measure preserving functions on $\mathcal{X}$, i.e., all relabelings of $\mathcal{X}$.
Here, $U^\varphi(x_1, \dots, x_k) := U(\varphi(x_1), \dots, \varphi(x_k))$, and $\boundedlin{\cdot}{\cdot}$ denotes bounded linear operators.
\end{definition}
\vspace{-3pt}
For $\mathcal{X} := [n]$ with a uniform probability measure (or counting measure), we obtain $\Lpgsm{2}{[n]} \cong \R^n$, and $\LEg{[n]}{k}{\ell}$
can be identified with the space of linear permutation equivariant functions $\R^{n^k} \to \R^{n^\ell}$, as measure preserving functions $[n] \to [n]$ are just the permutations $S_n$. This yields precisely the linear equivariant layers that are building blocks of IGNs, which were studied by \citet{maron_invariant_2019}. One of their results is that $\dim \LEg{[n]}{k}{\ell} = \bell(k+\ell)$, with $\bell(m)$ denoting the number of partitions $\Gamma_m$ of $[m]$, independent of $n$. There exists a canonical basis in which every basis element $\LinOp_\gamma^{(n)} \in \LEg{[n]}{k}{\ell}$ corresponds to a partition $\gamma \in \Gamma_{k+\ell}$, with basis elements being simple operations such as extracting diagonals, summing/averaging over axes, and replication (see \autoref{apdsubsec:characterization_ign_basis}).

For graphons, we are interested in $\LE{k}{\ell} := \LEg{[0,1]}{k}{\ell}$ as building blocks of IWNs, where $[0,1]$ is equipped with its Borel $\sigma$-algebra and Lebesgue measure. The immediate question is how this space compares to $\LEg{[n]}{k}{\ell}$, i.e., what its dimension is and if there exists a canonical basis we can use to parameterize IWNs later on. It turns out that this space can be seen as just implementing a subset of the possibilities in the discrete setting, which is essentially a consequence of $[0,1]$ being atomless.
\begin{restatable}[]{theorem}{ledim}\label{thm:le_[01]_dim}
    Let $k, \ell \in \N_0$. Then,
    $\LE{k}{\ell}$ is a finite-dimensional vector space of dimension
    \begin{equation}
        \dim \, \LE{k}{\ell} \, = \, \sum_{s=0}^{\min\{k, \ell\}} s! {k \choose s} {\ell \choose s} \, \leq \, \bell(k+\ell). \label{eq:dim_le}
    \end{equation}
\end{restatable}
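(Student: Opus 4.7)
The plan is to construct an explicit basis of $\LE{k}{\ell}$ in bijection with partial injections $\pi: S \to T$ ($S \subseteq [k]$, $T \subseteq [\ell]$, $|S| = |T| = s$), verify it lies in $\LE{k}{\ell}$, and show no other equivariant bounded operators exist. Counting the partial injections yields $\sum_{s=0}^{\min\{k,\ell\}} s!\binom{k}{s}\binom{\ell}{s}$, matching the claim. For each $\pi$, I would set
\[
(L_\pi U)(y_1, \ldots, y_\ell) \; := \; \int_{[0,1]^{[k] \setminus S}} U(z_1, \ldots, z_k) \Big|_{z_i = y_{\pi(i)},\, i \in S} \prod_{i \in [k] \setminus S} \mathrm{d}\lambda(z_i),
\]
declared constant in $y_j$ for $j \in [\ell] \setminus T$. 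This composes three $\Lp{2}$-bounded operations --- partial integration (contractive by Jensen/Fubini, since $[0,1]$ has unit measure), relabelling via the injection $\pi$ (an isometry), and broadcasting to the unused output axes (bounded) --- and each commutes with the diagonal action of any $\varphi \in \mpf$, so $L_\pi \in \LE{k}{\ell}$.

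For linear independence I would test on simple tensors $U = e_{m_1} \otimes \cdots \otimes e_{m_k}$ with $\{e_m\}_{m \geq 0}$ an orthonormal basis of $\Lpui{2}$ and $e_0 \equiv 1$. Because $L_\pi U$ factors as $\prod_{i \notin S} \langle e_{m_i}, 1 \rangle$ times $\bigotimes_{j \in T} e_{m_{\pi^{-1}(j)}} \otimes \bigotimes_{j \notin T} e_0$, the ``active'' input and output modes of the output tensor read off $S$ and $T$. Given a vanishing combination $\sum_\pi a_\pi L_\pi = 0$, an induction on $|S|$ downward from $\min\{k,\ell\}$ works: probing with $U$ whose nonzero modes sit exactly on a chosen $S_0$ kills every term with $S \not\supseteq S_0$, and at $S = S_0$, picking the $m_i$ pairwise distinct for $i \in S_0$ separates different target sets $T$ and different bijections $S_0 \to T$ by orthonormality of the resulting tensor-product basis elements.

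The main obstacle is the reverse direction: that every $\LinOp \in \LE{k}{\ell}$ already lies in the span of $\{L_\pi\}$. My plan is a discrete approximation argument. For $n \in \N$ let $P_n: \Lpui{2} \to H_n$ be the conditional expectation onto step functions on the uniform $n$-cell partition; then $P_n^{\otimes k} \to \mathrm{id}$ strongly, $\tilde \LinOp_n := P_n^{\otimes \ell} \LinOp \, P_n^{\otimes k}$ is uniformly bounded in operator norm, and $\tilde \LinOp_n \to \LinOp$ in the strong operator topology. Because permutations of the $n$ cells lift to measure-preserving bijections of $[0,1]$, $\tilde \LinOp_n|_{H_n^{\otimes k}}$ lies in $\LEg{[n]}{k}{\ell}$ and by \citet{maron_invariant_2018} expands as $\sum_{\gamma \in \Gamma_{k+\ell}} c_{\gamma, n} L_\gamma^{(n)}$. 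The hard step is tracking each discrete basis element $L_\gamma^{(n)}$ as an $\Lp{2}$-operator in the limit: a pure-input block of size $\geq 2$ in $\gamma$ collapses the operation onto a diagonal of vanishing $\lambda^k$-mass, forcing $\|L_\gamma^{(n)}\|_{\mathrm{op}}$ to scale like a positive power of $n$; a pure-output block of size $\geq 2$ confines the image to a set of $\lambda^\ell$-mass decaying in $n$; a block of size $\geq 3$ combines both obstructions. Matching the uniform bound $\|\tilde \LinOp_n\| \leq \|\LinOp\|$ against these asymptotics (via an appropriate biorthogonal dual to the $\{L_\gamma^{(n)}\}$) forces $c_{\gamma, n} \to 0$ on inadmissible $\gamma$ at a rate that annihilates their contribution in the SOT limit, while the admissible $L_\gamma^{(n)}$, after the natural normalisation $n^{-(k - s)}$, converge to the corresponding $L_\pi$. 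Hence $\LinOp \in \mathrm{span}\{L_\pi\}$, establishing the claimed dimension; the inequality with $\mathrm{bell}(k+\ell)$ is then immediate, since admissible partitions embed into $\Gamma_{k+\ell}$.
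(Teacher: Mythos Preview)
Your overall strategy matches the paper's---reduce to step functions, invoke \citet{maron_invariant_2018}, then eliminate partitions incompatible with $L^2$-boundedness---but the execution differs at the key step, and the paper's route is substantially simpler.

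You sandwich $T$ between conditional-expectation projections, $\tilde T_n := P_n^{\otimes\ell}\, T\, P_n^{\otimes k}$, expand $\tilde T_n|_{H_n^{\otimes k}}$ in the discrete basis with $n$-dependent coefficients $c_{\gamma,n}$, and then propose to kill the inadmissible $\gamma$ by matching the uniform bound $\|\tilde T_n\|\le\|T\|$ against the norm asymptotics of each $L_\gamma^{(n)}$ ``via an appropriate biorthogonal dual''. This last step is underspecified: the $\{L_\gamma^{(n)}\}$ are far from orthogonal, so extracting one coefficient requires a dual functional whose own norm may scale with $n$, and partitions with input-diagonal blocks have $\|L_\gamma^{(n)}\|_{\mathrm{op}}\to\infty$ while those with output-diagonal blocks have $\|L_\gamma^{(n)}\|_{\mathrm{op}}\to 0$, so cancellations between different inadmissible terms are not ruled out a priori. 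This is probably completable, but it is not routine.

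The paper sidesteps all of this with a preliminary observation you do not use: equivariance under \emph{all} measure-preserving maps---not just cell permutations---already forces $T(\mathcal{F}_k^{(n)})\subseteq\mathcal{F}_\ell^{(n)}$ for every $n$ (their Lemma~\ref{lem:le_inv_disc}, resting on the fixed-point Lemma~\ref{lem:fixed_points_mpf}). Hence the output projection $P_n^{\otimes\ell}$ is redundant, $T|_{\mathcal{F}_k^{(n)}}$ itself lies in $\LEg{[n]}{k}{\ell}$, and the nesting $\mathcal{F}_k^{(n)}\subset\mathcal{F}_k^{(nm)}$ together with the representation-stability of the discrete basis forces the coefficients to be \emph{independent of $n$}. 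Eliminating inadmissible $\gamma$ then becomes two one-line tests: evaluating on $U\equiv 1$ rules out partitions whose output is supported on a diagonal, and evaluating on a sequence of diagonal-supported step functions with $\|\cdot\|_2\to 0$ (but constant nonzero image) rules out partitions that select an input diagonal, directly by boundedness of $T$. Density of step functions finishes. Your construction of the $L_\pi$ and the linear-independence argument via tensor-product bases are fine and agree with the paper's explicit basis~\eqref{eq:iwn_basis_element}.
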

The proof can be found in \autoref{apdsubsec:pf_le_dimension}. Central to the argument is the observation that we can consider the action of any $\LinOp \in \LE{k}{\ell}$ on step functions, and apply the characterization from \citet{maron_invariant_2019} to a sequence of nested subspaces, which fixes the operator on the entire space $\Lpuig{2}{k}$. In fact, \eqref{eq:dim_le} is precisely the dimension of the \emph{Rook} subalgebra of the partition algebra (see, e.g., \citet{grood_rook_2006, halverson_set-partition_2020}).

\vspace{-3pt}
\paragraph{A canonical basis of $\LE{k}{\ell}$.}

The proof of \autoref{thm:le_[01]_dim} also provides insight into constructing a canonical basis of $\LE{k}{\ell}$, which is indexed by the following subset of the partitions $\Gamma_{k+\ell}$ of $[k+\ell]$: 
\begin{equation}
    \widetilde{\Gamma}_{k, \ell} \, := \, \Big\{\gamma \in \Gamma_{k+\ell} \;\Big|\; \forall A \in \gamma: \abssm{A \cap [k]} \leq 1, \: \abssm{A \cap (k+[\ell])} \leq 1 \Big\}.
\end{equation}
For a partition $\gamma \in \widetilde{\Gamma}_{k,\ell}$, suppose that $\gamma$ contains $s$ sets of size 2 $\{i_1, j_1\}, \dots, \{i_s, j_s\}$ with $i_1, \dots, i_s \in [k]$, $j_1, \dots, j_s \in k + [\ell]$, and let $\va = (i_1, \dots, i_s)$, $\vb = (j_1, \dots, j_s)$. Then, we can write the corresponding basis element $\LinOp_\gamma \in \LE{k}{\ell}$ as
\begin{equation}
    \LinOp_\gamma(U) \, := \, \left[ [0,1]^\ell \ni \vy \, \mapsto \, \restr{\int_{[0,1]^{k-s}} \!\!\!\! U(\vx_\va, \vx_{[k]\setminus \va}) \, \mathrm{d} \lambda^{k-s}(\vx_{[k] \setminus \va})}{\vx_\va = \vy_{\vb-k}} \right]. \label{eq:iwn_basis_element}
\end{equation}
In comparison to the basis of \citet{maron_invariant_2019}, this corresponds precisely to the basis elements for which no diagonals of the input are selected, and the output is always replicated on the entire space.
We also note that the choice $p=2$ in \autoref{def:le} is somewhat arbitrary, and $\LinOp_\gamma$ can indeed be seen as an operator $\Lp{p} \to \Lp{p}$ for any $p \in [1, \infty]$, with $\normsm{\LinOp_\gamma}_{p \to p} = 1$ (see \autoref{apdsubsec:continuity_le}). We also briefly analyze the asymptotic dimension of $\LE{k}{\ell}$ compared to the discrete case in \autoref{apdsubsec:le_asymptotic_dimension}.

\subsection{Definition of Invariant Graphon Networks}\label{subsec:definition_iwns}

Using $\LE{k}{\ell}$ as building blocks, we extend the definitions of IGNs from \citet{maron_invariant_2019} to graphons. This also corresponds to the definition used by \citet{cai_convergence_2022}, with the restriction that linear equivariant layers are limited to $\LE{k}{\ell}$.
\begin{definition}[In- and equivariant graphon networks]\label{def:iwn}
    Let $\nonlinearity: \R \to \R$ be an activation. Let $\tstep \in \N$, and for each $\step \in \{0, \dots, \tstep\}$, let $k_\step \in \N_0$, $d_\step \in \N$. Set $(d_0, k_0) := (2, 2)$. An \newterm{Equivariant Graphon Network (EWN)} is a function that maps a graphon-signal $(W, f) \in \graphonsignal{r}$ to
    \begin{equation}
        \NNfct{\:\!\normalfont{\textsc{ewn}}}{}(W,f) \: := \: 
 \left( \mathfrak{T}^{(\tstep)} \circ \nonlinearity \circ \cdots \circ \nonlinearity \circ \mathfrak{T}^{(1)} \right)(W,f), \label{eq:iwn_layers}
    \end{equation}
where for each $\step \in [\tstep]$, 
\begin{equation}
    \mathfrak{T}^{(\step)}: \left( \Lpuig{2}{k_{\step-1}}\right)^{d_{\step-1}} \to \left( \Lpuig{2}{k_{\step}}\right)^{d_{\step}}, \; \mU \mapsto \mT^{(\step)}(\mU) + \vb^{(\step)},
\end{equation}
with $\mT^{(\step)} \in \left( \LE{k_{\step-1}}{k_\step} \right)^{d_\step \times d_{\step-1}}$, $\vb^{(\step)} \in \left( \LE{0}{k_\step} \right)^{d_\step}$, and $\mT^{(\step)}(\mU)_i := \sum_{j=1}^{d_{\step-1}} \LinOp^{(\step)}_{ij}(\mU_j)$ for $i \in [d_\step]$. Here, the addition of the bias terms $\vb^{(\step)}$ and application of $\nonlinearity$ are understood elementwise. $(W,f)$ is identified with $ \big[(x,y) \mapsto (W(x,y), f(x)) \big]$ in the first layer. An \newterm{Invariant Graphon Network (IWN)} is an EWN with $(d_\tstep, k_\tstep) = (1, 0)$, i.e., mapping to scalars.
\end{definition}
\vspace{-3pt}
We call $\max_{\step \in [\tstep]} k_\step$ the \newterm{order} of an EWN, and $k_\step$ the orders of the individual layers. 
For notational convenience, we defined the individual biases as elements of \(\LE{0}{k_\step}\) (noting that the input space of such a function is a singleton). In the discrete setting, individual weights are assigned to biases that are constant over the entire tensor and all its diagonals. Here, however, this issue does not arise because \(\dim \LE{0}{k}\) is always 1, so the bias is merely a scalar.
Note that any IWN can be seen as a function $\NNfct{\:\!\textsc{iwn}}{}: \igraphonsignal{r} \to \R$, as it is invariant w.r.t.\ all $\varphi \in \mpf$ by the definition of $\LE{k}{\ell}$.

We also immediately observe that IWNs yield a parametrization that is closely related to IGN-small proposed by \citet{cai_convergence_2022}, a subset of IGNs with more favorable convergence properties under regularity assumptions on the graphon; see \citet[Theorem 4]{cai_convergence_2022} and \autoref{apdsubsec:ign_small}. In their work, they define IGN-small as continuous IGNs (i.e., defined on graphons with signals) for which grid-sampling commutes with application of the discrete/continuous version of the IGN.
\begin{restatable}[]{proposition}{ignsmall}\label{prop:ign_small}
    Any IWN from \eqref{eq:iwn_layers} is an instance of IGN-small \citep{cai_convergence_2022}.
\end{restatable}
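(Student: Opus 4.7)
The plan is to verify both defining properties of IGN-small from \citet{cai_convergence_2022} in turn: (i) that every IWN is a continuous IGN on graphon-signals, and (ii) that grid-sampling commutes with the application of an IWN (equating the continuous model applied to a graphon-signal with the discrete IGN applied to its grid-sampled version).

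For (i), I would rely directly on the characterization of the IWN basis around \eqref{eq:iwn_basis_element}: each $\LinOp_\gamma$ with $\gamma \in \widetilde{\Gamma}_{k,\ell}$ is precisely one of the canonical IGN basis elements of \citet{maron_invariant_2018}, namely the one in which no input diagonals are extracted and every free output coordinate is replicated. Since an IWN is built by composing linear combinations of these $\LinOp_\gamma$ with pointwise nonlinearities and scalar biases, the same combinatorial formula defines a continuous IGN on graphon-signals (continuity in the $\Lp{p}$ topology follows from the operator-norm bound $\|\LinOp_\gamma\|_{p\to p}=1$ noted after \eqref{eq:iwn_basis_element} together with continuity of $\nonlinearity$).

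For (ii), the key step is to check the commutation layer by layer, for which it suffices to check it on a single basis element $\LinOp_\gamma$. Fix a regular partition $\{I_j\}_{j=1}^n$ of $[0,1]$. Grid-sampling a graphon-signal on this partition amounts to averaging over the cells $I_{j_1}\times\cdots\times I_{j_k}$ up to normalization $1/n^{k-s}$. Because $\LinOp_\gamma$ consists exclusively of (a) partial integration over the $k-s$ coordinates indexed by $[k]\setminus\va$, (b) identification of the remaining coordinates with output coordinates via $\vx_\va = \vy_{\vb-k}$, and (c) replication of the result over the remaining output coordinates, each of these continuous operations has a direct discrete analogue (summation with $1/n^{k-s}$ normalization, coordinate selection, and broadcasting) that coincides with the corresponding IGN basis element in $\widetilde{\Gamma}_{k,\ell}$ under Maron et al.'s indexing. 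Hence $\LinOp_\gamma$ applied to $(W,f)$ followed by grid-sampling agrees pointwise (on cells of the induced partition of $[0,1]^\ell$) with the discrete IGN basis element applied to the grid-sampled input. Since nonlinearities and biases act pointwise and thus trivially commute with sampling, a straightforward induction on the number of layers in \eqref{eq:iwn_layers} gives the commutation property for the entire IWN.

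The main obstacle I expect is the bookkeeping: precisely matching the normalization constants between the continuous integrals $\int_{[0,1]^{k-s}}$ and the discrete sums so that the commutation is an equality rather than merely an approximation, and keeping the partition indices aligned across layers that change tensor order $k_{\step-1} \to k_\step$. Once one consistently uses the uniform grid of width $1/n$ adopted by \citet{cai_convergence_2022}, the factors $1/n^{k-s}$ arising from $\lambda(I_j)=1/n$ land in exactly the right places, so the argument reduces to verifying that the $\widetilde{\Gamma}_{k,\ell}$-indexed basis elements of IWNs correspond to diagonal-free, fully-replicated partitions in the discrete IGN basis—which is immediate from the construction.
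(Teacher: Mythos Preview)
Your proposal is correct and follows essentially the same route as the paper: verify layer by layer that each basis element $\LinOp_\gamma$, $\gamma\in\widetilde{\Gamma}_{k,\ell}$, commutes with grid-sampling on step inputs, then extend to the full network by induction through the pointwise nonlinearities and biases. The paper compresses this into a single sentence by citing its invariance-under-discretization lemma (which asserts $\LinOp(\mathcal{F}_k^{(n)})\subseteq\mathcal{F}_\ell^{(n)}$) together with the representation stability $\restr{\LinOp_\gamma^{(nm)}}{\mathcal{F}_k^{(n)}}=\LinOp_\gamma^{(n)}$ established during the proof of \autoref{thm:le_[01]_dim}; your write-up instead spells out what these two facts amount to operationally (integration $\to$ normalized sums, identification, replication). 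One small wording issue: the sentence ``grid-sampling \ldots\ amounts to averaging over the cells \ldots\ up to normalization $1/n^{k-s}$'' conflates the sampling operator $S^{(n)}$ (which is point evaluation and, on step functions, simply reads off the constant cell values) with the discretization of the integral inside $\LinOp_\gamma$; the $1/n^{k-s}$ factor belongs to the latter, not to $S^{(n)}$ itself.
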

\vspace{-3pt}
The proof of \autoref{prop:ign_small} (see \autoref{apdsubsec:pf_ign_small}) is a direct application of invariance under discretization (\autoref{lem:le_inv_disc}) and representation stability of the basis elements. While the IGN-small constraint applies to the entire multilayer network, we impose our boundedness condition on each linear equivariant layer \emph{individually}. Consequently, it does \emph{not} follow that every linear equivariant layer used in an IGN-small model must lie in $\LE{k}{\ell}$. The crux of this discrepancy is that a graphon can, for example, be mapped into a higher-dimensional diagonal, from which the network might then compute its output by integrating over that diagonal. Although this overall procedure meets the IGN-small consistency requirement, each single layer involved may fail to be a bounded linear operator.
Moreover, while IWNs only utilize a subset of the basis employed by IGNs, the following section will show that IWNs can still attain strong expressivity on par with the discrete setting. By \autoref{prop:ign_small}, these expressivity results extend to IGN-small as well.

\subsection{Expressivity of Invariant Graphon Networks}\label{subsec:expressivity_iwns}

We prove expressivity results for IWNs, namely that IWNs up to order $k$ are at least as powerful as $k$-WL for graphon-signals, and that they act as universal approximators in the $\delta_p$ distances on any compact \emph{subset} of graphon-signals. The analysis relies on the signal-weighted homomorphism expressivity of IWNs (\autoref{sec:signal_weighted_homomorphism_densities}). Clearly, IWNs are continuous in all $\delta_p$ distances (see \autoref{apdsubsec:pf_ign_continuity_Lp} for a proof):
\vspace{-6pt}
\begin{restatable}[]{lemma}{igncontinuityLp}\label{lem:ign_continuity_Lp}
    Let $\NNfct{\:\!\normalfont{\textsc{iwn}}}{}: \graphonsignal{r} \to \R$ be an IWN with Lipschitz continuous nonlinearity $\varrho$. Then, $\NNfct{\:\!\normalfont{\textsc{iwn}}}{}$ is Lipschitz continuous w.r.t.\  $\delta_p\,$ for each $p \in [1, \infty]$.
\end{restatable}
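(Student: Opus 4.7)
The strategy is to show that each component of an IWN is Lipschitz in the appropriate $L^p$-norm on the tensor spaces, compose these bounds across layers, and then upgrade from $L^p$-Lipschitz to $\delta_p$-Lipschitz continuity via the measure-preserving invariance of IWNs. Three ingredients suffice: (i) every basis element $\LinOp_\gamma \in \LE{k}{\ell}$ extends to a bounded operator $L^p([0,1]^k) \to L^p([0,1]^\ell)$ with operator norm $1$ for each $p \in [1,\infty]$, as noted in \autoref{apdsubsec:continuity_le}; (ii) bias translations trivially preserve Lipschitz continuity; and (iii) the pointwise application of a Lipschitz nonlinearity $\varrho$ with constant $L_\varrho$ satisfies $\norm{\varrho(\mU) - \varrho(\mV)}_p \leq L_\varrho \norm{\mU - \mV}_p$.

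Equipping each tensor space $(L^p([0,1]^k))^d$ with the natural product norm $\norm{\mU}_p := \sum_i \norm{U_i}_p$, each linear layer $\mT^{(\step)}$ becomes a bounded operator between the appropriate product spaces by (i) and the triangle inequality, since it is a finite matrix-vector combination of basis elements. Composing this with (ii) and (iii) across all $\tstep$ layers yields a constant $L_p > 0$ (depending on the network parameters, $L_\varrho$, and $p$) such that
\begin{equation}
    \abs{\NNfct{\:\!\normalfont{\textsc{iwn}}}{}(W,f) - \NNfct{\:\!\normalfont{\textsc{iwn}}}{}(V,g)} \leq L_p \bigl( \norm{W - V}_p + \norm{f - g}_p \bigr), \nonumber
\end{equation}
where the input is identified with the two-channel tensor $(x,y) \mapsto (W(x,y), f(x))$ whose channels have $L^p$-norms $\norm{W}_p$ and $\norm{f}_p$ (the extra integration over $y$ being over the unit interval). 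The scalar-valued output of the final layer is produced by integrating an $L^p$ function over $[0,1]^{k_\tstep}$, which is bounded in absolute value by its $L^p$-norm via Hölder's inequality on the unit cube, covering $p = \infty$ as well.

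Finally, the invariance of the IWN under $\varphi \in \mpf$---immediate from the definition of $\LE{k}{\ell}$ in \eqref{eq:le_lk} and the commutativity of pointwise operations with relabelings---yields $\NNfct{\:\!\normalfont{\textsc{iwn}}}{}(W^\varphi, f^\varphi) = \NNfct{\:\!\normalfont{\textsc{iwn}}}{}(W,f)$. Hence, for any $\varphi, \psi \in \mpf$,
\begin{equation}
    \abs{\NNfct{\:\!\normalfont{\textsc{iwn}}}{}(W,f) - \NNfct{\:\!\normalfont{\textsc{iwn}}}{}(V,g)} = \abs{\NNfct{\:\!\normalfont{\textsc{iwn}}}{}(W^\varphi, f^\varphi) - \NNfct{\:\!\normalfont{\textsc{iwn}}}{}(V^\psi, g^\psi)} \leq L_p \bigl( \norm{W^\varphi - V^\psi}_p + \norm{f^\varphi - g^\psi}_p \bigr), \nonumber
\end{equation}
and taking the infimum over $\varphi, \psi$ produces the desired $\delta_p$-Lipschitz bound.

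The main obstacle is not conceptual but bookkeeping: carefully tracking the product norms across layers of varying orders $k_\step$ and channel counts $d_\step$, and verifying that fact (i) holds uniformly across all $p \in [1, \infty]$. The case $p = \infty$ in particular relies on each of integration, replication, and index rearrangement---the three elementary operations composing every basis element $\LinOp_\gamma$ as apparent from \eqref{eq:iwn_basis_element}---preserving the supremum norm on $[0,1]^k$ of unit volume, which is immediate but worth checking case by case.
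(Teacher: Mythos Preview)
Your proof is correct and follows essentially the same approach as the paper's: establish $L^p$-Lipschitz continuity of each linear equivariant layer via the basis operator norm bound, verify that pointwise application of $\varrho$ is $L^p$-Lipschitz, and compose. Your version is in fact slightly more complete, since you make explicit the passage from the labeled $\normsm{\cdot}_p$-bound to the unlabeled $\delta_p$-bound via invariance under measure-preserving maps, a step the paper leaves implicit.
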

\vspace{-3pt}
As a first step towards analyzing expressivity, we show that IWNs can approximate signal-weighted homomorphism densities w.r.t.\ graphs of size up to their order. Inspired by \citet{keriven_universal_2019}, we explicitly model the product in the homomorphism densities and track the employed linear equivariant layers. Finally, the result follows via a tree decomposition of the graph:
\begin{restatable}[Approximation of signal-weighted homomorphism densities]{theorem}{homdensityiwn}\label{thm:hom_density_iwn}
    Let $r>0$, $1 < k \in \N$, $\varrho: \R \to \R$ Lipschitz continuous and non-polynomial, and $F$ be a multigraph of treewidth $k-1$, $\vd \in \N_0^{v(F)}$. Fix $\varepsilon>0$. Then there exists an IWN $\NNfct{\:\!\normalfont{\textsc{iwn}}}{}$ of order $k$ such that for all $(W, f) \in \graphonsignal{r}$
    \vspace{2pt}
    \begin{equation}
        \abs{t((F, \vd), (W, f)) - \NNfct{\:\!\normalfont{\textsc{iwn}}}{}(W, f)} \, \leq \, \varepsilon.
    \end{equation}
\end{restatable}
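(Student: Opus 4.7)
The plan is to implement, as an IWN of order $k$, a bottom-up dynamic program over a tree decomposition of $F$ of width $k-1$, following \citet{keriven_universal_2019} in modeling products explicitly via elementwise MLPs built from $\LE{k_\step}{k_\step}$ identity layers together with $\varrho$. Fix such a tree decomposition $(T, \{B_t\}_{t \in V(T)})$ with $\abs{B_t} \leq k$, rooted arbitrarily, and assign each $e \in E(F)$ (resp.\ each $v \in V(F)$) to a unique node $t_e$ (resp.\ $t_v$) of $T$ with $e \subseteq B_{t_e}$ (resp.\ $v \in B_{t_v}$). Standard treewidth DP expresses $t((F, \vd), (W, f))$ as $\int_{[0,1]^{B_{\mathrm{root}}}} M_{\mathrm{root}}$, where each bag tensor $M_t : [0,1]^{B_t} \to \R$ is the pointwise product of the locally assigned factors $f(x_v)^{d_v}$ and $W(x_{e_1}, x_{e_2})$, and of the child tensors $M_c$ marginalized over $B_c \setminus B_t$.

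To realize this DP as an IWN, the first layer uses basis elements of $\LE{2}{k}$ to build a multi-channel $k$-tensor. Elements with $s=2$ route $W(x_{e_1}, x_{e_2})$ into the two $k$-tensor coordinates corresponding to edge $e$ inside its assigned bag; elements with $s=1$ consisting of one matched input-output pair and one input singleton (integrating out the redundant second argument of the input encoding of $f$) place $f(x_v)$ at the single coordinate of $v$. Channel-mixing $\LE{k_\step}{k_\step}$ identity layers followed by elementwise $\varrho$ implement elementwise MLPs, which by universal approximation for non-polynomial Lipschitz activations approximate both $t \mapsto t^{d_v}$ and the multiplication of finitely many scalars on any bounded input box to arbitrary uniform precision. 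Processing bags bottom-up, for each tree edge we (i) marginalize a child tensor along one coordinate via a basis element of $\LE{k}{k-1}$ with $s=k-1$, (ii) re-embed into a $k$-tensor by constant extension over the new bag coordinate via a basis element of $\LE{k-1}{k}$, (iii) permute coordinates as needed via $s=k$ basis elements of $\LE{k}{k}$, and (iv) multiply elementwise into the accumulated bag tensor through an MLP block. At the root, a basis element of $\LE{k}{0}$ with all $k$ input singletons integrates out the remaining coordinates, yielding the scalar output. All partitions invoked here lie in $\widetilde{\Gamma}_{k_{\step-1},k_\step}$ (blocks are singletons or one-input-one-output pairs), so every linear layer is a valid element of the restricted basis.

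For the error estimate, all intermediate tensor values lie in a compact interval $[-C, C]$ with $C = C(r, k, F, \vd)$, since $\abs{W} \leq 1$, $\abs{f} \leq r$, and approximating MLPs can be chosen bounded on bounded inputs. The IWN output is a finite composition of $\Lp{\infty}$-bounded-Lipschitz linear equivariant layers (cf.\ \autoref{apdsubsec:continuity_le}) and elementwise MLP blocks approximating continuous functions on bounded sets; propagating the per-block approximation errors through this finite composition using Lipschitz constants that depend only on $r, k, F, \vd$ and choosing all MLP accuracies sufficiently small yields the target uniform deviation $\varepsilon$.

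The main technical obstacle is to confirm that the restricted basis $\LE{k}{\ell}$, which excludes diagonal extraction and diagonal output, nonetheless supports every linear operation the DP requires, namely coordinate routing/permutation, marginalization, and constant extension. The characterization via $\widetilde{\Gamma}_{k,\ell}$ resolves this affirmatively, as all of these operations correspond precisely to partitions whose blocks are singletons or one-input-one-output pairs; no diagonal basis elements are needed. A secondary bookkeeping challenge is keeping bag coordinates aligned between children and parent throughout the DP, but this again uses only permutation-type basis elements and therefore remains inside our framework.
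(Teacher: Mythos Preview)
Your plan is correct and follows the same outline as the paper: traverse a width-$(k-1)$ tree decomposition, implement marginalization, re-embedding and coordinate permutation via operators indexed by $\widetilde\Gamma_{k,\ell}$, and approximate the pointwise products through the nonlinearity. The paper formalizes the traversal through \emph{tri-labeled graphs} and their atomic operators (\autoref{apdsubsec:tri_labeled_graphs}, \autoref{lem:hom_density_iwn_preparation}), and---one point your sketch leaves implicit---explicitly carries $\iota_k(W)$ and $\iota_k(f)$ as dedicated channels through every layer so that local edge and signal factors can be regenerated at each bag via a permutation-type operator. Your first-layer routing of one channel per edge/vertex of $F$ cannot by itself survive the marginalize/re-embed steps (the channel $W(x_i,x_j)$ is destroyed once slot $i$ or $j$ is integrated out and reused for a new node), so you should make explicit that a single copy of $W$ and $f$ is carried and re-routed on demand; this is compatible with your framework since the carrying operators are also in the restricted basis.

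The one substantive difference is the product sub-lemma. You invoke standard MLP universal approximation of $(x_1,\dots,x_m)\mapsto\prod_i x_i$ on a compact box, realized pointwise via identity elements of $\LE{k}{k}$ plus channel mixing. The paper instead temporarily switches the activation to $\cos$, applies the identity $\prod_j \cos(x_j)=2^{-m}\sum_{\sigma\in\{\pm1\}^m}\cos\bigl(\sum_j\sigma_jx_j\bigr)$ to collapse the product into one additional EWN layer, and switches back to $\varrho$ (\autoref{lem:approx_prod}). Your route is more elementary; the paper's is more explicitly constructive but reaches the same conclusion.
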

\vspace{-3pt}
The proof (see \autoref{apdsubsec:pf_hom_density_iwn}) is by induction on the tree decomposition of a graph.
As we traverse the tree, we introduce IWN layers that add new nodes and marginalize over processed ones. We write $\NNfctclass{\:\!\textsc{iwn}}{\varrho}$ for the set of all IWNs w.r.t.\ nonlinearity $\varrho$, and $\NNfctclass{\:\!k\textsc{-iwn}}{\varrho}$ for the restriction to IWNs of order up to $k$.
Note that as an immediate consequence of \autoref{thm:hom_density_iwn} we can see that IWNs are $k$-WL-expressive (refer to \autoref{apdsubsec:pf_iwn_k_wl}):
\begin{restatable}[$k$-WL expressivity]{corollary}{iwnkwl}\label{cor:iwn_k_wl}
     $\NNfctclass{\:\!k\normalfont{\textsc{-iwn}}}{\varrho}$ is at least as expressive as the $k$-WL test at distinguishing graphon-signals.
\end{restatable}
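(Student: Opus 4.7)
The proof is a short deduction that reduces to the two deeper results already available in the excerpt. The plan is as follows.

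Suppose the $k$-WL test for graphon-signals distinguishes two graphon-signals $(W,f), (V,g) \in \graphonsignal{r}$. By \autoref{thm:graphon_signal_kWL}, this is equivalent to the existence of a multigraph $F$ of treewidth at most $k-1$ and a vector $\vd \in \N_0^{v(F)}$ such that $t((F,\vd),(W,f)) \neq t((F,\vd),(V,g))$. Set $\Delta := \abs{t((F,\vd),(W,f)) - t((F,\vd),(V,g))} > 0$.

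Next, invoke \autoref{thm:hom_density_iwn} with $\varepsilon := \Delta/3$ to obtain an IWN $\NNfct{\:\!\textsc{iwn}}{}$ of order at most $k$ that approximates the functional $t((F,\vd),\cdot)$ uniformly on $\graphonsignal{r}$ within $\Delta/3$. (If $F$ has treewidth strictly smaller than $k-1$, \autoref{thm:hom_density_iwn} still produces an IWN of order $\leq k$, since a lower-order IWN is trivially contained in $\NNfctclass{\:\!k\textsc{-iwn}}{\varrho}$.) Applying the triangle inequality,
\begin{equation}
\abs{\NNfct{\:\!\textsc{iwn}}{}(W,f) - \NNfct{\:\!\textsc{iwn}}{}(V,g)} \; \geq \; \Delta - 2 \cdot \tfrac{\Delta}{3} \; = \; \tfrac{\Delta}{3} \; > \; 0,
\end{equation}
so $\NNfct{\:\!\textsc{iwn}}{}$ distinguishes $(W,f)$ and $(V,g)$. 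Hence, whenever $k$-WL separates two graphon-signals, so does some IWN of order at most $k$, which is precisely the claimed expressivity comparison.

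There is essentially no obstacle, because the two ingredients do all the work: \autoref{thm:graphon_signal_kWL} reduces $k$-WL separation to the existence of a separating signal-weighted homomorphism density of bounded-treewidth pattern, and \autoref{thm:hom_density_iwn} produces an order-$k$ IWN that uniformly approximates any such density. The only subtlety worth noting is that the approximation in \autoref{thm:hom_density_iwn} must be uniform on $\graphonsignal{r}$ (which it is, as stated), since otherwise a single IWN could not be chosen to separate the two specific graphon-signals simultaneously.
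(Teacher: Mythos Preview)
Your proposal is correct and follows essentially the same approach as the paper's proof: both invoke \autoref{thm:graphon_signal_kWL} to obtain a separating signal-weighted homomorphism density of bounded treewidth, then apply \autoref{thm:hom_density_iwn} with tolerance $\varepsilon/3$ (your $\Delta/3$) and conclude via the triangle inequality. Your additional remarks on the uniformity of the approximation and the case of treewidth strictly below $k-1$ are accurate and make explicit points the paper leaves implicit.
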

\vspace{-3pt}
As we know from \autoref{thm:cut_distance_equivalences} that two graphon-signals are weakly isomorphic if and only if $\{t((F, \vd), \cdot)\}_{F, \vd}$ agree for all simple graphs or multigraphs $F$, \autoref{thm:hom_density_iwn} gives us an immediate way to prove universal approximation when not restricting the tensor order.
\begin{restatable}[$\delta_p$-Universality of IWNs]{corollary}{iwnuniversality}\label{cor:iwn_universality}
    Let $r > 1$, $p \in [1, \infty)$, $\nonlinearity: \R \to \R$ Lipschitz continuous and non-polynomial. For any compact $K \subset (\igraphonsignal{r}, \delta_p)$, $\NNfctclass{\:\!\textsc{iwn}}{\varrho}$ is dense in the continuous functions $C(K)$ w.r.t.\ $\norm{\cdot}_\infty$.
\end{restatable}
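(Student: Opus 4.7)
The natural route is Stone--Weierstrass on the algebra of signal-weighted homomorphism densities, followed by approximating each density via \autoref{thm:hom_density_iwn}. Let $\mathcal{A} \subseteq C(K)$ denote the real linear span of $\{ t((F,\vd),\cdot) : F \text{ multigraph}, \, \vd \in \N_0^{v(F)}\}$ restricted to $K$. I would first verify the three Stone--Weierstrass hypotheses on $\mathcal{A}$:
(i) \emph{Continuity on $K$}: for simple graphs the counting lemma gives Lipschitz continuity in $\delta_\square$, and for multigraphs it gives continuity in $\delta_1$; since $\delta_\square \leq \delta_1 \leq \delta_p$, both imply continuity on $(K,\delta_p)$.
(ii) \emph{Subalgebra structure}: a direct application of Fubini to \eqref{eq:homomorphism_density_signal} yields the disjoint-union identity
\begin{equation}
t((F_1,\vd_1),\cdot)\cdot t((F_2,\vd_2),\cdot) \, = \, t\big((F_1 \sqcup F_2,\, \vd_1 \oplus \vd_2), \cdot \big),
\end{equation}
so $\mathcal{A}$ is closed under multiplication, and the constant function $1$ is realized by $F$ empty on $0$ vertices.
(iii) \emph{Point separation}: if $[W,f] \neq [V,g]$ in $K \subset (\igraphonsignal{r},\delta_p)$, the two graphon-signals are not weakly isomorphic, so by the implication \textbf{(2)}$\Rightarrow$\textbf{(4)} of \autoref{thm:cut_distance_equivalences} there is a simple $F$ and a $\vd$ with $t((F,\vd),(W,f)) \neq t((F,\vd),(V,g))$.

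Stone--Weierstrass then yields that $\mathcal{A}$ is dense in $(C(K),\|\cdot\|_\infty)$. Given $\Phi \in C(K)$ and $\varepsilon > 0$, I would pick $T = \sum_{i=1}^m c_i\, t((F_i,\vd_i),\cdot)$ with $\|\Phi - T\|_\infty < \varepsilon/2$. Applying \autoref{thm:hom_density_iwn} with order $k_i := v(F_i)$ (which upper-bounds the treewidth of $F_i$) yields, for each $i$, an IWN $\mathcal{N}_i \in \NNfctclass{\:\!\textsc{iwn}}{\varrho}$ with $\sup_{\graphonsignal{r}} |t((F_i,\vd_i),\cdot) - \mathcal{N}_i| < \varepsilon / (2m(1+\max_j |c_j|))$; in particular the bound holds uniformly on $K$. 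Assembling $\mathcal{N} := \sum_{i=1}^m c_i \mathcal{N}_i$ and applying the triangle inequality gives $\|\Phi - \mathcal{N}\|_\infty < \varepsilon$, which concludes the proof provided $\mathcal{N}$ is itself an IWN.

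The main obstacle is precisely this last closure property: $\NNfctclass{\:\!\textsc{iwn}}{\varrho}$ must be closed under real linear combinations. This is intuitive but does require a construction, since the $\mathcal{N}_i$ may have different depths, orders, and widths. I would stack the $\mathcal{N}_i$ as independent channel blocks inside a single wide network: at each layer one takes a block-diagonal $\mT^{(\step)}$ that applies the respective layer of each $\mathcal{N}_i$ to its own channels without cross-talk (this is possible because block-diagonal operators in $(\LE{k_{\step-1}}{k_\step})^{d_\step \times d_{\step-1}}$ are valid weight choices), pad shallower networks by carrying their (already-produced) scalar outputs through constant-in-space channels using identity operators in $\LE{k}{k}$ together with bias-absorbing shifts to neutralize $\varrho$ on those coordinates (the standard trick: either use $\varrho$ in a locally-linear regime around a large bias, or more cleanly replicate the scalar output as an order-0 tensor and carry it through subsequent identity layers that act only on other channels), and finally apply a single scalar $\LE{0}{0}$ readout implementing $\mU \mapsto \sum_i c_i \mU_i$. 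Once linear-combination closure of $\NNfctclass{\:\!\textsc{iwn}}{\varrho}$ is granted, the universality claim follows directly from the Stone--Weierstrass step above.
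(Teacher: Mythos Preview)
Your proof is correct and follows the same Stone--Weierstrass route as the paper: verify that the span of signal-weighted homomorphism densities is a point-separating subalgebra of $C(K)$, then invoke \autoref{thm:hom_density_iwn}. Your explicit treatment of closure under linear combinations is actually more careful than the paper's one-line assertion; the approximate-identity padding you mention first is precisely the mechanism provided by \autoref{lem:approx_identity}, whereas your second ``more cleanly'' alternative does not work as stated, since in \autoref{def:iwn} all channels at a given layer share a common tensor order $k_\step$ and $\varrho$ is applied to every channel.
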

\vspace{-3pt}
The proof (see \autoref{apdsubsec:pf_iwn_universality}) is a straightforward application of the Stone-Weierstrass theorem: The span of the signal-weighted homomorphism densities forms a subalgebra that, by \autoref{thm:cut_distance_equivalences}, is point separating. This result also crucially implies that IWNs can distinguish any two graphon-signals that are not weakly isomorphic. We also want to mention that while IWNs are continuous w.r.t.\ $\delta_\infty$, the proof of \autoref{thm:cut_distance_equivalences} does not extend to this case as $\norm{\cdot}_\infty$ is not a smooth norm on $[0,1]^2$.

\section{Cut Distance and Transferability of Higher-Order WNNs}\label{sec:cut_distance_transferability}

In this section, we discuss the relation of IWNs and more general higher-order \emph{graphon} neural networks (referred to as \enquote{WNNs}) to the cut distance and their transferability. One example besides IWNs are \emph{refinement-based} networks that emulate the $k$-WL test, such as the ones we define in \autoref{apd:refinement_based_gnns}.

\subsection{Cut Distance Discontinuity and Convergence}\label{subsec:cut_distance_discontinuity}

We first note that all nontrivial IWNs are discontinuous w.r.t.\ cut distance, in the following sense:
\begin{restatable}[]{proposition}{iwndiscontinuity}\label{prop:iwn_discontinuity}
    Let $\nonlinearity: [0,1] \to \R$. Then, the assignment $\graphon \ni W \, \mapsto \, \nonlinearity(W) \in \kernel$, where $\nonlinearity$ is applied pointwise, is continuous w.r.t.\ $\normsm{\cdot}_{\square}$ if and only if $\nonlinearity$ is linear.
\end{restatable}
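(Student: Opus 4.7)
The plan is to prove both directions directly, with the non-trivial content living in the \enquote{only if} part. For the easy direction, assume $\nonlinearity$ is linear in the sense that $\nonlinearity(t) = \alpha t + \beta$ on $[0,1]$. Then for any kernels $W, V \in \kernel$ with values in $[0,1]$, pointwise application gives $\nonlinearity(W) - \nonlinearity(V) = \alpha (W-V)$, since the constants $\beta$ cancel. Hence $\normsm{\nonlinearity(W) - \nonlinearity(V)}_\square = |\alpha| \, \normsm{W - V}_\square$, so the map is even Lipschitz continuous with respect to $\normsm{\cdot}_\square$.

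For the converse, I would argue by contrapositive: given a non-linear $\nonlinearity$, I want to exhibit a sequence $(W_n)_n \subset \graphon$ and a limit $W^\ast \in \graphon$ with $\normsm{W_n - W^\ast}_\square \to 0$ but $\normsm{\nonlinearity(W_n) - \nonlinearity(W^\ast)}_\square \not\to 0$. Non-linearity of $\nonlinearity$ on $[0,1]$ yields points $a, b \in [0,1]$ with $a \neq b$ and $\lambda \in (0,1)$ such that, setting $c := \lambda a + (1-\lambda) b \in [0,1]$, one has $\nonlinearity(c) \neq \lambda \nonlinearity(a) + (1-\lambda) \nonlinearity(b)$. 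The target limit will be the constant graphon $W^\ast \equiv c$.

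The core construction uses a quasirandom sequence of $\{0,1\}$-valued graphons with density $\lambda$. Concretely, picking $U_n \sim \graphondistG{n}{\lambda \cdot \mathbf{1}}$ realizations, the fact recalled in \autoref{subsec:background_graphons} that $\delta_\square(\graphondistG{n}{V}, V) \to 0$ yields (after passing to an a.s.-convergent subsequence) a deterministic sequence of $\{0,1\}$-valued graphons $U_n$ with $\delta_\square(U_n, \lambda \cdot \mathbf{1}) \to 0$. Since $\lambda \cdot \mathbf{1}$ is invariant under any $\varphi \in \mpf$ and the cut norm is invariant under relabelings, the cut distance here reduces to the cut norm, so (choosing appropriate representatives) $\normsm{U_n - \lambda \cdot \mathbf{1}}_\square \to 0$. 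Now set $W_n := (a-b) U_n + b \cdot \mathbf{1}$; this is a graphon taking values in $\{a, b\}$, and by linearity of the cut norm convergence, $\normsm{W_n - c \cdot \mathbf{1}}_\square = |a-b| \normsm{U_n - \lambda \cdot \mathbf{1}}_\square \to 0$.

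The punchline is then to notice that $\nonlinearity$ applied pointwise to a two-valued graphon is again affine in the indicator: $\nonlinearity(W_n) = (\nonlinearity(a) - \nonlinearity(b)) U_n + \nonlinearity(b) \cdot \mathbf{1}$, so the same argument gives $\normsm{\nonlinearity(W_n) - c' \cdot \mathbf{1}}_\square \to 0$ with $c' := \lambda \nonlinearity(a) + (1-\lambda) \nonlinearity(b)$. Since $\normsm{\kappa \cdot \mathbf{1}}_\square = |\kappa|$ for any constant kernel (attained at $S = T = [0,1]$), the triangle inequality yields $\normsm{\nonlinearity(W_n) - \nonlinearity(c) \cdot \mathbf{1}}_\square \to |c' - \nonlinearity(c)| > 0$, so pointwise $\nonlinearity$ is discontinuous at $c \cdot \mathbf{1}$. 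The only real obstacle is producing the quasirandom $U_n$, and this is handled entirely by the cited Erdős--Rényi cut-distance convergence; the rest is bookkeeping with the affine reductions.
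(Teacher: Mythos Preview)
Your proof is correct and uses essentially the same mechanism as the paper: both exploit that Erd\H{o}s--R\'enyi samples $\mathbb{G}_n(p)$ converge to the constant graphon $p$ in cut norm, while pointwise $\nonlinearity$ sends them toward $p\nonlinearity(1)+(1-p)\nonlinearity(0)$ instead of $\nonlinearity(p)$. The paper's execution is marginally slicker---it composes with the cut-norm-continuous functional $W \mapsto \int W$ to reduce immediately to the scalar identity $\nonlinearity(p) = p\nonlinearity(1)+(1-p)\nonlinearity(0)$ for all $p$, whereas you work directly with cut norms and carry the extra (unnecessary) generality of arbitrary $a,b,\lambda$---but the underlying idea is the same.
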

\vspace{-3pt}
See \autoref{apdsubsec:pf_iwn_discontinuity} for a proof. This is evident in that node-featured graphs obtained from sampling do not converge to their underlying graphon-signal, a phenomenon first observed by \citet{cai_convergence_2022} in a related setting.
This discontinuity is inherent to $k$-WL \citep{boker_weisfeiler-leman_2023} because it uses multigraph homomorphism densities, which are discontinuous in the cut distance. As such, \emph{any} $k$-WL expressive function defined on graphon-signals would exhibit this discontinuity. The consideration of multigraphs arises from a fundamental difference in how $k$-WL and 1-WL handle edges. For 1-WL, weighted edges are treated simply as \emph{weights}, i.e., function values of a graphon only act through its shift operator and, thus, carry precisely the meaning of edge probabilities. For intuition, note that most operator norms of the shift operator are topologically equivalent to the cut norm \citep[Lemma E.6]{janson_graphons_2013}. In contrast, typical $k$-WL expressive models capture the full \emph{distribution} of these edge weights, rather regarding them as \emph{edge features}. For IWNs (\autoref{sec:iwn}), this manifests through pointwise application of the nonlinearity on the graphon-signal.
Although the parametrization of \citet{maehara_simple_2019} as a linear combination of \emph{simple} homomorphism densities is cut distance continuous, its purely conceptual formulation provides no clear guideline for selecting pattern graphs.

Note that, a priori, this might constitute a disadvantage of such higher-order models compared to MPNNs. In particular, while the space $(\igraphonsignal{r}, \delta_\square)$ is compact---allowing for the direct application of the Stone-Weierstrass theorem and the derivation of generalization bounds as in \citet{levie_graphon-signal_2023}---no similar results hold for $(\igraphonsignal{r}, \delta_p)$ with $p \in [1, \infty)$. Although one can of course restrict the domain to compact subsets, as done for \autoref{cor:iwn_universality}, it is doubtful if real-world distributions of node-featured graphs (or graphon-signals) would be confined to these.
One example of subsets that are indeed compact w.r.t.\ $\delta_p$ distances---though of limited utility, particularly in the context of graph limits---is the set of regular step graphons bounded by some maximum size. This is somewhat similar to the restriction of \citet{keriven_universal_2019} in their universality proof for IGNs/EGNs.

\subsection{Transferability}\label{subsec:transferability}

Often, one may analyze the convergence of a graph ML model $\NNfct{}{}$ to an underlying limit to study the question of its \emph{transferability}, i.e., if
\vspace{1pt}
\begin{equation}
    \NNfct{}{}(G_n, \vf_n) \: \approx \: \NNfct{}{}(G_m, \vf_m) \vspace{1pt}\label{eq:transferability_def}
\end{equation}
holds when $(G_n, \vf_n), (G_m, \vf_m) \sim \graphondistG{n}{W, f}, \graphondistG{m}{W, f}$ as $n, m \in \N$ grow. In the theory of graph limits, the convergence of \eqref{eq:transferability_def} for any graph parameter $\NNfct{}{}$ is also known as \emph{estimability} \citep[§ 15]{lovasz_large_2012}. For the \emph{continuous} MPNNs converging under their respective graph limits, transferability is usually shown simply by invoking the triangle inequality (see, e.g., \citet{ruiz_transferability_2023, le_limits_2023}), which is not possible for typical higher-order models as pointed out in  \autoref{subsec:cut_distance_discontinuity}. Even worse, it is not even guaranteed that random graphs sampled from a graphon-signal become close in the $\delta_p$ distances as they grow in size: For example, take $G_n^{(1)}, G_n^{(2)}$ to be independent Erdős–Rényi graphs of size $n \in \N$. By \citet{lavrov_expected_2023} and \citet[Theorem 9.30]{lovasz_large_2012},
\begin{equation}
    \liminf_{n \to \infty} \, \mathbb{E} \left[ \delta_1(G_n^{(1)}, G_n^{(2)}) \right] \, \geq \, \tfrac{1}{12}\label{eq:delta_1_liminf}
\end{equation}
(note that the expectation in  \eqref{eq:delta_1_liminf} would tend to zero in $\delta_\square$). As, for example, we have seen in \autoref{cor:iwn_universality} that IWNs are universal on compact subsets w.r.t.\ $\delta_1$, one might expect there to be an \enquote{adversarial} IWN which does not converge to the same value for all such random graphs. However, it turns out that this discontinuity can be \enquote{fixed} for a large class of functions:
\begin{restatable}[Transferability]{theorem}{transferability}\label{thm:transferability}
    Let $r > 1$. Let $\NNfct{}{}: \igraphonsignal{r} \to \R$ such that $\NNfct{}{}$ is contained in the closure of
    \begin{equation}
        \mathrm{span} \, \{t((F, \vd), \cdot)\}_{F \text{ multigraph}, \vd \in \N_0^{v(F)}} \: \subseteq \: C_b(\igraphonsignal{r}, \delta_1)
    \end{equation}
    w.r.t.\ uniform convergence. Then, for any $(W, f) \in \graphonsignal{r}$ and $(G_n, \vf_n), (G_m, \vf_m) \sim \mathbb{G}_n(W, f), \mathbb{G}_m(W, f)$,
    \begin{equation}
        \mathbb{E} \, \big| \NNfct{}{}(G_n, \vf_n) - \NNfct{}{}(G_m, \vf_m) \big| \; \to \; 0, \qquad n, m \to \infty. \label{eq:transferability}
    \end{equation}
    \vspace{-5pt}
\end{restatable}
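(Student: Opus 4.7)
The plan is to uniformly approximate the given function (call it $\Phi$ for brevity) by a finite linear combination of multigraph signal-weighted homomorphism densities, reduce these to \emph{simple}-graph densities on the sampled side, and then combine the counting lemma for simple graphs with the graphon-signal sampling lemma \eqref{eq:graphon_signal_sampling_lemma}. To carry this out, I would fix $\varepsilon > 0$ and, by the closure hypothesis, first choose multigraphs $F_1, \dots, F_N$, tuples $\vd_i \in \N_0^{v(F_i)}$ and coefficients $c_1, \dots, c_N \in \R$ such that $\hat\Phi := \sum_{i=1}^N c_i \, t((F_i, \vd_i), \cdot)$ satisfies $\|\Phi - \hat\Phi\|_\infty \leq \varepsilon/2$.

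The crucial observation for the next step is that the random graphs $(G_n, \vf_n)$ and $(G_m, \vf_m)$ are \emph{unweighted}, so their induced step graphons take values only in $\{0,1\}$. On such $\{0,1\}$-valued kernels, the edge product in \eqref{eq:homomorphism_density_signal} depends on the edge multiset of $F_i$ only through its support, which gives $t((F_i, \vd_i), (G_n, \vf_n)) = t((F_i^{\mathrm{s}}, \vd_i), (G_n, \vf_n))$ almost surely, where $F_i^{\mathrm{s}}$ denotes the underlying simple graph of $F_i$ (and analogously for the $m$-sample). The simple-graph counting lemma from \autoref{sec:signal_weighted_homomorphism_densities} then furnishes Lipschitz constants $L_i > 0$ with $|t((F_i^{\mathrm{s}}, \vd_i), (U, h)) - t((F_i^{\mathrm{s}}, \vd_i), (W, f))| \leq L_i \, \delta_\square((U, h), (W, f))$ on $\graphonsignal{r}$, while \eqref{eq:graphon_signal_sampling_lemma} provides $\mathbb{E}\, \delta_\square((W,f), (G_n, \vf_n)) \leq 15/\sqrt{\log n}$ (and similarly for $m$). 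Two triangle inequalities — one routing through the common graphon-signal $(W,f)$ on the density side, and one absorbing the $\varepsilon/2$ approximation error at both endpoints $(G_n, \vf_n)$ and $(G_m, \vf_m)$ — then yield \eqref{eq:transferability} with the explicit constant $M_{\varepsilon, \Phi} := 15 \sum_{i=1}^N |c_i| L_i$.

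The main obstacle, echoing the discussion in \autoref{subsec:cut_distance_discontinuity}, is that multigraph signal-weighted homomorphism densities are \emph{not} continuous with respect to $\delta_\square$; they are only $\delta_1$-continuous, whereas the sampling lemma concentrates only in the weaker cut distance. The multigraph-to-simple-graph collapse on the $\{0,1\}$-valued sampled side is precisely what bridges this gap: once we restrict to sampled unweighted graphs, edge multiplicities become inert and the cut-distance Lipschitz continuity from the simple-graph counting lemma takes over. This is exactly why transferability can be recovered for higher-order WNNs even though such models fail to converge to their graphon-signal limit in $\delta_1$, reconciling \eqref{eq:transferability} with the non-convergence recorded in \eqref{eq:delta_1_liminf}.
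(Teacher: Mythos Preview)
Your proposal is correct and follows essentially the same approach as the paper: approximate by a finite linear combination of multigraph signal-weighted homomorphism densities, collapse these to simple-graph densities on the $\{0,1\}$-valued sampled graphs, and combine the cut-distance Lipschitz continuity from the counting lemma with the graphon-signal sampling lemma, routing through $(W,f)$. The paper packages the simple-graph surrogate as a globally defined function $\widetilde{\NNfct{}{\varepsilon}}$ with a single Lipschitz constant $M$, but this is only a cosmetic difference from your per-term constants $L_i$, and the resulting $M_{\varepsilon,\NNfct{}{}} = 15M$ coincides with your $15\sum_i |c_i| L_i$.
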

The proof (see \autoref{apdsubsec:pf_transferability}) is immediate and consists of replacing a multigraph $F$ in any $t((F, \vd), \cdot)$ by its corresponding simple graph $F^{\mathrm{simple}}$, resulting in a function of $(W, f) \in \graphonsignal{r}$ which is cut distance continuous.
We can further see that the assumptions of \autoref{thm:transferability} are fulfilled for IWNs, as well as more general functions which factorize over the color space of the $k$-WL test (see also \autoref{apdsubsec:pf_transferability_iwn}):
\begin{restatable}[Transferability of higher-order WNNs]{corollary}{transferabilityiwn}\label{cor:transferability_iwn}
    The assumption of \autoref{thm:transferability} holds for
    \vspace{-5pt}
    \begin{enumerate}[(1)]
        \setlength{\itemsep}{0pt}
        \setlength{\parskip}{0pt}
        \item any IWN with continuous nonlinearity $\varrho$,
        \item any $\NNfct{}{}: \igraphonsignal{r} \to \R$ for which
    $\NNfct{}{}(W,f) \: = \: \widetilde{\NNfct{}{}} (\nu_{(W,f)}^{k\text{-}\WL})$
    for a continuous $\widetilde{\NNfct{}{}}: \mathcal{P}(\mathbb{M}^k) \to \R$.
    \end{enumerate}
\end{restatable}
\vspace{-4pt}
Note that, while \autoref{thm:transferability} guarantees convergence, it does not make any statement about the rate. For this, one could show that $\mathcal{N}$ restricted to finite graph-signals extends to a $\delta_\square$-Lipschitz continuous function on $\igraphonsignal{r}$. The challenge here is that a simple and general closed form is not immediate for this extension. Yet, the following can be shown about a subclass of IWNs:
\vspace{2pt}
\begin{restatable}[Quantitative transferability of IWNs, informal]{theorem}{transferabilityquant}\label{thm:quantitative_transferability_iwn}
    Let $r > 1$. For any $\NNfct{\:\!\normalfont{\textsc{iwn}}}{}$ in a universal class of 2-layer IWNs with real-analytic nonlinearity $\varrho$, there exists a constant $M_{\NNfct{\:\!\normalfont{\textsc{iwn}}}{}} > 0$ such that for $(W, f) \in \graphonsignal{r}$ and $(G_n, \vf_n), (G_m, \vf_m) \sim \mathbb{G}_n(W, f), \mathbb{G}_m(W, f)$ for large $n, m$,
    \begin{equation}
        \mathbb{E} \abs{\NNfct{\:\!\normalfont{\textsc{iwn}}}{}(G_n, \vf_n) - \NNfct{\:\!\normalfont{\textsc{iwn}}}{}(G_m, \vf_m)} \; \leq \; M_{\NNfct{\:\!\normalfont{\textsc{iwn}}}{}} \left( (\log n)^{-1/2} + (\log m)^{-1/2} \right). \label{eq:transferability_quantitative}
    \end{equation}
\end{restatable}
\vspace{-5pt}
See \autoref{apdsubsec:quantitative_transferability} for more details. Note that the rate $(\log n)^{-1/2}$ in \eqref{eq:transferability_quantitative} comes from the sampling lemma, and can be substantially better under additional assumptions or different discretization techniques.

\vspace{-5pt}
\begin{figure}[th]
    \centering
     \begin{minipage}{0.18\textwidth}
        \captionsetup{margin=0.3cm}
        \centering
        \includegraphics[width=0.91\linewidth]{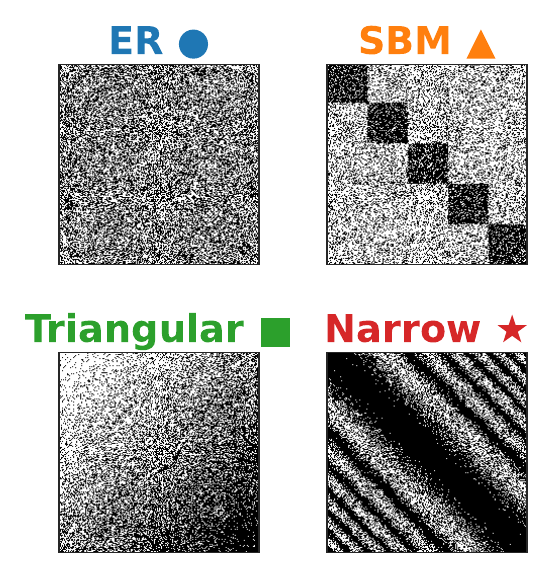}
        \caption{Example graphons $W$.}
        \label{fig:graphons}
    \end{minipage}
    \hfill
    \begin{minipage}{0.40\textwidth}
        \captionsetup{type=figure, margin=17pt}
        \centering
        \vspace{-10pt}
        \subfloat[MPNN]{\includegraphics[width=0.47\linewidth]{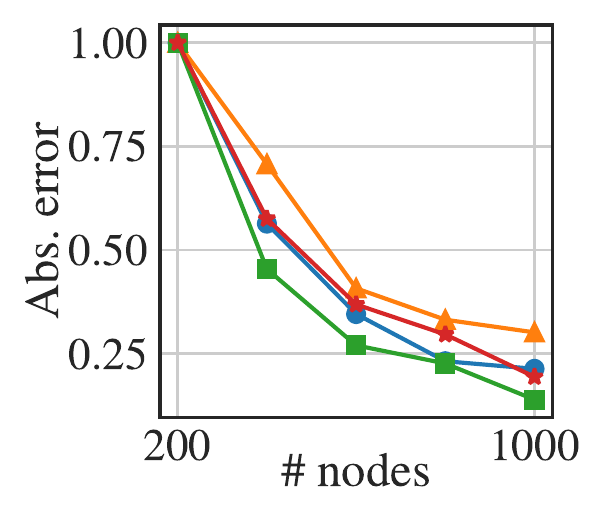}\label{fig:abs_error_mpnn}}
        \subfloat[2-IWN]{\includegraphics[width=0.47\linewidth]{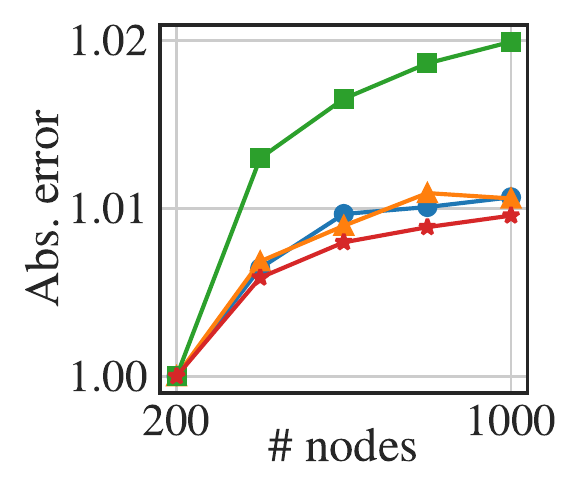}\label{fig:abs_error_2_iwn}}
        \vspace{-5pt}
        \caption{Absolute output errors $\NNfct{}{}(W)$ vs. $\NNfct{}{}(\mathbb{G}_n(W))$.
        \label{fig:abs_errors}}
    \end{minipage}
    \hfill
    \begin{minipage}{0.40\textwidth}
        \captionsetup{type=figure, margin=17pt}
        \centering
        \vspace{-10pt}
        \subfloat[MPNN]{\includegraphics[width=0.47\linewidth]{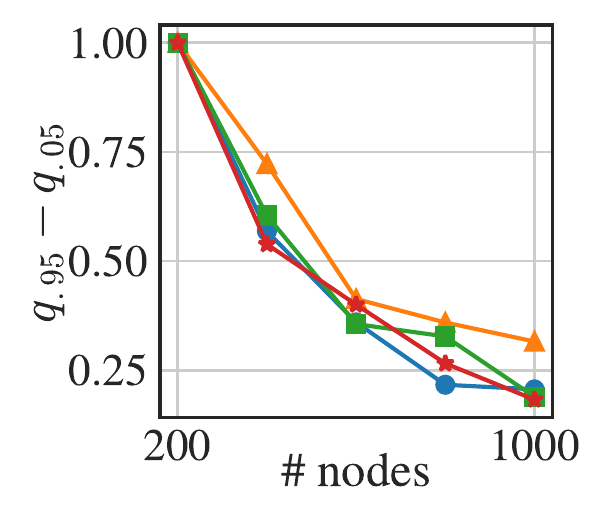}\label{fig:pred_int_mpnn}}
        \subfloat[2-IWN]{\includegraphics[width=0.47\linewidth]{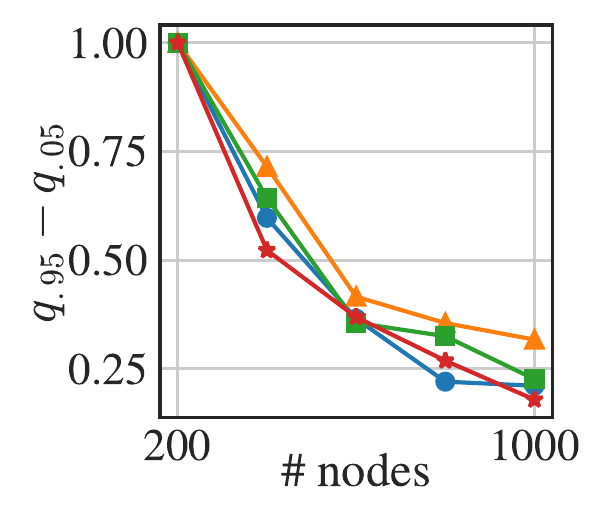}\label{fig:pred_int_2_iwn}}
        \vspace{-5pt}
        \caption{$90\%$ pred. interval widths of $\NNfct{}{}(\mathbb{G}_n(W))$.
        \label{fig:transferability}}
    \end{minipage}
\end{figure}
\vspace{-5pt}
We also validate our theoretical findings for IWNs with a proof-of-concept experiment on the graphons from \autoref{fig:graphons}. For \emph{continuity/convergence}, we plot the absolute errors of the model outputs for the sampled simple graphs in comparison to their graphon limits in \autoref{fig:abs_errors}. Due to the $\delta_\square$-continuity of MPNNs, their errors decrease as the graph size grows. For the IWN, however, this does not hold. Yet, the errors for the IWN stabilize with increasing sizes, suggesting that the outputs converge (just \emph{not} to their graphon limit). For \emph{transferability}, we further plot prediction interval widths of the output distributions on simple graphs for each of the sizes in \autoref{fig:transferability}. Here, the widths contract for both models and there are only minor differences visible between the MPNN and the IWN. This validates \autoref{thm:transferability} and suggests that IWNs can indeed have similar transferability properties as MPNNs. For more details, see \autoref{apd:experiment_details}.

\section{Conclusion}

In this work, we study the expressivity, continuity, and transferability of graphon-based higher-order GNNs on the graphon-signal space \citep{levie_graphon-signal_2023} via \emph{signal-weighted homomorphism densities}. 
We introduce Invariant Graphon Networks (IWNs) and analyze them through $\Lp{p}$ and cut distances on graphons. Significantly extending \citet{cai_convergence_2022}, we demonstrate that IWNs, as a subset of their IGN-small, retain the same expressive power as their discrete counterparts. Unlike MPNNs, IWNs are discontinuous w.r.t.\ cut distance, so standard transferability arguments (e.g., \citet{ruiz_transferability_2023, levie_graphon-signal_2023, le_limits_2023}) do not generalize. This stems from $k$-WL \citep{boker_weisfeiler-leman_2023}, so many $k$-WL expressive models on graphons would have the same limitations. Yet, we show that this discontinuity can be overcome in the sense that higher-order GNNs are still transferable.

One potential direction for future research could be to derive general, explicit, and tight bounds for \autoref{thm:transferability}, beyond the restricted setting of \autoref{thm:quantitative_transferability_iwn}. Furthermore, one could analyze expressive spectral methods \citep{lim_sign_2023, lim_expressive_2023, huang_stability_2024}. More broadly, future work could consider sparse graph limits \citep{le_limits_2023, ruiz_spectral_2024} or inductive biases through training, data distribution, or the specific task.


\subsubsection*{Acknowledgments}

The authors would like to thank Thien Le, Manish Krishan Lal, Dominik Fuchsgruber, and Levi Rauchwerger for insightful discussions at various stages of this work, and Andreas Bergmeister and Eduardo Santos Escriche for careful proofreading. This research was funded by an Alexander von Humboldt professorship.

\subsubsection*{Reproducibility Statement}

We provide rigorous proofs of all our statements in the appendix, along with detailed explanations and the underlying assumptions. A comprehensive overview of our notation is listed in \autoref{apd:notation}. Details for the toy experiment are provided in \autoref{apd:experiment_details}.



\bibliography{iclr2025_conference}
\bibliographystyle{iclr2025_conference}

\newpage
\appendix

\section*{Appendices}

\startcontents[section]
\begingroup
  \setlength{\parskip}{2.5pt}
  \printcontents[section]{l}{0}{\setcounter{tocdepth}{2}}
\endgroup

\newpage

\section{Notation}\label{apd:notation}

\begin{longtable}{p{.24\textwidth}  p{.72\textwidth}}
\caption{We list the most important symbols used in this work.} \label{tab:appendix_notation} \\
\hline
\\[-0.9em]
   $\N; \N_0$; $\mathbb{Q}$; $\R$ & Natural, non-negative integer, rational, real numbers. \\
   $[n]$ & Set $\{1, \dots, n\}$ for $n \in \N$. \\
   $\indfct_A$ & Indicator function in a set $A$. \\
   $\{\cdot\}$ & Explicit list of elements in a set. \\
   $\multiset{\cdot}$ & Explicit list of elements in a multiset. \\
   $\varnothing$ & Empty set and empty tuple. \\
   $V(G); \; v(G)$ & Node set of a graph; number of nodes of a graph $G$. \\
   $E(G); \; e(G)$ & Edge (multi)set of a (multi)graph; number of edges of a (multi)graph $G$. \\
   $\deg(u)$ & Degree of a node $u$ in a graph. \\
   $\mathcal{O}(\cdot)$ & \enquote{Big-O} notation for asymptotic growth of a function. \\
   $o(\cdot)$ & \enquote{Little-o} notation, indicating that one function is dominated by another. \\
   $\NNfct{}{}$ & Generic variable for a neural network (MLP or GNN). \\
   $\mathfrak{F}$ & Generic variable for a function class of neural networks. \\
   $\tw(F)$ & Treewidth of a (multi)graph $F$. \\
   $\homc(F, G)$ & Number of homomorphisms from graph $F$ to $G$. \\
   $\overline{A}$ & Closure of a subset $A$ of a topological space $\mathcal{X}$. \\
   $\mathcal{B}(\mathcal{X})$ & Borel $\sigma$-algebra of a topological space $\mathcal{X}$. \\
   $\sigma(\cdot)$ & Generated $\sigma$-algebra. \\
   $\mathbb{P}$ & Probability measure. \\
   $\mathbb{E}$ & Expected value. \\
   $\lambda; \lambda^k$ & 1-dimensional Lebesgue measure; $k$-dimensional Lebesgue measure. \\
   $\Lp{p}(\mathcal{X})$ & Space of $p$-integrable functions on a measure space $\mathcal{X}$, for $p \in [1, \infty]$. \\
   $\Lp{p}_r(\mathcal{X})$ & Space of $p$-integrable functions, with norm bounded by $r$. \\
   $\norm{\cdot}_\square$ & Cut norm. \\
   $\norm{\cdot}_p$ & $\Lp{p}$ norm of functions on a measure space, for $p \in [1, \infty]$. \\
   $\norm{\cdot}_{p, \mathcal{X}}$ & $\Lp{p}$ norm, with emphasis on the underlying measure space $\mathcal{X}$. \\
   $\norm{\cdot}_{\mathrm{Lip}}$ & Lipschitz norm of a continuous function on a metric space. \\
   $\boundedlin{V_1}{V_2}$ & Space of bounded linear operators from normed vector space $V_1$ to $V_2$. \\
   $\norm{\LinOp}_{p \to q}$ & Operator norm of $\LinOp \in \boundedlin{\Lp{p}(\mathcal{X})}{\Lp{q}(\mathcal{Y})}$. \\
   $C(K)$ & Space of continuous functions from compact topological space $K$ into $\R$, with uniform norm $\norm{\cdot}_\infty$. \\
   $C_b(K)$ & Space of bounded continuous functions from topological space $K$ into $\R$, with uniform norm $\norm{\cdot}_\infty$. \\
   $\kernel$ & Space of kernels. \\
   $\graphon$ & Space of graphons. \\
   $\graphonsignal{r}$ & Space of graphon-signals $\graphon \times \Lp{\infty}_r[0,1]$. \\
   $\igraphon$ & Space of unlabeled graphons. \\
   $\igraphonsignal{r}$ & Space of unlabeled graphon-signals. \\
   $\LinOp_W$ & Shift operator of a graphon $W$. \\
   $\mpbij$ & Measure preserving (almost) bijections of $[0,1]$. \\
   $\mpfg{\mathcal{X}}$ & Measure preserving functions $\mathcal{X} \to \mathcal{X}$, for a measure space $\mathcal{X}$. \\
   $\delta_\square$ & Cut distance. \\
   $\delta_p$ & (Unlabeled) $\Lp{p}$ distance for graphons/kernels. \\
   $\delta_N$ & (Unlabeled) distance w.r.t.\ smooth invariant norms $N=(N_1, N_2)$. \\
   $\overset{w}{\to}$ & Weak convergence of probability measures. \\
   $f_\ast \mu$ & Pushforward of measure $\mu$ under $f$. \\
   $W_G$ & Step graphon of a graph $G$. \\
   $\graphondistH{k}{W}$; $\graphondistH{k}{W, f}$ & Distribution of weighted graphs/graph-signals of size $k$ sampled from a graphon $W$/graphon-signal $(W, f)$. \\
   $\graphondistG{k}{W}$; $\graphondistG{k}{W, f}$ & Distribution of unweighted graphs/graph-signals of size $k$ sampled from a graphon $W$/graphon-signal $(W, f)$. \\
   $U(0,1)$ & Uniform distribution on the interval $[0,1]$. \\
   $t(F, W)$ & Homomorphism density from a (multi)graph $F$ into graphon $W$. \\
   $t((F, \vd), (W,f))$ & Signal-weighted homomorphism density from a (multi)graph $F$, $\vd \in \N_0^{v(F)}$, into graphon-signal $(W, f)$. \\
   $\mF = (F, \va, \vb, \vd)$ & Tri-labeled graph. \\
   $\mathcal{M}^{k, \ell}$ & Set of all tri-labeled graphs with $k$ input, $\ell$ output vertices. \\
   $\mF_1 \circ \mF_2$ & Composition of tri-labeled graphs. \\
   $\mF_1 \cdot \mF_2$ & Schur product of tri-labeled graphs. \\
   $\mathcal{F}^k$ & Set of atomic tri-labeled graphs. \\
   $\langle \mathcal{F}\rangle$ & Set of $\mathcal{F}$-terms. \\
   $[\![\mathbb{F}]\!]$ & Evaluation of a term $\mathbb{F} \in \langle \mathcal{F} \rangle$. \\
   $h(\mathbb{F})$ & Height of a term $\mathbb{F}$. \\
   $\LinOp_{\mF\to(W,f)}$ & Graphon-signal operator associated with tri-labeled graph $\mF$. \\
   $\mathbb{M}^k_{\step}$ & Space of $k$-WL colors up to step $\step$. \\
   $\mathbb{M}^k$ & Space of $k$-WL colors. \\
   $p_{t\to\step}$; $p_{\infty\to\step}$ & Canonical projections $\mathbb{M}_t^k \to \mathbb{M}_\step^k$, $\mathbb{M}^k \to \mathbb{M}_\step^k$. \\
   $\mathbb{P}^k$ & Space of $k$-WL refinements. \\
   $F^{\mathbb{F}}$ & Realizing functions on $\mathbb{M}^k$ of a term $\mathbb{F}$. \\
   $t^{\mathbb{F}}$ & Realizing function on $\mathbb{P}^k$ of a term $\mathbb{F}$. \\
   $\cWL_{(W,f)}^{k\text{-}\WL,(\step)}$ & $k$-WL measure of $(W, f) \in \graphonsignal{r}$. \\
   $\nu_{(W,f)}^{k\text{-}\WL}$ & $k$-WL distribution of $(W,f) \in \graphonsignal{r}$. \\
   $\LEg{\mathcal{X}}{k}{\ell}$ & Linear equivariant layers on measure space $\mathcal{X}$. \\
   $\LE{k}{\ell}$ & Linear equivariant layers on $[0,1]$. \\
   $\mathcal{F}_k^{(n)}$ & Regular step functions in $\Lpuig{2}{k}$ at resolution $n$. \\
   $\Gamma_m$ & Set of partitions of $[m]$, $m \in \N_0$. \\
   $\bell(m)$ & $\abs{\Gamma_m}$, i.e., number of partitions of $[m]$. \\
   $\widetilde{\Gamma}_{k,\ell}$ & Set of partitions of $[k+\ell]$ that index a basis of $\LE{k}{\ell}$. \\
   $\LinOp_\gamma$ & Linear operator w.r.t.\ $\gamma \in \Gamma_{k+\ell}$.
   \vspace{2pt}
\\
\hline
\end{longtable}

\newpage
\section{Extended Background}\label{apd:extended_background}

\subsection{Topology and Measure Theory}\label{apdsubsec:topology_measure_theory}

We briefly recall fundamental definitions and results from topology, measure theory, and the theory of measures on Polish spaces that are used in this work. See, for example, \citet{simon_real_2015} or \citet{elstrodt_mas-_2018} for comprehensive primers.

\subsubsection{Topology}\label{apdsubsubsec:topology}

A \newterm{topological space} is a pair $(\mathcal{X}, \mathcal{O})$, where $\mathcal{X}$ is a set and the \newterm{topology} $\mathcal{O} \subseteq 2^\mathcal{X}$ is a collection satisfying that $\mathcal{X},\varnothing\in\mathcal{O}$, $\bigcup_{\iota\in I} U_\iota\in\mathcal{O}$ for any family $\{U_\iota\}_{\iota \in I}$ of $U_\iota \in \mathcal{O}$, and $U_1\cap\cdots\cap U_n\in\mathcal{O}$ for any $U_1,\dots,U_n\in\mathcal{O}$. I.e., a topology is closed under arbitrary unions and finite intersections. A set $U \in \mathcal{O}$ is called \newterm{open}, and a set $A \in 2^\mathcal{X}$ is \newterm{closed} if its complement $A^\mathsf{c} := \mathcal{X} \setminus A$ is open. The \newterm{closure} $\overline{A}$ of a set $A \subseteq \mathcal{X}$ is the smallest closed set containing $A$. A subset $A\subset \mathcal{X}$ is \newterm{dense} if its closure $\overline{A}=\mathcal{X}$. A topological space is \newterm{separable} if it has a countable dense subset. A \newterm{neighborhood} of $x \in \mathcal{X}$ is a set $A \subseteq \mathcal{X}$ such that there is an open set $U$ with $x \in U \subseteq A$.
If the topology is clear from context, it is often left implicit.

A \newterm{metric space} is a pair $(\mathcal{X},d)$ with metric $d\colon \mathcal{X}\times \mathcal{X}\to[0,\infty)$ satisfying positive definiteness $d(x,y)=0$ iff $x=y$, symmetry $d(x, y) = d(y, x)$, and the triangle inequality $d(x, z) \leq d(x, y) + d(y, z)$ (here, $x, y, z \in \mathcal{X}$). A \newterm{pseudometric} is a function $d$ that satisfies all of the previous requirements except positive definiteness. A metric $d$ on $\mathcal{X}$ induces a topology by choosing the coarsest topology on $\mathcal{X}$ under which all balls $B_\varepsilon(x) := \{y \in \mathcal{X} | d(x, y) < \varepsilon\}$ for $x \in \mathcal{X}$, $\varepsilon > 0$, are open. A topological space is \newterm{metrizable} if its topology can be induced by a metric.
A topological space is \newterm{Hausdorff} if for any two points $x \neq y$ there exists a pair of disjoint neighborhoods. Metric spaces are trivially Hausdorff by positive definiteness.
A metric space is \newterm{complete} if every Cauchy sequence in it converges to a point in the space. Note that this is a property of the metric itself and not of the induced topology. Spaces like $\R$ or $\R^n$ are typically considered with their \newterm{standard topology}, i.e. the one induced by the Euclidean norm/distance (which is the same for all norms). 

A function $f\colon \mathcal{X}\to \mathcal{Y}$ between topological spaces $(\mathcal{X},\mathcal{O}_\mathcal{X})$ and $(\mathcal{Y},\mathcal{O}_\mathcal{Y})$ is \newterm{continuous} if for every open set $V\in\mathcal{O}_\mathcal{Y}$, one has
$f^{-1}(V)\in\mathcal{O}_\mathcal{X}$. For metric spaces with their induced topology, this is equivalent to the standard definitions of continuity (via $\varepsilon$-$\delta$ or sequences).

An open cover of a topological space $\mathcal{X}$ is a family of open sets whose union is all of $\mathcal{X}$. A topological space $\mathcal{X}$ is \newterm{compact} if every open cover of $\mathcal{X}$ has a finite subcover; in metric spaces this is equivalent to every sequence in $\mathcal{X}$ having a convergent subsequence.

The \newterm{product topology} on $\prod_{\iota\in I} \mathcal{X}_\iota$ is the coarsest topology making all projections $p_j\colon \prod_{\iota\in I}\mathcal{X}_\iota\to X_j$ continuous, and if $A\subseteq \mathcal{X}$, the \newterm{subspace topology} on $A$ is $\{U\cap A \,|\, U \text{ open in } \mathcal{X}\}$. The convergence of a sequence in a topological product space is equivalent to convergence of all of its components, i.e., images under the projections $p_\iota$. A subset $A \subseteq \mathcal{X}$ is \newterm{relatively compact} if its closure is compact. By the \newterm{Heine-Borel theorem}, subsets of $\R^n$ are compact iff they are closed and bounded (in any norm).

A closed subset of a compact space is compact w.r.t.\ the subspace topology. The converse also holds if the space is Hausdorff.
\newterm{Tychonoff's theorem} states that \emph{any} product $\prod_{\iota\in I}\mathcal{X}_\iota$ of compact topological spaces is again compact (regardless of the cardinality of $I$). A topological space is \newterm{normal} if any two disjoint closed subsets have disjoint open neighborhoods (i.e., open sets containing them). Every metrizable space is normal. The \newterm{Tietze extension theorem} states that if $\mathcal{X}$ is normal and $A\subseteq \mathcal{X}$ is closed, then any continuous function $f\colon A\to\R$ can be extended to a continuous function $\widetilde{f}\colon \mathcal{X}\to\R$.

Let $K$ be a compact Hausdorff space. Write $C(K, \R)$ for the space of all continuous functions on $K$ (which are all bounded), equipped with the topology of uniform convergence, i.e., $\normsm{\cdot}_\infty$. With pointwise addition and multiplication, this space becomes an algebra. The \newterm{Stone-Weierstrass theorem} states that if $A \subset C(K, \R)$ is a subalgebra that separates points in $K$ and contains the constant functions, then $A$ is dense in $C(K, \R)$.

\subsubsection{Measure Theory}\label{apdsubsubsec:measure_theory}

A \newterm{measurable space} $(\mathcal{X}, \mathcal{A})$ is a pair where $\mathcal{X}$ is an underlying set, and $\mathcal{A} \subseteq 2^{\mathcal{X}}$ is a \newterm{$\sigma$-algebra}, which fulfills $\varnothing \in \mathcal{A}$, and is stable under complements as well as \emph{countable} unions. Note that this also implies stability under countable intersections. A \newterm{measure space} ($\mathcal{X}, \mathcal{A}$, $\mu$) is a tuple, where $(\mathcal{X}, \mathcal{A})$ is a measurable space and $\mu$ is a \newterm{measure}, i.e., a function from $\mathcal{A}$ to $[0, \infty]$ which satisfies $\mu(\varnothing) = 0$, and \emph{$\sigma$-additivity} $\mu(\bigcup_{n \in \N} A_n) = \sum_{n \in \N} \mu(A_n)$ for any disjoint $\{A_n\}_n \subseteq \mathcal{A}$. 
If the $\sigma$-algebra and/or measure is clear from context, we omit it.
For any $\mathcal{G} \subseteq \mathcal{X}$, define its \newterm{generated $\sigma$-algebra} $\sigma(\mathcal{G})$ as the smallest $\sigma$-algebra containing $\mathcal{G}$ (this is well-defined as it is simply the intersection of all $\sigma$-algebras containing $\mathcal{G}$). A property of $\mathcal{X}$ holds \newterm{almost everywhere} (a.e.) if the set $N \in \mathcal{A}$ on which this property does not hold is a \newterm{null set}, i.e., $\mu(N) = 0$. A \newterm{probability space} $(\mathcal{X}, \mathcal{A}, \mathbb{P})$ is a measure space with $\mathbb{P}(\mathcal{X}) = 1$.

The \newterm{Borel $\sigma$-algebra} $\mathcal{B}(\mathcal{X})$ on a topological space $\mathcal{X}$ is the $\sigma$-algebra generated by its open sets. Any continuous function on such a space is also measurable. The \newterm{Lebesgue measure} $\lambda$ on $\R$ is the unique measure on $\mathcal{B}(\R)$ assigning intervals to its length. Analogously, $\lambda^n$ on $\mathcal{B}(\R^n)$ is the unique measure assigning $n$-dimensional cuboids to its volume. These assignments determine the Lebesgue measure uniquely under translation invariance and regularity conditions. Often, the Lebesgue measure is considered on the \emph{Lebesgue $\sigma$-algebra}, which is the completion of the Borel $\sigma$-algebra, containing all subsets of null sets. For this work, the distinction between both is not important, and we will work just with Borel sets. The \newterm{counting measure} on a set $\mathcal{X}$ is defined by mapping each subsets to its cardinality. Similarly to topologies, we can define \newterm{subspace} and \newterm{product $\sigma$-algebras}.

A function $f: (\mathcal{X}, \mathcal{A}_\mathcal{X}) \to (\mathcal{Y}, \mathcal{A}_\mathcal{Y})$ between two measurable spaces is \newterm{measurable} if $f^{-1}(A) \in \mathcal{A}_\mathcal{X}$ for every $A \in \mathcal{A}_\mathcal{Y}$.
The \newterm{Lebesgue integral} of a measurable function $f\colon (\mathcal{X}, \mathcal{A}, \mu)\to (\R, \mathcal{B}(\R))$ is denoted by $\int_\mathcal{X} f\,\mathrm{d}\mu$, and is defined via taking a.e. limits of indicator and step functions.

A linear operator $\LinOp \colon (V, \normsm{\cdot}_V)\to (W, \normsm{\cdot}_W)$ between two normed spaces is \newterm{bounded} iff $\LinOp$ is continuous w.r.t.\ their induced metrics, which is equivalent to $\|\LinOp\|_{V \to W}:=\sup_{\|v\|_V=1}\|\LinOp v\|_W<\infty$.
For a measure space $(\mathcal{X},\mathcal{A},\mu)$ and $1\leq p<\infty$, its $\Lp{p}$ space is defined as $\Lpg{p}{\mathcal{X}}=\{f\colon \mathcal{X}\to\R|\, f\text{ is measurable, } \|f\|_p=(\int_\mathcal{X}|f|^p\,\mathrm{d}\mu)^{1/p}<\infty\}$. $L^\infty(\mathcal{X})$ is defined via the essential supremum, $\|f\|_\infty= \esssup \,f := \inf\{M\ge0|\, |f(x)|\le M\text{ a.e.}\}$. Functions that agree a.e.\ are identified.

In $\Lp{p}$ spaces, the following inequalities hold: \newterm{Minkowski's inequality} states that $\|f+g\|_p\le\|f\|_p+\|g\|_p$. \newterm{Jensen's inequality} states that in a probability space $\mathcal{X}$ and for a convex function $\phi\colon\R\to\R$, $\phi\Bigl(\int_\mathcal{X} f\,d\mu\Bigr)\le\int_\mathcal{X}\phi(f)\,\mathrm{d}\mu$. \newterm{Hölder's inequality} states that for \emph{dual coefficients} $p, q \in [1, \infty]$ with $1/p+1/q=1$, we have $\int_\mathcal{X}|fg|\,d\mu\le\|f\|_p\|g\|_q$.

If $f\colon \mathcal{X}\to \mathcal{Y}$ is measurable and $\mu$ is a measure on $\mathcal{X}$, the \newterm{pushforward} measure $f_\ast\mu$ on $\mathcal{Y}$ is defined by $f_\ast\mu(A):=\mu\bigl(f^{-1}(A)\bigr)$ for all measurable $A \subseteq \mathcal{Y}$. If $g: \mathcal{Y} \to \R$ is measurable, then $\int_{\mathcal{Y}} g \, \mathrm{d}f_\ast \mu = \int_{\mathcal{X}} g \circ f \, \mathrm{d}\mu$ holds for the Lebesgue integral. A \newterm{measure preserving function} between two measure spaces $(\mathcal{X}, \mu)$ and $(\mathcal{Y}, \nu)$ is a measurable function $\varphi: \mathcal{X} \to \mathcal{Y}$ such that $\varphi_\ast \mu = \nu$. Two measure spaces $\mathcal{X}$ and $\mathcal{Y}$ are \newterm{isomorphic} if there exists a measure preserving \emph{bijection} between them whose inverse is also measure preserving. The spaces are \newterm{almost isomorphic} if the former holds for some subsets of full measure of $\mathcal{X}, \mathcal{Y}$.

\subsubsection{Measures on Polish Spaces}\label{apdsubsubsec:measures_polish_spaces}

A \newterm{Polish space} $\mathcal{X}$ is a topological space that is separable and \emph{completely} metrizable, meaning that its topology is induced by a metric $d$ w.r.t.\ which $(\mathcal{X},d)$ is complete. We typically consider a Polish space with its Borel $\sigma$-algebra. A \newterm{standard Borel probability space} is a probability space defined on the Borel $\sigma$-algebra of a Polish space. Notably, by the \newterm{isomorphism theorem}, every \emph{nonatomic} standard Borel probability space (meaning there are no points of positive measure) is almost isomorphic to the unit interval $([0,1], \mathcal{B}([0,1]), \lambda)$. By renormalization, a similar result holds for all \emph{finite} measures.

Let $\mathcal{P}(\mathcal{X})$ be the set of all Borel probability measures on a Polish space $\mathcal{X}$. We equip this set with a topology as well: The \newterm{weak topology} of $\mathcal{P}(\mathcal{X})$ is the coarsest topology making all the maps $\mu \mapsto \int_{\mathcal{X}} f \, \mathrm{d} \mu$ continuous, where $f\in C_b(\mathcal{X})$ is considered over the \emph{bounded} continuous functions on $\mathcal{X}$. This corresponds to the weak-$\ast$-topology on the dual $C_b(\mathcal{X})^\ast$, i.e., bounded linear functionals on $(C_b(\mathcal{X}), \normsm{\cdot}_\infty)$.
A sequence $(\mu_n)_n$ of probability measures on $\mathcal{X}$ is said to \newterm{converge weakly} to  $\mu$ (denoted $\mu_n\overset{w}{\to}\mu$) if $\int_\mathcal{X} f\,\mathrm{d}\mu_n\to\int_\mathcal{X} f\,\mathrm{d}\mu$ for every $f\in C_b(\mathcal{X})$.

The \newterm{Portmanteau theorem} gives several equivalent formulations of weak convergence (for example, convergence of the measures evaluated on continuity sets, or convergence of the integrals only for a dense subset of $C_b(\mathcal{X})$). On $\mathcal{X} = \R$, this is precisely convergence of random variables \emph{in distribution}. Notably, if $\mathcal{X}$ is Polish, then so is $\mathcal{P}(\mathcal{X})$, and if $\mathcal{X}$ is metrizable and compact, this also carries over to $\mathcal{P}(\mathcal{X})$.

By \newterm{Prokhorov's theorem}, a family of probability measures on a Polish space $\mathcal{X}$ is relatively compact (with respect to the weak topology) iff it is tight, meaning that the total probability mass can be approximated arbitrarily well by compact subsets $K_\varepsilon \subseteq \mathcal{X}$ \emph{uniformly} on the family of measures.

\subsection{Characterization of the IGN Basis}\label{apdsubsec:characterization_ign_basis}

We restate the characterization of the IGN basis introduced by \citet{cai_convergence_2022}. As described by \citet{maron_invariant_2019}, $\dim \LEg{[n]}{k}{\ell} \, = \, \bell(k+\ell)$, i.e., the number of partitions $\Gamma_{k+\ell}$ of the set $[k+\ell]$.
In the basis of \citet{cai_convergence_2022}, each basis element $L_\gamma^{(n)}$ associated with a partition $\gamma \in \Gamma_{k+\ell}$ can be characterized as a sequence of basic operations.

Given $\gamma \in \Gamma_{k+\ell}$, divide $\gamma$ into $3$ subsets $\gamma_1 := \{A \in \gamma \,|\, A \subseteq [k]\}$, $\gamma_2 := \{A \in \gamma \,|\, A \subseteq k + [\ell]\}$, $\gamma_3 := \gamma \setminus (\gamma_1 \cup \gamma_2)$. Here, the numbers $1, \dots, k$ are associated with the input axes and $k+1, \dots, k+\ell$ with the output axes respectively.
\begin{itemize}
    \setlength{\itemsep}{0pt}
    \setlength{\parskip}{0pt}
    \item[\circled{1}] (\emph{Selection}: $\mH \mapsto \mH_{\gamma}$). In a first step, we specify which part of the input tensor $\mH \in \R^{n^k}$ is under consideration. Take $\restr{\gamma}{[k]} := \{A \cap [k]\,|\, A \in \gamma, A \cap [k] \neq \varnothing\}$ and construct a new $\abssm{\gamma_1} + \abssm{\gamma_3} = \abssm{\restr{\gamma}{[k]}}$-tensor $\mH_{\gamma}$ by selecting the diagonal of the $k$-tensor $\mH$ corresponding with the partition $\restr{\gamma}{[k]}$.
    \item[\circled{2}] (\emph{Reduction}: $\mH_\gamma \mapsto \mH_{\gamma, \textbf{red}}$). We average $\mH_\gamma$ over the axes $\gamma_1 \subseteq \restr{\gamma}{[k]}$, resulting in a tensor $\mH_{\gamma, \textbf{red}}$ of order $\abssm{\gamma_3}$, indexed by $\restr{\gamma_3}{[k]}$.
    \item[\circled{3}] (\emph{Alignment}: $\mH_{\gamma, \textbf{red}} \mapsto \mH_{\gamma, \textbf{align}}$). We align $\mH_{\gamma, \textbf{red}}$ with a $\abssm{\gamma_3}$-tensor $\mH_{\gamma, \textbf{align}}$ indexed by $\restr{\gamma_3}{k+[\ell]}$, sending for $A \in \gamma_3$ the axis $A \cap [k]$ to $A \cap [\ell]$.
    \item[\circled{4}] (\emph{Replication}: $\mH_{\gamma, \textbf{align}} \mapsto \mH_{\gamma, \textbf{rep}}$). Replicate the $\abssm{\gamma_3}$-tensor $\mH_{\gamma, \textbf{align}}$ indexed by $\restr{\gamma_3}{k+[\ell]}$ along the axes in $\gamma_2$. Note that if $\restr{\gamma_3}{k+[\ell]} \cup \gamma_2$ contains non-singleton sets, the output tensor is supported on some diagonal.
\end{itemize}
Aggregations in this procedure can either be normalized (as described here) or simple sums.
The basis element $\LinOp_\gamma^{(n)}: \R^{n^k} \to \R^{n^\ell}$ can now be described by the assignment $\LinOp_\gamma^{(n)}(\mH) := \mH_{\gamma, \textbf{rep}}$, and
\begin{equation}
    \LEg{[n]}{k}{\ell} \: = \: \mathrm{span} \left\{ \LinOp^{(n)}_\gamma \,\middle|\, \gamma \in \Gamma_{k+\ell} \right\}. \label{eq:ign_basis_discrete}
\end{equation}

\subsection{IGN-small \citep{cai_convergence_2022}}\label{apdsubsec:ign_small}

\citet{cai_convergence_2022} study the convergence of discrete IGNs applied to graphs sampled from a graphon to a continuous version of the IGN defined on graphons. For this, they use the full IGN basis and a \emph{partition norm}, which is for $U \in \Lpuig{2}{k}$ a $\bell(k)$-dimensional vector consisting of $\Lp{2}$ norms of $U$ on all possible diagonals. While they show that convergence of a discrete IGN on weighted graphs sampled from a graphon to its continuous counterpart holds, they also demonstrate that this is not the case for unweighted graphs with $\{0,1\}$-valued adjacency matrix.

As a remedy, \citet{cai_convergence_2022} constrain the IGN space to \emph{IGN-small}, which consists of IGNs for which applying the discrete version to a grid-sampled step graphon yields the same output as applying the continuous version and grid-sampling afterwards.
In the following definition, we will formalize this. Here, $\mathcal{F}_k^{(n)}$ denotes the regular $k$-\emph{dimensional step kernels} on $[0,1]$, $k \in \N_0$.

\begin{definition}[IGN-small \citep{cai_convergence_2022}]\label{def:ign_small}
    Let $\NNfct{}{}$ be defined as in \autoref{def:iwn}, with the only difference that $\LE{k}{\ell}$ is replaced by the full IGN basis, where averaging steps should be understood as integration. \citet{cai_convergence_2022} call such $\NNfct{}{}$ a \newterm{continuous IGN}. For any basis element $\LinOp_\gamma$, $\gamma \in \Gamma_{k+\ell}$, denote its discrete version at resolution $n \in \N$ by $\LinOp_\gamma^{(n)}$ and the network obtained by discretizing all equivariant linear layers by $\NNfct{(n)}{}$. Let
    \begin{equation}
        S^{(n)}: \R^{[0,1]^k \times d} \to \R^{n^k \times d}, \; \mU \mapsto (U(i/n, j/n))_{i,j=1}^n
    \end{equation}
    be the grid-sampling operator. Then, $\NNfct{}{}$ is contained in \newterm{IGN-small} if
    \begin{equation}
        (S^{(n)} \circ \NNfct{}{})(W, f) \; = \; (\NNfct{(n)}{} \circ S^{(n)})(W, f)
    \end{equation}
    for any $(W, f) \in \graphonsignal{r}$ such that $W \in \mathcal{F}_2^{(n)}$, $f \in \mathcal{F}_1^{(n)}$. In this case, the input to such an IGN is $[(x, y) \mapsto (W(x, y), f(x))]$.
\end{definition}

\citet{cai_convergence_2022} show that convergence of IGN-small can be achieved in a model where a \(\{0,1\}\)-valued adjacency matrix is sampled from the graphon, provided that certain assumptions on the graphon and the signal—such as Lipschitz continuity—and a prior estimation of an edge probability are satisfied (see Theorem 4). It is important to note, however, that assuming the graphon is continuous is a rather strong condition, as it implies a topological structure on the node set corresponding to the unit interval. In contrast, similarly to \citet{levie_graphon-signal_2023}, we treat $[0,1]$ solely as a \emph{measure space}, which, being almost isomorphic to any nonatomic standard Borel probability space, is much more general.
Regarding the expressivity of IGN-small, \citet{cai_convergence_2022} establish that this model class can approximate spectral GNNs with arbitrary precision (cf.\ Theorem 5).

\subsection{Tree Decomposition and Treewidth}\label{apdsubsec:tree_decomposition}

In this section, we will recall the tree decomposition of a graph and the related notion of \emph{treewidth}, which essentially captures how \enquote{far} a graph is from being a tree. See for example \citet[§ 12.3]{diestel_graph_2017} for a more in-depth discussion of this fundamental graph theoretic concept. We use the specific notation of \citet{boker_weisfeiler-leman_2023}.

\begin{definition}[Tree Decomposition of a Graph]
    Let $G$ be a graph. A \newterm{tree decomposition} of $G$ is a pair $(T, \beta)$, where $T$ is a tree and $\beta: V(T) \to 2^{V(G)}$ such that
    \begin{enumerate}[(1)]
        \setlength{\itemsep}{0pt}
        \setlength{\parskip}{0pt}
        \item for every $v \in V(G)$, the set $\{t \,|\, v \in \beta(t)\}$ is nonempty and connected in $T$,
        \item for every $e \in E(G)$, there is a node $t \in V(T)$ such that $e \subseteq \beta(t)$.
    \end{enumerate}
\end{definition}

For $t \in V(T)$, the sets $\beta(t) \subseteq V(G)$ are commonly referred to as \emph{bags} of the tree decomposition. Note that every graph $G$ has a trivial tree decomposition, given by a tree consisting of one node, with the bag being the entire node set $V(G)$. However, we are generally interested in finding tree decompositions with smaller bags. This leads us to the concept of \emph{treewidth}:

\begin{definition}[Treewidth of a Graph]
    Let $G$ be a graph. For any tree decomposition $(T, \beta)$ of $G$, define its \newterm{width} as
    \begin{equation}
        \max \{ \abs{\beta(t)} \,|\, t \in V(T) \} - 1.
    \end{equation}
    The \newterm{treewidth} of a graph $G$ is then the minimum width of all tree decompositions of $G$.
\end{definition}

Note that, the edge graph of a tree $G$ can be seen as a tree decomposition of $G$, with each edge being a bag. Hence, the treewidth of a tree is 1. It can also be shown that, e.g., the treewidth of a circle of size at least 3 is 2. The definition can be extended to multigraphs by simply ignoring the edge multiplicities, i.e., considering the \emph{set} of edges instead of the multiset.

\subsection{Graphon-Signal Space \citep{levie_graphon-signal_2023}}\label{apdsubsec:graphon_signal_space}

Without reintroducing the graphon-signal space (see \autoref{subsec:background_graphon_signals} for the basic definitions), we formally restate two of the main results of \citet{levie_graphon-signal_2023} relevant to this work. Central to their contribution, \citet{levie_graphon-signal_2023} proves compactness of the graphon-signal space and provides a bound on its covering number. Note that something similar does \emph{not} hold for any of the $\delta_p$ distances.
\begin{theorem}[\citet{levie_graphon-signal_2023}, Theorem 3.6]\label{thm:graphon_signals_compactness}
    The space $(\igraphonsignal{r}, \delta_\square)$ is compact. Moreover, given $r>0$ and $c>1$, for every sufficiently small $\varepsilon>0$, the space can be covered by $2^{k^2}$ balls of radius $\varepsilon$, where $k=\lceil 2^{\frac{9c}{4\varepsilon^2}}\rceil$.
\end{theorem}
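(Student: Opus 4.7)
The plan is to imitate the classical proof of graphon compactness (Lovász's weak regularity / martingale approach) and augment it with the signal component. The key tool is a \emph{weak regularity lemma for graphon-signals}: for every $(W,f) \in \graphonsignal{r}$ and every $\varepsilon>0$, there exists a partition $\mathcal{P}=\{P_1,\dots,P_k\}$ of $[0,1]$ of size at most $k = k(\varepsilon,r)$ such that the averaged step graphon-signal $(W_\mathcal{P}, f_\mathcal{P})$, obtained by replacing $W$ by its mean on each $P_i\times P_j$ and $f$ by its mean on each $P_i$, satisfies $\norm{(W-W_\mathcal{P},\, f-f_\mathcal{P})}_\square \leq \varepsilon$. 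The target bound is $k \lesssim 2^{C/\varepsilon^2}$ for some $C$ depending only on $r$; this is the dominant source of the $k=\lceil 2^{9c/(4\varepsilon^2)}\rceil$ factor in the statement.

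I would prove this regularity lemma by a standard energy-increment argument, but with a combined potential function $\Phi(\mathcal{P}) := \sum_{i,j} \lambda(P_i)\lambda(P_j)\, \overline{W}_{ij}^{\,2} + c_r \sum_{i} \lambda(P_i)\, \overline{f}_i^{\,2}$, where $\overline{W}_{ij},\overline{f}_i$ are the averages on the blocks and $c_r$ is a scaling constant matched to $r$. The potential is bounded above by $1 + c_r r^2$. If $\normsm{(W-W_\mathcal{P}, f-f_\mathcal{P})}_\square > \varepsilon$, the cut-norm witness (either sets $S,T \subseteq [0,1]$ for the graphon part or a set $S \subseteq [0,1]$ for the signal part) can be used to refine $\mathcal{P}$ in such a way that $\Phi$ strictly increases by at least $\varepsilon^2/c$ for an absolute constant $c$. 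Iterating at most $\mathcal{O}(1/\varepsilon^2)$ times and noting that each refinement step at most squares the number of parts yields the claimed doubly-exponential bound on $k$.

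Next, compactness follows from the regularity lemma by a diagonal sequential argument: given $(W_n,f_n) \in \graphonsignal{r}$, for every $m\in\N$ approximate $(W_n,f_n)$ by a step graphon-signal on a partition of size $k_m$; after relabelling by a suitable $\varphi\in\mpf$, we may take the partition to be the canonical regular one $\{[(i-1)/k_m, i/k_m]\}$. The values $\overline{W}_{ij}^{(n)} \in [0,1]$ and $\overline{f}_i^{(n)} \in [-r,r]$ live in a compact Euclidean cube, so a subsequence converges; a standard diagonal extraction over $m$ and the triangle inequality produce a $\delta_\square$-Cauchy subsequence of the $(W_n,f_n)$, whose limit exists in $\graphonsignal{r}$ by completeness of the cut distance (inherited from completeness for graphons and for $\Lp{1}$ signals).

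The covering-number bound is then a direct counting step once the regularity lemma is in place. Fix $k$ as above and restrict to the canonical regular partition. A step graphon-signal is encoded by $(\overline{W}_{ij})_{i,j=1}^{k} \in [0,1]^{k^2}$ and $(\overline{f}_i)_{i=1}^k \in [-r,r]^k$. Discretizing each coordinate at scale $\varepsilon$ gives an $\varepsilon$-net of cardinality at most $(\lceil 1/\varepsilon\rceil)^{k^2}\cdot (\lceil 2r/\varepsilon\rceil)^{k}$, which for sufficiently small $\varepsilon$ and an appropriate choice of the constant $c>1$ is bounded by $2^{k^2}$ (the $k$-dimensional signal-factor is swamped by the $k^2$-dimensional graphon-factor). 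Combining with the regularity approximation via the triangle inequality yields the claimed $\varepsilon$-covering.

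The main obstacle is the regularity lemma with a uniform, quantitatively controlled constant that cleanly absorbs the signal contribution into the graphon energy increment; getting the exact exponent $9c/(4\varepsilon^2)$ requires carefully tracking the scaling of the signal norm $\normsm{f}_\square$ relative to $\normsm{W}_\square$ and choosing the weighting $c_r$ so that the combined energy increment from either a graphon-type or a signal-type cut-norm witness is at least $\varepsilon^2/c$. Everything else (diagonal extraction, counting, triangle inequality) is routine once this quantitative regularity statement is in hand.
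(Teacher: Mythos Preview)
Your proposal is correct and follows essentially the same approach that the paper references: the paper does not prove this theorem itself but cites it as \citet[Theorem 3.6]{levie_graphon-signal_2023}, noting only that ``the proof follows an approach analogous to that used for establishing the compactness of the space of unlabeled graphons and relies on a graphon-signal adaptation of the weak regularity lemma.'' Your sketch---combined energy-increment regularity lemma, diagonal extraction for compactness, and discretization for the covering number---is precisely that adaptation, so there is nothing to add.
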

The proof follows an approach analogous to that used for establishing the compactness of the space of unlabeled graphons and relies on a graphon-signal adaptation of the weak regularity lemma \citep[Theorem B.6]{levie_graphon-signal_2023}.
The graphon-signal weak regularity lemma can also be used to derive a sampling lemma:
\begin{theorem}[\citet{levie_graphon-signal_2023}, Theorem 3.7]\label{thm:graphon_signals_sampling}
    Let $r>1$. There exists a constant $N_0>0$ depending on $r$, such that for every $n \geq N_0$ and $(W,f) \in \graphonsignal{r}$ we have
    \begin{equation}
        \mathbb{E} \left[ \delta_\square\big( (W,f), \mathbb{G}_n(W, f) \big) \right] \, \leq \, \frac{15}{\sqrt{\log n}}.
    \end{equation}
\end{theorem}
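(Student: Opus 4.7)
The plan is to reproduce the graphon-signal sampling argument of \citet{levie_graphon-signal_2023} by splitting the expected cut-distance error into two pieces via the triangle inequality,
\begin{equation*}
\mathbb{E}\left[\delta_\square\!\left((W,f),\mathbb{G}_n(W,f)\right)\right] \;\leq\; \mathbb{E}\left[\delta_\square\!\left((W,f),(W(\mathbf{X}),f(\mathbf{X}))\right)\right] \;+\; \mathbb{E}\left[\delta_\square\!\left((W(\mathbf{X}),f(\mathbf{X})),\mathbb{G}_n(W,f)\right)\right],
\end{equation*}
where $\mathbf{X}=(X_1,\ldots,X_n)\sim U(0,1)^n$ is the latent coordinate sample and $(W(\mathbf{X}),f(\mathbf{X}))$ is the intermediate weighted graph-signal that is then sharpened to $\mathbb{G}_n(W,f)$ by Bernoulli edge sampling. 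I would bound each term separately and then combine.

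For the first term (latent-coordinate sampling), I would invoke the graphon-signal weak regularity lemma (Theorem B.6 in \citet{levie_graphon-signal_2023}) to approximate $(W,f)$ in cut norm by a step graphon-signal $(W',f')$ with $k = \lceil 2^{c/\varepsilon^{2}}\rceil$ parts at error $\varepsilon$. For the step approximant, the error $\delta_\square((W',f'),(W'(\mathbf{X}),f'(\mathbf{X})))$ is controlled by how far the empirical fractions $|\{i:X_i\in I_j\}|/n$ deviate from $\lambda(I_j)$; a Chernoff/Hoeffding bound gives a deviation of order $\sqrt{\log k/n}$ per part, and summing the contributions over the $O(k^2)$ blocks (for $W'$) and $O(k)$ intervals (for $f'$, using $\|f'\|_\infty\le r$) gives a bound of order $r\sqrt{k^{2}\log k/n}$. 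Another triangle inequality against the original $(W,f)$ absorbs the $\varepsilon$ from regularity. The signal contribution fits into exactly the same partition-based framework and produces only a lower-order $O(rk/\sqrt{n})$ term.

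For the second term (Bernoulli edge sampling), conditional on $\mathbf{X}$ the signal component is unchanged, so only the kernel difference $W(\mathbf{X})-A$ matters, where $A_{ij}\sim\mathrm{Bernoulli}(W(X_i,X_j))$ independently. The cut norm of the associated step kernel built from the mean-zero random matrix with independent bounded entries concentrates by a standard random-matrix/Grothendieck-type argument (as in \citet[Lemma 10.11]{lovasz_large_2012}), yielding $\mathbb{E}[\delta_\square(W(\mathbf{X}),\mathbb{G}_n(W,f))]=O(1/\sqrt{n})$. Since this term is $O(1/\sqrt n)$, it is dominated by the regularity-sampling term and contributes nothing to the leading order.

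Combining the two bounds gives an overall error of the form $\varepsilon + C_r k/\sqrt{n} + C/\sqrt{n}$ with $k = 2^{O(1/\varepsilon^{2})}$. The main obstacle is the balancing: the doubly-exponential blow-up of $k$ in $1/\varepsilon$ forces $\varepsilon$ to shrink only like $1/\sqrt{\log n}$, since setting $\varepsilon \asymp 1/\sqrt{\log n}$ yields $k^{2}/n \asymp 1/n^{1-o(1)}$, small enough to be absorbed. The constant $15$ and the threshold $N_{0}=N_{0}(r)$ then emerge from tracking the universal constants in the weak regularity and concentration steps, and the hypothesis $r>1$ is used to ensure the signal contribution can be absorbed into the same constant as the kernel contribution (and to guarantee $N_0$ is well-defined for the stated rate). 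Verifying that every constant is independent of the particular $(W,f)$ and depends on $r$ only through the prefactor is the delicate bookkeeping that dominates the proof.
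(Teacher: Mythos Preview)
The paper does not actually prove this statement: it is restated verbatim as background from \citet{levie_graphon-signal_2023} in \autoref{apdsubsec:graphon_signal_space}, with only the remark that the graphon-signal weak regularity lemma (their Theorem~B.6) can be used to derive it. Your sketch is consistent with that remark and with Levie's original argument: regularize $(W,f)$ by a step graphon-signal, control the latent-sample fluctuation on the step approximant via concentration, handle the Bernoulli edge-sampling separately, and balance $\varepsilon$ against $k=2^{O(1/\varepsilon^2)}$ to extract the $1/\sqrt{\log n}$ rate. So there is nothing to compare against in this paper beyond confirming that your high-level plan matches the cited source.

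One small caution on your bookkeeping: in the first term you write the error as $r\sqrt{k^2\log k/n}$ from summing over $O(k^2)$ blocks, but for the cut norm one does not sum blockwise errors---the cut norm of the difference of two step kernels on the same partition is controlled directly by the deviation of the empirical part sizes, giving a bound of order $k/\sqrt{n}$ (not $k\sqrt{\log k/n}$ or worse). This does not change the final rate after balancing, but if you carry the extra $\sqrt{\log k}$ or a $k^2$ factor through you will not recover the constant $15$. The precise accounting in Levie's proof uses the structure of the cut norm more carefully than a naive block sum.
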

Although the above results were obtained for one-dimensional signals, they readily extend to $d$-dimensional signals taking values in compact sets $K \subset \mathbb{R}^d$ (say, a hypercube $[ -r, r ]^d$) based on a multidimensional version of the signal cut norm normalized by $d$ (see also \citet{rauchwerger_note_2025}). In this case, the exact statements of \autoref{thm:graphon_signals_compactness} and \autoref{thm:graphon_signals_sampling} can be recovered.
By norm equivalence, qualitative statements of the theorems remain valid when using the $\Lp{1}$ norm $\normsm{\vf}_1$ as signal norm. The same reasoning extends to any $\Lp{1}$ norm defined from a vector norm $\normsm{\cdot}_{\R^s}$; that is, if one sets 
\begin{equation}
    \normsm{\vf} := \int_{[0,1]} \normsm{\vf(x)}_{\R^s}\, \mathrm{d}\lambda(x),
\end{equation}
as well as to other $\Lp{p}$ norms of $\vf$: If for all signals $\vf$, $x \mapsto \normsm{\vf(x)}_p$ is uniformly bounded by $r$ on $[0,1]$, then
\begin{equation}
    \int_{[0,1]} \normsm{\vf(x)}_p \, \mathrm{d} \lambda(x) \: \leq \: \underbrace{\left(\int_{[0,1]} \normsm{\vf(x)}_p^p \, \mathrm{d}\lambda(x) \right)^{1/p}}_{= \normsm{\vf}_p} \: \leq \: r^{(p-1)/p} \left(\int_{[0,1]} \normsm{\vf(x)}_p \, \mathrm{d}\lambda(x) \right)^{1/p},
\end{equation}
where we used Jensen's inequality for the first part.

\newpage
\section{Signal-Weighted Homomorphism Densities}\label{apd:homomorphism_densities}

\subsection{A Counting Lemma for Graphon-Signals}\label{apdsubsec:counting_lemma_graphon_signals}

We derive a counting lemma similar to the standard graphon case \citep[Lemma 10.23]{lovasz_large_2012}, which shows that signal-weighted homomorphism densities from \emph{simple} graphs into a graphon-signal are Lipschitz continuous w.r.t.\ cut distance. 

\begin{proposition}[Counting lemma for graphon-signals]\label{prop:counting_lemma_signals}
    Let $(W, f), (V, g) \in \graphonsignal{r}$ and $F$ be a simple graph, $\vd \in \N_0^{v(F)}$. Then, writing $D := \sum_{i \in V(F)} d_i$,
    \begin{equation}
    \big|t((F, \vd), (W, f)) - t((F, \vd), (V, g)) \big| \; \leq \; 2 r^{D-1} \Big( 2r \cdot e(F) \norm{W - V}_\square + D \norm{f - g}_\square \Big). \label{eq:bound_counting_lemma}
    \end{equation}
\end{proposition}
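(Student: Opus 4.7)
The strategy is to split the difference and telescope separately over edges of $F$ (swapping $W$ for $V$) and over vertices (swapping $f$ for $g$). Write
\begin{equation*}
    t((F,\vd),(W,f)) - t((F,\vd),(V,g)) \;=\; A + B,
\end{equation*}
where $A := t((F,\vd),(W,f)) - t((F,\vd),(V,f))$ and $B := t((F,\vd),(V,f)) - t((F,\vd),(V,g))$. The goal is to show $\abssm{A}\leq 4r^D e(F)\normsm{W-V}_\square$ and $\abssm{B}\leq 2r^{D-1}D\normsm{f-g}_\square$, which together yield \eqref{eq:bound_counting_lemma}.

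For $A$, enumerate $E(F) = \{e_1,\dots,e_m\}$ with $m = e(F)$ and telescope by replacing $W$ with $V$ one edge at a time. The $j$-th summand has the form
\begin{equation*}
    \int_{[0,1]^k}\Big(\prod_{i} f(x_i)^{d_i}\Big)(W-V)(x_{a_j},x_{b_j})\,\Theta_j(\vx)\,d\lambda^k(\vx),
\end{equation*}
where $e_j=\{a_j,b_j\}$ and $\Theta_j$ is a product of $W$- or $V$-values on the remaining edges (so $\abssm{\Theta_j}\leq 1$). Following the proof of \citet[Lemma~10.23]{lovasz_large_2012}, decompose $f(x_{a_j})^{d_{a_j}}$ and $f(x_{b_j})^{d_{b_j}}$ into positive and negative parts (yielding a factor of $4$ and endpoint weights bounded by $r^{d_{a_j}}$ and $r^{d_{b_j}}$), integrate out the non-endpoint vertices of $F$ along a spanning tree rooted at $\{a_j,b_j\}$ while controlling intermediate integrands by $r^{D-d_{a_j}-d_{b_j}}$ via $\abssm{f}\leq r$ and $\abssm{\Theta_j}\leq 1$, and finally apply the tensor-product cut-norm bound on $(x_{a_j},x_{b_j})$. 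Each summand is thereby bounded by $4r^D\normsm{W-V}_\square$; summing over the $m$ edges gives the claimed bound on $A$.

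For $B$, telescope over the vertices $V(F)=[k]$ by swapping $f^{d_i}$ for $g^{d_i}$ one at a time. Using the identity $f^{d_i}-g^{d_i}=(f-g)\sum_{\ell=0}^{d_i-1}f^{\ell}g^{d_i-1-\ell}$, each vertex-swap decomposes further into $d_i$ subterms of the form $\int(f-g)(x_i)\,\Psi(\vx)\,d\lambda^k(\vx)$, with $\normsm{\Psi}_\infty\leq r^{D-1}$ (combining $\abssm{f},\abssm{g}\leq r$ and $\abssm{V}\leq 1$). Integrating out all variables except $x_i$ by Fubini produces a function $\widetilde\Psi(x_i)$ with $\normsm{\widetilde\Psi}_\infty\leq r^{D-1}$; then, using $\normsm{h}_1\leq 2\normsm{h}_\square$ for real-valued $h$,
\begin{equation*}
    \abs{\int(f-g)(x_i)\,\widetilde\Psi(x_i)\,dx_i} \;\leq\; 2r^{D-1}\normsm{f-g}_\square
\end{equation*}
per subterm. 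Summing the $\sum_i d_i = D$ subterms yields $\abssm{B}\leq 2r^{D-1}D\normsm{f-g}_\square$, completing \eqref{eq:bound_counting_lemma}.

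\textbf{Main obstacle.} The delicate step is the cut-norm bound for $A$ when $F\setminus\{e_j\}$ does not disconnect the endpoints $a_j,b_j$ (e.g.\ for a triangle): integrating out the other variables does not produce a tensor-product function of $(x_{a_j},x_{b_j})$, so the cut-norm inequality for tensor products does not apply directly. The classical counting-lemma remedy is to integrate out the remaining vertices one by one along a spanning tree, keeping every intermediate integrand uniformly bounded and invoking the tensor-product cut-norm bound only at the final edge. Adapting this iteration to the signal-weighted setting requires carefully distributing the pointwise factors $f(x_i)^{d_i}$ through the iteration and tracking $r$-bounds at each stage; this bookkeeping is where the $r^D$ prefactor on the $\normsm{W-V}_\square$ term ultimately originates.
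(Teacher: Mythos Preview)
Your proposal is correct and follows the same split-and-telescope strategy as the paper: both bound the graphon part by $4r^D e(F)\normsm{W-V}_\square$ and the signal part by $2Dr^{D-1}\normsm{f-g}_\square$, with your part $B$ argument essentially identical to the paper's.

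One clarification worth making: the ``main obstacle'' you flag for part $A$ is not actually present. In the classical counting-lemma argument one does not \emph{integrate out} the non-endpoint variables first---one \emph{fixes} them. Since $F$ is simple, $e_j$ is the only edge between $a_j$ and $b_j$, so with all $x_c$ for $c\notin\{a_j,b_j\}$ held fixed the integrand over $(x_{a_j},x_{b_j})$ factorizes exactly as
\[
(W-V)(x_{a_j},x_{b_j})\cdot P(x_{a_j})\cdot Q(x_{b_j})\cdot R,
\]
where $P$ collects all edge and signal factors touching $a_j$, likewise $Q$ for $b_j$, and $R$ is constant. One applies the cut-norm bound to the inner double integral and only then integrates over the remaining variables (against $|R|\leq r^{D-d_{a_j}-d_{b_j}}$). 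No spanning-tree iteration is needed; the non-disconnecting case (e.g.\ a triangle) poses no extra difficulty. The paper handles the sign issue slightly differently---it renormalizes the entire signal product $\prod_i f(x_i)^{d_i}$ by $r^D$ to land in $[-1,1]$ and then invokes the $[-1,1]$-test-function cut norm $\normsm{\cdot}_{\square,2}\leq 4\normsm{\cdot}_\square$---but this is equivalent to your positive/negative-part decomposition of the endpoint factors.
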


As $t((F, \vd), \cdot)$ is clearly invariant w.r.t.\ measure preserving functions acting on the graphon-signal, the bound of \eqref{eq:bound_counting_lemma} can be easily extended to $\delta_\square$. The proof is relatively straightforward, with the only detail requiring a little extra consideration being that the signals can take negative values and interact with the graphon. A similar statement can be shown for all \emph{multigraphs} $F$ using $\norm{\cdot}_1$.
\begin{proof}
    We split the l.h.s., bounding the difference of the graphons and the signals separately:
    \begin{align}
        &\big|t((F, \vd), (W, f)) - t((F, \vd), (V, g)) \big| \, \leq \, \\ 
        &\quad \underbrace{\abs{\int_{[0,1]^k} \left( \prod_{i \in V(F)} \!\!\! f(x_i)^{d_i} \right) \left( \prod_{\{i, j\} \in E(F)} \!\!\! W(x_i, x_j) - \prod_{\{i, j\} \in E(F)} \!\!\! V(x_i, x_j) \right) \, \mathrm{d} \lambda^k(\vx)} }_{ \circled{1}} \\ 
        +&\quad \underbrace{\abs{\int_{[0,1]^k} \left( \prod_{\{i, j\} \in E(F)} \!\!\! V(x_i, x_j) \right) \left( \prod_{i \in V(F)} \!\!\! f(x_i)^{d_i} - \prod_{i \in V(F)} \!\!\! g(x_i)^{d_i} \right) \, \mathrm{d} \lambda^k(\vx)}}_{ \circled{2}}.
    \end{align}
    For term \circled{1}, we set $D:= \sum_{i} d_i$ and observe that for all $\vx \in [0,1]^k$
    \vspace{-4pt}
    \begin{equation}
        \frac{1}{r^D} \prod_{i \in V(F)} \!\!\! f(x_i)^{d_i} \in [-1, 1],\vspace{-4pt}
    \end{equation}
    and hence similarly to the proof of the classical counting lemma (see, e.g., \citet{zhao_graph_2023}) we bound
    \begin{equation}
    \text{\circled{1}} \, \leq \, r^D e(F) \norm{W-V}_{\square, 2} \, \leq \, 4 r^D e(F) \norm{W-V}_\square. \label{eq:pf_counting_lemma_bd1}
    \end{equation}
    In comparison to the standard proof, the usage of $\norm{\cdot}_{\square, 2}$, an alternative definition of the cut norm, stems from the fact that function values appearing in the integral in \circled{1} (renormalizing by $r^D$) are not necessarily in $[0, 1]$, but $[-1, 1]$. See also equations (4.3), (4.4) in \citet{janson_graphons_2013}.
    For \circled{2}, we bound the $\Lp{1}$ difference of the terms involving $f$ and $g$:
    \begin{align}
        &\abs{\int_{[0,1]^k} \left( \prod_{\{i, j\} \in E(F)} \!\!\! V(x_i, x_j) \right) \left( \prod_{i \in V(F)} \!\!\! f(x_i)^{d_i} - \prod_{i \in V(F)} \!\!\! g(x_i)^{d_i} \right) \, \mathrm{d} \lambda^k(\vx)} \\
        &\leq\, \int_{[0,1]^k} \left| \prod_{\{i, j\} \in E(F)} \!\!\! V(x_i, x_j) \right| \left| \prod_{i \in V(F)} \!\!\! f(x_i)^{d_i} - \prod_{i \in V(F)} \!\!\! g(x_i)^{d_i} \right| \, \mathrm{d} \lambda^k(\vx) \\
        &\leq\, \sum_{i \in V(F)} \int_{[0,1]^k} \Big|f(x_i)^{d_i} - g(x_i)^{d_i}\Big| \Big| \prod_{j < i} f(x_j)^{d_j} \prod_{j > i} g(x_j)^{d_j} \Big| \, \mathrm{d} \lambda^k(\vx) \\
        &\leq\, \sum_{i \in V(F)} r^{\sum_{j \neq i} d_i} \int_{[0,1]} \left|f(x_i)^{d_i} - g(x_i)^{d_i}\right| \, \mathrm{d} \lambda(x) \\
        &\overset{(\ast)}{\leq}\, \sum_{i \in V(F)} r^{\sum_{j \neq i} d_i} \cdot d_i r^{d_i - 1} \norm{f-g}_1 \, = \, Dr^{D-1} \norm{f-g}_1 \, \leq \, 2Dr^{D-1} \norm{f-g}_\square, \label{eq:pf_counting_lemma_bd2}
    \end{align}
    where $(\ast)$ uses $\normsm{f}_\infty, \normsm{g}_\infty \leq r$ and hence the Lipschitz constant of $x \mapsto x^{d_i}$ is bounded by the maximum of its derivative $d_i r^{d_i-1}$, and the last inequality uses $\norm{\cdot}_1 \leq 2\norm{\cdot}_\square$ in one dimension. Combining the two bounds for \circled{1} from \eqref{eq:pf_counting_lemma_bd1} and \circled{2} from \eqref{eq:pf_counting_lemma_bd2}, we obtain
    \begin{equation}
        \big|t((F, \vd), (W, f)) - t((F, \vd), (V, g)) \big| \, \leq \, 4 r^D e(F) \norm{W-V}_\square + 2Dr^{D-1} \norm{f-g}_\square,
    \end{equation}
    which yields the claim. \qedhere
\end{proof}

\subsection{Smooth and Invariant Norms for Graphon-Signals}\label{apdsubsec:smooth_invariant_norms}

In this work, we consider not only the \emph{cut norm}, but also $\Lp{p}$ norms (and distances) of graphon-signals. The purpose of this section is to show that all of the derived unlabeled distances we consider on the graphon-signal space yield the same notion of weak isomorphism, i.e., vanish simultaneously. This can be shown for \emph{smooth invariant norms} on the graphon-signal space (cf.\ \citet[§ 8.2.5]{lovasz_large_2012}):
\begin{definition}[Smooth and invariant norms]
    Two norms $N = (N_1, N_2)$, where $N_1$ is a norm on $\Lpui{\infty}$ and $N_2$ on $\kernel$, are called \newterm{smooth} if the two conditions
    \begin{enumerate}[(1)]
        \setlength{\itemsep}{0pt}
        \setlength{\parskip}{0pt}
        \item $W_n \to W \in \kernel$, $f_n \to f \in \Lpui{\infty}$ almost everywhere,
        \item $\sup_{n \in \N}\normsm{W_n}_\infty \leq \infty$, $\sup_{n \in \N} \normsm{f_n}_\infty \leq \infty$,
    \end{enumerate}
     imply that
     \begin{equation}
         N_1(f_n) \to N_1(f), \qquad N_2(W_n) \to N_2(W).
     \end{equation}
     They are \newterm{invariant} if
     \begin{equation}
         N_1(f^\varphi) = N_1(f), \quad N_2(W^\varphi) = N_2(W) \qquad \forall \varphi \in \mpbij,
     \end{equation}
     where $(W, f) \in \kernel \times \Lpui{\infty}$ and $f^\varphi(x) := f(\varphi(x))$ for $x \in [0,1]$. We may sometimes also write $(W, f)^\varphi := (W^\varphi, f^\varphi)$.
\end{definition}
The conditions clearly apply to $\normsm{\cdot}_\square$ and $\normsm{\cdot}_p$ for $p \in [1, \infty)$ (acting either as $N_1$ and $N_2$), but do \emph{not} hold for $p = \infty$ (take for example $W_n = \indfct_{[0,1/n]^2}$, $f_n = \indfct_{[0,1/n]}$). For any smooth and invariant $N$, we can obtain a derived unlabeled distance as done for the cut distance:
\begin{definition}[Derived unlabeled distance]
    Let $N=(N_1, N_2)$ be smooth and invariant norms. Define its \newterm{derived unlabeled distance} on the graphon-signal space as
    \begin{equation}
        \delta_N((W, f), (V, g)) \, := \, \inf_{\varphi \in \mpbij} \big( N_2(W-V^\varphi) + N_1(f-g^\varphi) \big). \label{eq:delta_N}
    \end{equation}
\end{definition}
Just as in the standard graphon case, for \emph{all} such smooth and invariant norms, the infimum in \eqref{eq:delta_N} is attained when minimizing over \emph{all} measure preserving functions. This turns out to be a generalization of \citet[Theorem 8.13]{lovasz_large_2012}:
\begin{theorem}[Minima vs. infima for smooth invariant norms]\label{thm:minima_invariant_norms}
    Let $N$ be a smooth invariant norm on $\kernel$ and $\Lpui{\infty}$. Then, we have the following alternate expressions for $\delta_N$:
    \begin{align}
        \delta_N((W, f), (V, g)) \, &= \, \inf_{\varphi \in \mpf} \big( N_2(W-V^\varphi) + N_1(f-g^\varphi) \big) \label{eq:delta_N_inf_mpf} \\
        \, &= \, \min_{\varphi, \psi \in \mpf} \big( N_2(W^\varphi-V^\psi) + N_1(f^\varphi-g^\psi) \big) \label{eq:delta_N_min_mpf}.
    \end{align}
\end{theorem}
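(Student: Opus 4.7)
Denote by (a), (b), (c) the three expressions in the statement: (a) $= \delta_N((W,f),(V,g))$ from \eqref{eq:delta_N}, (b) the infimum over $\varphi \in \mpf$, and (c) the infimum over pairs $(\varphi, \psi) \in \mpf^2$. The chain (c) $\leq$ (b) $\leq$ (a) is immediate: the right inequality from $\mpbij \subseteq \mpf$, and the left from choosing $\varphi = \mathrm{id}$ in the minimization over $\mpf^2$. Thus it suffices to prove that the infimum defining (c) is attained and that (a) $\leq$ (c).

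For \emph{attainment} in (c), the plan is to identify $\mpf$ with the set $\mathcal{C}$ of couplings of $\lambda$ with itself via the graph map $\varphi \mapsto (\mathrm{id}, \varphi)_\ast \lambda \in \mathcal{P}([0,1]^2)$. The set $\mathcal{C}$ is weakly closed in the weakly compact space $\mathcal{P}([0,1]^2)$, hence compact. Passing to joint couplings $((\varphi, \psi))_\ast \lambda$ on $[0,1]^2$ and using disintegration, the smoothness of $N$ together with the dominated convergence theorem (applied to the uniformly bounded $W$, $V$, $f$, $g$) renders $(\varphi, \psi) \mapsto N_2(W^\varphi - V^\psi) + N_1(f^\varphi - g^\psi)$ lower semicontinuous along weakly convergent sequences, so a minimizer exists.

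For the key inequality \emph{(a) $\leq$ (c)}, let $(\varphi_0, \psi_0) \in \mpf^2$ attain (c), with joint coupling $\mu_0 := (\varphi_0, \psi_0)_\ast \lambda$. The strategy is to approximate $\mu_0$ weakly by couplings $(\sigma_n, \rho_n)_\ast \lambda$ with $\sigma_n, \rho_n \in \mpbij$: using the nonatomic structure of Lebesgue measure, any finite-step coupling is realizable by a piecewise measure preserving bijection on a suitable dyadic refinement, and such step couplings are weakly dense in $\mathcal{C}$. Then, by invariance of $N$ under $\sigma_n^{-1}$,
\begin{equation}
N_2(W^{\sigma_n} - V^{\rho_n}) + N_1(f^{\sigma_n} - g^{\rho_n}) \;=\; N_2(W - V^{\tau_n}) + N_1(f - g^{\tau_n})
\end{equation}
with $\tau_n := \rho_n \circ \sigma_n^{-1} \in \mpbij$. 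Applying smoothness of $N$ along subsequences on which the rearranged functions $W^{\sigma_n}, V^{\rho_n}, f^{\sigma_n}, g^{\rho_n}$ converge almost everywhere (extracted via a Skorokhod-type representation on the joint couplings) converts weak convergence of the couplings into convergence of the norms, and passing to the infimum over $\tau_n \in \mpbij$ yields (a) $\leq$ (c).

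The \emph{main obstacle} is the last step: arranging that weak convergence of the joint couplings $(\sigma_n, \rho_n)_\ast \lambda \to \mu_0$ translates into almost-everywhere convergence of the rearranged functions (possibly after a further measure preserving relabeling common to both sides), so that the smoothness hypothesis on $N$ can convert this into convergence of the norm values. This requires a careful dyadic construction of the approximating bijections, leveraging the isomorphism theorem for nonatomic standard probability spaces to lift step couplings to piecewise bijections of $[0,1]$. The smoothness requirement is essential here, and precisely this argument breaks for $\delta_\infty$, since $\normsm{\cdot}_\infty$ is not smooth---consistent with the restriction $p \in [1, \infty)$ used throughout the section.
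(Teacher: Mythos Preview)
Your attainment argument for (c) is in the right spirit and aligns with the paper's treatment of the second equality: pass to couplings on $[0,1]^2$, establish lower semicontinuity of the objective in the weak topology, and use compactness of the coupling space. Your justification ``smoothness plus dominated convergence'' glosses over real work, though---in Lov\'asz's argument the lower semicontinuity is itself obtained via step-function approximation of the kernels, and the paper's only addition is that the same extends verbatim to the signal component, with the sum of two l.s.c.\ functions remaining l.s.c.

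The inequality (a) $\leq$ (c) has a genuine gap. You approximate the optimal coupling $\mu_0$ weakly by bijective ones and want to invoke smoothness of $N$ along an a.e.-convergent subsequence of the rearranged kernels $V^{\tau_n}$. But weak convergence of couplings yields no such subsequence: when $\mu_0$ is non-deterministic (take $\mu_0 = \lambda \otimes \lambda$), any approximating bijections $\tau_n = \rho_n \circ \sigma_n^{-1}$ must spread each dyadic interval over all of $[0,1]$ at ever finer scales, so $V^{\tau_n}$ oscillates and has no a.e.\ limit. A Skorokhod representation produces $(X_n, Y_n) \to (X_0, Y_0)$ a.s.\ on some new probability space, but this does not deliver a.e.\ convergence of $(x,y) \mapsto V(\tau_n(x), \tau_n(y))$ on $[0,1]^2$ for merely measurable $V$, nor does it return the specific functions on $[0,1]^2$ to which $N_2$ is applied. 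And even if the norms converged, lower semicontinuity goes the wrong way: it gives $\liminf_n N_2(W^{\sigma_n} - V^{\rho_n}) \geq N_2(W^{\varphi_0} - V^{\psi_0})$, whereas you need the reverse inequality to bound (a) from above by (c).

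The paper (following Lov\'asz, Theorem~8.13) handles the analogous step (a) $\leq$ (b) differently: instead of approximating the \emph{map}, it approximates $W, V, f, g$ simultaneously by step functions converging almost everywhere. For step data on a common partition, the action of any $\varphi \in \mpf$ on the norm value is exactly reproduced by some $\widetilde{\varphi} \in \mpbij$ (a purely combinatorial fact about moving mass between finitely many blocks), and smoothness of $N$ then carries the equality to the limit. The only graphon-signal adjustment is to choose partitions that respect the step graphon and the step signal simultaneously.
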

\begin{proof}[Sketch of proof]
    We follow the proof of Theorem 8.13 by \citet{lovasz_large_2012}, briefly highlighting the necessary adjustments to the argument. 
    To establish the first equality, approximations by step graphons that converge a.e. are considered, and the crucial point is that any $\varphi \in \mpf$ can be realized by a suitable $\widetilde{\varphi} \in \mpbij$ for such step graphons. For graphon-signals, the argument can be transferred if one simply considers partitions respecting each step graphon and step signal simultaneously when constructing the corresponding $\widetilde{\varphi} \in \mpbij$. For the second equality, which is proven in greater generality with coupling measures over $[0,1]^2$ by \citet{lovasz_large_2012}, note that the lower semicontinuity in (8.24) is just shown for kernels (i.e., $\Lpuig{\infty}{2}$), but the argument extends verbatim to $\Lpui{\infty}$, and the sum of two lower semicontinuous functions is still lower semicontinuous. The rest of the argument applies without modification.
\end{proof}
Note that our definition of $\delta_\square$ coincides with the one by \citet{levie_graphon-signal_2023}, as they also considered measure preserving bijections of co-null sets in $[0,1]$ (writing $\mpbijsignal$ for this set). As an immediate corollary, we can obtain a simple characterization of weak isomorphism for graphon-signals:
\begin{corollary}[Weak isomorphism]
    Two graphon-signals $(W,f), (V,g) \in \graphonsignal{r}$ are \newterm{weakly isomorphic}, i.e., $\delta_N((W,f), (V,g)) = 0$ for any smooth and invariant norm $N=(N_1, N_2)$, if and only if there are $\varphi, \psi \in \mpf$ such that $W^\varphi = V^\psi$ and $f^\varphi = g^\psi$ almost surely.
\end{corollary}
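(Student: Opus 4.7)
The plan is to apply Theorem~\ref{thm:minima_invariant_norms} as a black box; given that result, the corollary reduces to essentially a one-line observation in each direction. The key point is that the theorem upgrades the defining infimum of $\delta_N$ into a minimum actually attained by some pair $(\varphi,\psi) \in \mpf \times \mpf$, which is exactly the kind of witness the corollary asks for.

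For the $(\Leftarrow)$ direction, I would assume the existence of $\varphi, \psi \in \mpf$ with $W^\varphi = V^\psi$ and $f^\varphi = g^\psi$ almost everywhere, and plug this pair directly into the expression in \eqref{eq:delta_N_min_mpf}. Since $N_1$ and $N_2$ are norms on $\Lpui{\infty}$ and $\kernel$ (which are spaces of a.e.-equivalence classes), they vanish on the a.e.-zero functions $W^\varphi - V^\psi$ and $f^\varphi - g^\psi$, so the minimum, and hence $\delta_N$, is zero. Crucially, this argument is independent of the choice of smooth invariant norm, so the implication holds for every such $N$ simultaneously.

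For the $(\Rightarrow)$ direction, suppose $\delta_N((W,f),(V,g)) = 0$ for some smooth invariant norm $N = (N_1, N_2)$. Theorem~\ref{thm:minima_invariant_norms} guarantees a pair $(\varphi, \psi) \in \mpf \times \mpf$ attaining the minimum, i.e.\
\begin{equation}
N_2(W^\varphi - V^\psi) + N_1(f^\varphi - g^\psi) = 0.
\end{equation}
Because both summands are nonnegative, each vanishes individually, and positive definiteness of $N_1, N_2$ then forces $W^\varphi = V^\psi$ and $f^\varphi = g^\psi$ almost everywhere. Combining the two directions shows that the a.e.-equality condition is equivalent to $\delta_N = 0$ for \emph{any one} smooth invariant norm, and hence to $\delta_N = 0$ for \emph{all} such norms, justifying that the notion of weak isomorphism is unambiguous.

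I do not anticipate any real obstacle here: the entire difficulty has already been absorbed into Theorem~\ref{thm:minima_invariant_norms}, whose proof was the substantive technical step. The only small check worth making explicit is that $N_1, N_2$ are genuinely norms (not merely seminorms) on the $L^p$-equivalence classes, so that vanishing implies a.e.\ equality rather than mere equality on a set of positive measure; this is built into the setup of smooth invariant norms used throughout \autoref{apdsubsec:smooth_invariant_norms}.
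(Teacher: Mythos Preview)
Your proposal is correct and matches the paper's approach exactly: the paper treats this corollary as an immediate consequence of Theorem~\ref{thm:minima_invariant_norms}, remarking only that the theorem ``ensures that this does not depend on the specific $N$,'' which is precisely the argument you have spelled out in detail.
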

Here, \autoref{thm:minima_invariant_norms} ensures that this does not depend on the specific $N$. Similar to standard graphons, we identify weakly isomorphic graphon-signals to obtain the space of \emph{unlabeled} graphon-signals $\igraphonsignal{r}$. We mainly work with the cut distance $\delta_\square$ as well as the $\Lp{p}$ distances $\delta_p$ for $p \in [1, \infty)$ (where we choose $N=(\normsm{\cdot}_p, \normsm{\cdot}_p$)). Among these, of special interest are typically $\delta_1$ as this corresponds to the \emph{edit distance} on graphs, as well as $\delta_2$, which gives rise to a geodesic space in the standard graphon case \citep{oh_gradient_2024}.

\subsection{Proof of \autoref{thm:cut_distance_equivalences}}\label{apdsubsec:pf_cut_distance_equivalences}

\cutdistanceequivalences*

The following elementary lemma will be useful on several occasions when comparing two probability distributions.
\begin{lemma}[Moments of Random Vectors]\label{lem:moments}
    Let $m \in \N$ and let $\mX, \mY$ be $m$-dimensional random vectors which are almost surely bounded, i.e., there is some $R > 0$ such that $\mathbb{P}(\normsm{\mX} \leq R) = \mathbb{P}(\normsm{\mY} \leq R) = 1$, for any norm $\normsm{\cdot}$ on $\R^m$. Suppose that for all $\vd \in \N_0^m$ the moments of $\mX, \mY$ agree:
    \begin{equation}
        \mathbb{E} \left[ \prod_{i=1}^m X_i^{d_i} \right] \, = \, \mathbb{E} \left[ \prod_{i=1}^m Y_i^{d_i} \right].
    \end{equation}
    Then, $\mX$ and $\mY$ are identically distributed, i.e.,
    $
        \mX \, \overset{\mathcal{D}}{=} \, \mY.
    $
\end{lemma}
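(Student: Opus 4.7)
The plan is to exploit the fact that both $\mX$ and $\mY$ are supported on a compact set and then invoke the Stone-Weierstrass theorem to upgrade equality of moments into equality of distributions.

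First, I would translate the boundedness hypothesis into a compactness statement: since $\normsm{\mX}, \normsm{\mY} \leq R$ almost surely on a finite-dimensional normed space, and since all norms on $\R^m$ are equivalent, both $\mX$ and $\mY$ have their laws $\mu_\mX, \mu_\mY$ supported on a common compact set $K \subseteq \R^m$ (for instance, a sufficiently large closed cube $[-R', R']^m$). The hypothesis on mixed moments then rewrites as
\begin{equation}
    \int_K p(\vx) \, \mathrm{d}\mu_\mX(\vx) \; = \; \int_K p(\vx) \, \mathrm{d}\mu_\mY(\vx)
\end{equation}
for every monomial $p(\vx) = \prod_{i=1}^m x_i^{d_i}$, and hence by linearity for every polynomial $p \in \R[x_1, \dots, x_m]$.

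Next, I would apply the Stone-Weierstrass theorem to the algebra $\mathcal{P} \subseteq C(K)$ of polynomials restricted to $K$. Since $\mathcal{P}$ contains the constants, separates points of $K$ (the coordinate functions $x_i$ alone suffice), and is closed under addition and multiplication, it is dense in $(C(K), \normsm{\cdot}_\infty)$. Given any $\varphi \in C(K)$ and $\varepsilon > 0$, choose a polynomial $p$ with $\normsm{\varphi - p}_\infty < \varepsilon$; then
\begin{equation}
    \Big| \int_K \varphi \, \mathrm{d}\mu_\mX - \int_K \varphi \, \mathrm{d}\mu_\mY \Big| \; \leq \; 2\varepsilon,
\end{equation}
and letting $\varepsilon \to 0$ gives $\int \varphi \, \mathrm{d}\mu_\mX = \int \varphi \, \mathrm{d}\mu_\mY$ for every $\varphi \in C(K)$.

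Finally, I would conclude $\mu_\mX = \mu_\mY$ from this equality on $C(K)$. The cleanest route is via the Riesz representation theorem: both $\mu_\mX$ and $\mu_\mY$ are finite Borel (in fact, probability) measures on the compact metric space $K$, and any such measure is uniquely determined by the linear functional it induces on $C(K)$. Alternatively, one can use the Portmanteau theorem or a standard monotone-class argument to extend agreement on $C(K)$ to agreement on all Borel sets. Either way, $\mu_\mX = \mu_\mY$, i.e., $\mX \overset{\mathcal{D}}{=} \mY$. There is no real obstacle here; the only thing to be careful about is invoking compactness of the support so that Stone-Weierstrass applies (moments alone do not determine distributions in general on unbounded domains, but compact support avoids this subtlety).
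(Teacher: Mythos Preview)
Your proof is correct. The paper's own proof is essentially a one-line sketch: it notes that this is the classical moment problem, which is trivial under boundedness, and suggests proving it via characteristic functions (expand $\mathbb{E}[e^{i\langle t,\mX\rangle}]$ as a power series in the moments, which converges since $\mX$ is bounded, and use that characteristic functions determine laws). Your route via Stone--Weierstrass and Riesz representation is a genuinely different argument: rather than passing through Fourier analysis, you work directly with the algebra of polynomials on the compact support. Your approach is more self-contained in that it only uses tools already set up in the paper's background section (Stone--Weierstrass, Portmanteau), whereas the characteristic-function route is shorter if one takes the uniqueness theorem for characteristic functions as a black box. Both are standard and equally valid for this elementary lemma.
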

\begin{proof}
    In the case of more general random variables/vectors, this is known in the literature as the \emph{moment problem} (see, e.g., \citet{schmudgen_moment_2017}). Under boundedness, however, this is trivial and can for example be proven via the characteristic functions of the random vectors. \qedhere
\end{proof}
We are now ready to prove \autoref{thm:cut_distance_equivalences}. To this end, we will show the implications in \autoref{fig:equivalence_chain}.
%
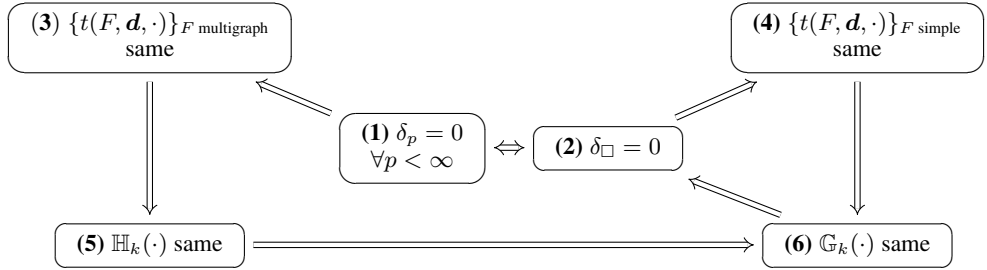
\begin{figure}[h]
    \centering
    \begin{tikzcd}[ampersand replacement=\&, cramped, sep=small]
        \ovalbox{\footnotesize $\begin{array}{c} (\textbf{3}) \; \{t(F, \vd, \cdot)\}_{F \text{ multigraph}} \\ \text{same} \end{array}$} \&\&\& \ovalbox{\footnotesize $\begin{array}{c} \textbf{(4)} \; \{t(F, \vd, \cdot)\}_{F \text{ simple}} \\ \text{same} \end{array}$} \\
        \& \ovalbox{\footnotesize $\begin{array}{c} \textbf{(1)} \; \delta_p=0 \\ \forall p < \infty \end{array}$} \& \ovalbox{\footnotesize $\begin{array}{c} \textbf{(2)} \;\delta_\square=0\end{array}$} \\
        \ovalbox{\footnotesize $\begin{array}{c} \textbf{(5)} \; \mathbb{H}_k(\cdot) \text{ same}\end{array}$} \&\&\& \ovalbox{\footnotesize $\begin{array}{c} \textbf{(6)} \; \mathbb{G}_k(\cdot) \text{ same}\end{array}$}
        \arrow[Rightarrow, from=1-1, to=3-1]
        \arrow[Rightarrow, from=1-4, to=3-4]
        \arrow[Rightarrow, from=2-2, to=1-1]
        \arrow[Leftrightarrow, from=2-2, to=2-3]
        \arrow[Rightarrow, from=2-3, to=1-4]
        \arrow[Rightarrow, from=3-1, to=3-4]
        \arrow[Rightarrow, from=3-4, to=2-3]
    \end{tikzcd}
    \caption{Equivalence chain for the proof of \autoref{thm:cut_distance_equivalences}.}
    \label{fig:equivalence_chain}
\end{figure}
\begin{proof}[Proof of \autoref{thm:cut_distance_equivalences}]
    $ $\vspace{5pt} \\
    \textbf{(1) $\Leftrightarrow$ (2):} \autoref{thm:minima_invariant_norms} implies that for any $p \in [1, \infty)$
    \begin{equation}
        \delta_\square((W, f), (V, g)) = 0 \, \Leftrightarrow \, \exists \varphi, \psi \in \mpf: (W, f)^\varphi = (V, g)^\psi \, \Leftrightarrow \, \delta_p((W, f), (V, g)) = 0, 
    \end{equation}
    where equality in the middle holds in an $\lambda^2$-a.e. sense.
    
    \textbf{(2) $\Rightarrow$ (4):} This follows immediately from the graphon-signal counting lemma (\autoref{apdsubsec:counting_lemma_graphon_signals}).
    
    \textbf{(4) $\Rightarrow$ (6):} Let $(W, f), (V, g) \in \graphonsignal{r}$ such that $t((F, \vd), (W, f)) = t((F, \vd), (V, g))$ for all simple graphs $F$, $\vd \in \N_0^{v(F)}$. Fix some $k \in \N$. Clearly, the distribution of $\graphondistG{k}{W, f}$ is uniquely determined by 
    \begin{align}
        &\mathbb{P}_{(G, \vf) \sim \graphondistG{k}{W, f}}(G \cong F), \label{eq:equiv_graph_structure} \\
        &\mathbb{P}_{(G, \vf) \sim \graphondistG{k}{W, f}}(\vf \in \cdot | G \cong F), \label{eq:equiv_signals}
    \end{align}
    i.e., the discrete distribution of the (labeled) random graph $G$ and the conditional distribution of the node features given the graph structure, for every simple graph $F$ of size $k$. For \eqref{eq:equiv_graph_structure}, we remark that the standard homomorphism densities w.r.t.\ just the graphons $W, V$ can be recovered by taking $\vd = 0$. Thus, the inclusion-exclusion argument from the proof of Theorem 4.9.1 in \citep{zhao_graph_2023} can be used verbatim to reconstruct the probabilities from \eqref{eq:equiv_graph_structure}. With a similar inclusion-exclusion argument, we see that for any $F$
    \begin{equation}
        \indfct \{\graphondistG{k}{W} \cong F\} \, = \, \sum_{F' \supseteq F} (-1)^{e(F')-e(F)} \indfct \{\graphondistG{k}{W} \supseteq F'\}
    \end{equation}
    and therefore
    \begin{align}
        \mathbb{E}_{(G, \vf) \sim \graphondistG{k}{W, f}} \left[ \prod_{i \in V(F)} f_i^{d_i} \middle| G \cong F \right] \, = \, \frac{\sum_{F' \supseteq F} (-1)^{e(F')-e(F)} \, t(F', \vd, (W, f))}{\mathbb{P}_{(G, \vf) \sim \graphondistG{k}{W, f}}(G \cong F)} \label{eq:graphon_signal_moments}
    \end{align}
    as long as the denominator is positive (otherwise, the corresponding conditional distribution is arbitrary). Since ($\vf | G \cong F$) is a bounded random vector ($\norm{\vf}_\infty \leq r$ a.s.), its distribution is uniquely determined by its multidimensional moments, i.e., precisely the expressions from \eqref{eq:graphon_signal_moments} (see \autoref{lem:moments}). Thus, we can conclude $\graphondistG{k}{W, f} \overset{\mathcal{D}}{=} \graphondistG{k}{V, g}$ for all $k \in \N$.
    
    \textbf{(6) $\Rightarrow$ (2):} This implication follows from applying the graphon-signal sampling lemma (\citet[Theorem 3.7]{levie_graphon-signal_2023} and \autoref{thm:graphon_signals_sampling}): If \textbf{(6)} holds, we can bound
    \begin{align}
        \delta_\square((W,f), (V,g)) \, &\leq \, \mathbb{E} \big[ \delta_\square((W, f), \graphondistG{k}{W, f})\big] + \mathbb{E} \big[ \delta_\square((V, g), \graphondistG{k}{W, f})\big] \\
        &= \, \mathbb{E} \big[ \delta_\square((W, f), \graphondistG{k}{W, f})\big] + \mathbb{E} \big[ \delta_\square((V, g), \graphondistG{k}{V, g})\big] \, \to \, 0
    \end{align}
    as $k \to \infty$.

    \textbf{(1) $\Rightarrow$ (3):} With a technique as in \autoref{apdsubsec:counting_lemma_graphon_signals}, bounding the individual graphon terms in a similar way as the signal terms, it is straightforward to show that the signal-weighted homomorphism density $t((F, \vd), \cdot)$ from any \emph{multigraph} $F$ is also Lipschitz continuous w.r.t.\ $\delta_1$. Thus, statement \textbf{(3)} follows immediately.

    \textbf{(3) $\Rightarrow$ (5):} Let $(W, f), (V, g) \in \graphonsignal{r}$ such that $t((F, \vd), (W, f)) = t((F, \vd), (V, g))$ for all multigraphs $F$, $\vd \in \N_0^{v(F)}$. Fix $k \in \N$. Then, $\graphondistH{k}{W, f}$ and $\graphondistH{k}{V, g}$ can be seen as ($k^2+k$)-dimensional random vectors which are clearly bounded, since all graphon entries are in $[0,1]$ and all signal entries in $[-r, r]$. We observe that $\{t((F, \vd), (W, f))\}_{F, \vd}$ and $\{t((F, \vd), (V, g))\}_{F, \vd}$, with $F$ ranging over multigraphs of size $k$, are precisely the multidimensional moments of these random vectors. \autoref{lem:moments} yields statement \textbf{(5)}.

    \textbf{(5) $\Rightarrow$ (6):} This is immediate, as $\graphondistG{k}{\cdot}$ is a function of $\graphondistH{k}{\cdot}$. \qedhere
\end{proof}

\subsection{Proof of \autoref{cor:cut_distance_convergence}}\label{apdsubsec:pf_cut_distance_convergence}

\cutdistanceconvergence*

\begin{proof}
    The proof idea is essentially the same as in the classical graphon case (for example, see § 4.9 in \citet{zhao_graph_2023}): An application of \autoref{thm:cut_distance_equivalences} that uses compactness of the graphon-signal space. For the sake of completeness, we restate the argument. 
    
    \enquote{$\Rightarrow$} follows immediately from the counting lemma (\autoref{apdsubsec:counting_lemma_graphon_signals}). For \enquote{$\Leftarrow$}, let $(W_n, f_n)_n$ be a sequence of graphon-signals that left-converges to $(W, f) \in \graphonsignal{r}$. By compactness \citep[Theorem 3.6]{levie_graphon-signal_2023}, there exists a subsequence $(W_{n_i}, f_{n_i})_i$ converging to some limit $(V, g)$ in cut distance. But then also all signal-weighted homomorphism densities of the subsequence converge, and hence
    \begin{equation}
        t(F, \vd, (W, f)) \, = \, t(F, \vd, (V, g)) \quad \forall F, \vd \in \N_0^{v(F)}.
    \end{equation}
    \autoref{thm:cut_distance_equivalences} yields $\delta_\square((W, f), (V, g)) = 0$, i.e., also $(W_n, f_n) \to (W, f)$ in cut distance.
\end{proof}

\newpage
\section{$k$-WL for Graphon-Signals}\label{apd:k_wl_idms}

In this section, we demonstrate that signal-weighted homomorphism densities also make sense on the granularity level of the $k$-WL hierarchy. Specifically, we show that their indistinguishability aligns with a natural formulation of the $k$-WL test for graphon-signals:

\graphonsignalkWL*

Recently, \citet{boker_weisfeiler-leman_2023} established a characterization of the $k$-WL test for graphons, linking it to \emph{multigraph} homomorphism densities w.r.t.\ patterns of bounded treewidth. Building on this, we introduce a natural generalization of the \textbf{$k$-WL measure} and \textbf{distribution} by \citet{boker_weisfeiler-leman_2023} to graphon-signals and highlight that their homomorphism expressivity can be captured through signal-weighted homomorphism densities. This serves as the final step in demonstrating the suitability of this extension for our analyses. Since, compared to the discrete case, the formulation for graphons and graphon-signals is significantly more technical, we directly state the graphon-signal versions of the concepts. Our goal in this section is not a fully detailed exposition, but rather to give intuition and highlight the key modifications required in the definitions and arguments of \citet{boker_weisfeiler-leman_2023} to make them work for graphon-signals.

\subsection{Tri-Labeled Graphs and Graphon-Signal Operators}\label{apdsubsec:tri_labeled_graphs}

To relate the $k$-WL measure to homomorphism densities on standard graphons, \citet[§~3.1]{boker_weisfeiler-leman_2023} introduces \emph{bi-labeled graphs} to define a set of operators that generalize the single shift operator $T_W$ used by \citet{grebik_fractional_2022} for 1-WL. Using \emph{bi-labeled graphs}, they define operations that generate all multigraphs of treewidth $\leq k-1$ by taking the closure of a set of \emph{atomic} graphs under these operations. Intuitively, decomposing a bi-labeled graph into atomic parts corresponds to a tree decomposition with a certain special structure. This effectively allows one to express homomorphism densities via elementary operators associated with atomic graphs. This concept can be generalized to \emph{tri-labeled graphs} for signal-weighted homomorphism densities:

\begin{definition}[Tri-labeled graphs]
    A \newterm{tri-labeled graph} is a tuple $\mG = (G, \va, \vb, \vd)$, where $G$ is a multigraph, $\va \in V(G)^k$, $\vb \in V(G)^\ell$ with $k, \ell \geq 0$, and $\vd \in \N_0^{v(G)}$. Here, the entries of $\va, \vb$ each must be pairwise distinct. $(G, \vd)$ is called the \newterm{underlying graph} of $\mG$, and $\va, \vb$ its \newterm{input} and \newterm{output vertices}. Write $\mathcal{M}^{k,\ell}$ for the set of all such graphs.
\end{definition}

One can define some elementary operations with tri-labeled graphs: For $\mF_1 = (F_1, \va_1, \vb_1, \vd_1) \in \mathcal{M}^{k,\ell}$ and $\mF_2=(F_2, \va_2, \vb_2, \vd_2) \in \mathcal{M}^{\ell,m}$, define the \newterm{composition} of $\mF_1$ and $\mF_2$ as
\begin{equation}
    \mF_1 \circ \mF_2 \: := \: (F, \va_1, \vb_2, \vd_1+\vd_2) \: \in \: \mathcal{M}^{k,m}, \label{eq:tri_labeled_graphs_composition}
\end{equation}
where $F$ is obtained from $F_1 \sqcup F_2$ by identifying the vertices in the tuples $\vb_1$ and $\va_2$ (whose dimensions we required to be aligned), and the sum $\vd_1+\vd_2$ should be seen w.r.t.\ this identification. This identification also potentially introduces multiedges. The \newterm{Schur product} of $\mF_1 = (F_1, \va_1, \varnothing, \vd_1)$ and $\mF_2 = (F_2, \va_2, \varnothing, \vd_2) \in \mathcal{M}^{k,0}$ is defined as
\begin{equation}
    \mF_1 \cdot \mF_2 \: := \: (F, \va_1, \varnothing, \vd_1+\vd_2) \: \in \: \mathcal{M}^{k,0}, \label{eq:tri_labeled_graphs_schur_product}
\end{equation}
where we abuse notation by writing $\varnothing$ for the empty tuple, and $F = F_1 \sqcup F_2$, with the nodes in $\va_1$ and $\va_2$ identified. Now, also consider some \emph{atomic} tri-labeled graphs which can be glued together using the above operations:
\vspace{-2pt}
\begin{definition}[cf.\ {\citet[Definition 12]{boker_weisfeiler-leman_2023}}]\label{def:atomic_tri_labeled_graphs}
    Let $k \geq 1$. Define the following specific tri-labeled graphs:
    \begin{itemize}
        \setlength{\itemsep}{0pt}
        \setlength{\parskip}{0pt}
        \item \textbf{Adjacency graph:} $\mA_{\{i,j\}}^{(k)} := (([k], \{i,j\}), (1, \dots, k), (1, \dots, k), \bm{0}) \in \mathcal{M}^{k,k}$ for $i \neq j$,
        \item \textbf{Introduce graph:} $\mI_j^{(k)} := (([k], \varnothing), (1, \dots, k), (1, \dots, j-1, j+1, \dots, k), \bm{0}) \in \mathcal{M}^{k,k-1}$,
        \item \textbf{Forget graph:} $\mF_j^{(k)} := (([k], \varnothing), (1, \dots, j-1, j+1, \dots, k), (1, \dots, k), \bm{0}) \in \mathcal{M}^{k-1, k}$,
        \item \textbf{Neighbor graph:} $\mN_j^{(k)} := \mI_j^{(k)} \circ \mF_j^{(k)} \in \mathcal{M}^{k,k}$,
        \item \textbf{Signal graph:} $\mS_{\vd}^{(k)} := (([k], \varnothing), (1, \dots, k), (1, \dots, k), \vd) \in \mathcal{M}^{k,k}$ for $\vd \in \N_0^k$,
        \vspace{4pt}
        \item \textbf{All-one graph:} $\bm{1}^{(k)} := (([k], \varnothing), (1, \dots, k), \varnothing, \bm{0}) \in \mathcal{M}^{k,0}$.
    \end{itemize}
    Define the set of all \newterm{atomic tri-labeled graphs} as
    \begin{equation}
        \mathcal{F}^{k} \: := \: \left\{\mA_{\{i,j\}}^{(k)}\,\middle|\,i \neq j \in [k]\right\} \cup \left\{\mN_{j}^{(k)}\,\middle|\,j \in [k]\right\} \cup \left\{\mS_{\vd}^{(k)}\,\middle|\,\vd \in \N_0^k\right\} \: \subset \mathcal{M}^{k,k}.
    \end{equation}
    \vspace{-13pt}
\end{definition}
\vspace{-2pt}
These atomic graphs can now be joined by the elementary operations composition and Schur product to \emph{terms} \citep[Definition 13]{boker_weisfeiler-leman_2023}:
For $k \geq 1$ and $\mathcal{F} \subseteq \mathcal{M}^{k,k}$, the set $\langle \mathcal{F}\rangle$ of $\mathcal{F}$-\newterm{terms} is the smallest set of expressions such that
\begin{align}
    &\bm{1}^{(k)} \in \langle \mathcal{F}\rangle, &\\
    &\mF \circ \mathbb{F}_1 \in \langle \mathcal{F}\rangle &\forall \mF \in \mathcal{F}, \mathbb{F} \in \langle \mathcal{F}\rangle, \\
    &\mathbb{F}_1 \cdot \mathbb{F}_2 \in \langle \mathcal{F}\rangle &\forall \:\!\mathbb{F}_1, \mathbb{F}_2 \in \langle \mathcal{F}\rangle.
\end{align}
For such a term $\mathbb{F}$, write $[\![\mathbb{F}]\!] \in \mathcal{M}^{k,0}$ for its evaluation. 
Notably, \citet[Definition 13]{boker_weisfeiler-leman_2023} shows that the underlying graphs when evaluating the terms in their version of $\langle\mathcal{F}^k\rangle$ are precisely the multigraphs of treewidth bounded by $k-1$. In our case, we obtain

\begin{theorem}[cf.\ {\citet[Theorem 14]{boker_weisfeiler-leman_2023}}]\label{thm:Fk_terms_hom_densities}
    The underlying graphs of the tri-labeled graphs obtained by evaluating the terms in $\langle \mathcal{F}^k \rangle$ are precisely the node-featured multigraphs $(F, \vd)$ with $\tw(F) \leq k-1$, up to isolated vertices.
\end{theorem}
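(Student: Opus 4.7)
My plan is to reduce this to \citet[Theorem 14]{boker_weisfeiler-leman_2023} by separating graph-structural and signal aspects. The key observation is that the only atomic building block of $\mathcal{F}^k$ beyond Böker's bi-labeled setting is the signal graph $\mS_{\vd'}^{(k)}$, which has empty edge set and thus acts as the identity on underlying multigraphs while only modifying the signal component. Dropping the $\vd$-component from every tri-labeled graph and from every atomic operation recovers exactly Böker's setting, so the \emph{multigraph} produced by evaluating a term in $\langle \mathcal{F}^k \rangle$ is the same as the one produced by the Böker-type subterm obtained by deleting all signal atoms.

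For the forward direction, I would argue by structural induction on $\mathbb{F} \in \langle \mathcal{F}^k \rangle$. The base case $\mathbb{F} = \bm{1}^{(k)}$ is trivial. For composition $\mF \circ \mathbb{F}'$ with $\mF \in \mathcal{F}^k$, the adjacency and neighbor atoms perform exactly the moves analyzed in Böker's argument and preserve the treewidth bound $\leq k-1$, while signal atoms $\mS_{\vd'}^{(k)}$ leave the underlying multigraph unchanged. The Schur product glues two evaluated terms along their full $\va$-tuple, which merges two tree decompositions along a shared bag of size $k$ and preserves the bound. Signal contributions add componentwise by \eqref{eq:tri_labeled_graphs_composition} and never interact with the edge set.

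For the reverse direction, fix $(F, \vd)$ with $\tw(F) \leq k-1$. Böker's theorem furnishes a term $\mathbb{F}_0$, built only from adjacency, neighbor, and all-one atoms, whose evaluation has underlying multigraph $F$ up to isolated vertices and signal $\bm{0}$. Böker's construction tracks a tree decomposition of $F$: each $v \in V(F)$ is introduced by some $\mI_j^{(k)}$ and later forgotten by some $\mF_j^{(k)}$, remaining in slot $j$ of the active tuple during the intervening subterms. I augment $\mathbb{F}_0$ by inserting, at any moment within this interval, a composition with $\mS_{d_v \bm{e}_j}^{(k)}$; since signals add componentwise, the final evaluation has signal exactly $d_v$ at $v$ for every $v \in V(F)$ and underlying multigraph still $F$.

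The main technical obstacle will be tracking node positions through Böker's construction carefully enough to insert each signal atom at a moment when slot $j$ unambiguously corresponds to $v$ alone, so the prescribed signal is deposited on the correct vertex rather than leaking onto another vertex still in scope. This essentially amounts to a careful bookkeeping of Böker's inductive construction and verifying that during the lifetime of each $v$ between its introduce and forget step, slot $j$ is occupied by $v$ and only by $v$; since each active tuple labels distinct vertices of the multigraph under construction, this tracking is possible throughout. The clause \enquote{up to isolated vertices} is preserved from Böker's statement because no atom in $\mathcal{F}^k$ can emit a vertex that is both edgeless and labeled, and signals attached to such putatively isolated vertices cannot be captured independently of their non-isolated occurrences.
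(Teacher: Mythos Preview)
Your proposal is correct and aligns with the paper's approach: the paper omits the proof entirely, stating only that it is elementary and relies on nice tree decompositions as in \citet{boker_weisfeiler-leman_2023}, which is exactly the reduction you carry out. Your extra observation that the signal atoms $\mS_{\vd'}^{(k)}$ act as the identity on the underlying multigraph and can be inserted at the appropriate point in B\"oker's construction is precisely the additional ingredient needed, and the worked example in the paper (\autoref{fig:nice_tree_decomposition}) follows this pattern, inserting signal atoms at leaf and root bags of the nice tree decomposition.
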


We omit the proof here; it is elementary and relies on the existence of certain \emph{nice tree decompositions} \citep{kloks_treewidth_1994, boker_weisfeiler-leman_2023}. In these (rooted) tree decompositions, roughly speaking, each edge where the bag changes corresponds to the addition or removal of a single node in a bag, and each bag has at most two children. Consequently, the tree can be traversed so that every edge is associated with one of the operations from \autoref{def:atomic_tri_labeled_graphs}.

\input{figures/nice_tree_decomposition}

Consider, for example, the tri-labeled graph $\mF = (F, \va, \varnothing, \vd) \in \mathcal{M}^{3,0}$ from \autoref{fig:nice_tree_decomposition} (the input vertices are highlighted), as well as a corresponding nice tree decomposition. $\mF$ can be seen as the evaluation of the following term in $\mathcal{F}^3$:
\begin{align}
    \mF \: = \: \textcolor{tumblue}{\mS_{(d_1,0,0)}^{(3)}} \circ \Big( &\big(\mA_{\{1,2\}}^3 \circ \mN_3^{(3)} \circ \mA_{\{1,3\}}^{(3)} \circ \mA_{\{2,3\}}^{(3)} \circ \textcolor{tumblue}{\mS_{(0,d_2,d_4)}^{(3)}} \circ \bm{1}^{(k)}\big) \\
    \cdot& \big(\mA_{\{1,3\}}^3 \circ \mN_2^{(3)} \circ \mA_{\{1,2\}}^{(3)} \circ \mA_{\{2,3\}}^{(3)} \circ \textcolor{tumblue}{\mS_{(0,d_5,d_3)}^{(3)}} \circ \bm{1}^{(k)}\big) \Big). \qedhere
\end{align}

To later connect these terms in $\langle \mathcal{F}^k \rangle$ to the $k$-WL test, \citet{boker_weisfeiler-leman_2023} defines the concept of \emph{height} $h(\mathbb{F})$ for a term $\mathbb{F}$. Intuitively, this means that the homomorphism density w.r.t.\ the underlying graph of $[\![\mathbb{F}]\!]$ can be computed after $h(\mathbb{F})$ rounds of color refinement.

\begin{definition}[Height of a term]
    We define the \newterm{height} $h(\mathbb{F})$ of a term $\mathbb{F} \in \langle \mathcal{F}^k \rangle$ inductively by setting $h(\bm{1}^{(k)}) := 0$, $h(\mN \circ \mathbb{F}) := h(\mathbb{F}) + 1$ for any neighbor graph $\mN$, $h(\mA \circ \mathbb{F}) := h(\mathbb{F})$ and $h(\mS \circ \mathbb{F}) := h(\mathbb{F})$ for any adjacency and signal graphs $\mA$ and $\mS$ respectively, and $h(\mathbb{F}_1 \cdot \mathbb{F}_2) := \max \{h(\mathbb{F}_1), h(\mathbb{F}_2)\}$.
\end{definition}

We can define \newterm{graphon-signal operators} using tri-labeled graphs. Let $\mF = (F, \va, \vb, \vd) \in \mathcal{M}^{k,\ell}$ be a tri-labeled graph. Let $r > 0$. Define the associated $\mF$-operator of $(W,f) \in \graphonsignal{r}$ as $T_{\mF \to (W,f)}: \Lpuig{2}{\ell} \to \Lpuig{2}{k}$, which acts on $U \in \Lpuig{2}{\ell}$ as
\begin{align*}
    &\left(T_{\mF \to (W,f)}U\right)(\vx_\va) \\
    &\quad:= \int_{[0,1]^{v(F)-k}} \left( \prod_{i \in V(F)} f(x_i)^{d_i}\right) \!\!\! \left( \prod_{\{i,j\} \in E(F)} \!\!\!\!\!\! W(x_i, x_j) \right) U(\vx_\vb) \, \mathrm{d}\lambda^{v(F)-k}(\vx_{V(F) \setminus\va}). \numberthis \label{eq:graphon_signal_operator}
\end{align*}
Importantly, composition \eqref{eq:tri_labeled_graphs_composition} and Schur product \eqref{eq:tri_labeled_graphs_schur_product} of tri-labeled graphs correspond to the composition and pointwise product of their associated operators, i.e.,
\begin{align}
    T_{\mF_1 \circ \mF_2 \to (W,f)} \: &= \: T_{\mF_1 \to (W,f)} \circ T_{\mF_2 \to (W,f)} &\forall \mF_1 \in \mathcal{M}^{k,m}, \, \mF_2 \in \mathcal{M}^{m,\ell}, \, k,l,m \geq 0, \label{eq:graphon_signal_op_composition} \\
    T_{\mF_1 \cdot \mF_2 \to (W,f)} \: &= \: T_{\mF_1 \to (W,f)} \cdot T_{\mF_2 \to (W,f)} &\forall \mF_1, \mF_2 \in \mathcal{M}^{k,0}, \, k \geq 0. \label{eq:graphon_signal_op_product}
\end{align}
This allows us to gradually build up the operator $T_{[\![\mathbb{F}]\!] \to (W,f)}$ for any term $\mathbb{F} \in \langle \mathcal{F}^k \rangle$ from the operators w.r.t.\ its atomic graphs. It is also straightforward to see that just integrating over the output of such an operator applied to a constant function recovers the signal-weighted homomorphism densities. Formally, for any $(W, f) \in \graphonsignal{r}$ and $\mF = (F, \va, \vb, \vd) \in \mathcal{M}^{k,\ell}$, we have
\begin{equation}
    \int_{[0,1]^k} T_{\mF \to (W,f)} \indfct_{[0,1]^\ell} \, \mathrm{d} \lambda^k \: = \: t((F, \vd), (W,f)). \label{eq:graphon_signal_op_hom_density}
\end{equation}

\subsection{Weisfeiler-Leman Measures}\label{apdsubsec:wl_measures}

One can now define the space of colors used by the $k$-WL test for graphon-signals. In the discrete case---at least as long as node labels come from a countable alphabet---this space is countable and can be left implicit. However, for graphons and graphon-signals, the situation is more nuanced as the color space requires careful consideration of its topology, i.e., a suitable notion of convergence/distance.

\begin{definition}[Weisfeiler-Leman Measure, cf.\ {\citet[Definition 26]{boker_weisfeiler-leman_2023}}]\label{def:k_wl_measure}
    Let $k \geq 1$ and fix $r > 0$. Let
    \vspace{-3pt}
    \begin{equation}
        P_0^k \: := \: [0,1]^{[k]\choose2} \times [-r, r]^{k}
    \end{equation}
    be the space of initial colors.
    Inductively define
    \begin{equation}
        \mathbb{M}_\step^k \: := \: \prod_{i \leq \step} P_i^k, \qquad P_{\step+1}^k \: := \: \mathcal{P}(\mathbb{M}_\step^k)^{k}
    \end{equation}
    for every $\step \in \N$, where we consider $P_0^k$ with its standard topology, and $\mathcal{P}(\cdot)$ denotes the set of all Borel probability measures, equipped with the weak topology. Let $\mathbb{M}^k := \prod_{\step \in \N} P_\step^k$ (equipped with the product topology) and define $p_{t\to\step}:\mathbb{M}_t^k \to \mathbb{M}_\step^k$ to be the natural projection for $\step \leq t$. Set
    \begin{equation}
        \mathbb{P}^k \, := \, \left\{ \alpha \in \mathbb{M}^k \,|\, (\alpha_{\step+1})_j \, = \, (p_{\step+1\to\step})_\ast(\alpha_{\step+2})_j \text{ for all } j \in [k], \step \in \N \right\},\label{eq:Pk_def}
    \end{equation}
    where $(p_{\step\to\step+1})_\ast$ denotes the pushforward of measures.
\end{definition}

We tacitly equip all spaces in \autoref{def:k_wl_measure} with their Borel $\sigma$-algebra.
Intuitively, one can think of $\mathbb{P}^k$ as the space of all colors (potentially) used by the $k$-WL test, where for $\alpha \in \mathbb{P}^k$ each entry $\alpha_\step$, $\step \in \N$, is a \emph{refinement} of the previous one.
For any $\alpha \in \mathbb{P}^k$ and $j \in [k]$, there is a unique measure $\mu_j^\alpha \in \mathcal{P}(\mathbb{M}^k)$ such that
\begin{equation}
    (p_{\infty\to\step})_\ast \mu_j^\alpha = (\alpha_{\step+1})_j, \qquad \forall \step \in \N,
\end{equation}
by the Kolmogorov consistency theorem.
A few basic statements about the topology of the spaces $\mathbb{M}^k$ and $\mathbb{P}^k$ are collected in the following lemma:

\begin{lemma}[cf.\ {\citet[p.\ 32 and Lemma 27]{boker_weisfeiler-leman_2023}}]\label{lem:Mk_Pk_basics}
    Consider $\mathbb{M}^k$ and $\mathbb{P}^k$ as in \autoref{def:k_wl_measure}. The following holds:
    \begin{enumerate}[(1)]
        \setlength{\itemsep}{0pt}
        \setlength{\parskip}{0pt}
        \item The space $\mathbb{M}^k$ is metrizable and compact, and $\mathbb{P}^k \subseteq \mathbb{M}^k$ is closed.
        \item The set
        \begin{equation}
            \bigcup_{\step \in \N_0} C(\mathbb{M}_\step^k) \circ p_{\infty\to\step} \: \subset \: C(\mathbb{M}^k)\label{eq:Mk_nesting}
        \end{equation}
        is dense.
        \item For any $j \in [k]$, the assignment $\mathbb{P}^k \ni \alpha \mapsto \mu_j^\alpha \in \mathcal{P}(\mathbb{M}^k)$ is continuous.
    \end{enumerate}
\end{lemma}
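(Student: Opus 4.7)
The three items are essentially soft topological facts, so my plan is to handle them in order, using compactness/Polishness of each space built up inductively, then Stone--Weierstrass for density, then a standard dense-class weak-convergence argument for the continuity of $\alpha \mapsto \mu_j^\alpha$.

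For (1), I would prove by induction on $\step$ that $\mathbb{M}_\step^k$ is compact and metrizable. The base case $P_0^k = [0,1]^{\binom{k}{2}} \times [-r,r]^k$ is a closed bounded subset of a Euclidean space. For the inductive step, if $\mathbb{M}_\step^k$ is compact metrizable (hence Polish), then by a standard fact on spaces of probability measures, $\mathcal{P}(\mathbb{M}_\step^k)$ with the weak topology is again compact metrizable (see the background in \autoref{apdsubsubsec:measures_polish_spaces}); therefore $P_{\step+1}^k = \mathcal{P}(\mathbb{M}_\step^k)^k$ and hence $\mathbb{M}_{\step+1}^k$ inherit these properties. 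Finally, $\mathbb{M}^k = \prod_{\step \in \N_0} P_\step^k$ is compact by Tychonoff's theorem and metrizable as a countable product of metrizable spaces. For the closedness of $\mathbb{P}^k$, I would observe that for fixed $\step, j$ the pushforward map $\mathcal{P}(\mathbb{M}_{\step+1}^k) \to \mathcal{P}(\mathbb{M}_\step^k)$, $\mu \mapsto (p_{\step+1 \to \step})_\ast \mu$, is continuous (weak convergence is preserved under precomposition with continuous maps), and hence the defining constraint in \eqref{eq:Pk_def} cuts out a closed subset for each $(\step, j)$; $\mathbb{P}^k$ is the intersection of countably many such closed sets.

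For (2), I would apply Stone--Weierstrass on the compact Hausdorff space $\mathbb{M}^k$ to the set $\mathcal{A} := \bigcup_{\step \in \N_0} C(\mathbb{M}_\step^k) \circ p_{\infty\to\step}$. This set is an algebra: given $f_i = g_i \circ p_{\infty \to \step_i}$, $i=1,2$, take $\step := \max\{\step_1,\step_2\}$ and use the factorizations $p_{\infty \to \step_i} = p_{\step \to \step_i} \circ p_{\infty \to \step}$ to rewrite both $f_1, f_2$ as pullbacks through $p_{\infty \to \step}$; then their sums and products sit inside $C(\mathbb{M}_\step^k) \circ p_{\infty \to \step} \subseteq \mathcal{A}$. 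It contains the constants trivially, and it separates points because if $\alpha \neq \beta$ in $\mathbb{M}^k$ they must disagree in some coordinate $\step$, so a single continuous coordinate function (or, more concretely, any continuous function on $\mathbb{M}_\step^k$ separating the truncations) pulled back via $p_{\infty \to \step}$ does the job. Stone--Weierstrass then yields density in $C(\mathbb{M}^k)$.

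For (3), the task is to show that $\alpha_n \to \alpha$ in $\mathbb{P}^k$ implies $\mu_j^{\alpha_n} \overset{w}{\to} \mu_j^\alpha$ in $\mathcal{P}(\mathbb{M}^k)$. By the Portmanteau characterization it suffices to verify $\int f \, \mathrm{d}\mu_j^{\alpha_n} \to \int f \, \mathrm{d}\mu_j^\alpha$ for every $f$ in a uniformly dense subset of $C(\mathbb{M}^k)$, and by (2) I would take this subset to be $\mathcal{A}$. For $f = g \circ p_{\infty \to \step}$ with $g \in C(\mathbb{M}_\step^k)$, the defining property of $\mu_j^\alpha$ gives
\begin{equation}
    \int f \, \mathrm{d}\mu_j^\alpha \; = \; \int g \, \mathrm{d}(p_{\infty\to\step})_\ast \mu_j^\alpha \; = \; \int g \, \mathrm{d}(\alpha_{\step+1})_j,
\end{equation}
and since convergence $\alpha_n \to \alpha$ in the product topology on $\mathbb{M}^k$ forces coordinatewise convergence, in particular $(\alpha_n)_{\step+1,j} \overset{w}{\to} (\alpha)_{\step+1,j}$ in $\mathcal{P}(\mathbb{M}_\step^k)$, so $\int g \, \mathrm{d}(\alpha_n)_{\step+1,j} \to \int g \, \mathrm{d}(\alpha)_{\step+1,j}$ by definition of the weak topology. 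The extension from $\mathcal{A}$ to all of $C(\mathbb{M}^k)$ is a standard $3\varepsilon$-estimate using the uniform bound $\sup_n \mu_j^{\alpha_n}(\mathbb{M}^k) = 1$. The only mild subtlety to check carefully is that the existence of the measure $\mu_j^\alpha$ via the Kolmogorov consistency theorem genuinely gives the identity $(p_{\infty \to \step})_\ast \mu_j^\alpha = (\alpha_{\step+1})_j$, which is exactly how the consistency theorem produces $\mu_j^\alpha$ from the projective family $\{(\alpha_{\step+1})_j\}_\step$ guaranteed by the definition of $\mathbb{P}^k$; this is the only step where the technical measure-theoretic content sits, and the rest is routine.
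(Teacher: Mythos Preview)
Your proposal is correct and follows essentially the same approach as the paper: induction on $\step$ plus Tychonoff for compactness and metrizability, Stone--Weierstrass for (2), and the density from (2) to reduce (3) to coordinatewise weak convergence. The only cosmetic difference is that for closedness of $\mathbb{P}^k$ you argue via continuity of the pushforward maps (so each constraint is a closed equalizer set), whereas the paper gives the equivalent sequential verification; and you spell out the $3\varepsilon$ extension in (3) where the paper simply says it follows from (2).
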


\begin{proof}
    Statement \textbf{(1)} follows immediately: $P_0^k$ is clearly metrizable and compact, and we can inductively conclude the same for $P_\step^k$, $\step \in \N$, as this is preserved by $\mathcal{P}(\cdot)$ and finite products. Also, as a countable product of metrizable and compact spaces, $\mathbb{M}^k$ is compact (by Tychonoff's theorem) as well as metrizable. Now, let $\alpha^{(n)} \in \mathbb{P}^k$ such that $\alpha^{(n)} \to \alpha \in \mathbb{M}^k$. Clearly, for any $\step \in \N$, $j \in [k]$, we have
    $(\alpha_{\step+1}^{(n)})_j \overset{w}{\to}(\alpha_{\step+1})_j$ by the definition of product topology, and for any $f \in C(\mathbb{M}^k_\step)$
    \begin{equation}
        \int_{\mathbb{M}^k_\step} f \, \mathrm{d}(\alpha^{(n)}_{\step+1})_j \, = \, \int_{\mathbb{M}^k_{\step+1}} f \circ p_{\step+1\to\step} \, \mathrm{d} (\alpha_{\step+2}^{(n)})_j \, \to \, \int_{\mathbb{M}^k_{\step+1}} f \circ p_{\step+1\to\step} \, \mathrm{d} (\alpha_{\step+2})_j \, = \, \int_{\mathbb{M}^k_\step} f \, \mathrm{d}(\alpha_{\step+1})_j
    \end{equation}
    by the definition of the pushforward, and as $\alpha^{(n)} \in \mathbb{P}^k$. This implies also $(\alpha_{\step+1}^{(n)})_j \overset{w}{\to}(p_{\step+1\to\step})_\ast(\alpha_{\step+2})_j$, and, thus, $(\alpha_{\step+1})_j = (p_{\step+1\to\step})_\ast(\alpha_{\step+2})_j$ (as $\mathcal{P}(\mathbb{M}^k_\step)$ is metrizable and, hence, Hausdorff). We conclude $\alpha \in \mathbb{P}^k$, and $\mathbb{P}^k$ is closed. Particularly, this means that $\mathbb{P}^k$ is compact as well. 
    For \textbf{(2)}, note that the set is an algebra, clearly contains a constant function, and separates points of $\mathbb{M}^k$ (again, by the definition of product topology). The Stone-Weierstrass theorem yields the claim.
    Statement \textbf{(3)} follows similarly easily from \textbf{(2)}. \qedhere
\end{proof}

One can now define a set of \enquote{simple} functions on $\mathbb{M}^k$ which will turn out to be related to homomorphism densities. For a term $\mathbb{F} \in \langle \mathcal{F}^k \rangle$ of height $h(\mathbb{F}) \leq \step$, define a set of \newterm{realizing functions} $F^{\mathbb{F}}_\step$ from $ \mathbb{M}_\step^k$ to $\R$ inductively as the smallest one fulfilling
\begin{align}
    F^{\bm{1}^{(k)}}_\step \: &\ni \: \indfct_{\mathbb{M}_\step^k}, & \\ 
    F^{\mA_{\{i,j\}}^{(k)} \circ \mathbb{F}}_\step \: &\ni \: \left[ \alpha \mapsto ((\alpha_0)_1)_{\{i,j\}} \cdot t(\alpha)\right] & \forall t \in F^{\mathbb{F}}_\step, \, i \neq j \in [k], \\ 
    F^{\mS_{\vd}^{(k)} \circ \mathbb{F}}_\step \: &\ni \: \left[ \alpha \mapsto \prod_{i \in [k]}((\alpha_0)_2)_{i}^{d_i} \cdot t(\alpha)\right] & \forall t \in F^{\mathbb{F}}_\step, \, \vd \in \N_0^k, \\ 
    F^{\mN_j^k \circ \mathbb{F}}_{\step+1} \: &\ni \: \left[ \alpha \mapsto \int_{\mathbb{M}_\step^k} t \, \mathrm{d}(\alpha_{\step+1})_j\right] &\forall t \in F^{\mathbb{F}}_{\step},\, j \in [k], \\ 
    F_\step^{\mathbb{F}_1 \cdot \mathbb{F}_2} \: &\ni \: t_1 \cdot t_2 &\forall t_1 \in F_\step^{\mathbb{F}_1}, t_2 \in F_\step^{\mathbb{F}_2}, \\
    F_\step^{\mathbb{F}} \: &\ni \: t \circ p_{\step\to t} & \forall t \in F_t^{\mathbb{F}}, t \leq \step \leq h(\mathbb{F}).
\end{align}
Also, set
\begin{equation}
    F^{\mathbb{F}} \: := \: \bigcup_{\step \in \N_0}\bigcup_{h(\mathbb{F}) \leq \step} F_\step^\mathbb{F} \circ p_{\infty\to \step} \; \subseteq \; \Lp{\infty}(\mathbb{M}^k).
\end{equation}
One can check that each $t \in F^{\mathbb{F}}$ is continuous. By the consistency requirement on $\mathbb{P}^k$, the restriction of $F^\mathbb{F}$ to this subspace collapses to a singleton. Abusing notation, we define this \newterm{realizing function} of a term $\mathbb{F} \in \langle \mathcal{F}^k \rangle$ as
\begin{equation}
    t^\mathbb{F} \: := \: \restr{F^{\mathbb{F}}}{\mathbb{P}^k} \: \in \: C(\mathbb{P}^k). \label{eq:F_terms}
\end{equation}
This leads to the following result:

\begin{theorem}\label{thm:tri_labeled_terms_dense}
    The sets
    \begin{equation}
        \mathrm{span} \bigcup_{\mathbb{F} \in \langle \mathcal{F}^k \rangle} F^{\mathbb{F}} \subset C(\mathbb{M}^k) \qquad \text{and} \qquad  \mathrm{span} \, \{t^{\mathbb{F}} \,|\, \mathbb{F} \in \mathcal{F}^k\} \subset C(\mathbb{P}^k)
    \end{equation}
    are dense w.r.t.\ $\normsm{\cdot}_\infty$.
\end{theorem}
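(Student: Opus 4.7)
The plan is to apply the Stone--Weierstrass theorem to both $\mathbb{M}^k$ and $\mathbb{P}^k$, which are compact and metrizable by Lemma~\ref{lem:Mk_Pk_basics}(1). Both candidate sets are linear spans that form subalgebras of the relevant $C(\cdot)$: the constant $1$ lies in $F^{\bm{1}^{(k)}}_{0}$, closure under pointwise multiplication follows from the Schur-product rule $F_{\step}^{\mathbb{F}_{1}\cdot\mathbb{F}_{2}} \ni t_{1}\cdot t_{2}$, and closure under scalar multiplication and addition comes from taking the span. Hence only point separation remains, and by Lemma~\ref{lem:Mk_Pk_basics}(2) it suffices to show that at every finite level $\step$, the subalgebra $\mathcal{A}_{\step} := \mathrm{span}\bigcup_{h(\mathbb{F})\le \step} F^{\mathbb{F}}_{\step}$ is already uniformly dense in $C(\mathbb{M}^{k}_{\step})$.

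I would prove this by strong induction on $\step$. For the base case $\step = 0$, note that $\mathbb{M}^{k}_{0} = P_{0}^{k} = [0,1]^{\binom{[k]}{2}}\times[-r,r]^{k}$. The atomic expressions $\mA_{\{i,j\}}^{(k)}\circ\bm{1}^{(k)}$ and $\mS_{\mathbf{e}_{i}}^{(k)}\circ\bm{1}^{(k)}$ yield the coordinate projections $\alpha\mapsto((\alpha_{0})_{1})_{\{i,j\}}$ and $\alpha\mapsto((\alpha_{0})_{2})_{i}$, so $\mathcal{A}_{0}$ contains all polynomials in the coordinates of $P_{0}^{k}$, which are uniformly dense by the classical Stone--Weierstrass theorem on the compact set $P_{0}^{k}$.

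For the inductive step, assume $\mathcal{A}_{\step}$ is dense in $C(\mathbb{M}^{k}_{\step})$. Via the pullback rule $F^{\mathbb{F}}_{\step+1}\ni t\circ p_{\step+1\to\step}$, the algebra $\mathcal{A}_{\step+1}$ contains $\mathcal{A}_{\step}\circ p_{\step+1\to\step}$, which by the inductive hypothesis already separates any two points differing in their $\mathbb{M}^{k}_{\step}$ component. For points differing in the $j$-th entry of the $P_{\step+1}^{k}=\mathcal{P}(\mathbb{M}^{k}_{\step})^{k}$ component, the neighbor-graph rule supplies functions $\alpha\mapsto\int_{\mathbb{M}^{k}_{\step}}t\,\mathrm{d}(\alpha_{\step+1})_{j}$ for every $t\in\mathcal{A}_{\step}$. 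Since two distinct Borel probability measures on the compact metric space $\mathbb{M}^{k}_{\step}$ must integrate some continuous function to different values, and $\mathcal{A}_{\step}$ is uniformly dense in $C(\mathbb{M}^{k}_{\step})$ by induction, some $t\in\mathcal{A}_{\step}$ already witnesses the discrepancy. A final Stone--Weierstrass application on the compact space $\mathbb{M}^{k}_{\step+1}$ closes the induction.

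Density in $C(\mathbb{M}^{k})$ then follows by combining this levelwise density with Lemma~\ref{lem:Mk_Pk_basics}(2); for the second assertion, restricting to the closed subspace $\mathbb{P}^{k}\subseteq\mathbb{M}^{k}$ preserves the subalgebra structure and point separation, so Stone--Weierstrass on $\mathbb{P}^{k}$ yields the claim. The main obstacle is that the induction requires actual \emph{uniform density} of $\mathcal{A}_{\step}$ in $C(\mathbb{M}^{k}_{\step})$, rather than mere point separation, because distinguishing measures in the next level via integration demands a uniformly dense class of test functions; I would formulate the induction hypothesis directly as density (rather than separation) to handle this cleanly.
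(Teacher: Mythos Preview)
Your proposal is correct and follows essentially the same route as the paper: an inductive application of Stone--Weierstrass on the finite levels $\mathbb{M}^{k}_{\step}$, lifted to $\mathbb{M}^{k}$ via Lemma~\ref{lem:Mk_Pk_basics}(2), with the $\mathbb{P}^{k}$ statement obtained by restriction to that compact subspace. The paper only sketches this and refers to \citet[Lemma~44]{boker_weisfeiler-leman_2023} for the details; you have filled those in correctly, including the key observation that the induction hypothesis must be \emph{density} (not merely point separation) so that integrals against $\mathcal{A}_{\step}$ suffice to distinguish measures at level $\step+1$.
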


The proof for $\mathbb{M}^k$ is tedious but conceptually simple, and relies on an inductive application of the Stone-Weierstrass theorem on $\mathbb{M}^k_\step$ for $\step \in \N$. Via \autoref{lem:Mk_Pk_basics}, \eqref{eq:Mk_nesting}, one can then arrive at $\mathbb{M}^k$ (this works, again, by a basic property of the product topology). The reader is referred to \citet[Lemma 44]{boker_weisfeiler-leman_2023} for the detailed argument. The statement for $\mathbb{P}^k$ follows immediately by its compactness.

\subsection{Color Refinement for Graphon-Signals}\label{apdsubsec:color_refinement_graphon_signals}

One can now define the equivalent of the color refinement for a specific graphon-signal. Observe the close resemblance of the following definition to discrete $k$-WL test (cf.\ \citet{huang_short_2021}).

\begin{definition}[$k$-WL, cf.\ {\citet[Definition 29]{boker_weisfeiler-leman_2023}}]\label{def:k_wl_graphon_signals}
    Let $k \geq 1$, $r > 0$, and let $(W, f) \in \graphonsignal{r}$. Define $\cWL_{(W,f)}^{k\text{-}\WL,(0)}: [0,1]^{k} \to \mathbb{M}_0^k$ by
    \begin{equation}
    \cWL_{(W,f)}^{k\text{-}\WL,(0)}(\vx) \, := \, \Big( \big(W(x_i, x_j)\big)_{\{i, j\} \in {[k]\choose2}}, \big(f(x_j) \big)_{j \in [k]} \Big) \label{eq:kWL_first_step}\vspace{-3pt}
    \end{equation}
    for every $\vx \in [0,1]^{k}$. We then inductively define $\cWL_{(W,f)}^{k\text{-}\WL,(\step+1)} \to \mathbb{M}_{\step+1}^k$ by
    \begin{equation}
        \cWL_{(W,f)}^{k\text{-}\WL,(\step)}(\vx) \, := \, \left( \cWL_{(W,f)}^{k\text{-}\WL,(\step-1)}(\vx), \left(\left( \cWL_{(W,f)}^{k\text{-}\WL,(\step-1)} \circ \vx[\cdot/j] \right)_\ast \lambda \right)_{j \in [k]} \right)\vspace{-3pt}
    \end{equation}
    for every $\vx \in [0,1]^{k}$, where $\lambda$ is the Lebesgue measure and $\vx[\cdot/j] := (\dots, x_{j-1}, \cdot, x_{j+1}, \dots) \in [0,1]^{k}$. Let $\cWL_{(W,f)}^{k\text{-}\WL}: [0,1]^{k} \to \mathbb{M}^k$ such that $(\cWL_{(W,f)}^{k\text{-}\WL})_\step = (\cWL_{(W,f)}^{k\text{-}\WL,(\step)})_\step$ for all $\step \in \N$. Call
    \begin{equation}
        \nu_{(W,f)}^{k\text{-}\WL} \, := \, \left(\cWL_{(W,f)}^{k\text{-}\WL} \right)_\ast \lambda^{k} \, \in \, \mathcal{P}(\mathbb{M}^k)
    \end{equation}
    the \newterm{$k$-dimensional Weisfeiler-Leman distribution} of the graphon-signal $(W, f)$.
\end{definition}
Intuitively, this corresponds with the multiset of colors that is obtained by the $k$-WL test for graphs. Consequentially, we call two graphon-signals $k$-WL-\newterm{indistinguishable} if and only if their $k$-WL distributions coincide.
It is straightforward to show that $\nu_{(W,f)}^{k\text{-}\WL}(\mathbb{P}^k) = 1$, i.e., $k$-WL-distributions of graphon-signals indeed define color refinements in the sense of \eqref{eq:Pk_def}. Skipping all the technical details, we get directly to the homomorphism expressivity of this $k$-WL test. Notably, the realizing functions $t^{\mathbb{F}}$ of terms $\mathbb{F} \in \langle \mathcal{F}^k \rangle$ can be elegantly related to signal-weighted homomorphism densities via the terms from \eqref{eq:F_terms}:

\begin{proposition}[cf.\ {\citet[Corollary 43]{boker_weisfeiler-leman_2023}}]\label{prop:hom_density_k_wl_measure}
    Let $k \geq 1$, $(W, f) \in \graphonsignal{r}$, and $\mathbb{F} \in \langle \mathcal{F}^k \rangle$ with $[\![\mathbb{F}]\!] = (F, (1, \dots, k), \varnothing, \vd) \in \mathcal{M}^{k,0}$. Then, we have
    \begin{equation}
        t((F, \vd), (W, f)) \: = \: \int_{\mathbb{P}^k} t^{\mathbb{F}} \, \mathrm{d} \nu_{(W,f)}^{k\text{-}\WL}. \label{eq:signal_weighted_homomorphism_densities_on_wl_measure}
    \end{equation}
\end{proposition}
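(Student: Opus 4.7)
The strategy is to establish the pointwise identity
\begin{equation}
\big(T_{[\![\mathbb{F}]\!] \to (W,f)} \bm{1}\big)(\vx) \; = \; t^{\mathbb{F}}\big(\cWL_{(W,f)}^{k\text{-}\WL}(\vx)\big) \qquad \text{for } \lambda^k\text{-a.e.\ } \vx \in [0,1]^k, \label{eq:pf_plan_pointwise}
\end{equation}
for every term $\mathbb{F} \in \langle \mathcal{F}^k\rangle$, and then integrate both sides over $[0,1]^k$. Once \eqref{eq:pf_plan_pointwise} is in hand, the integral of the left-hand side equals $t((F,\vd),(W,f))$ by \eqref{eq:graphon_signal_op_hom_density}, while the integral of the right-hand side equals $\int_{\mathbb{P}^k} t^{\mathbb{F}}\,\mathrm{d}\nu_{(W,f)}^{k\text{-}\WL}$ by change of variables under the pushforward $\nu_{(W,f)}^{k\text{-}\WL} = (\cWL_{(W,f)}^{k\text{-}\WL})_\ast \lambda^k$. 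Strictly speaking $t^{\mathbb{F}}$ is only defined on $\mathbb{P}^k$, but since $\cWL_{(W,f)}^{k\text{-}\WL}$ maps $\lambda^k$-a.e.\ into $\mathbb{P}^k$ (directly from \autoref{def:k_wl_graphon_signals} and the consistency requirement \eqref{eq:Pk_def}), any representative in $F^{\mathbb{F}} \subseteq L^\infty(\mathbb{M}^k)$ may be used interchangeably.

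\paragraph{Structural induction.} I would prove \eqref{eq:pf_plan_pointwise} by induction on the term $\mathbb{F} \in \langle \mathcal{F}^k\rangle$. The base case $\mathbb{F} = \bm{1}^{(k)}$ is immediate, since both sides equal $1$. The inductive cases mirror the parallel recursive definitions of the graphon-signal operators (\eqref{eq:graphon_signal_op_composition}, \eqref{eq:graphon_signal_op_product}) and of the realizing functions $F^{\mathbb{F}}$. For an \emph{adjacency} composition $\mathbb{F} = \mA_{\{i,j\}}^{(k)} \circ \mathbb{F}'$, both sides acquire the factor $W(x_i, x_j)$, which equals the initial color component $((\alpha_0)_1)_{\{i,j\}}$ at $\alpha = \cWL_{(W,f)}^{k\text{-}\WL}(\vx)$ by \eqref{eq:kWL_first_step}. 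For a \emph{signal} composition $\mathbb{F} = \mS_{\vd}^{(k)} \circ \mathbb{F}'$, both sides acquire $\prod_{i\in[k]} f(x_i)^{d_i}$, matching the signal part of $\alpha_0$. For a \emph{Schur product} $\mathbb{F}_1 \cdot \mathbb{F}_2$, both sides factorize by pointwise multiplicativity. The substantial case is composition with a \emph{neighbor} graph $\mathbb{F} = \mN_j^{(k)} \circ \mathbb{F}'$ with $h(\mathbb{F}') \leq \step$: the operator expands to $\int_{[0,1]} (T_{[\![\mathbb{F}']\!] \to (W,f)}\bm{1})(\vx[y/j])\,\mathrm{d}\lambda(y)$, while the realizing function gives $\int_{\mathbb{M}_\step^k} t^{\mathbb{F}'}\,\mathrm{d}(\alpha_{\step+1})_j$. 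The key identity, read off from \autoref{def:k_wl_graphon_signals}, is $(\alpha_{\step+1})_j = (\cWL_{(W,f)}^{k\text{-}\WL,(\step)} \circ \vx[\cdot/j])_\ast \lambda$ at $\alpha = \cWL_{(W,f)}^{k\text{-}\WL}(\vx)$, so the change-of-variables formula together with the inductive hypothesis collapse the two expressions.

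\paragraph{Main obstacle.} The principal bookkeeping concern is to verify that the step index $\step$ governing the refinement level of $\cWL_{(W,f)}^{k\text{-}\WL,(\step)}$ lines up with the height $h(\mathbb{F}')$ of the inductive subterm, so that the inductive hypothesis applies to the integrand after the substitution $\vx \mapsto \vx[y/j]$. This is essentially the reason the height was defined to increment precisely at neighbor compositions and to be unchanged by adjacency, signal, and Schur operations. Since any term $\mathbb{F}$ admits representatives in $F^{\mathbb{F}}_\step$ for every $\step \geq h(\mathbb{F})$ by the ``lifting'' clause $F_\step^{\mathbb{F}} \ni t \circ p_{\step\to t}$, one can always lift both sides to a common refinement level before taking the neighbor step, making the alignment seamless. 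Integrating the resulting pointwise identity \eqref{eq:pf_plan_pointwise} over $[0,1]^k$ yields the claim.
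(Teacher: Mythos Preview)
Your proposal is correct and follows exactly the approach one would expect: the paper itself does not spell out a proof of this proposition but simply remarks that the arguments of \citet{boker_weisfeiler-leman_2023} transfer almost verbatim, and your structural induction on terms in $\langle\mathcal{F}^k\rangle$---establishing the pointwise identity \eqref{eq:pf_plan_pointwise} and then integrating---is precisely that argument. Your handling of the height bookkeeping in the neighbor case is the only place requiring care, and you have identified it correctly.
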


This leads to the following result, which generalizes \citet[Theorem 5, (1) $\Leftrightarrow$ (2)]{boker_weisfeiler-leman_2023}:
\begin{theorem}[$k$-WL for graphon-signals, formal]\label{thm:k_wl_graphon_signals_homomorphism}
    Let $r > 0$ and $(W, f), (V, g) \in \graphonsignal{r}$ be two graphon-signals. Then, $\nu_{(W,f)}^{k\text{-}\WL} = \nu_{(V, g)}^{k\text{-}\WL}$, i.e. the graphon-signals are $k$-WL indistinguishable, if and only if
    \begin{equation}
        t((F, \vd), (W,f)) \, = \, t((F, \vd), (V,g))
    \end{equation}
    for all multigraphs $F$ of treewidth $\leq k-1$ and $\vd \in \N_0^{v(F)}$.
\end{theorem}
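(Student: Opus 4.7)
The plan is to combine the translation between $\mathcal{F}^k$-terms and multigraphs of treewidth $\leq k-1$ (\autoref{thm:Fk_terms_hom_densities}), the identity expressing signal-weighted homomorphism densities as integrals of realizing functions against the $k$-WL distribution (\autoref{prop:hom_density_k_wl_measure}), and the Stone--Weierstrass-type density statement for the realizing functions on the color space (\autoref{thm:tri_labeled_terms_dense}). The structure mirrors the analogous argument of \citet{boker_weisfeiler-leman_2023} for pure graphons, with signal labels $\vd$ carried along through the $\mS_{\vd}^{(k)}$ atomic tri-labeled graphs.

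For the forward direction, I would take any multigraph $(F,\vd)$ with $\tw(F) \leq k-1$ and apply \autoref{thm:Fk_terms_hom_densities} to obtain a term $\mathbb{F} \in \langle \mathcal{F}^k \rangle$ whose evaluation $[\![\mathbb{F}]\!]$ has underlying multigraph $(F,\vd)$ together with possibly some additional isolated vertices. Since \autoref{def:atomic_tri_labeled_graphs} leaves the inclusion of signal graphs $\mS^{(k)}_{\ve}$ optional, I would choose $\mathbb{F}$ so that no nonzero signal label is placed on these padding positions; consequently the extra vertices contribute a trivial factor of $1$ to the density and
\begin{equation*}
t\!\left(\text{underlying multigraph of }[\![\mathbb{F}]\!],\,(W,f)\right) \;=\; t((F,\vd),(W,f)).
\end{equation*}
Applying \autoref{prop:hom_density_k_wl_measure} on both $(W,f)$ and $(V,g)$ and using $\nu_{(W,f)}^{k\text{-}\WL} = \nu_{(V,g)}^{k\text{-}\WL}$ then gives $t((F,\vd),(W,f)) = t((F,\vd),(V,g))$.

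For the converse direction, I assume the signal-weighted homomorphism densities agree on every multigraph of treewidth $\leq k-1$. For any term $\mathbb{F} \in \langle \mathcal{F}^k\rangle$, the underlying multigraph of $[\![\mathbb{F}]\!]$ has treewidth $\leq k-1$ by \autoref{thm:Fk_terms_hom_densities}, so by hypothesis and \autoref{prop:hom_density_k_wl_measure}
\begin{equation*}
\int_{\mathbb{P}^k} t^{\mathbb{F}} \, \mathrm{d}\nu_{(W,f)}^{k\text{-}\WL} \;=\; \int_{\mathbb{P}^k} t^{\mathbb{F}} \, \mathrm{d}\nu_{(V,g)}^{k\text{-}\WL}.
\end{equation*}
Since $\mathbb{P}^k$ is compact and metrizable (\autoref{lem:Mk_Pk_basics}) and $\mathrm{span}\{t^{\mathbb{F}} \mid \mathbb{F} \in \langle\mathcal{F}^k\rangle\}$ is uniformly dense in $C(\mathbb{P}^k)$ (\autoref{thm:tri_labeled_terms_dense}), a standard uniform approximation argument upgrades the above identity to equality of integrals against any $h \in C(\mathbb{P}^k)$; by Riesz representation the restrictions $\nu_{(W,f)}^{k\text{-}\WL}|_{\mathbb{P}^k}$ and $\nu_{(V,g)}^{k\text{-}\WL}|_{\mathbb{P}^k}$ coincide. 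As both distributions are supported on $\mathbb{P}^k$, the measures agree on all of $\mathbb{M}^k$.

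The main subtlety I expect is the isolated-vertex bookkeeping in the forward direction, i.e.\ ensuring that padding vertices introduced when $v(F)<k$ can be arranged to carry label $0$ so as not to contaminate the density by a spurious factor $\int f^{e_i}\,\mathrm{d}\lambda$. As sketched above this follows by inspection of \autoref{def:atomic_tri_labeled_graphs}; alternatively, one notes that any such factor is itself a signal-weighted homomorphism density of a single-vertex graph (treewidth $0$) and is therefore covered by the hypothesis, so it could be divided out. The only other nontrivial ingredient is the measure-theoretic upgrade from a dense class of continuous functions to all of $C(\mathbb{P}^k)$, which is entirely standard given compactness and metrizability.
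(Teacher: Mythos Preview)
Your proposal is correct and follows essentially the same route as the paper: the paper simply states that the arguments of \citet{boker_weisfeiler-leman_2023} ``transfer almost verbatim,'' and what you have written is precisely that transferred argument, combining \autoref{thm:Fk_terms_hom_densities}, \autoref{prop:hom_density_k_wl_measure}, and \autoref{thm:tri_labeled_terms_dense} together with compactness of $\mathbb{P}^k$. Your handling of the isolated-vertex bookkeeping (assigning label $0$ to padding vertices) is the natural way to deal with the ``up to isolated vertices'' caveat in \autoref{thm:Fk_terms_hom_densities}; the divide-out alternative you mention would need a separate treatment when the spurious factor vanishes, so the primary approach is the cleaner one.
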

The arguments of \citet{boker_weisfeiler-leman_2023} transfer almost verbatim to this setting. The crux is that in the $k$-WL test, \emph{both} the distribution of the graphon values $W$ and the signal $f$ are captured already in the first step \eqref{eq:kWL_first_step} and, therefore, higher-order moments need to be considered.

\newpage
\section{Invariant Graphon Networks}\label{apd:iwn}

\subsection{Proof of \autoref{thm:le_[01]_dim}}\label{apdsubsec:pf_le_dimension}

\ledim*

We will need some more preparations for the proof, in which we consider any $\LinOp \in \LE{k}{\ell}$ only on step functions using regular intervals first, which will allow us to use the existing results for the discrete case.
\begin{lemma}[Fixed points of measure preserving functions]\label{lem:fixed_points_mpf}
    If $k \in \N_0$ and $U \in \Lpuig{2}{k}$ such that $U^\varphi = U$ for all $\varphi \in \mpf$, then $U$ is constant. All involved equalities are meant $\lambda^k$-almost everywhere.
\end{lemma}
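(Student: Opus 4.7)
The plan is to reduce the lemma to a statement about invariant sets. I would first observe that $U^\varphi = U$ a.e.\ is equivalent to $(\varphi^k)^{-1}(\{U > c\}) = \{U > c\}$ $\lambda^k$-a.e.\ for every $c \in \R$, where $\varphi^k := \varphi \times \cdots \times \varphi$ denotes the diagonal action of $\varphi$ on $[0,1]^k$. Hence it suffices to show: every measurable $A \subseteq [0,1]^k$ with $(\varphi^k)^{-1}(A) = A$ $\lambda^k$-a.e.\ for all $\varphi \in \mpf$ must satisfy $\lambda^k(A) \in \{0,1\}$. Granted this, every superlevel set of $U$ has trivial measure, and $U$ must then equal the single value $c^\ast := \inf\{c : \lambda^k(\{U > c\}) = 0\} \in \R$ (finite because $U \in \Lpuig{2}{k}$) almost everywhere. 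The case $k = 0$ is immediate as $\Lpuig{2}{0} \cong \R$, so I focus on $k \geq 1$.

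For the main claim I would use a Lebesgue density argument. Suppose towards contradiction that an invariant $A$ exists with $0 < \lambda^k(A) < 1$. By Lebesgue's differentiation theorem, almost every point of $A$ (resp.\ $A^\mathsf{c}$) is a density point of $A$ (resp.\ $A^\mathsf{c}$) with respect to shrinking cubes. Since the set of $x \in [0,1]^k$ with two coordinates equal is null, I can choose a density point $p \in A$ and a density point $q$ of $A^\mathsf{c}$ whose $2k$ coordinates $p_1,\dots,p_k,q_1,\dots,q_k$ all lie in $(0,1)$ and are pairwise distinct. For $\varepsilon > 0$ small enough, I would then pick pairwise disjoint intervals $I_i \ni p_i$ and $J_i \ni q_i$, each of length $\varepsilon$, and form cubes $R_p := I_1 \times \cdots \times I_k$ and $R_q := J_1 \times \cdots \times J_k$.

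The key construction is the involution $\varphi \in \mpbij \subseteq \mpf$ that swaps each $I_i$ with $J_i$ by translation and fixes the complement. Then $\varphi^k$ is a measure-preserving bijection of $[0,1]^k$ with $\varphi^k(R_p) = R_q$, and the a.e.\ invariance of $A$ combined with the measure-preserving property yields
\begin{equation}
\lambda^k(A \cap R_p) \;=\; \lambda^k\bigl((\varphi^k)^{-1}(A) \cap R_p\bigr) \;=\; \lambda^k\bigl(A \cap \varphi^k(R_p)\bigr) \;=\; \lambda^k(A \cap R_q).
\end{equation}
Dividing by $\lambda^k(R_p) = \lambda^k(R_q) = \varepsilon^k$ and letting $\varepsilon \to 0$, the left-hand side tends to the density $\mathbf{1}_A(p) = 1$ while the right-hand side tends to $\mathbf{1}_A(q) = 0$, the desired contradiction. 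The main subtlety to handle carefully will be that the null-set discrepancy arising from the a.e.\ identity $(\varphi^k)^{-1}(A) = A$ does not spoil the chain above, which is the case since modifying sets by null sets does not alter Lebesgue measure; one also needs that the cubes $R_p, R_q$ shrink regularly enough to apply Lebesgue differentiation, which is automatic for cubes of bounded eccentricity.
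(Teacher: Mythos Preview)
Your proof is correct and rests on the same core mechanism as the paper's: locate two small off-diagonal cubes in $[0,1]^k$ that an invariant set meets in different proportions, then build a measure-preserving bijection of $[0,1]$ whose diagonal action swaps those cubes, contradicting invariance. The difference lies in how the cubes are found. The paper works with \emph{two} level sets $\{U\le a\}$ and $\{U\ge b\}$, introduces the conditional measures $\mathbb{P}(\cdot\mid U\le a)$ and $\mathbb{P}(\cdot\mid U\ge b)$, and refines a sequence of regular grid partitions until it isolates cubes $Q^\ast,R^\ast$ disjoint from the diagonal on which the conditional measures disagree. Your route via Lebesgue density points is more direct: you reduce to a single superlevel set, pick off-diagonal density points $p\in A$ and $q\in A^{\mathsf c}$ with all $2k$ coordinates distinct, and build the swap map from small intervals around those coordinates. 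This buys you a shorter argument with no conditional-probability bookkeeping and no grid refinement; the paper's partition-based formulation, in turn, meshes naturally with the step-function spaces $\mathcal{F}_k^{(n)}$ that are used immediately afterwards to analyze $\LE{k}{\ell}$.
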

Although the proof is somewhat tedious, it is based on elementary measure theory. The aspect that warrants closer attention, however, is that $\varphi$ acts uniformly across all coordinates.
\begin{proof}
    Let $U \in \Lpuig{2}{k}$ such that $U$ is invariant under all measure preserving functions, and suppose that $U$ is not constant $\lambda^k$-almost everywhere. Then, there exist $a < b$ such that $A := U^{-1}((-\infty, a])$, $B := U^{-1}([b, \infty))$ have positive Lebesgue measure $\lambda^k(A), \lambda^k(B) > 0$. Seeing $U$ as a random variable on the probability space $([0,1]^k, \lambda^k)$, the conditional distributions
    \vspace{-2pt}
    \begin{equation}
        \mathbb{P}(\cdot |U \leq a) \, = \, \frac{\lambda^k(\cdot \cap A)}{\lambda^k(A)} \quad \text{and} \quad \mathbb{P}(\cdot| U \geq b) \, = \, \frac{\lambda^k(\cdot \cap B)}{\lambda^k(B)}\vspace{-2pt}
    \end{equation}
    are well-defined. For $n \in \N$, let $I_j^{(n)} := [\frac{j-1}{n}, \frac{j}{n})$ for $j \in \{1, \dots, n-1\}$ and $I_n^{(n)} := [\frac{n-1}{n}, 1]$ be a partition of $[0,1]$ into regular intervals, and set $\mathcal{P}^{(n)}_k := \{I_{j_1}^{(n)} \times \cdots \times I_{j_k}^{(n)} \,|\, j_1, \dots, j_k \in \{1, \dots, n\}\}$. First, note that we have
    \begin{equation}
        \mathbb{P}(Q | U \leq a) \: \neq \:  \mathbb{P}(Q | U \geq b) \label{eq:pf_inv_const_hyperrects}
    \end{equation}
    for some $m \in \N$ and $Q \in \mathcal{P}^{(m)}_k$. Otherwise, equality in \eqref{eq:pf_inv_const_hyperrects} would also hold for all hyperrectangles with rational endpoints, which is a $\cap$-stable generator of the Borel $\sigma$-algebra $\mathcal{B}([0,1]^k)$. Consequently, equality would hold for all sets in $\mathcal{B}([0,1]^k)$ and thus, $1 = \mathbb{P}(A|U \leq a) = \mathbb{P}(A|U \geq b) = \lambda^k(\varnothing)/\lambda^k(B) = 0$, which is a contradiction. W.l.o.g., assume $\mathbb{P}(Q | U \leq a) >  \mathbb{P}(Q | U \geq b)$ in \eqref{eq:pf_inv_const_hyperrects}. As
    \begin{equation}
        \sum_{S \in \mathcal{P}^{(m)}_k} \mathbb{P}(S | U \leq a) \: = \: \sum_{S \in \mathcal{P}^{(m)}_k} \mathbb{P}(S|U \geq b) \: = \: 1,
    \end{equation}
    there must be another $R \in \mathcal{P}^{(m)}_k$ such that $\mathbb{P}(R |U \leq a) < \mathbb{P}(R| U \geq b)$. Set 
    \begin{equation}
        \Delta_k \, := \, \{\vx \in [0,1]^k \,|\, \abs{\{x_1, \dots, x_k\}} < k \}, \quad \Delta^{(n)}_k \, := \, \{Q \in \mathcal{P}^{(n)}_k \,|\, Q \cap \Delta \neq \varnothing\}
    \end{equation}
    to be the union of all diagonals on $[0,1]^k$ and the elements of $\mathcal{P}^{(n)}_k$ overlapping with $\Delta_k$ respectively for $n \in \N$.
    As $\lambda^k(\bigcup_{Q \in \Delta^{(n)}_k} Q) \to \lambda^k(\Delta_k) = 0$ as $n \to \infty$, there must exist $m^\ast \geq m \in \N$ such that there are $Q \supseteq Q^\ast \in \mathcal{P}^{(m^\ast)}_k \setminus \Delta^{(m^\ast)}_k$, $R \supseteq R^\ast \in \mathcal{P}^{(m^\ast)}_k \setminus \Delta^{(m^\ast)}_k$ satisfying
    \begin{equation}
        \mathbb{P}(Q^\ast | U \leq a) \: > \: \mathbb{P}(Q^\ast | U \geq b), \quad \mathbb{P}(R^\ast | U \leq a) \: < \: \mathbb{P}(R^\ast | U \geq b). \label{eq:pf_inv_const_QR}
    \end{equation}
    Since $Q^\ast$ and $R^\ast$ do not overlap with any diagonal, we can now construct $\varphi \in \mpbij$ such that $\varphi^{\otimes k}$, which clearly defines a measure preserving function from $[0,1]^k$ to itself, sends $Q^\ast$ to $R^\ast$. By invariance of $U$ under all measure preserving functions, we get
    \begin{align}
        \lambda^k(R^\ast \cap A) \, &= \, \lambda^k\left ((\varphi^{\otimes k})^{-1}(R^\ast \cap A) \right) \, = \, \lambda^k(Q^\ast \cap A), \\
        \lambda^k(R^\ast \cap B) \, &= \, \lambda^k\left ((\varphi^{\otimes k})^{-1}(R^\ast \cap B) \right) \, = \, \lambda^k(Q^\ast \cap B),
    \end{align}
    which contradicts \eqref{eq:pf_inv_const_QR}. Hence, $U$ must be $\lambda^k$-a.e. constant. \qedhere
\end{proof}
Let $k, \ell \in \N_0$ and $n \in \N$. Let $I_j^{(n)} := [\frac{j-1}{n}, \frac{j}{n})$ for $j \in \{1, \dots, n-1\}$ and $I_n^{(n)} := [\frac{n-1}{n}, 1]$ be a partition of $[0,1]$ into regular intervals. Let $\mathcal{A}_n := \sigma \left( \{I_1^{(n)}, \dots, I_n^{(n)}\} \right)$ denote the $\sigma$-algebra generated by this partition and let
\begin{equation}
    \mathcal{F}_k^{(n)} := \{ U \in \Lpuig{2}{k} \,|\, U \text{ is } \mathcal{A}_n^{\otimes k} \text{-measurable} \};
\end{equation}
define $\mathcal{F}_\ell^{(n)}$ similarly. Intuitively, this is just the set of \emph{regular} $k$-dimensional $\Lp{2}$ step kernels.
\begin{lemma}[Invariance under discretization]\label{lem:le_inv_disc}
    Any $\LinOp \in \LE{k}{\ell}$ is invariant under discretization, which means that for any $n \in \N$, $\LinOp(\mathcal{F}_k^{(n)}) \, \subseteq \, \mathcal{F}_\ell^{(n)}$,
    where the inclusion should be understood up to sets of measure zero.
\end{lemma}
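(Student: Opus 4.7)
The plan is to leverage the equivariance of $\LinOp$ under the subgroup $H_n \subseteq \mpf$ consisting of measure-preserving maps that setwise fix each regular interval $I_j^{(n)}$, and then to invoke \autoref{lem:fixed_points_mpf} to deduce that $\LinOp(U)$ must be a.e.\ constant on every output cell.

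First, I would observe that every step function $U \in \mathcal{F}_k^{(n)}$ is invariant under $U \mapsto U^\varphi$ for $\varphi \in H_n$: the value $U(\vx)$ depends only on which cell $I_{j_1}^{(n)} \times \cdots \times I_{j_k}^{(n)}$ the point $\vx$ belongs to, and maps in $H_n$ permute points only within each interval. Equivariance of $\LinOp$ then immediately yields $\LinOp(U)^\varphi = \LinOp(U)$ a.e.\ for every $\varphi \in H_n$.

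Next, I would fix an output cell $Q = I_{j_1}^{(n)} \times \cdots \times I_{j_\ell}^{(n)}$ and show that $\LinOp(U)$ is a.e.\ constant on it. Grouping $[\ell]$ by the relation $r \sim s \Leftrightarrow j_r = j_s$ yields equivalence classes $E_1,\dots,E_p$ with distinct representative intervals $I_{a_1}^{(n)},\dots,I_{a_p}^{(n)}$. Any $\varphi \in H_n$ restricts on $Q$ to a product action that applies the \emph{same} measure-preserving bijection of $I_{a_q}^{(n)}$ diagonally to all coordinates in the class $E_q$, and these bijections can be chosen independently across classes by piecing $\varphi$ together interval by interval. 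After rescaling each $I_{a_q}^{(n)}$ to $[0,1]$, I obtain a function $V \in \Lpuig{2}{\ell}$ invariant under all maps
\begin{equation}
(x_1,\dots,x_\ell) \,\mapsto\, (\varphi_{q(1)}(x_1),\dots,\varphi_{q(\ell)}(x_\ell))
\end{equation}
for independently chosen $\varphi_1,\dots,\varphi_p \in \mpf$, where $q(r)$ denotes the class of $r$. Setting all but one $\varphi_{q'}$ to the identity and applying Fubini reduces this to the statement that, for a.e.\ fiber over the complement coordinates, the slice of $V$ is invariant under the full diagonal action of $\mpf$ on the $|E_q|$ coordinates of class $E_q$; by \autoref{lem:fixed_points_mpf} this slice is a.e.\ constant. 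Iterating over $q = 1,\dots,p$ forces $V$ itself to be a.e.\ constant, hence $\LinOp(U)$ is a.e.\ constant on $Q$, and ranging over the finitely many output cells yields $\LinOp(U) \in \mathcal{F}_\ell^{(n)}$.

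The hard part will be handling the \emph{diagonal} cells, i.e.\ those for which some indices $j_r$ coincide: on such a cell $H_n$ does not act through arbitrary measure-preserving maps of the cell, but only through the diagonal action within each equivalence class, so one cannot directly invoke \autoref{lem:fixed_points_mpf} on the cell as a whole. To apply the lemma inside the Fubini step without introducing an uncountable union of null-set exceptions indexed by the test maps, I would restrict to a countable dense family of such maps (e.g.\ those permuting hyperrectangles with rational endpoints, which already suffice in the proof of \autoref{lem:fixed_points_mpf}), and then iterate the lemma class by class to arrive at the full conclusion.
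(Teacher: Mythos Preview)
Your argument is correct, but it works harder than the paper's proof because you exploit the full subgroup $H_n$ (independent measure-preserving maps on each interval), which forces the equivalence-class decomposition, the Fubini iteration, and the countable-family workaround for the null-set issue. The paper instead restricts to the strictly smaller family of \emph{replicated} maps: given a single $\varphi \in \mpfg{[0,1/n)}$, set $\varphi^\ast(x) := (x \,\mathrm{div}\, 1/n) + \varphi(x \,\mathrm{mod}\, 1/n)$, applying the \emph{same} $\varphi$ inside every interval. The payoff is that on \emph{any} output cell $Q = I_{j_1}^{(n)} \times \cdots \times I_{j_\ell}^{(n)}$, rescaling each factor to $[0,1]$ turns the action of $\varphi^\ast$ into the plain diagonal action $(y_1,\dots,y_\ell)\mapsto(\varphi(y_1),\dots,\varphi(y_\ell))$, regardless of whether the indices $j_r$ repeat. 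This is exactly the hypothesis of \autoref{lem:fixed_points_mpf}, so one application of that lemma per cell finishes the proof---no case distinction between diagonal and off-diagonal cells, no slicing, no null-set bookkeeping. The ``hard part'' you identify is thus an artifact of the richer group you chose; using less symmetry here buys a cleaner argument.
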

\begin{proof}
    Let $\LinOp \in \LE{k}{\ell}$ and let $U \in \mathcal{F}_k^{(n)}$. Then, if $\varphi \in \mpf$ such that 
    \begin{equation}
        \varphi(I_j^{(n)}) \subseteq I_j^{(n)} \label{eq:pf_inv_disc_mpf}
    \end{equation}
    for any $j \in \{1, \dots, n\}$, we have $U^\varphi = U$ and hence also $\LinOp(U)^\varphi = \LinOp(U)$ $\lambda^\ell$-almost everywhere. Take any hypercube $Q = I_{j_1}^{(n)} \times \dots \times I_{j_\ell}^{(n)}$ with $j_1, \dots, j_\ell \in \{1, \dots, n\}$ and any measure-preserving function $\varphi: [0, 1/n) \to [0, 1/n)$. We replicate $\varphi$ on the unit interval as
    \begin{equation}
        \varphi^\ast(x) \, := \, x \, \mathrm{div} \, 1/n \; + \; \varphi \left(x \, \mathrm{mod} \, 1/n \right),
    \end{equation}
    which clearly satisfies \eqref{eq:pf_inv_disc_mpf}, and thus $\LinOp(U)^{\varphi^\ast} = \LinOp(U)$ almost everywhere. Since now
    \begin{equation}
        \restr{\LinOp(U)}{Q} \, = \, \restr{\LinOp(U)^{\varphi^\ast}}{Q} \, = \, \left(\restr{\LinOp(U)}{Q}\right)^\varphi,
    \end{equation}
    where we identify $\varphi$ with $\restr{\varphi^\ast}{I_j^{(n)}}$ (which define measure preserving functions on $I_j^{(n)}$), we can use translation invariance and scale equivariance of the Lebesgue measure to conclude by \autoref{lem:fixed_points_mpf} that $\restr{\LinOp(U)}{Q}$ is constant $\lambda^\ell$-almost everywhere. As $Q$ was chosen arbitrarily, this implies the statement of the lemma. \qedhere
\end{proof}

Equipped with \autoref{lem:le_inv_disc}, we are now ready to show that the dimension of linear equivariant layers in $[0,1]$ is finite. Central to the argument is the observation that we can consider the action of any $\LinOp \in \LE{k}{\ell}$ on step functions, and apply the characterization from \citet{maron_invariant_2019} to a sequence of nested subspaces, which fixes the operator on the entire space $\Lpuig{2}{k}$.

\begin{proof}[Proof of \autoref{thm:le_[01]_dim}]
    Let $n \in \N$ and $\LinOp \in \LE{k}{\ell}$. By \autoref{lem:le_inv_disc}, we know that $\LinOp(\mathcal{F}_k^{(n)}) \subseteq \mathcal{F}_\ell^{(n)}$. Since $\mathcal{F}_k^{(n)} \cong (\R^n)^{\otimes k} \cong \R^{n^k}$, we can regard $\restr{\LinOp}{\mathcal{F}_k^{(n)}}: \mathcal{F}_k^{(n)} \to \mathcal{F}_\ell^{(n)}$ as a linear operator $\R^{n^k} \to \R^{n^\ell}$.
    Taking for any $\sigma \in S_n$ a measure-preserving transformation $\varphi_\sigma \in \mpbij$ with $\varphi_\sigma(I_j) = I_{\sigma(j)}$, we can see that $\restr{\LinOp}{\mathcal{F}_k^{(n)}}$ is also permutation equivariant, and we can use the characterization of the basis elements from \citet{cai_convergence_2022} (see \autoref{apdsubsec:characterization_ign_basis}). 
    
    Note that for any $n, m \in \N$ we have $\mathcal{F}_k^{(n)} \subseteq \mathcal{F}_k^{(nm)}$ and the canonical basis elements $\{\LinOp_\gamma\}_{\gamma \in \Gamma_{k+\ell}}$ under the identification $\mathcal{F}_k^{(n)} \cong \R^{n^k}, \mathcal{F}_k^{(m)} \cong \R^{m^k}$ are compatible in the sense that 
    \begin{equation}
    \restr{\LinOp_\gamma^{(nm)}}{\mathcal{F}_k^{(n)}} = \LinOp_\gamma^{(n)}.
    \end{equation}
    Hence, the coefficients of $\restr{\LinOp}{\mathcal{F}_k^{(n)}}$ w.r.t.\ the canonical basis $\{\LinOp_\gamma^{(n)}\}_{\gamma \in \Gamma_{k+\ell}}$ do not depend on the specific $n \in \N$. W.l.o.g., assume that $\LinOp$ restricted to some $\mathcal{F}_k^{(n)}$ is a canonical basis function $\LinOp_{\gamma^\ast}^{(n)}$ (where $\gamma^\ast \in \Gamma_{k+\ell}$ does not depend on $n$). 

    We now take a closer look at the partition $\gamma^\ast$ and its induced function $\LinOp_{\gamma^\ast}^{(n)}$ described by the steps \textbf{Selection}, \textbf{Reduction}, \textbf{Alignment}, and \textbf{Replication}. Partition $\gamma^\ast$ into the 3 subsets $\gamma^\ast_1 := \{A \in \gamma^\ast \,|\, A \subseteq [k]\}$, $\gamma^\ast_2 := \{A \in \gamma^\ast \,|\, A \subseteq k + [\ell]\}$, $\gamma^\ast_3 := \gamma^\ast \setminus (\gamma^\ast_1 \cup \gamma^\ast_2)$. For the constant $U \equiv 1 \in \mathcal{F}_k^{(1)} \subseteq \Lpuig{2}{k}$, $\LinOp_{\gamma^\ast}^{(n)}(U) \neq 0$ must also be constant a.e. by compatibility with discretization, so the partition $\gamma^\ast$ cannot correspond to a basis function whose images are supported on a diagonal. This is precisely equivalent to $\abssm{A \cap (k + [\ell])} \leq 1$ for all $A \in \gamma^\ast$. Now suppose that the input only depends on a diagonal. Denoting the restriction of the constant $U \equiv 1$ to the diagonal under discretization of $[0,1]$ into $n$ pieces by $U_{\gamma^\ast}^{(n)}$, 
    \begin{equation}
        \LinOp_{\gamma^\ast}^{(n)}(U_{\gamma^\ast}^{(n)}) \, = \, \LinOp_{\gamma^\ast}^{(n)}(U) \, =\, \LinOp_{\gamma^\ast}^{(1)}(U) \, \neq \, 0
    \end{equation}
    is constant, but $\normsm{U_{\gamma^\ast}^{(n)}}_2 \to 0$ for $n \to \infty$, which contradicts boundedness (i.e., continuity) of the operator $\LinOp \in \boundedlin{\Lpuig{2}{k}}{\Lpuig{2}{\ell}}$. Hence, $\gamma^\ast$ must correspond to a basis function for which the selection step \circled{1} is trivial, i.e., $\abssm{A \cap [k]} \leq 1$ for all $A \in \gamma^\ast$.

    This leaves us with only the partitions $\gamma \in \Gamma_{k+\ell}$ whose sets $A \in \gamma$ contain at most one element from $[k]$ and $k+[\ell]$ respectively. In the following \autoref{lem:le_cont} we will check that for all of these partitions, the \textbf{Reduction/Alignment/Replication}-procedure (with averaging in the sense of integration over $[0,1]$) indeed yields a valid operator $\LinOp_\gamma \in \boundedlin{\Lpuig{2}{k}}{\Lpuig{2}{\ell}}$ which agrees with $\LinOp_\gamma^{(n)}$ on $\mathcal{F}_k^{(n)}$.

    If we can now show that $\bigcup_{n \in \N} \mathcal{F}_k^{(n)} \subseteq \Lpuig{2}{k}$ is dense w.r.t.\ $\normsm{\cdot}_2$, we can conclude that $\LinOp = \LinOp_{\gamma^\ast}$, as $\LinOp$ is continuous and agrees with $\LinOp_{\gamma^\ast}$ on a dense subset. However, this follows by a simple application of the martingale convergence theorem: Considering $[0,1]^k$ with Lebesgue measure $\lambda^k$ as a probability space and $U \in \Lpuig{2}{k}$ as a random variable, we have $\mathbb{E}[U | \mathcal{A}_n^{\otimes k}] \in \mathcal{F}_k^{(n)}$. Also, $\sigma\left( \bigcup_{n \in \N} \mathcal{A}_n^{\otimes k}\right) = \mathcal{B}([0,1]^k)$ is the entire Borel $\sigma$-Algebra as $\mathcal{A}_n$ contains all intervals with rational endpoints, so $\mathbb{E}[U | \mathcal{A}_n^{\otimes k}] \to \mathbb{E}[U | \mathcal{B}([0,1]^k)] = U$ in $\Lp{2}$.

    Define now
    \begin{equation}
        \widetilde{\Gamma}_{k,\ell} \, := \, \left\{\gamma \in \Gamma_{k+\ell} \,|\, \forall A \in \gamma: \abssm{A \cap [k]} \leq 1, \abssm{A \cap k+[\ell]} \leq 1 \right\}.
    \end{equation}
    It is straightforward to show that
    \begin{equation}
        \left|\widetilde{\Gamma}_{k,\ell}\right| \; = \, \sum_{s=0}^{\min\{k, \ell\}} s! {k \choose s} {\ell \choose s},
    \end{equation}
    which can be seen as follows: Any partition on the l.h.s. can contain $s \in \{0, \dots, \min\{k, \ell\}\}$ sets of size 2. Fixing some $s$, any of these sets can only contain one element from $[k]$ and one from $k+[\ell]$. For the elements occuring in sets of size 2, there are ${k \choose s} {\ell \choose s}$ options, and there are $s!$ ways to match the $s$ selected elements in $[k]$ with the $s$ elements in $k+[\ell]$, leaving us with the formula on the right. This concludes the proof. \qedhere
\end{proof}

Note the resemblance of the basis elements $\LinOp_\gamma$ \eqref{eq:iwn_basis_element} to the definition of \emph{graphon-signal operators} from \autoref{apdsubsec:tri_labeled_graphs}. Loosely speaking, all operators $\LinOp_\gamma$ are variants of \eqref{eq:graphon_signal_operator}, where the roles of $\va$ and $\vb$ are switched, w.r.t.\ an \emph{empty} graph $F$.

\subsection{Continuity of Linear Equivariant Layers}\label{apdsubsec:continuity_le}

We note that the choice $p=2$ in \autoref{def:le} is somewhat arbitrary, and $\LinOp_\gamma$ can indeed be seen as an operator $\Lp{p} \to \Lp{p}$ for any $p \in [1, \infty]$, with $\normsm{\LinOp_\gamma}_{p \to p} = 1$. The case case $p=2$ is technically still needed to complete the proof of \autoref{thm:le_[01]_dim}.

\begin{lemma}[Continuity of Linear Equivariant Layers w.r.t. $\normsm{\cdot}_p$]\label{lem:le_cont}
    Fix $k, \ell \in \N_0$. Let $\LinOp \in \LE{k}{\ell}$ and $p \in [1, \infty]$. Then, $\LinOp$ can also be regarded as a bounded linear operator $\Lpuig{p}{k} \to \Lpuig{p}{\ell}$. Furthermore, all of the canonical basis elements from the proof of \autoref{thm:le_[01]_dim} have operator norm $\normsm{\LinOp}_{p \to p} = 1$.
\end{lemma}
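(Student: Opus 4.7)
The plan is to reduce the general statement to the basis elements via linearity and finite-dimensionality of $\LE{k}{\ell}$, and then decompose each basis operator $\LinOp_\gamma$ into a small number of elementary operations whose $\Lp{p}\to\Lp{p}$ norms are transparent. Concretely, any $\LinOp\in\LE{k}{\ell}$ can be written as $\LinOp=\sum_{\gamma\in\widetilde{\Gamma}_{k,\ell}} c_\gamma \LinOp_\gamma$ by \autoref{thm:le_[01]_dim}, so once I know $\normsm{\LinOp_\gamma}_{p\to p}\le 1$ for each $\gamma$, the triangle inequality gives boundedness of $\LinOp$ on $\Lp{p}$ with operator norm at most $\sum_\gamma |c_\gamma|$.

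For a fixed $\gamma\in\widetilde{\Gamma}_{k,\ell}$ with associated index tuples $\va\in[k]^s$, $\vb\in(k+[\ell])^s$ as in \eqref{eq:iwn_basis_element}, I will factor $\LinOp_\gamma=R\circ S\circ M$, where $M:\Lpuig{p}{k}\to\Lpuig{p}{s}$ is the marginalization
\begin{equation}
    (MU)(\vx_\va) \, := \, \int_{[0,1]^{k-s}} U(\vx_\va, \vx_{[k]\setminus\va})\,\mathrm{d}\lambda^{k-s}(\vx_{[k]\setminus\va}),
\end{equation}
$S$ is the measure-preserving relabeling $[0,1]^s\to[0,1]^s$ identifying the remaining input axes with the positions $\vb-k$ among the output axes, and $R:\Lpuig{p}{s}\to\Lpuig{p}{\ell}$ is the replication that lifts a function of $s$ variables to one that is constant in the remaining $\ell-s$ coordinates.

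Now I bound each factor. For $M$, Jensen's inequality applied coordinate-wise on the probability space $([0,1]^{k-s},\lambda^{k-s})$ gives, for $p\in[1,\infty)$,
\begin{equation}
    \bigl|(MU)(\vx_\va)\bigr|^p \, \leq \, \int_{[0,1]^{k-s}} |U(\vx_\va,\vx_{[k]\setminus\va})|^p \, \mathrm{d}\lambda^{k-s}(\vx_{[k]\setminus\va}),
\end{equation}
and integrating over $\vx_\va$ and applying Fubini yields $\normsm{MU}_p\le\normsm{U}_p$; the case $p=\infty$ is immediate since the integrand is a.e.\ bounded by $\normsm{U}_\infty$. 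The operator $S$ preserves $\Lp{p}$ norms trivially because it is given by a measure-preserving bijection of $[0,1]^s$. For $R$, the replicated function satisfies $\normsm{RV}_p^p=\int_{[0,1]^\ell}|V(\vy_{\vb-k})|^p \mathrm{d}\lambda^\ell(\vy)=\normsm{V}_p^p$ by Fubini (again using that $\lambda$ is a probability measure), with the obvious analogue for $p=\infty$. Composing these three contractions gives $\normsm{\LinOp_\gamma}_{p\to p}\le 1$. Equality is witnessed by the constant function $U\equiv 1$, for which $\LinOp_\gamma(U)\equiv 1$ and both norms equal $1$.

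The routine but slightly delicate point is the bookkeeping of which axes are marginalized, which are identified with which output axes, and which are replicated—so the main obstacle is making the factorization $\LinOp_\gamma = R\circ S\circ M$ unambiguous and verifying that the three operations really compose to the formula in \eqref{eq:iwn_basis_element}. Once this is in place, Jensen and Fubini do all the analytic work, and the $p=\infty$ case is handled by the obvious essential-supremum estimates.
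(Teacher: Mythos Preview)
Your proof is correct and follows essentially the same approach as the paper: both reduce to the canonical basis elements, apply Jensen's inequality to handle the marginalization over the $k-s$ integrated variables, use Fubini together with the fact that $\lambda$ is a probability measure to deal with the replicated output coordinates, and witness equality via $U\equiv 1$. The only cosmetic difference is that the paper carries out the estimate directly on the explicit formula \eqref{eq:iwn_basis_element} in one pass, whereas you factor $\LinOp_\gamma=R\circ S\circ M$ and bound each piece separately---this makes the structure more transparent but the analytic content is identical.
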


\begin{proof}
    If suffices to show boundedness of all canonical basis elements. Just as in \eqref{eq:iwn_basis_element}, take $\gamma \in \widetilde{\Gamma}_{k,\ell}$ containing $s$ sets of size 2 $\{i_1, j_1\}, \dots, \{i_s, j_s\}$ with $i_1, \dots, i_s \in [k]$, $j_1, \dots, j_s \in k + [\ell]$, and set $\va = (i_1, \dots, i_s)$, $\vb = (j_1, \dots, j_s)$. Write $\LinOp_\gamma$ as
    \begin{equation}
        \LinOp_\gamma(U) \, := \, \left[ [0,1]^\ell \ni \vy \, \mapsto \, \restr{\int_{[0,1]^{k-s}} \!\!\!\! U(\vx_\va, \vx_{[k]\setminus \va}) \, \mathrm{d} \lambda^{k-s}(\vx_{[k] \setminus \va})}{\vx_\va = \vy_{\vb-k}} \right]. \label{eq:le_cont_explicit}
    \end{equation}
    Consider at first $p < \infty$. Clearly, \eqref{eq:le_cont_explicit} is also well-defined for $U \in \Lpuig{p}{k}$ and
    \begin{align}
        \normsm{\LinOp_\gamma(U)}_p^p \, &= \, \int_{[0,1]^\ell} \abs{\restr{\int_{[0,1]^{k-s}} \!\!\!\! U(\vx_\va, \vx_{[k]\setminus \va}) \, \mathrm{d} \lambda^{k-s}(\vx_{[k] \setminus \va})}{\vx_\va = \vy_{\vb-k}}}^{p} \mathrm{d} \lambda^\ell(\vy) \\
        &\leq \, \int_{[0,1]^\ell} \restr{\int_{[0,1]^{k-s}} \!\!\!\! \abs{U(\vx_\va, \vx_{[k]\setminus \va})}^p \, \mathrm{d} \lambda^{k-s}(\vx_{[k] \setminus \va})}{\vx_\va = \vy_{\vb-k}} \mathrm{d} \lambda^\ell(\vy) \\
        &= \, \int_{[0,1]^s} \int_{[0,1]^{k-s}} \!\!\!\! \abs{U(\vx_\va, \vx_{[k]\setminus \va})}^p \, \mathrm{d} \lambda^{k-s}(\vx_{[k] \setminus \va}) \, \mathrm{d} \lambda^s(\vx_\va)
        \, = \, \normsm{U}_p^p,
    \end{align}
    with Jensen's inequality being applied in the second step.
    Note that equality holds, e.g., for $U \equiv 1$, so $\normsm{\LinOp_\gamma}_{p \to p} = 1$. For $p = \infty$, we also see
    \begin{align}
        \normsm{\LinOp_\gamma(U)}_\infty \, &= \, \esssup_{\vy \in [0,1]^\ell} \abs{\restr{\int_{[0,1]^{k-s}} \!\!\!\! U(\vx_\va, \vx_{[k]\setminus \va}) \, \mathrm{d} \lambda^{k-s}(\vx_{[k] \setminus \va})}{\vx_\va = \vy_{\vb-k}}} \\
        &\leq \, \esssup_{\vy \in [0,1]^\ell} \restr{\int_{[0,1]^{k-s}} \underbrace{\abs{U(\vx_\va, \vx_{[k]\setminus \va})}}_{\leq \normsm{U}_\infty \text{ a.e.}} \, \mathrm{d} \lambda^{k-s}(\vx_{[k] \setminus \va})}{\vx_\va = \vy_{\vb-k}} \\
        &\leq \normsm{U}_\infty,
    \end{align}
    again with equality for $U \equiv 1$.
\end{proof}

\subsection{Asymptotic Dimension of Linear Equivariant Layers}\label{apdsubsec:le_asymptotic_dimension}

We briefly analyze the asymptotic differences in dimension between $\LEg{[n]}{k}{\ell}$, the linear equivariant layer space of discrete IGNs, and $\LE{k}{\ell} = \LEg{[0,1]}{k}{\ell}$, of IWNs. 
Recall that
\begin{align}
    \dim \LEg{[n]}{k}{\ell} \; &= \; \bell(k+\ell), \\
    \dim \LEg{[0,1]}{k}{\ell} \; &= \; \sum_{s=0}^{\min\{k, \ell\}} s! {k \choose s} {\ell \choose s}. \label{eq:asymptotic_dim_01}
\end{align}
For a comparison of the dimensions for the first few pairs $(k, \ell)$, see \autoref{tab:ign_iwn_dim}.

\begin{table}[htbp]
\centering
\caption{Dimensions of $\LEg{[n]}{k}{\ell}$ and $\LEg{[0,1]}{k}{\ell}$.}
\label{tab:ign_iwn_dim}
\vspace{7pt}
\begin{tabular}{cc}
\begin{minipage}{.45\linewidth}
\centering
\begin{tabular}{lrrrrr}
\hline
\\[-0.9em]
$\dim \LEg{[n]}{k}{\ell}$ & \textbf{0} & \textbf{1} & \textbf{2} & \textbf{3} & \textbf{4} \\
\\[-0.9em]
\hline
\\[-0.9em]
\textbf{0}             & 1          & 1          & 2          & 5          & 15         \\
\textbf{1}             & 1          & 2          & 5          & 15         & 52         \\
\textbf{2}             & 2          & 5          & 15         & 52         & 203        \\
\textbf{3}             & 5          & 15         & 52         & 203        & 877        \\
\textbf{4}             & 15         & 52         & 203        & 877        & 4140       \\
\hline
\end{tabular}
\end{minipage}
&
\begin{minipage}{.45\linewidth}
\centering
\begin{tabular}{lrrrrr}
\hline
\\[-0.9em]
$\dim \LEg{[0,1]}{k}{\ell}$ & \textbf{0} & \textbf{1} & \textbf{2} & \textbf{3} & \textbf{4} \\
\\[-0.9em]
\hline
\\[-0.9em]
\textbf{0}               & 1          & 1          & 1          & 1          & 1          \\
\textbf{1}               & 1          & 2          & 3          & 4          & 5          \\
\textbf{2}               & 1          & 3          & 7          & 13         & 21         \\
\textbf{3}               & 1          & 4          & 13         & 34         & 73         \\
\textbf{4}               & 1          & 5          & 21         & 73         & 209        \\
\hline
\end{tabular}
\end{minipage}
\end{tabular}
\end{table}

\paragraph{The case of bounded $k$ or $\ell$.} Immediately visible from \autoref{tab:ign_iwn_dim} is the vastly different behavior of the two expressions as long as one of the variables $k$, $\ell$ is bounded: In the discrete case, whenever $k \to \infty$ or $\ell \to \infty$, we have $\bell(k+\ell) \to \infty$ superexponentially. However, for the case of $[0,1]$, suppose w.l.o.g. that only $k \to \infty$ and $\ell= \mathcal{O}(1)$ remains constant. Then, the corresponding dimension growth is bounded by
\begin{equation}
    \dim \LEg{[0,1]}{k}{\ell} \; = \; \dim \LEg{[0,1]}{\ell}{k} \; = \; \mathcal{O}(k^\ell),
\end{equation}
as \eqref{eq:asymptotic_dim_01} is dominated by $\binom{k}{\ell}$ in this case.

\paragraph{The case of $k \sim \ell$.} We will now consider the worst case, i.e., when $k$ grows roughly as fast as $\ell$. For simplicity, assume $k = \ell$, and thus
\begin{equation}
    \dim \LEg{[n]}{k}{k} \; = \; \bell(2k), \quad \dim \LEg{[0,1]}{k}{k} \: = \: \sum_{s=0}^{k} s! {k \choose s}^2.
\end{equation}
The bell numbers grow superexponentially, as can be seen by one of its asymptotic formulas (e.g., refer to \citet[Equation 19]{weisstein_bell_nodate}):
\begin{equation}
    \bell(n) \; \sim \; \frac{1}{\sqrt{n}} \left( \frac{n}{W(n)} \right)^{n+1/2} \exp \left( \frac{n}{W(n)} - n - 1 \right),
\end{equation}
where $W$ denotes the Lambert W-function, i.e., the inverse of $x \mapsto x \exp(x)$, or a simpler characterization due to \citet[Proposition 4.7]{grunwald_explicit_2025}, which is not strictly asymptotically tight but suffices in our case:
\begin{equation}
    \left( \frac{1}{e} \frac{n}{\log n} \right)^n \, \leq \, \bell(n) \, \leq \, \left( \frac{3}{4} \frac{n}{\log n} \right)^n,
\end{equation}
as long as $n \geq 2$. Therefore, the dimension of linear equivariant layers in the discrete case can be bounded as
\begin{equation}
    \dim \LEg{[n]}{k}{k} \, \geq \, \left( \frac{1}{e} \frac{2k}{\log 2k} \right)^{2k}. \label{eq:growth_discrete}
\end{equation}
We will now provide bounds on the dimension in the continuous case. First note that by only considering the last addend,
\begin{equation}
    \dim \LEg{[0,1]}{k}{k} \, \geq \, k! \, \geq \, \bell(k)
\end{equation}
still grows superexponentially. A well-known bound on the factorial (see, e.g., \citet[§ 1.2.5, Ex. 24]{knuth_art_1997}) is
\begin{equation}
    \frac{n^n}{e^{n-1}} \, \leq \, n! \, \leq \, \frac{n^{n+1}}{e^{n-1}}, \label{eq:fact_bound}
\end{equation}
for $n \in \N$. For a rough upper bound on the dimension, we consider just an even tensor order $k$:
\begin{align}
    \dim \LEg{[0,1]}{k}{k}  \,&= \, \sum_{s=0}^{k} s! {k \choose s}^2 \\
    &\leq \, (k+1) k! \binom{k}{k/2}^2 \, = \, (k+1) \frac{k!^3}{(k/2)!^4} \\
    &\!\!\!\!\!\!\overset{\eqref{eq:fact_bound}}{\leq} \, (k+1) \frac{k^{3k+3}}{e^{3k-3}} \frac{e^{2k-4}}{(k/2)^{2k}} \, = \, \frac{1}{e} (k+1) k^3 \left( \frac{4}{e} k \right)^k,
\end{align}
which still grows significantly slower than \eqref{eq:growth_discrete}.

\subsection{Proof of \autoref{prop:ign_small}}\label{apdsubsec:pf_ign_small}

\ignsmall*

\begin{proof}
    Let $\NNfct{\:\!\textsc{iwn}}{}$ be an IWN as in \autoref{def:iwn}. 
    By invariance under discretization (\autoref{lem:le_inv_disc}), we can see that any linear equivariant layer in $\NNfct{\:\!\textsc{iwn}}{}$ fulfills a basis representation stability condition, and under the identification $\mathcal{F}_k^{(n)} \cong \R^{n^k}$, which is captured by grid-sampling, this directly implies that grid-sampling commutes with application of the IWN and its discrete equivalent.
\end{proof}

\subsection{Proof of \autoref{lem:ign_continuity_Lp}}\label{apdsubsec:pf_ign_continuity_Lp}

\igncontinuityLp*

\begin{proof}[Proof of \autoref{lem:ign_continuity_Lp}]
    Let
    \begin{equation}
        \NNfct{\:\!\normalfont{\textsc{iwn}}}{} \: = \: \mathfrak{\LinOp}^{(\tstep)} \circ \nonlinearity \circ \cdots \circ \nonlinearity \circ \mathfrak{\LinOp}^{(1)}
    \end{equation}
    be an IWN as in \autoref{def:iwn}. By \autoref{lem:le_cont}, each $\LinOp \in \LE{k}{\ell}$ is Lipschitz continuous w.r.t.\ $\normsm{\cdot}_p$ on the respective input and output space, and this immediately carries over to all $\mathfrak{\LinOp}^{(\step)}$, $\step \in [\tstep]$. Hence, it suffices to check that the pointwise application of the nonlinearity, i.e., $\Lpuig{p}{k} \ni U \mapsto \nonlinearity(U)$, is Lipschitz continuous for every $k \in \N_0$, where $\nonlinearity$ is applied elementwise. If $C_\varrho$ is the Lipschitz constant of $\varrho$, we have
    \begin{equation}
        \normsm{\nonlinearity(U) - \nonlinearity(W)}_p^p = \int_{[0,1]^k} \!\! \abs{\nonlinearity(U) - \nonlinearity(W)}^p \, \mathrm{d} \lambda^k \leq \int_{[0,1]^k} \!\! C_\nonlinearity^p \abs{U-W}^p \, \mathrm{d} \lambda^k =  C_\nonlinearity^p \normsm{U-W}_p^p,
    \end{equation}
    for $p < \infty$, and the claim follows similarly for $\normsm{\cdot}_\infty$.
\end{proof}

\subsection{Proof of \autoref{thm:hom_density_iwn}}\label{apdsubsec:pf_hom_density_iwn}

We show that IWNs can approximate signal-weighted homomorphism densities w.r.t.\ graphs of size up to their order. For this, we model the product in the homomorphism densities explicitly while tracking which linear equivariant layers are being used by the IWN. The final result then follows by using a tree decomposition.

\homdensityiwn*

To establish this result, we first show that IWNs can approximate signal-weighted homomorphism densities for graphs of size up to their order. To facilitate the final step of combining these approximations via a tree decomposition, we refrain from integrating over all nodes of the pattern graph immediately. Instead, we keep some \emph{labeled} nodes to serve as connection points---akin to the construction of signal-weighted homomorphism densities from \emph{tri-labeled graphs} in \autoref{apdsubsec:tri_labeled_graphs}.

\begin{lemma}\label{lem:hom_density_iwn_preparation}
    Let $r > 0$, $R > 0$, $\varrho: \R \to \R$ Lipschitz continuous and non-polynomial, and $k > 1$. Let $\mF = (F, \va, \vb, \vd) \in \mathcal{M}^{\ell, k}$ be a tri-labeled graph with $V(F) = [k]$ and $\ell \leq k$. 
    Write $\iota_k$ for the function that embeds any $U \in \Lpuig{2}{m}$ for $m \leq k$ as a $k$-tensor $\iota_k(U) := [[0,1]^k \ni \vx \mapsto U(x_1, \dots, x_m)]$. Consider now the map $\Phi_\mF$ that implements
    \begin{equation}
        \Phi_{\mF}: \left( \Lpuig{2}{k} \right)^{\!3} \to \left( \Lpuig{2}{k} \right)^{\!3}, \; \begin{bmatrix}
            \iota_k(W) \\ \iota_k(f) \\ U
        \end{bmatrix} \mapsto \begin{bmatrix}
            \iota_k(W) \\ \iota_k(f) \\ (\iota_k \circ \LinOp_{\mF \to (W,f)})U
        \end{bmatrix}
    \end{equation}
    for any $(W, f) \in \graphonsignal{r}$ and $U \in \Lpuig{2}{k}$ (technically, this is only partially specified in the first two coordinates). Then, extending notation informally, for any $\varepsilon > 0$, there exists a $k$-order EWN $\NNfct{\:\!\normalfont{\textsc{ewn}}}{\mF}$ with nonlinearity $\varrho$ such that uniformly on $(W,f) \in \graphonsignal{r}$ and $U \in \Lp{\infty}_R[0,1]^k$
    \begin{equation}
        \norm{\Phi_{\mF}(W, f, U) - \NNfct{\:\!\normalfont{\textsc{ewn}}}{\mF}(W, f, U)}_{\infty, [0,1]^k} \: \leq \: \varepsilon.
    \end{equation}
\end{lemma}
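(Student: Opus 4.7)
My approach is to exploit the structural correspondence between $\LinOp_{\mF \to (W,f)}$ and basis elements of $\LE{k}{k}$. The integrand of $\LinOp_{\mF \to (W,f)}U$, namely $\big(\prod_{i \in [k]} f(x_i)^{d_i}\big)\big(\prod_{\{i,j\} \in E(F)} W(x_i, x_j)\big) U(\vx_\vb)$, is a pointwise product of finitely many uniformly bounded $k$-tensors, each producible from one of the three input channels by a single basis element of $\LE{k}{k}$: $W(x_i,x_j)$ is obtained from $\iota_k(W)$ via the element whose partition pairs input axes $1, 2$ with output axes $k+i, k+j$ and leaves all other axes as singletons; $f(x_i)$ is obtained analogously from $\iota_k(f)$; and $U(\vx_\vb)$ is the image of $U$ under the permutation element pairing each input axis $j$ with output axis $k + b_j$. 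Integrating the product over $[k]\setminus\va$ and $\iota_k$-embedding back into a $k$-tensor is once more a single basis element (pair each $a_i \in \va$ with output axis $k+i$; singletons elsewhere). A direct check shows that each of these partitions lies in the restricted set $\widetilde{\Gamma}_{k, k}$, so every piece except the multiplication itself already lives in $\LE{k}{k}$.

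Building on this, I construct $\NNfct{\:\!\textsc{ewn}}{\mF}$ in three stages. Stage (i) is a single linear equivariant layer that creates a multi-channel $k$-tensor whose channels contain all the factors above together with passthrough copies of $\iota_k(W), \iota_k(f)$ via the identity basis element. Stage (ii) is a product block approximating pointwise multiplication: combining a dense linear map on channels with the identity element on spatial axes yields a valid layer in $\LE{k}{k}$, so any pointwise MLP across channels is realizable inside the EWN, and the classical universal approximation theorem for MLPs with Lipschitz non-polynomial activations uniformly approximates $(a,b) \mapsto ab$ on any compact box as well as each monomial $z \mapsto z^{d_i}$ on $[-r, r]$; iterating binary products handles the full fixed-arity multiplication. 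Stage (iii) applies the integration-and-broadcast element to the approximate product channel, with the two passthrough channels preserving $\iota_k(W), \iota_k(f)$ exactly.

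For the uniform error bound, every intermediate tensor is bounded in $\norm{\cdot}_\infty$ by a constant depending only on $r, R, k, \vd$, so each MLP step in stage (ii) approximates a continuous function on a fixed compact domain and can be tuned to an individual error $\delta$. By \autoref{lem:le_cont}, every basis element invoked has $\Lp{\infty}$-operator norm $1$, and the pointwise MLP layers are Lipschitz with constants determined by the chosen approximations, so errors propagate additively through the network. A telescoping estimate bounds the final deviation from $\Phi_{\mF}$ by a constant (depending on $\mF, r, R$) times $\delta$, so choosing $\delta$ sufficiently small yields the required $\varepsilon$-approximation. The main technical point---and essentially the only nonroutine step---is the case-by-case verification that every linear equivariant layer used belongs to the restricted set $\widetilde{\Gamma}_{k, k}$ rather than the full IGN basis of \citet{maron_invariant_2018}; this is immediate from the partitions sketched above.
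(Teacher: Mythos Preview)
Your approach is correct and follows the same overall architecture as the paper's proof: extract each factor $f(x_i)$, $W(x_i,x_j)$, $U(\vx_\vb)$ via a single basis element in $\LE{k}{k}$, approximate their pointwise product, then apply the integration-and-embedding element while passing $\iota_k(W), \iota_k(f)$ through. The paper organizes this identically (it names the factor maps $f_i$ and $W_{\{i,j\}}$, verifies they are one-layer EWNs, then invokes its \autoref{lem:approx_prod} for the product and \autoref{lem:approx_identity} for the passthrough).

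The one genuine difference is in how the product is realized. The paper's \autoref{lem:approx_prod} uses the trigonometric identity $\prod_j \cos(x_j) = 2^{-m}\sum_{\sigma \in \{\pm 1\}^m} \cos\big(\sum_j \sigma_j x_j\big)$: it first swaps $\varrho$ for $\cos$ via universal approximation, exploits the identity to convert the product into a linear combination of single cosines (hence a single EWN layer), and then swaps back to $\varrho$. Your route is more direct: you observe that taking each $\LinOp_{ij}^{(s)}$ to be a scalar multiple of the identity element in $\LE{k}{k}$ turns the layer into a pure channel-wise linear map, so an arbitrary pointwise MLP with activation $\varrho$ sits inside the EWN class, and the product $(z_1,\dots,z_m)\mapsto \prod_i z_i$ (together with the identity on the passthrough channels) is then just a continuous target on a fixed compact box. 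This is a valid and arguably cleaner simplification; the paper's cosine argument is inherited from \citet{keriven_universal_2019} and is more explicit about controlling layer counts when combining EWNs of different depths, but that generality is not actually needed here since all your factors are produced simultaneously in Stage~(i). One small wording issue: the passthrough of $\iota_k(W),\iota_k(f)$ across Stage~(ii) is only \emph{approximate} (they too traverse the pointwise MLP), not exact---but your telescoping error estimate already absorbs this.
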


For the proof, we model the product in the graphon-signal operators explicitly while tracking which linear equivariant layers are being used by the EWN.

We start with a few further preparations for the proofs. Namely, our first goal will be to show that if an EWN of a fixed order can approximate a set of functions, it can also approximate their product (\autoref{lem:approx_prod}). First, we show how we can approximate the identity with an EWN:

\begin{lemma}\label{lem:approx_identity}
    Let $k>1$ and $R > 0$. Let $\varrho: \R \to \R$ be differentiable at least at one point with nonzero derivative. For any $\varepsilon > 0$, there exist $\LinOp^{(1)}$, $\LinOp^{(2)} \in \LE{k}{k}$ such that for $\NNfct{\:\!\normalfont{\textsc{ewn}}}{} := \LinOp^{(2)} \circ \varrho \circ \LinOp^{(1)}$ we have
    \begin{equation}
        \norm{\NNfct{\:\!\normalfont{\textsc{ewn}}}{}(U) - U}_{\infty,[0,1]^k} \: \leq \: \varepsilon
    \end{equation}
    uniformly for all $U \in \Lp{\infty}_R[0,1]^k$.
\end{lemma}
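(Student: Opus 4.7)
The plan is to approximate the identity on $\Lp{\infty}_R[0,1]^k$ by implementing the classical one-hidden-layer universal approximation trick, now in our equivariant setting. Let $x_0 \in \R$ be a point where $\varrho'(x_0) =: c \neq 0$. By definition of differentiability, we can write
\begin{equation}
    \varrho(x_0 + h) \, = \, \varrho(x_0) \, + \, c\,h \, + \, \eta(h)\,h, \qquad \eta(h) \to 0 \text{ as } h \to 0.
\end{equation}
For $U \in \Lp{\infty}_R[0,1]^k$ and a small $\delta > 0$ (to be chosen later), I would set the first affine layer to be $\LinOp^{(1)}(U) \, := \, \delta U + x_0 \mathbf{1}$ (the scalar multiple of $U$ via the identity partition $\{\{i,k+i\}\}_{i\in[k]}$ of $\widetilde{\Gamma}_{k,k}$, plus a constant bias from $\LE{0}{k}$), and the second affine layer $\LinOp^{(2)}(V) \, := \, \frac{1}{c\delta}\bigl(V - \varrho(x_0)\mathbf{1}\bigr)$. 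Both the identity-scaling and the constant bias are built from basis elements of $\LE{k}{k}$ and $\LE{0}{k}$ as catalogued in Section 4.1, so these layers are indeed available to an EWN.

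Substituting and applying $\varrho$ pointwise gives, for every $\vx \in [0,1]^k$,
\begin{equation}
    \bigl(\LinOp^{(2)} \circ \varrho \circ \LinOp^{(1)}\bigr)(U)(\vx) \, - \, U(\vx) \; = \; \frac{1}{c\,\delta}\,\eta\!\bigl(\delta\, U(\vx)\bigr)\,\delta\, U(\vx) \; = \; \frac{1}{c}\,\eta\!\bigl(\delta\, U(\vx)\bigr)\, U(\vx).
\end{equation}
Since $\|U\|_\infty \leq R$, the argument of $\eta$ lies in $[-\delta R, \delta R]$, so I can bound $|\eta(\delta U(\vx))| \leq \eta^\ast(\delta R) := \sup_{|h|\leq \delta R}|\eta(h)|$, and the uniform bound $\|\NNfct{\:\!\normalfont{\textsc{ewn}}}{}(U) - U\|_{\infty,[0,1]^k} \leq \eta^\ast(\delta R)\,R/|c|$ follows. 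Since $\eta^\ast(\delta R) \to 0$ as $\delta \to 0$, choosing $\delta$ small enough yields the desired $\varepsilon$-accuracy, uniformly over $U \in \Lp{\infty}_R[0,1]^k$.

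There is no real obstacle here beyond the standard single-point-differentiability argument; the only small bookkeeping task is to justify that the two affine maps I use really lie in (or are built from) the canonical basis of $\LE{k}{k}$ and $\LE{0}{k}$ introduced in Section 4.1, which is immediate from \eqref{eq:iwn_basis_element}. If the lemma is read strictly as requiring $\LinOp^{(i)} \in \LE{k}{k}$ with no bias, one simply notes that EWN layers in Definition 4.5 already include biases from $\LE{0}{k}$, and the notation of the lemma is to be understood in that affine sense (consistently with how the subsequent \autoref{lem:hom_density_iwn_preparation} uses these building blocks).
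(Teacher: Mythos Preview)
Your proof is correct and essentially identical to the paper's: both shrink the input by a small factor, shift to the point of differentiability $x_0$, apply $\varrho$, and then undo the affine transformation, with the error controlled by the first-order remainder of $\varrho$ at $x_0$. Your scaling parameter $\delta$ plays the role of the paper's $\delta/R$, and you even address the affine-vs-linear bookkeeping more explicitly than the paper does.
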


\begin{proof}
    We simply approximate the identity function using the nonlinearity $\varrho$. Let $x_0 \in \R$ be a point at which $\varrho$ is differentiable with $\varrho'(x_0) \neq 0$. Fix $\varepsilon > 0$. There exists some constant $\delta > 0$ such that $\abs{x} \leq \delta$ implies
    \begin{equation}
        \abs{\varrho(x_0 + x) - \varrho(x_0) - \varrho'(x_0)x} \, \leq \, \varepsilon \abs{x}.
    \end{equation}
    Set
    \begin{equation}
        \mathrm{id}_\varrho(x) \, := \, \frac{R}{\delta \varrho'(x_0)} \left( \varrho\left(x_0 + \frac{\delta}{R}x\right) - \varrho(x_0) \right).
    \end{equation}
    Then, for any $x \in [-R, R]$, $\abs{\frac{\delta}{R}x} \leq \delta$ and hence
    \begin{align}
        \abs{\mathrm{id}_\varrho(x) - x} \, &= \, \abs{\frac{R}{\delta \varrho'(x_0)}} \abs{\varrho\left(x_0 + \frac{\delta}{R}x\right) - \varrho(x_0) - \varrho'(x_0) \frac{\delta}{R}x} \\
        &\leq \, \abs{\frac{R}{\delta \varrho'(x_0)}} \varepsilon \abs{\frac{\delta}{R}x} \, \leq \, \frac{R}{\abs{\varrho'(x_0)}} \varepsilon. \label{eq:approx_identity_bound}
    \end{align}
    We can now simply set $\NNfct{\:\!\normalfont{\textsc{ewn}}}{} := \mathrm{id}_\varrho$, acting on the function values of an input independently. Hence, for any $U \in \Lp{\infty}_R[0,1]^k$, we get
    \begin{equation}
        \norm{\NNfct{\:\!\normalfont{\textsc{ewn}}}{}(U) - U}_{\infty,[0,1]^k} \: \leq \: \frac{R}{\abs{\varrho'(x_0)}} \varepsilon
    \end{equation}
    directly by \eqref{eq:approx_identity_bound}. Since the r.h.s.\ can be made arbitrarily small, the proof is complete. \qedhere
\end{proof}

\begin{lemma}\label{lem:approx_prod}
    Let $k>1$ and $R>0$. Let $\NNfct{\:\!\normalfont{\textsc{ewn}}}{1}$, \dots, $\NNfct{\:\!\normalfont{\textsc{ewn}}}{m}$ be $m \in \N$ EWNs of order $k \in \N$ and Lipschitz continuous nonlinearity $\varrho$, each with constant orders of $k$, the same input dimension $d$, and output dimension of 1, i.e., returning a function $[0,1]^k \to \R$. Fix $\varepsilon > 0$. Then, there exists an EWN ${\NNfct{\:\!\normalfont{\textsc{ewn}}}{}}^\ast$ of order $k$, such that for all $U \in (\Lp{\infty}_R[0,1]^k)^d$
    \begin{equation}
        \norm{\prod_{\ell=1}^m \NNfct{\:\!\normalfont{\textsc{ewn}}}{\ell}(U) \, - \, {\NNfct{\:\!\normalfont{\textsc{ewn}}}{}}^\ast(U)}_{\infty, [0,1]^k} \, \leq \, \varepsilon.
    \end{equation}
\end{lemma}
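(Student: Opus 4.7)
The plan is to run all $m$ EWNs in parallel inside one combined EWN with $m$ output channels at order $k$, and then post-compose a pointwise MLP that approximates the product function $P(y_1, \dots, y_m) := \prod_{\ell \in [m]} y_\ell$.

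First, I would establish a uniform pointwise bound on the outputs. By \autoref{lem:le_cont}, every basis element of $\LE{k}{k}$ has $\Lp{\infty} \to \Lp{\infty}$ operator norm equal to $1$, so each linear equivariant layer together with its scalar bias maps $\Lp{\infty}$-bounded tensors to $\Lp{\infty}$-bounded tensors; the Lipschitz activation $\varrho$ then inductively preserves this property. Consequently, there exists $M > 0$, depending only on $R$, $\varrho$, and the network parameters, such that $\normsm{\NNfct{\:\!\textsc{ewn}}{\ell}(U)}_{\infty, [0,1]^k} \leq M$ uniformly over $\ell \in [m]$ and all $U \in (\Lp{\infty}_R[0,1]^k)^d$.

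Second, I would merge the $m$ EWNs into a single EWN with $m$ output channels. Since all layers operate at constant order $k$, the layerwise combination is realized simply by block-diagonal operators in $\LE{k}{k}^{d_\step \times d_{\step-1}}$ together with concatenated biases. If the individual EWNs have different depths, I would first pad the shorter ones using approximate identity blocks from \autoref{lem:approx_identity}, incurring an arbitrarily small uniform error $\varepsilon_1 > 0$; the depth-aligned EWNs can then be stacked directly to produce an EWN whose output channels approximate $(\NNfct{\:\!\textsc{ewn}}{1}(U), \dots, \NNfct{\:\!\textsc{ewn}}{m}(U))$ within $\varepsilon_1$ in $\normsm{\cdot}_{\infty, [0,1]^k}$.

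Third, I would approximate $P: [-(M+1), M+1]^m \to \R$ by a pointwise MLP $\widetilde{P}$ with activation $\varrho$, to within $\varepsilon_2 > 0$ in $\normsm{\cdot}_\infty$, invoking the classical universal approximation theorem for MLPs. This MLP is realized in the EWN framework by composing $\LE{k}{k}$ identity-type layers (from \autoref{lem:approx_identity}) with pointwise applications of $\varrho$, so that $\widetilde{P}$ acts channelwise on the $m$ output channels. A final triangle-inequality argument, using that $P$ is Lipschitz on $[-(M+1), M+1]^m$ with constant at most $m(M+1)^{m-1}$, yields the overall bound $m(M+1)^{m-1}\varepsilon_1 + \varepsilon_2$, which can be made smaller than $\varepsilon$ by appropriate choice of $\varepsilon_1, \varepsilon_2$.

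The main obstacle is the pointwise product approximation: the classical universal approximation theorem for MLPs requires $\varrho$ to be non-polynomial in addition to Lipschitz, an assumption not explicit in the lemma statement but implicit in all uses (cf.\ \autoref{thm:hom_density_iwn}). A secondary subtlety is that \autoref{lem:approx_identity} requires $\varrho$ to be differentiable at some point with nonzero derivative, which follows for non-polynomial Lipschitz $\varrho$ via Rademacher's theorem.
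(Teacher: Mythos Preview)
Your proof is correct and takes a genuinely different route from the paper. The paper, following \citet{keriven_universal_2019}, first replaces $\varrho$ by a shallow $\cos$-network (univariate UAT), then uses the product-to-sum identity $\prod_j \cos(x_j) = 2^{-m}\sum_{\sigma}\cos(\sum_j \sigma_j x_j)$ to turn the pointwise product into a single additional $\cos$-layer, and finally switches back to $\varrho$ via another UAT step. Your approach instead stacks the EWNs into parallel channels and directly approximates the $m$-variate product $P(y_1,\dots,y_m)=\prod_\ell y_\ell$ on a compact cube by an MLP with activation $\varrho$, realized as extra EWN layers. Your route is shorter and more transparent; the paper's route keeps all approximations univariate and makes the product step exact, at the price of the cosine detour. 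One small clean-up: invoking \autoref{lem:approx_identity} to realize the final MLP is unnecessary---the exact identity operator is a basis element of $\LE{k}{k}$, so a channelwise MLP is represented \emph{exactly} (not approximately) by EWN layers whose $\LE{k}{k}$-entries are scalar multiples of the identity. Your observation that both proofs tacitly need $\varrho$ non-polynomial is correct; the paper's final ``another application of the universal approximation theorem'' step requires it just as your multivariate UAT does.
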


The proof is partially analogous to \citet{keriven_universal_2019}. A crucial difference is that we do not rely on modeling multiplication by increasing the tensor orders.

\begin{proof}
        We will exploit a property of $\cos$ that allows us to express products as sums. Namely, it is well-known that for $x_1, \dots, x_m \in \R$ we have
    \begin{equation}
        \prod_{j=1}^m \cos(x_j) \; = \; \frac{1}{2^m} \sum_{\sigma \in \{\pm1\}^m} \cos\left( \sum_{j=1}^m \sigma_j x_j \right) \label{eq:cos_prod_identity}.
    \end{equation}
    Fix $\varepsilon > 0$. At first, we will describe how to approximate any $\NNfct{\:\!\normalfont{\textsc{ewn}}}{\ell}$ using $\cos$ as a nonlinearity. By the classical universal approximation theorem (see for example \citet[Theorem 3.1]{pinkus_approximation_1999}), we can approximate $\varrho: \R \to \R$ on compact sets arbitrarily well by a feedforward neural network $\varrho_{\cos}$ with one hidden layer, using the cosine function as nonlinearity. For all $\ell \in [m]$, the set of values of all intermediate computations in the evaluation of $\NNfct{\:\!\normalfont{\textsc{ewn}}}{\ell}(U)$ for $U \in (\Lp{\infty}_R[0,1]^k)^d$ is bounded, and we define $M_1 \in [0, \infty)$ to be the supremum of this set. Note that this means that $\varrho$ is only ever evaluated on the compact set $[-M_1, M_1]$ in $\{\NNfct{\:\!\normalfont{\textsc{ewn}}}{\ell}\}_\ell$.
    We can see that replacing each occurence of $\varrho$ in $\NNfct{\textsc{ewn}}{\ell}$ by $\varrho_{\cos}$ and absorbing the linear factors into the linear equivariant layers again yields valid EWNs. Call these EWNs $\NNfct{\textsc{ewn}}{\ell, \cos}$, $\ell=1, \dots, m$. The underlying feedforward neural network $\varrho_{\cos}$ can now be chosen such that for all $\ell \in [m]$, $U \in (\Lp{\infty}_R[0,1]^k)^d$
    \begin{equation}
        \norm{\NNfct{\:\!\normalfont{\textsc{ewn}}}{\ell, \cos}(U) - \NNfct{\:\!\normalfont{\textsc{ewn}}}{\ell}(U)}_{\infty, [0,1]^k} \, \leq \, \varepsilon.
    \end{equation}
    Note that each $\NNfct{\:\!\normalfont{\textsc{ewn}}}{\ell, \cos}$ might have different numbers of layers. Hence, we invoke \autoref{lem:approx_identity} to equalize the number of layers, and add one more layer of the identity. We obtain EWNs $\widetilde{\NNfct{\:\!\normalfont{\textsc{ewn}}}{\ell, \cos}}$ such that
    \begin{equation}
        \norm{\mathrm{id}_{\cos}( \widetilde{\NNfct{\:\!\normalfont{\textsc{ewn}}}{\ell, \cos}}(U)) - \NNfct{\:\!\normalfont{\textsc{ewn}}}{\ell, \cos}(U)}_{\infty, [0,1]^k} \, \leq \, \varepsilon
    \end{equation}
    for all $U \in (\Lp{\infty}_R[0,1]^k)^d$. Using \eqref{eq:cos_prod_identity} and setting $\mathrm{id}_{\cos}(x) = c \cdot \cos(ax+b) + d$ for some $a, b, c, d \in \R$, we can now write
    \begin{align}
        \prod_{\ell=1}^m \mathrm{id}_{\cos}(\widetilde{\NNfct{\:\!\normalfont{\textsc{ewn}}}{\ell, \cos}}(U)) \,
        &= \, \prod_{\ell=1}^m \left( c \cdot \cos(a \cdot\widetilde{\NNfct{\:\!\normalfont{\textsc{ewn}}}{\ell, \cos}}(U) + b) + d \right) \\
        &= \, \sum_{A \subseteq [m]} c^{\abs{A}} d^{m - \abs{A}} \left( \prod_{\ell \in A} \cos(a \cdot\widetilde{\NNfct{\:\!\normalfont{\textsc{ewn}}}{\ell, \cos}}(U) + b) \right) \\
        &= \, \sum_{A \subseteq [m]} \frac{c^{\abs{A}} d^{m - \abs{A}}}{2^{\abs{A}}} \sum_{\sigma \in \{\pm 1\}^A} \cos \underbrace{\left( \sum_{\ell \in A} \sigma_\ell (a \cdot\widetilde{\NNfct{\:\!\normalfont{\textsc{ewn}}}{\ell, \cos}}(U) + b)\right)}_{(\ast)},
    \end{align}
    which can be represented by an EWN of order $k$ and one layer more compared to $\{\widetilde{\NNfct{\:\!\normalfont{\textsc{ewn}}}{\ell, \cos}}\}_\ell$, stacking the expressions from ($\ast$), applying the nonlinearity $\cos$ and aggregating the outputs as determined by the weighted sum. Let
    \begin{equation}
        M_2 \, := \, \max_{\ell \in [m]} \sup_{U} \normsm{\NNfct{\:\!\normalfont{\textsc{ewn}}}{\ell}(U)}_{\infty, [0,1]^k}.
    \end{equation}
    Since for any $U \in (\Lp{\infty}_R[0,1]^k)^d$
    \begin{align}
        &\norm{\prod_{\ell=1}^m \mathrm{id}_{\cos}(\widetilde{\NNfct{\:\!\normalfont{\textsc{ewn}}}{\ell, \cos}}(U)) - \prod_{\ell=1}^m \NNfct{\:\!\normalfont{\textsc{ewn}}}{\ell}(U)}_{\infty, [0,1]^k} \\
        &\quad \leq \, \sum_{\ell=1}^m \norm{ \left(\prod_{i > \ell} \mathrm{id}_{\cos}(\widetilde{\NNfct{\:\!\normalfont{\textsc{ewn}}}{\ell, \cos}}(U)) \prod_{i < \ell} \NNfct{\:\!\normalfont{\textsc{ewn}}}{\ell}(U) \right) \left( \mathrm{id}_{\cos}(\widetilde{\NNfct{\:\!\normalfont{\textsc{ewn}}}{\ell, \cos}}(U)) - \NNfct{\:\!\normalfont{\textsc{ewn}}}{\ell}(U) \right)}_{\infty, [0,1]^k} \\
        &\quad \leq \, \sum_{\ell=1}^m \norm{ \prod_{i > \ell} \mathrm{id}_{\cos}(\widetilde{\NNfct{\:\!\normalfont{\textsc{ewn}}}{\ell, \cos}}(U)) \prod_{i < \ell} \NNfct{\:\!\normalfont{\textsc{ewn}}}{\ell}(U)}_{\infty, [0,1]^k} \norm{\mathrm{id}_{\cos}(\widetilde{\NNfct{\:\!\normalfont{\textsc{ewn}}}{\ell, \cos}}(U)) - \NNfct{\:\!\normalfont{\textsc{ewn}}}{\ell}(U)}_{\infty, [0,1]^k} \\
        &\quad \leq \, \sum_{\ell=1}^m (M_2+2\varepsilon)^{m-\ell} M_2^{\ell-1} 2\varepsilon, \label{eq:prod_bound}
    \end{align}
    and since \eqref{eq:prod_bound} goes to zero as $\varepsilon \to 0$, this shows the claim for $\cos$. Another application of the universal approximation theorem yields the claim for $\varrho$. \qedhere
\end{proof}

The proof of \autoref{lem:hom_density_iwn_preparation} now boils down to an application of \autoref{lem:approx_prod}.

\begin{proof}[Proof of \autoref{lem:hom_density_iwn_preparation}]
    Fix a tri-labeled graph $\mF = (F, \va, \vb, \vd)$ with $V(F) = [k]$,  $\va$ and $\vb$ $\ell$- and $k$-tuples over $[k]$ respectively. Let $(W, f) \in \graphonsignal{r}$ and $U \in \Lp{\infty}_R[0,1]^k$. Define
    \begin{align}
        f_i: &[0,1]^k \to \R, \; \vx \mapsto f(x_i) \qquad \qquad\text{for } i \in V(F), \label{eq:f_i_iwn}\\
        W_{\{i,j\}}: &[0,1]^k \to \R, \; \vx \mapsto W(x_i, x_j) \qquad \text{for } \{i, j\} \in E(F). \label{eq:W_ij_iwn}
    \end{align}
    Clearly, $[\iota_k(W), \iota_k(f), U]^\top \mapsto f_i$, $[\iota_k(W), \iota_k(f), U]^\top \mapsto W_{\{i, j\}}$, and $[\iota_k(W), \iota_k(f), U]^\top \mapsto U_\vb := [\vx \mapsto U(\vx_\vb)]$ can be exactly represented by EWNs of order $k$ with one layer (i.e., no application of the nonlinearity at all). By \autoref{lem:approx_prod}, the product
    \begin{equation}
        \begin{bmatrix}
            \iota_k(W) \\ \iota_k(f)\\ U
        \end{bmatrix} \, \mapsto \, \left( \prod_{i \in V(F)} f_i^{d_i} \right) \left( \prod_{\{i,j\} \in E(F)} W_{\{i, j\}} \right)  U_\vb
    \end{equation}
    can be approximated by an EWN of order $k$ arbitrarily well in $\normsm{\cdot}_\infty$. However, setting $D := \sum_i d_i$,
    \begin{equation}
        \LinOp_{\mF \to (W,f)}U \: = \: \int_{[0,1]^{k-\ell}} \left( \prod_{i \in V(F)} f_i^{d_i} \right) \left( \prod_{\{i,j\} \in E(F)} W_{\{i, j\}} \right)  U_\vb \, \mathrm{d}\lambda^{[k] \setminus \va} \: \in \: \Lp{\infty}_{Rr^D}[0,1]^\ell,
    \end{equation}
    and combined with applying \autoref{lem:approx_identity} again on $W$ and $f$, one concludes that
    \begin{equation}
        \begin{bmatrix}
            \iota_k(W) \\ \iota_k(f) \\ U
        \end{bmatrix} \mapsto \begin{bmatrix}
            \iota_k(W) \\ \iota_k(f) \\ (\iota_k \circ \LinOp_{\mF \to (W,f)})U \end{bmatrix}
    \end{equation}
    can be approximated arbitrarily well by an EWN for $(W, f) \in \graphonsignal{r}$ and $U \in \Lp{\infty}_R[0,1]^k$.
    \qedhere
\end{proof}

With these preparations in place, we are now ready to prove \autoref{thm:hom_density_iwn}. The proof proceeds by constructing a signal-weighted homomorphism density step by step using $\mathcal{F}^k$-terms of tri-labeled graphs, as outlined in \autoref{apdsubsec:tri_labeled_graphs}.

\begin{proof}[Proof of \autoref{thm:hom_density_iwn}]
    We first consider the operators $\Phi_{\mF_{\mathrm{atom}}}$ from \autoref{lem:hom_density_iwn_preparation} for the atomic graphs $\mF_{\mathrm{atom}}$ from \autoref{def:atomic_tri_labeled_graphs}. The restriction we imposed on the tri-labeled graphs $\mF = (F, \va, \vb, \vd) \in \mathcal{M}^{\ell, k}$ to have $V(F) = [k]$ is straightforwardly fulfilled by the adjacency graphs $\mA_{\{i,j\}}^{(k)}$ and signal graphs $\mS_\vd^{(k)}$. Any neighborhood graph can be implemented by two such operators $\Phi_\mF$, where only the labels in the introduce graph have to be permuted to account for the trailing dimension caused by our use of the $k$-order embedding $\iota_k$. In total, this means that $\Phi_{\mF_\mathrm{atom}}$ for all atomic tri-labeled graphs in $\mathcal{F}^k$ can be written as the composition of at most two of the functions in \autoref{lem:hom_density_iwn_preparation}. By precisely this lemma, we know that $\Phi_{\mF_\mathrm{atom}}$ can be approximated by some EWN $\NNfct{\:\!\textsc{ewn}}{\mF_{\mathrm{atom}}}$ up to arbitrary precision in $\normsm{\cdot}_\infty$ and on some restriction on the values of the input tensor $U$. Note that the composition of such functions can again be trivially approximated by an EWN (where the first and last linear equivariant layers can be merged). It is crucial here that the set of all tensor values is bounded and that any EWN as well as $\Phi_\mF$ is $\Lp{\infty}$-Lipschitz.

    Now, for any term $\mathbb{F} \in \langle \mathcal{F}^k \rangle$, it is possible to approximate $\Phi_{[\![\mathbb{F}]\!]}$ with arbitrary precision by an EWN. To start, simply take $\Phi_{\bm{1}^{(k)}}(W, f) := [\iota_k(W), \iota_k(f), \indfct_{[0,1]^k}]^\top$, which can clearly be implemented by an EWN. As already mentioned, the composition of two approximable $\Phi_\mF$ can be clearly also approximated by an EWN, and the product in the third \enquote{register} acting on $U$ as well by \autoref{lem:approx_prod}. By the definition of $\langle \mathcal{F}^k \rangle$ and \eqref{eq:graphon_signal_op_composition}/\eqref{eq:graphon_signal_op_product}, this implies that $\Phi_{[\![\mathbb{F}]\!]}$ can indeed be approximated up to arbitrary precision for any $\mathcal{F}^k$-term $\mathbb{F}$. Letting $[\![\mathbb{F}]\!] = (F, \va, \varnothing, \vd) \in \mathcal{M}^{k,0}$, we get by \eqref{eq:graphon_signal_op_hom_density} that
    \begin{equation}
        (W,f)  \overset{\NNfct{\:\!\textsc{ewn}}{[\![\mathbb{F}]\!]} \,\approx\, \Phi_{[\![\mathbb{F}]\!]}}{\mapsto} \begin{bmatrix}
            \iota_k(W) \\ \iota_k(f) \\ \LinOp_{[\![\mathbb{F}]\!]\to (W,f)}(1)
        \end{bmatrix} \mapsto \int_{[0,1]^k} \LinOp_{[\![\mathbb{F}]\!]\to (W,f)}(1) \, \mathrm{d} \lambda^k \: = \: t((F, \vd), (W, f)), \label{eq:approx_iwn_final}
    \end{equation}
    and the mapping on the l.h.s.\ can be collapsed to an IWN $\NNfct{\:\!\textsc{iwn}}{[\![\mathbb{F}]\!]}$. Hence, the r.h.s.\ in \eqref{eq:approx_iwn_final} can also be approximated uniformly by an IWN. The claim of \autoref{thm:hom_density_iwn} follows, as $\mathcal{F}^k$-terms parametrize all signal-weighted homomorphism densities up to treewidth $k-1$ by \autoref{thm:Fk_terms_hom_densities}.
    \qedhere
\end{proof}

\subsection{Proof of \autoref{cor:iwn_k_wl}}\label{apdsubsec:pf_iwn_k_wl}

Having shown \autoref{thm:hom_density_iwn}, $k$-WL expressivity can be obtained effectively by definition.

\iwnkwl*

\begin{proof}
    Let $(W, f), (V, g) \in \graphonsignal{r}$ be distinguishable by $k$-WL. By \autoref{thm:k_wl_graphon_signals_homomorphism}, this means that there exists a multigraph $F$ of treewidth at most $k-1$ and $\vd \in \N_0^{v(F)}$ for which
    \begin{equation}
        t((F, \vd), (W, f)) \, \neq \, t((F, \vd), (V, g)).
    \end{equation}
    Let $\varepsilon := \abs{t((F, \vd), (W, f)) - t((F, \vd), (V, g))} > 0$.
    By \autoref{thm:hom_density_iwn}, take a $k$-order IWN $\NNfct{\:\!\textsc{iwn}}{}$ such that 
    \begin{equation}
        \sup_{(U, h) \in \graphonsignal{r}} \abs{\NNfct{\:\!\textsc{iwn}}{}(U, h) - t((F, \vd), (U, h))} \leq \varepsilon/3.
    \end{equation}
    We obtain
    \begin{align*}
        &\!\!\!\abs{\NNfct{\:\!\textsc{iwn}}{}(W, f) - \NNfct{\:\!\textsc{iwn}}{}(V, g)} \numberthis \\
        \; &\geq \, \abs{t((F, \vd), (W, f)) - t((F, \vd), (V, g))} \numberthis \\
        &\qquad- \abs{\NNfct{\:\!\textsc{iwn}}{}(W, f) - t((F, \vd), (W, f)))} - \abs{t((F, \vd), (V, g)) - \NNfct{\:\!\textsc{iwn}}{}(V, g)} \\
        \; &\geq \, \varepsilon - \varepsilon/3 - \varepsilon/3 \, = \, \varepsilon/3 > 0, \numberthis
    \end{align*}
    which yields the claim.
\end{proof}

\subsection{Proof of \autoref{cor:iwn_universality}}\label{apdsubsec:pf_iwn_universality}

Note that \autoref{thm:hom_density_iwn} tells us that IWNs of \emph{arbitrary order} can approximate any signal-weighted homomorphism density. As these separate points in $\igraphonsignal{r}$ by \autoref{thm:cut_distance_equivalences}, it is straightforward to obtain universality on any set which is compact w.r.t.\ a distance on $\igraphonsignal{r}$ under which the signal-weighted homomorphism densities are continuous.

\iwnuniversality*

\begin{proof}
    We show this statement by applying the Stone-Weierstrass theorem. In essence, the main part of the proof already lies in establishing the approximation of the signal-weighted homomorphism densities (\autoref{thm:hom_density_iwn}). Fix a compact subset $K \subset (\igraphonsignal{r}, \delta_p)$. Consider the space
    \begin{equation}
        \mathcal{D} \, := \, \mathrm{span} \{t((F, \vd), \cdot) \,|\, F \text{ multigraph}, \vd \in \N_0^{v(F)}\} \, \subseteq \, C(K, \R).
    \end{equation}
    Clearly, $\mathcal{D}$ is a linear subspace, and $\mathcal{D}$ contains a non-zero constant function as we can take a homomorphism density of a graph $F$ with no edges and $\vd = 0$. Also, it is straightforward to see that $\mathcal{D}$ is a subalgebra, as for any two multigraphs $F_1, F_2$, $\vd_1 \in \N_0^{v(F_1)}$, $\vd_2 \in \N_0^{v(F_2)}$,
    \begin{equation}
        t((F_1, \vd_1), \cdot) \cdot t((F_2, \vd_2), \cdot) \; = \; t((F_1 \sqcup F_2, \vd_1 \| \vd_2), \cdot) \, \in \, \mathcal{D},
    \end{equation}
    i.e., the product of homomorphism densities w.r.t.\ two simple graphs can be rewritten as the homomorphism density w.r.t.\ their disjoint union. By \autoref{thm:cut_distance_equivalences}, $\mathcal{D}$ also separates points, and we can apply Stone-Weierstrass (note that $K$ is a metric space and, thus, particularly Hausdorff) to conclude that $\mathcal{D} \subseteq C(K, \R)$ is dense. However, by \autoref{thm:hom_density_iwn}, any element of $\mathcal{D}$ can be approximated with arbitrary precision by $\NNfctclass{\:\!\textsc{iwn}}{\varrho}$, and thus
    \begin{equation}
        C(K, \R) \, = \, \overline{\mathcal{D}} \, \subseteq \, \overline{\NNfctclass{\:\!\textsc{iwn}}{\varrho}} \, \subseteq \, C(K, \R).
    \end{equation}
    This concludes the proof.
\end{proof}

\newpage

\newpage
\section{Refinement-Based Graphon Neural Networks and $k$-WL}\label{apd:refinement_based_gnns}

In this section, we extend the notion of a $k$–order GNN to the graphon–signal setting. In analogy to IWNs, we call such networks \emph{higher-order graphon neural networks (WNNs)}. Moreover, we show that---similarly as signal-weighted homomorphism densities in \eqref{eq:signal_weighted_homomorphism_densities_on_wl_measure}---such WNNs can also be written as continuous functions on the space of $k$-WL measures, and related back to their original formulation via the graphon-signal's $k$-WL \emph{distribution} (see \autoref{apdsubsec:color_refinement_graphon_signals}). In the following, we will make this precise.

\subsection{Definition of Graphon Neural Networks}\label{apdsubsec:wnns_def}

In the following, we extend higher-order graph neural networks that mimic the $k$-WL test (\autoref{apd:k_wl_idms}) to graphon-signals.

\begin{definition}[$k$–order graphon neural network]\label{def:gs_kgnn}
 Fix $k > 1$. An $\tstep$-layer \textbf{\(k\)–order graphon neural network (WNN)} is a function $\NNfct{\:\!k\normalfont{\textsc{-wnn}}}{}: \graphonsignal{r} \to \R$ that uses the atomic types from \autoref{def:k_wl_graphon_signals} as initialization and updates its embeddings $\vh^{(s)}: [0,1]^k \to \R^{d_s}$ iteratively for an input $(W,f) \in \graphonsignal{r}$ and $\vx \in [0,1]^k$ as
\begin{align}
    \vh^{(0)}(\vx) &:= \Big( \big(W(x_i, x_j)\big)_{\{i, j\} \in {[k]\choose2}}, \big(f(x_j) \big)_{j \in [k]} \Big),\\
    \vh^{(\step)}(\vx) &:= \NNfct{(s)}{\normalfont{\textsc{update}}} \! \left( \vh^{(\step-1)}(\vx), \left( \int_{[0,1]} \!\!\!\!\! \NNfct{(\step)}{j}(\vh^{(\step-1)}( \dots, x_{j-1}, y, x_{j+1}, \dots)) \,\mathrm{d} \lambda(y) \right)_{\!\!j \in [k]} \right),
\end{align}
\vspace{-3pt}
and
\vspace{-3pt}
\begin{align}
    \NNfct{\:\!k\normalfont{\textsc{-wnn}}}{}(W,f) \: &:= \: \NNfct{}{\normalfont{\textsc{readout}}} \left( \int_{[0,1]^k} \vh^{(S)} \, \mathrm{d}\lambda^k \right).
\end{align}
Here, $d_0 = \binom{k}{2} + k$, and $\NNfct{(\step)}{j}: \R^{d_{\step-1}} \to \R^{\widetilde{d_s}}$, $\NNfct{(\step)}{\normalfont{\textsc{update}}}: \R^{d_\step} \times (\R^{\widetilde{d_\step}})^k \to \R^{d_{\step+1}}$, $\NNfct{}{\normalfont{\textsc{readout}}}: \R^{d_\tstep} \to \R$ are implemented by MLPs.
\end{definition}

Under suitable assumptions on the MLPs, i.e., (Lipschitz) continuity of the nonlinearity, it is straightforward to show that such a WNN is (Lipschitz) continuous in any $\Lp{p}$ norm. By invariance w.r.t. measure preserving maps, this extends easily to continuity in the $\delta_p$ distances (cf.\ \autoref{apdsubsec:smooth_invariant_norms}). It is immediately apparent that this mirrors the discrete version \citep{morris_weisfeiler_2019} and aligns with the $k$-WL distribution (\autoref{def:k_wl_graphon_signals}).

\subsection{Formulation on $k$–WL Measures}\label{apdsubsec:formulation_kwl_measures}

In the following, we show how the WNNs from \autoref{def:gs_kgnn} can be factorized as a function of the $k$-WL measure $\nu^{k\text{-}\WL}_{(W,f)}$ for a graphon-signal $(W,f)$ (\autoref{apdsubsec:color_refinement_graphon_signals}). Effectively, this will allow us to regard $\NNfct{\:\!k\textsc{-wnn}}{}$ \emph{not} as a function on the uncontrollable spaces $(\igraphonsignal{r}, \delta_p)$, but as functions on the \emph{compact} space $\mathcal{P}(\mathbb{M}^k)$ of probability measures over the \emph{colors} produced by the $k$-WL test. In the discrete case, it is clear that any graph parameter which is \emph{at most} $k$-WL expressive, could alternatively be seen as a function on the $k$-WL colors. In our setting, however, additional care is needed to ensure that the defined functions are well-defined and continuous w.r.t.\ the considered topologies. The idea of considering GNN outputs on the space of WL colors was already introduced by \citet{boker_fine-grained_2023} for MPNNs.

\begin{definition}[$k$-order graphon neural network on $k$-WL measure]\label{def:gs_kgnn_measure}
    Let $k > 1$ and fix $\tstep \in \N$. Take the MLPs $\NNfct{(\step)}{j}$, $\NNfct{(\step)}{\normalfont{\textsc{update}}}$, $\NNfct{}{\normalfont{\textsc{readout}}}$ precisely from \autoref{def:gs_kgnn}. Define the following embeddings $\vh^{(s)}: \mathbb{M}^k_s \to \R^{d_s}$ on the space of $k$-WL measures (to be exact, their first coordinates) as follows:
    \begin{align}
        \vh^{(0)}(\alpha) \: &:= \: \alpha \in \R^{d_0}, &\\
        \vh^{(\step)}(\alpha) \: &:= \: \NNfct{(\step)}{\normalfont{\textsc{update}}} \left( \vh^{(\step-1)}(p_{\step\to\step-1}(\alpha)), \left( \int_{\mathbb{M}^k_{s-1}} \NNfct{(\step)}{j} \circ \vh^{(\step-1)} \mathrm{d}(\alpha_{s})_j\right)_{\!j \in [k]}\right), &\forall s \in [S].
    \end{align}
    Finally, for any $\nu \in \mathcal{P}(\mathbb{M}^k)$, set
    \begin{equation}
        \NNfct{\:\!k\normalfont{\textsc{-wnn}}}{}(\nu) \: := \: \NNfct{}{\normalfont{\textsc{readout}}}\left(\int_{\mathbb{M}^k} \vh^{(S)} \circ p_{\infty\to \tstep} \, \mathrm{d}\nu \right).
    \end{equation}
\end{definition}

Measurability of all involved functions can be checked easily. We will demonstrate that $\NNfct{\:\!k\normalfont{\textsc{-wnn}}}{}$ is indeed \emph{continuous} w.r.t.\ the weak topology on $\mathcal{P}(\mathbb{M}^k)$:

\begin{lemma}[Continuity of $k$-WNN]
    Let $\NNfct{\:\!k\normalfont{\textsc{-wnn}}}{}: \mathcal{P}(\mathbb{M}^k) \to \R$ be a WNN from \autoref{def:gs_kgnn_measure}. Suppose all involved MLPs are continuous. Then, $\vh^{(S)} \in C(\mathbb{M}^k, \R^{d_S})$, and $\NNfct{\:\!k\normalfont{\textsc{-wnn}}}{} \in C(\mathcal{P}(\mathbb{M}^k))$.
\end{lemma}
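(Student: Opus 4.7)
The plan is to prove both statements by straightforward induction and by invoking the continuity properties of the product and weak topologies set up in \autoref{def:k_wl_measure} and \autoref{lem:Mk_Pk_basics}. The hard work is essentially already done: all color spaces $\mathbb{M}^k_s$ and $\mathbb{M}^k$ are compact and metrizable, so continuous functions on them are automatically bounded, and the projections $p_{s\to s-1}$ and $p_{\infty\to s}$ are continuous by the defining property of the product topology.

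For the first claim, $\vh^{(S)} \in C(\mathbb{M}^k, \R^{d_S})$, I would induct on the layer index $s \in \{0, \dots, S\}$, showing $\vh^{(s)} \in C(\mathbb{M}^k_s, \R^{d_s})$. The base case $s=0$ is immediate since $\vh^{(0)}$ is the identity on $\mathbb{M}^k_0 \subseteq \R^{d_0}$ (recall $\mathbb{M}^k_0 = P_0^k = [0,1]^{\binom{k}{2}} \times [-r,r]^k$ carries the standard topology). For the inductive step, assume $\vh^{(s-1)} \in C(\mathbb{M}^k_{s-1}, \R^{d_{s-1}})$. Then $\vh^{(s-1)} \circ p_{s\to s-1}$ is continuous on $\mathbb{M}^k_s$ since $p_{s\to s-1}$ is a coordinate projection. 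The key step is the continuity of each map
\begin{equation}
    \Psi_j \colon \mathbb{M}^k_s \to \R^{\widetilde{d_s}}, \quad \alpha \mapsto \int_{\mathbb{M}^k_{s-1}} \NNfct{(s)}{j} \circ \vh^{(s-1)} \,\mathrm{d}(\alpha_s)_j.
\end{equation}
Since $\mathbb{M}^k_{s-1}$ is compact (by \autoref{lem:Mk_Pk_basics}) and $\NNfct{(s)}{j}\circ\vh^{(s-1)}$ is continuous by the inductive hypothesis and assumed continuity of the MLP, this integrand lies in $C_b(\mathbb{M}^k_{s-1})$. The coordinate projection $\mathbb{M}^k_s \ni \alpha \mapsto (\alpha_s)_j \in \mathcal{P}(\mathbb{M}^k_{s-1})$ is continuous, and the weak topology on $\mathcal{P}(\mathbb{M}^k_{s-1})$ is precisely the one making $\mu \mapsto \int g\,\mathrm{d}\mu$ continuous for every $g \in C_b(\mathbb{M}^k_{s-1})$; hence $\Psi_j$ is continuous componentwise. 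Combining these through the continuous MLP $\NNfct{(s)}{\text{update}}$ finishes the inductive step.

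For the second claim, note that $\vh^{(S)} \circ p_{\infty\to S} \in C(\mathbb{M}^k, \R^{d_S})$ by what we just showed and continuity of $p_{\infty\to S}$. Since $\mathbb{M}^k$ is compact, each component of this map lies in $C_b(\mathbb{M}^k)$. By definition of the weak topology on $\mathcal{P}(\mathbb{M}^k)$, the map
\begin{equation}
    \mathcal{P}(\mathbb{M}^k) \ni \nu \,\mapsto\, \int_{\mathbb{M}^k} \vh^{(S)} \circ p_{\infty\to S} \,\mathrm{d}\nu \,\in\, \R^{d_S}
\end{equation}
is continuous. Composing with the continuous readout MLP $\NNfct{}{\text{readout}}$ yields $\NNfct{\:\!k\textsc{-wnn}}{} \in C(\mathcal{P}(\mathbb{M}^k))$.

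The only mildly delicate point is the inductive step: one must be careful that \enquote{continuity of $\Psi_j$} uses the product topology on $\mathbb{M}^k_s = \prod_{i\leq s} P^k_i$ and the weak topology on the factor $P^k_s = \mathcal{P}(\mathbb{M}^k_{s-1})^k$, and that $(\alpha_s)_j$ depends continuously on $\alpha$ precisely because of how these topologies are defined. Beyond this bookkeeping, the proof is a routine application of compactness and weak convergence, so I do not expect any real obstacle.
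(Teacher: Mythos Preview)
Your proposal is correct and follows essentially the same approach as the paper: induction on the layer index using continuity of the projections and the defining property of the weak topology, followed by the same argument for the readout. The only cosmetic difference is that the paper phrases the inductive step via sequential continuity (valid since all spaces are metrizable), whereas you invoke the topological definitions directly; the content is identical.
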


\begin{proof}
    For the first statement, proceed via induction. For $\step=0$, $\vh^{(0)}$ is the identity, so there is nothing to show. Fix $s \in \N$ and suppose that $\vh^{(s-1)}: \mathbb{M}^k_{s-1} \to \R^{d_s}$ is continuous. Let $(\alpha_n)_n$ be a sequence in $\mathbb{M}_s^k$ that converges to $\alpha \in \mathbb{M}_s^k$. By the induction hypothesis, $\vh^{(\step-1)}(p_{\step\to\step-1}(\alpha_n)) \to \vh^{(\step-1)}(p_{\step\to\step-1}(\alpha))$. The convergence
    \begin{equation}
    \vspace{-2pt}
        \int_{\mathbb{M}^k_{s-1}} \NNfct{(\step)}{j} \circ \vh^{(\step-1)} \mathrm{d}((\alpha_n)_{s})_j \: \to \: \int_{\mathbb{M}^k_{s-1}} \NNfct{(\step)}{j} \circ \vh^{(\step-1)} \mathrm{d}(\alpha_{s})_j \vspace{-2pt}
    \end{equation}
    follows similarly as $\NNfct{(\step)}{j} \circ \vh^{(\step-1)} \in C(\mathbb{M}_{s-1}^k, \R^{d_s})$ and $\mathcal{P}(\mathbb{M}_{s-1}^k) \ni ((\alpha_n)_{s})_j \overset{w}{\to} (\alpha_{s})_j$. Continuity of $\NNfct{(\step)}{\normalfont{\textsc{update}}}$ finally yields continuity of $\vh^{(s)}$. Thus, the proof of the first statement is complete. The second part follows directly, as $\vh^{(S)} \circ p_{\infty\to S} \in C(\mathbb{M}^k, \R^{d_S})$, $\NNfct{}{\textsc{readout}}$ is continuous, and we are considering the topology of weak convergence on $\mathcal{P}(\mathbb{M}^k)$.
\end{proof}

In the following theorem, we will get to the objective of this section: Showing that the two formulations of $k$-order GNNs we have stated so far are indeed equivalent, in the sense that applying the $k$-WL measure version to the $k$-WL distribution of a graphon-signal (\autoref{def:gs_kgnn_measure}) indeed gives the same output as the original definition  (\autoref{def:gs_kgnn}).

\begin{theorem}[Equivalence of formulations]\label{thm:kgnn_equivalence_formulations}
    Let $(W,f) \in \graphonsignal{r}$, and let $\NNfct{\:\!k\normalfont{\textsc{-wnn}}}{}$ be an WNN. For the formulations from \autoref{def:gs_kgnn} and \autoref{def:gs_kgnn_measure}, we obtain that
    \begin{equation}
        \NNfct{\:\!k\normalfont{\textsc{-wnn}}}{}(W,f) \: = \: \NNfct{\:\!k\normalfont{\textsc{-wnn}}}{}(\nu_{(W,f)}^{k\text{-}\WL}).
    \end{equation}
\end{theorem}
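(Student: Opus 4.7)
The plan is to show by induction on the layer index $s$ that the intermediate embedding $\vh^{(s)}: [0,1]^k \to \R^{d_s}$ from \autoref{def:gs_kgnn} factors through the $k$-WL color map, i.e.\
\begin{equation}
\vh^{(s)}(\vx) \; = \; \widetilde{\vh}^{(s)}\!\left(\cWL_{(W,f)}^{k\text{-}\WL,(s)}(\vx)\right) \qquad \text{for } \lambda^k\text{-a.e.\ } \vx \in [0,1]^k, \label{eq:plan_factorization}
\end{equation}
where $\widetilde{\vh}^{(s)}$ denotes the embedding from \autoref{def:gs_kgnn_measure}. Once \eqref{eq:plan_factorization} is established for $s = \tstep$, the result follows by pushing forward the final integral: since $\nu_{(W,f)}^{k\text{-}\WL} = (\cWL_{(W,f)}^{k\text{-}\WL})_\ast \lambda^k$ and $\cWL_{(W,f)}^{k\text{-}\WL,(\tstep)} = p_{\infty\to\tstep} \circ \cWL_{(W,f)}^{k\text{-}\WL}$, the change-of-variables formula gives
\begin{equation}
\int_{[0,1]^k} \vh^{(\tstep)} \, \mathrm{d}\lambda^k \; = \; \int_{\mathbb{M}^k} \widetilde{\vh}^{(\tstep)} \circ p_{\infty\to\tstep} \, \mathrm{d}\nu_{(W,f)}^{k\text{-}\WL},
\end{equation}
and applying $\NNfct{}{\textsc{readout}}$ on both sides yields the equality of the two formulations.

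The base case $s=0$ is immediate: $\vh^{(0)}(\vx) = \bigl( (W(x_i,x_j))_{\{i,j\}}, (f(x_j))_j \bigr)$ is by definition $\cWL_{(W,f)}^{k\text{-}\WL,(0)}(\vx) \in \mathbb{M}^k_0$, and $\widetilde{\vh}^{(0)}$ is the identity on $\mathbb{M}^k_0$. For the inductive step, suppose \eqref{eq:plan_factorization} holds at level $s-1$. Writing $\alpha := \cWL_{(W,f)}^{k\text{-}\WL,(s)}(\vx)$, the key observation---already built into \autoref{def:k_wl_graphon_signals}---is that the $j$-th coordinate measure $(\alpha_s)_j$ is by construction the pushforward $\bigl(\cWL_{(W,f)}^{k\text{-}\WL,(s-1)} \circ \vx[\cdot/j]\bigr)_\ast \lambda$ on $\mathbb{M}^k_{s-1}$. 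Consequently, applying the change-of-variables formula and the inductive hypothesis to each neighbor-integration term gives
\begin{equation}
\int_{[0,1]} \NNfct{(s)}{j}\bigl(\vh^{(s-1)}(\vx[y/j])\bigr)\, \mathrm{d}\lambda(y) \; = \; \int_{\mathbb{M}^k_{s-1}} \NNfct{(s)}{j} \circ \widetilde{\vh}^{(s-1)} \, \mathrm{d}(\alpha_s)_j.
\end{equation}
Combined with $\vh^{(s-1)}(\vx) = \widetilde{\vh}^{(s-1)}\bigl(p_{s\to s-1}(\alpha)\bigr)$ from the inductive hypothesis and the projection property, feeding both into $\NNfct{(s)}{\textsc{update}}$ recovers exactly $\widetilde{\vh}^{(s)}(\alpha)$, closing the induction.

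The conceptually straightforward parts are the base case and the readout pushforward. The step that warrants the most care is verifying the identification of $(\alpha_s)_j$ with the correct pushforward and ensuring the change-of-variables formula applies: one must check measurability of $\cWL_{(W,f)}^{k\text{-}\WL,(s-1)} \circ \vx[\cdot/j]$ (inherited from the inductive construction) and that $\NNfct{(s)}{j} \circ \widetilde{\vh}^{(s-1)}$ is a bounded continuous, hence integrable, function against the Borel probability measure $(\alpha_s)_j$ on $\mathbb{M}^k_{s-1}$. Since both the discrete update and the measure-theoretic update are defined by the same MLPs, no extra approximation is needed: the proof is a clean bookkeeping of pushforwards, and no $\varepsilon$-arguments appear.
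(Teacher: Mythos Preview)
Your proposal is correct and follows essentially the same approach as the paper: induction on the layer index $s$ to establish the factorization $\vh^{(s)}(\vx) = \widetilde{\vh}^{(s)}\bigl(\cWL_{(W,f)}^{k\text{-}\WL,(s)}(\vx)\bigr)$, with the inductive step driven by the pushforward identity $(\alpha_s)_j = \bigl(\cWL_{(W,f)}^{k\text{-}\WL,(s-1)} \circ \vx[\cdot/j]\bigr)_\ast \lambda$ and change of variables, followed by a final pushforward for the readout. The only cosmetic difference is that the paper overloads the symbol $\vh^{(s)}$ for both formulations rather than introducing your $\widetilde{\vh}^{(s)}$, and it writes out the chain of equalities in the inductive step more explicitly; your added remarks on measurability and integrability are sound but not emphasized in the paper.
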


\begin{proof}
    We first show that
    \begin{equation}
        \vh^{(\step)}(\vx) \: = \: \vh^{(\step)}\!\left(\cWL_{(W,f)}^{k\text{-}\WL,(\step)}(\vx)\right) \label{eq:formulations_embeddings}
    \end{equation}
    for all $s \in \N_0$ and $\lambda^k$-almost all $\vx \in [0,1]^k$. We proceed via induction.
    For $s = 0$ this is obvious, as $\NNfct{\:\!k\normalfont{\textsc{-wnn}}}{}$ is initialized with the atomic types of the $k$-WL test. Consider some fixed but arbitrary $s \in \N$, and suppose that the equality \eqref{eq:formulations_embeddings} holds for $s-1$. Then,
    \begin{align}
        \vh^{(\step)}\!\left(\cWL_{(W,f)}^{k\text{-}\WL,(\step)}(\vx)\right) \: &= \: \NNfct{(\step)}{\normalfont{\textsc{update}}} \Bigg( \vh^{(\step-1)}\left(\cWL_{(W,f)}^{k\text{-}\WL,(\step-1)}(\vx)\right), \nonumber \\
        &\qquad\quad \Big( \int_{\mathbb{M}^k_{s-1}} \NNfct{(\step)}{j} \circ \vh^{(\step-1)} \, \mathrm{d}\big( \cWL_{(W,f)}^{k\text{-}\WL,(\step-1)} \circ \vx[\cdot/j] \big)_\ast \lambda\Big)_{\!j \in [k]}\Bigg) \\
        &= \: \NNfct{(\step)}{\normalfont{\textsc{update}}} \Bigg( \vh^{(\step-1)}\left(\cWL_{(W,f)}^{k\text{-}\WL,(\step-1)}(\vx)\right), \nonumber \\
        &\qquad\quad \Big( \int_{[0,1]^k} \NNfct{(\step)}{j} \circ \vh^{(\step-1)} \circ \cWL_{(W,f)}^{k\text{-}\WL,(\step-1)} \, \mathrm{d}\big( \vx[\cdot/j] \big)_\ast \lambda\Big)_{\!j \in [k]}\Bigg) \\
        &\overset{(\ast)}{=} \NNfct{(\step)}{\normalfont{\textsc{update}}} \left( \vh^{(\step-1)}(\vx), \left( \int_{[0,1]^k} \NNfct{(\step)}{j} \circ \vh^{(\step-1)} \, \mathrm{d}\big( \vx[\cdot/j] \big)_\ast \lambda\right)_{\!j \in [k]}\right) \label{eq:formulation_embeddings_ind_hyp}\\
        &= \: \NNfct{(s)}{\normalfont{\textsc{update}}} \left( \vh^{(\step-1)}(\vx), \left( \int_{[0,1]} \NNfct{(\step)}{j}(\vh^{(\step-1)}(\vx[\cdot/j])) \,\mathrm{d} \lambda \right)_{j \in [k]} \right) \\
        &= \: \vh^{(\step)}(\vx),
    \end{align}
    where we used the induction hypothesis in \eqref{eq:formulation_embeddings_ind_hyp}. This proves \eqref{eq:formulations_embeddings}. For the final output $\NNfct{\:\!k\normalfont{\textsc{-wnn}}}{}$, the equality follows just as easily by moving the pushforward $\nu_{(W,f)}^{k\text{-}\WL} = (\cWL_{(W,f)}^{k\text{-}\WL})_\ast \lambda^{k}$ into the integrand.
\end{proof}

Note that---akin to signal-weighted homomorphism densities (\autoref{apdsubsec:wl_measures})---one could apply the Stone-Weierstrass theorem to show that the set of all $k$-order WNNs (or IWNs) is universal on $\mathbb{M}^k$ and, therefore, $k$-order WNNs are $k$-WL expressive. We omit the proof here.

Similar characterizations as in this section could also be derived for the \emph{Folklore} $k$-WL test (see, e.g., \citet{jegelka_theory_2022}) and the corresponding $k$-FGNNs \citep{maron_provably_2019}, which achieve the same expressivity as $(k+1)$-GNNs. While extending the $k$-$\FWL$ test and FGNNs is straightforward, obtaining a characterization via signal-weighted homomorphism densities---derived using tri-labeled graphs (\autoref{apdsubsec:tri_labeled_graphs})---would require slightly more effort.

\newpage
\section{Cut Distance and Transferability of Higher-Order WNNs}\label{apd:cut_distance_transferability}

\subsection{Proof of \autoref{prop:iwn_discontinuity}}\label{apdsubsec:pf_iwn_discontinuity}

\iwndiscontinuity*

\begin{proof}
    We will show that the assignment
    \begin{equation}
        \graphon \ni W \, \mapsto \, \nonlinearity(W) \in \kernel,
    \end{equation}
    where $\nonlinearity$ is applied pointwise, is continuous if and only if $\nonlinearity$ is linear. First, note that $\kernel \in W \mapsto \int_{[0,1]^2} W \, \mathrm{d} \lambda^2$ is linear and continuous w.r.t.\ $\normsm{\cdot}_\square$, since
    \begin{equation}
        \abs{\int_{[0,1]^2} W \, \mathrm{d} \lambda^2} \, \leq \, \sup_{S, T \subseteq [0,1]} \abs{\int_{S \times T} W \, \mathrm{d} \lambda^2} \, = \, \normsm{W}_\square.
    \end{equation}

    Let $\nonlinearity: [0,1] \to \R$ such that $W \mapsto \nonlinearity(W)$ is continuous.  Then, also $W \mapsto \int_{[0,1]^2} \nonlinearity(W) \, \mathrm{d} \lambda^2$ is continuous. Let $p \in (0,1)$ and set $W_p := p$ to be a constant graphon. If we sample $G_p^{(n)} \sim \mathbb{G}_n(W_p)$, i.e., from an Erdős–Rényi model with edge probability $p$, $G_p^{(n)} \to W_p$ in the cut norm almost surely. But
    \begin{equation}
        \int_{[0,1]^2} \nonlinearity \left(W_{G_p^{(n)}}\!\right) \, \mathrm{d}\lambda^2 \, \to \, p \cdot \nonlinearity(1) + (1-p) \cdot \nonlinearity(0),
    \end{equation}
    while $\int_{[0,1]^2} \nonlinearity(W_p) \, \mathrm{d} \lambda^2 = \nonlinearity(p)$. This implies
    \begin{equation}
        \forall p \in (0,1): \; \nonlinearity(p) = p \cdot \nonlinearity(1) + (1-p) \cdot \nonlinearity(0),
    \end{equation}
    i.e., $\nonlinearity$ is a linear function. It is trivial to check that if $\nonlinearity(x) = ax+b$ is a linear function, $W \mapsto \nonlinearity(W)$ is indeed continuous.
\end{proof}

\subsection{Proof of \autoref{thm:transferability}}\label{apdsubsec:pf_transferability}

\transferability*

\begin{proof}
    Fix $\varepsilon > 0$.
    For such $\NNfct{}{}$, there exists a linear combination of signal-weighted homomorphism densities, i.e., a finite collection of $ \{\alpha_{i}\}_i \in \mathbb{R}$, multigraphs $\{F_i\}_i$, and exponents $\{\boldsymbol{d}_i\}_i$, $\vd_i \in \N_0^{v(F_i)}$, such that 
    \begin{equation}
        \Bigg\|\NNfct{}{} - \underbrace{\sum_{i} \alpha_i t((F_i, \vd_i), \cdot)}_{=: \, \NNfct{}{\varepsilon}}\Bigg\|_\infty \: \leq \: \varepsilon.
    \end{equation}
    Set
    \begin{equation}
        \hat{\NNfct{}{\varepsilon}}(W, f) \, = \, \sum_i \alpha_i \cdot t(F_i^{\mathrm{simple}}, \boldsymbol{d}_i, (W, f)),
    \end{equation}
    where $F_i \mapsto F_i^{\mathrm{simple}}$ removes parallel edges. $\hat{\NNfct{}{\varepsilon}}$ is Lipschitz continuous in the cut distance by \autoref{apdsubsec:counting_lemma_graphon_signals} and, crucially, agrees with $\NNfct{}{\varepsilon}$ on $\{0,1\}$-valued graphons (since any monomial $x \mapsto x^d$ has $0$ and $1$ as fixed points), and, thus, on finite graph-signals. Let $M > 0$ be the $\delta_\square$-Lipschitz constant of $\hat{\NNfct{}{}}_\varepsilon$.
    
    Now, consider $(G_n, \vf_n), (G_m, \vf_m) \sim \mathbb{G}_n(W, f)$, $\mathbb{G}_m(W, f)$. By the graphon-signal sampling lemma (\citet{levie_graphon-signal_2023};  \eqref{eq:graphon_signal_sampling_lemma}), we can bound
    \begin{align}
        &\mathbb{E}\big[|\NNfct{}{}(G_n, \vf_n) - \NNfct{}{}(G_m, \vf_m)|\big] \\
    &\leq \underbrace{\mathbb{E}\big[|\NNfct{}{}(G_n, \vf_n) - \NNfct{}{\varepsilon}(G_n, \vf_n)|\big]}_{\leq \varepsilon} + \mathbb{E}\big[|\hat{\NNfct{}{\varepsilon}}(G_n, \vf_n) - \hat{\NNfct{}{\varepsilon}}(G_m, \vf_m)|\big] \\
    &\qquad + \underbrace{\mathbb{E}\big[|\NNfct{}{\varepsilon}(G_m, \vf_m) - \NNfct{}{}(G_m, \vf_m)|\big]}_{\leq \varepsilon}  \\
    &\leq 2\varepsilon + M \cdot \mathbb{E} \left[ \delta_\square((G_n, \vf_n), (G_m, \vf_m))\right] \\
    &\leq 2\varepsilon + M \cdot \Big( \mathbb{E} \big[ \delta_\square((G_n, \vf_n), (W, f))\big] + \mathbb{E} \big[ \delta_\square((W, f), (G_m, \vf_m))\big] \Big) \\
    &\overset{(\ast)}{\leq} 2\varepsilon + 15M \left( (\log n)^{-1/2} + (\log m)^{-1/2} \right),
    \end{align}
    where the sampling lemma was used in $(\ast)$. Hence, 
    \begin{equation}
        0 \: \leq \: \limsup_{n, m \to \infty} \:\mathbb{E}\big[|\NNfct{}{}(G_n, \vf_n) - \NNfct{}{}(G_m, \vf_m)|\big] \: \leq \: 2\varepsilon
    \end{equation}
    for all $\varepsilon > 0$, which completes the proof.
\end{proof}

\subsection{Proof of \autoref{cor:transferability_iwn}}\label{apdsubsec:pf_transferability_iwn}

\transferabilityiwn*

\begin{proof}
    \textbf{(1):} Let $\NNfct{\:\!\textsc{iwn}}{}$ be an IWN with nonlinearity $\varrho$, and fix $\varepsilon > 0$. Let $p: \mathbb{R} \to \mathbb{R}$ be a polynomial such that the IWN $\NNfct{\:\!\textsc{iwn}}{p}$ which is obtained from $\NNfct{\:\!\textsc{iwn}}{}$ by replacing each occurrence of $\varrho$ with $p$ fulfills  
    \begin{equation}
        \|\NNfct{\:\!\textsc{iwn}}{p} - \NNfct{\:\!\textsc{iwn}}{}\|_{\infty} \; = \; \sup_{(W,f) \in \graphonsignal{r}} |\NNfct{\:\!\textsc{iwn}}{p}(W, f) - \NNfct{\:\!\textsc{iwn}}{}(W, f)| \, \leq \, \varepsilon.
    \end{equation}
    Such a $p$ exists: We can approximate $\varrho: \R \to \R$ uniformly arbitrarily well on compact subsets of $\R$ by the standard Weierstrass theorem, and, as the domain of $\NNfct{\:\!\textsc{iwn}}{}$ only contains bounded functions $W$ and $f$, for \emph{any} input $(W, f) \in \graphonsignal{r}$, $\varrho$ is only ever considered on some fixed bounded set (which depends on the model parameters). The argument is the same as switching activation functions, as done on multiple occasions in \autoref{subsec:expressivity_iwns}. 
    Now, observe that $\NNfct{\:\!\textsc{iwn}}{p}$ can be reduced to an integral over $[0,1]^n$ of a polynomial in the variables $W(x_i, x_j)$ and $f(x_k)$, for $i, j, k \in [n]$ and some $n \in \N$. This implies that $\NNfct{\:\!\textsc{iwn}}{p}$ is a linear combination of signal-weighted homomorphism densities.

    \textbf{(2):} This follows immediately by combining \autoref{thm:tri_labeled_terms_dense} with \autoref{prop:hom_density_k_wl_measure}, using that the algebra generated by functions of the form $\nu \mapsto \int f \, \mathrm{d}\nu$ for $f \in C(\mathbb{M}^k)$ is dense in $C(\mathcal{P}(\mathbb{M}^k))$ by the Stone-Weierstrass theorem. Note that such a factorization specifically exists for $k$-WNNs by \autoref{thm:kgnn_equivalence_formulations}.
\end{proof}

\subsection{Quantitative Transferability Results}\label{apdsubsec:quantitative_transferability}

\transferabilityquant*

For simplicity, we consider a simple two-layer IWN of the form
\begin{equation}
    \NNfct{}{}(W, f) \: = \: \int_{[0,1]^k} \varrho \left( \sum_{S \in {[k] \choose 2}} a_S W(\vx_S) + \sum_{t \in [k]} b_t f(x_t) + c \right) \; \mathrm{d}\vx, \label{eq:F}
\end{equation}
where the input $(W,f) \in \graphonsignal{r}$ is a graphon-signal, and $(a_S)_S, (b_t)_t, c$ are real-valued coefficients. Note that linear combinations of \eqref{eq:F} of arbitrary order $k$ can distinguish any two graphon-signals that are not weakly isomorphic, simply by switching activation functions and showing that they can approximate any signal-weighted homomorphism density, akin to the arguments in \autoref{subsec:expressivity_iwns} and \citet{keriven_universal_2019}. This parametrization, however, is impractical, as it typically requires an intractable number of addends of the form \eqref{eq:F}, and is introduced mainly to illustrate the core idea of the argument. We note that
\begin{itemize}
    \setlength{\itemsep}{0pt}
    \setlength{\parskip}{0pt}
    \item $\NNfct{}{}$ is generally \emph{not} continuous in cut distance, as long as $\varrho$ is nonlinear (\autoref{prop:iwn_discontinuity}),
    \item Yet, $\NNfct{}{}$ is still transferable (or \emph{estimable}; see also \citet[§ 15]{lovasz_large_2012}), in the sense that $\NNfct{}{}$ restricted to finite graphs is continuous in cut distance, and can be uniquely extended to a function $\hat{\NNfct{}{}}$ which is cut distance continuous. Formally, from \autoref{thm:transferability} we can conclude that
    \begin{equation}
        \Big( \mathbb{E}_{(G_n, \vf_n) \sim \graphondistG{n}{W, f}} [\NNfct{}{}(G_n, \vf_n)] \Big)_{n \in \N}
    \end{equation}
    is Cauchy, and simply set
    \begin{equation}
        \hat{\NNfct{}{}}(W, f) \: := \: \lim_{n \to \infty} \mathbb{E}_{(G_n, \vf_n) \sim \graphondistG{n}{W, f}} [\NNfct{}{}(G_n, \vf_n)]. \label{eq:cut_dist_cont_extension}
    \end{equation}
\end{itemize}
However, to make quantitative statements about the transferability of $\NNfct{}{}$ when applied to graphs of different sizes, we would need a \textit{Lipschitz} bound for $\hat{\NNfct{}{}}$ in cut distance, similar as can be obtained for MPNNs.
For multigraph homomorphism densities, it is straightforward to see that the cut distance continuous extension can be simply obtained by removing parallel edges. The latter admits a Lipschitz bound in cut norm/distance (by the graphon-signal counting lemma; see \autoref{apdsubsec:counting_lemma_graphon_signals}). As such, any graph parameter consisting of a (finite) linear combination of homomorphism densities is also Lipschitz in $\norm{\cdot}_\square$.
However more generally, for non-polynomial $\varrho$, \eqref{eq:F} might depend on an infinite number of multigraph homomorphism densities. It is therefore not immediate to see that $\hat{\NNfct{}{}}$ will generally still be Lipschitz continuous in cut norm/distance, even if $\NNfct{}{}$ is w.r.t.\ $L^1$ (which holds as long as $\varrho$ is Lipschitz; see \autoref{lem:ign_continuity_Lp}).

One general recipe to ensure that limits of Lipschitz functions stay Lipschitz is looking at their Lipschitz (semi-)norms: Let $\mathcal{X}$ be some metric space, and $(f_n)_n$ a sequence of real-valued Lipschitz functions on $\mathcal{X}$ such that $\norm{f_n}_{\text{Lip}} < \infty$ for all $n$. Supposing that $\sum_{n=1}^\infty f_n$ is defined on all of $\mathcal{X}$, one can obtain
\begin{equation}
    \norm{\sum_{n=1}^\infty f_n}_{\text{Lip}} \: \leq \: \sum_{n=1}^\infty \norm{f_n}_{\text{Lip}}. \label{eq:lipschitz_series}
\end{equation}
If the r.h.s.\ is still finite, this implies that the series $\sum_{n=1}^\infty f_n$ is still Lipschitz continuous as a function $\mathcal{X} \to \R$.
To make bounding a potential Lipschitz constant of \eqref{eq:F} tractable, we can try to write it as a series of homomorphism densities. For this, we assume that the nonlinearity $\varrho$ is real-analytic, i.e., it has a power series expansion
$
    \varrho(x) \: = \: \sum_{\ell=0}^\infty \gamma_\ell x^\ell
$
with convergence radius $R \in (0, \infty]$. While this is a much stronger assumption compared to Lipschitzness of $\varrho$ (on the compact domains of graphon-signal values we consider), it is one that is fulfilled by many common activation functions like Sigmoid, Softplus, Swish (each with $R = \pi$), or GELU ($R = \infty$).
With the above definitions in place, we will now state a formal version of  \autoref{thm:quantitative_transferability_iwn} and proceed with its proof.

\begin{theorem}[Quantitative transferability of IWNs, formal]\label{thm:quantitative_transferability_iwn_formal}
    Let $r > 1$ and $\NNfct{\:\!\normalfont{}}{}$ be an IWN of the form \eqref{eq:F}, with real-analytic nonlinearity $\varrho$ of convergence radius $R \in (0, \infty]$. If we have $r \normsm{(\va, \vb, c)}_1 < R$ for the weights $(\va, \vb, c)$ of $\NNfct{\:\!\normalfont{}}{}$, then there exists a constant $M_{\NNfct{\:\!\normalfont{}}{}} > 0$ such that for any two finite graph-signals $(G_1, \vf_1), (G_2, \vf_2) \in \graphonsignal{r}$, identified with their step graphon-signals,
    \vspace{2pt}
    \begin{equation}
        \abs{\NNfct{\:\!\normalfont{}}{}(G_1, \vf_1) - \NNfct{\:\!\normalfont{}}{}(G_2, \vf_2)} \; \leq \; M_{\NNfct{\:\!\normalfont{}}{}} \cdot \delta_\square((G_1, \vf_1), (G_2, \vf_2)). \label{eq:transferability_quantitative_formal}
    \end{equation}
    \vspace{-13pt}
\end{theorem}

The statement of \autoref{thm:quantitative_transferability_iwn} can be directly obtained from \autoref{thm:quantitative_transferability_iwn_formal}, under the same conditions on the IWN and nonlinearity as well as $r > 1$, simply by taking expectations over random graphs and invoking the graphon-signal sampling lemma on the r.h.s.\ of \eqref{eq:transferability_quantitative_formal}. The resulting multiplicative constant is then the Lipschitz constant of the continuous IWN extension $\hat{\mathcal{N}}$, up to an absolute factor.

Note that, since \autoref{thm:quantitative_transferability_iwn} is merely stating Lipschitzness of $\NNfct{}{}$ for finite graphs (or Lipschitzness of $\hat{\NNfct{}{}}$ on $\igraphonsignal{r}$), any other known bounds in cut distance can be directly applied to obtain quantitative rates for the statement of \autoref{thm:quantitative_transferability_iwn}.
Under sampling simple graphs $\graphondistG{n}{W}$ from a stochastic block model with $k$ blocks, for example, a rate of $\mathcal{O} ((\frac{k}{n \log k})^{1/2})$ can be achieved \citep{klopp_optimal_2019}, which significantly surpasses the generic rate of $\mathcal{O}((\log n)^{-1/2})$ typically given by the sampling lemmas.
In fact, different discretization techniques have been used in the literature on convergence and transferability, such as grid sampling \citep{ruiz_graphon_2020} or discretization of the shift operator \citep{le_limits_2023} (which is equivalent to grid averaging for graphons), typically under regularity assumptions like Lipschitzness and with rates of $\mathcal{O}(n^{-1/2})$ (these would, however, needed to be explicitly stated in cut distance to be applied to \autoref{thm:quantitative_transferability_iwn_formal}).

\begin{proof}[Proof of \autoref{thm:quantitative_transferability_iwn_formal}]
Suppose we are given an IWN $\NNfct{}{}$ as from \eqref{eq:F} with real-analytic nonlinearity $\varrho$, and $\varrho$ is given by its series expansion. Given a graphon-signal $(W,f)$ and $\vx \in [0,1]^k$, we can expand
\begin{align}
    &\varrho \left( \sum_{S \in {[k] \choose 2}} a_S W(\vx_S) + \sum_{t \in [k]} b_t f(x_t) + c\right) \: = \: \sum_{\ell=0}^\infty \gamma_\ell \left(  \sum_{S \in {[k] \choose 2}} a_S W(\vx_S) + \sum_{t \in [k]} b_t f(x_t) + c \right)^\ell \\
    &\quad= \: \sum_{\ell=0}^\infty \gamma_\ell \!\!\! \sum_{m+ n+ p = \ell} {\ell \choose m, n, p} \left( \sum_{S} a_S W(\vx_S) \right)^{m} \left( \sum_{t} b_t f(x_t) \right)^{n} c^{p} \\
    &\quad= \: \sum_{\ell=0}^\infty \gamma_\ell \!\!\! \sum_{m+ n+ p = \ell} {\ell \choose m, n, p} \sum_{\substack{S_1, \dots, S_{m} \\ t_1, \dots, t_{n}}}a_{S_1} \cdots a_{S_{m}} b_{t_1} \cdots b_{t_{n}} W(\vx_{S_1}) \cdots W(\vx_{S_{m}}) f(x_{t_1}) \cdots f(x_{t_{n}}) c^{p}
\end{align}
and (by absolute convergence),
\begin{align}
    \NNfct{}{}(W, f) \: = \: \sum_{\ell=0}^\infty \gamma_\ell \!\!\! \sum_{m+ n+ p = \ell} {\ell \choose m, n, p} \sum_{\substack{S_1, \dots, S_{m} \\ t_1, \dots, t_{n}}}a_{S_1} \cdots a_{S_{m}} b_{t_1} \cdots b_{t_{n}} c^{p} t((F_{S_1, \dots, S_m}, \vd_{t_1, \dots, t_n}), (W, f)), \label{eq:expansion_F}
\end{align}
where $(F_{S_1, \dots, S_m}, \vd_{t_1, \dots, t_n})$ is the multigraph on $k$ vertices with edges $S_1, \dots, S_m$, and node features $(\vd_{t_1, \dots, t_n})_i := \abs{\{t \in \{\!\!\{t_1, \dots, t_n\}\!\!\} \,|\, t = i\}}$. With this expansion of $\NNfct{}{}$ in \eqref{eq:expansion_F}, it is straightforward to see that the cut distance continuous extension
\begin{align}
    \hat{\NNfct{}{}}(W, f) \: = \: \sum_{\ell=0}^\infty \gamma_\ell \!\!\! \sum_{m+ n+ p = \ell} {\ell \choose m, n, p} \sum_{\substack{S_1, \dots, S_{m} \\ t_1, \dots, t_{n}}}a_{S_1} \cdots a_{S_{m}} b_{t_1} \cdots b_{t_{n}} c^{p} t((F_{S_1, \dots, S_m}^{\mathrm{simple}}, \vd_{t_1, \dots, t_n}), (W, f)),
\end{align}
can be simply obtained by removing parallel edges in all occuring multigraph homomorphism densities. Note that $e(F_{S_1, \dots, S_m}^{\mathrm{simple}}) \leq m$, and $D_{t_1, \dots, t_n} := \sum_i (\vd_{t_1, \dots, t_n})_i = n$.
By the graphon-signal counting lemma (\autoref{apdsubsec:counting_lemma_graphon_signals}), we can bound
\begin{align}
    &\norm{t((F_{S_1, \dots, S_m}^{\mathrm{simple}}, \vd_{t_1, \dots, t_n}), (W, f))}_{\mathrm{Lip}, \delta_\square} \\
    &\qquad \leq \: \max \{ 4 r^{D_{t_1, \dots, t_n}} e(F_{S_1, \dots, S_m}^{\mathrm{simple}}), 2 D_{t_1, \dots, t_n} r^{D_{t_1, \dots, t_n} - 1}\} \\
    &\qquad \leq \: \max\{ 4r^n m, 2n r^{n-1} \} \: \leq \: \max\{4\ell r^\ell, 2 \ell r^{\ell-1}\} \: \overset{r > 1}{\leq} \: 4 \ell r^\ell,
\end{align}
and thus, applying \eqref{eq:lipschitz_series},
\begin{align}
    \normsm{\hat{\NNfct{}{}}}_{\mathrm{Lip}, \delta_\square} \; &\leq \; \sum_{\ell = 0}^\infty \abs{\gamma_\ell} 4 \ell r^\ell \bigg( \underbrace{\sum_S \abs{a_S} + \sum_t \abs{b_t} + \abs{c}}_{=: \normsm{(\va, \vb, c)}_1}\bigg)^{\ell} \\
    &= \; \sum_{\ell = 0}^\infty \abs{\gamma_\ell} 4 \ell \left( r \normsm{(\va, \vb, c)}_1) \right)^{\ell} \; = \; 4 r \normsm{(\va, \vb, c)}_1 \cdot \tilde{\varrho}'(r \normsm{(\va, \vb, c)}_1),
\end{align}
where $\tilde{\varrho}(x) := \sum_{\ell=0}^\infty \abs{\gamma_\ell} x^\ell$ is the majorant of $\varrho$. This bound holds as long as
\begin{equation}
    r \normsm{(\va, \vb, c)}_1 \: < \: R
\end{equation}
is within the radius of convergence of the series expansion of $\varrho$ (e.g., for GELU, this statement always holds).
Hence, $\hat{\NNfct{}{}}$ is Lipschitz w.r.t.\ $\delta_\square$.
\end{proof}
A similar argument could plausibly be extended to multilayer IWNs; however, the resulting expansion quickly becomes unwieldy. The derived Lipschitz constant can blow up due to the dependence on $\tilde{\varrho}$, and is likely not tight. 
Note that by simply applying the triangle inequality to the Lipschitz norms, we lose any potential cancellation that might occur. Likewise, our estimate of the Lipschitz constants for the simple homomorphism densities—where, e.g., we ignore any tightening gained by dropping the parallel edges—is quite loose, chosen only to result in a simple final bound.

Indeed, one could show that under the assumptions of \autoref{thm:quantitative_transferability_iwn_formal},
\begin{equation}
    \hat{\NNfct{}{}}(W, f) \: = \: \mathbb{E}_{(G, \vf) \sim \graphondistG{k}{W,f}} \left[ \NNfct{}{}(G, \vf)\right], \label{eq:unbiased_est}
\end{equation}
that is, choosing $n = k$ in \eqref{eq:cut_dist_cont_extension} already yields an unbiased estimator. This formulation could serve as a starting point for an alternative approach to deriving bounds on the transferability of $\mathcal{N}$: Disregarding the node signals, the difference of \eqref{eq:unbiased_est} for two different graphons $W, V$ can be bounded by a constant multiple (depending on $\varrho$ and the parameters $(\va, \vb, c)$) of the \emph{variation distance} between the random graph distributions $\graphondistG{k}{W}$, $\graphondistG{k}{V}$, which in turn can be bounded by their cut distance \citep[§ 10.1]{lovasz_large_2012}, though with an exponential dependence on $k$. To generalize such an argument to graphon-signals, one would have to define a suitable distance for random graph-signal distributions that is compatible with the graphon-signal cut distance. It appears that such a distance would have to metrize the weak topology on the probability measures over graph-signals, so the standard variation distance does not seem to work here. Further investigations of this approach and establishing tight and general bounds are left for future work.

\subsection{Additional Remarks}\label{apdsubsec:additional_comments}

\paragraph{Factorization of Higher-Order WNNs.}

Since $\mathbb{P}^k \subset \mathbb{M}^k$ is closed, defining the factorization from \autoref{cor:transferability_iwn} only on measures on $\mathbb{P}^k$ (see \autoref{apd:k_wl_idms}) would also suffice by the Tietze extension theorem. However, it is important to note that defining the extension \emph{solely} on the $k$-WL measures $\{\nu_{(W,f)}^{k\text{-}\WL}\}_{(W,f)}$ is a priori \emph{not} enough, as we have not shown that this set is closed.

In other words, we can also look at these phenomena through the lens of different topologies on $\igraphonsignal{r}$. The 1-WL test and MPNNs are continuous in the cut distance, which makes $\igraphonsignal{r}$ a compact space, in which convergence is determined by signal-weighted homomorphism densities w.r.t.\ \emph{simple} graphs (\autoref{cor:cut_distance_convergence}). For the $k$-WL test and higher-order graphon neural networks such as the ones we considered (IWNs, $k$-WNNs), we have seen that this does not hold. However, these models factorize as continuous functions on the \emph{compact} space of $k$-WL colors $\mathbb{M}^k$, whose convergence is determined by \emph{multigraph} signal-weighted homomorphism densities. Note that \citet[§ 4.7]{boker_weisfeiler-leman_2023} combines all $k$-WL measures, $k>1$, into a single object (which is again compact by Tychonoff's theorem), which describes precisely the topology of convergence of \emph{all} multigraph homomorphism densities. A topology which is at least as fine is given by the $\{\delta_p\}_{p \in [1, \infty)}$ distances, as the multigraph homomorphism densities are continuous in these.
This is illustrated in \autoref{fig:different_topologies}.

\begin{figure}[ht]
    \centering
    \begin{tikzpicture}[scale=0.65, every node/.style={transform shape}]
 
        \begin{scope}[shift={(-6,0)}]
            \node (b1) at (0,0) [draw, rectangle,thick,inner sep=46pt,rounded corners=6pt] {$\phantom{xxxxxxx}$};
            \node (topology1) at (0,1.2) {\small \textbf{Topology of cut distance}};
            \node (distance1) at (0,0.55) {\small $\delta_\square$};
            \node (compact1) at (0,0) {\small compact};
            \node (hom1) at (0,-0.55) {\small $\{t((F, \vd), \cdot)\}_{F \text{ simple}, \vd \in \N_0^{v(F)}}$};
            \node (wl1) at (0,-1.17) {\small \textcolor{tumblue}{\textbf{1-WL, MPNNs}}};
        \end{scope}

        \begin{scope}[shift={(0,0)}]
            \node (cont2) at (-3, 0) {\Large $\subseteq$};
            \node (b2) at (0,0) [draw, rectangle,thick,inner sep=46pt,rounded corners=6pt] {$\phantom{xxxxxxx}$};
            \node (topology2) at (0,1.2) {\small \textbf{Topology of WL measures}};
            \node (distance2) at (0,0.55) {\small $-$};
            \node (compact2) at (0,0) {\small relatively compact$^{\ast}$};
            \node (hom2) at (0,-0.55) {\small $\{t((F, \vd), \cdot)\}_{F \text{ multigraph}, \vd \in \N_0^{v(F)}}$};
            \node (wl2) at (0,-1.17) {\small \textcolor{tumblue}{\textbf{$k$-WL, $k$-WNNs, IWNs}}};
        \end{scope}

        \begin{scope}[shift={(6,0)}]
            \node (cont3) at (-3, 0) {\Large $\subseteq$};
            \node (b3) at (0,0) [draw, rectangle,thick,inner sep=46pt,rounded corners=6pt] {$\phantom{xxxxxxx}$};
            \node (topology3) at (0,1.2) {\small \textbf{Topology of \enquote{edit} distance}};
            \node (distance3) at (0,0.55) {\small $\delta_1$};
            \node (compact3) at (0,0) {\small not compact};
            \node (hom3) at (0,-0.55) {\small $-$};
            \node (wl3) at (0,-1.17) {\small $-$};
        \end{scope}

    \end{tikzpicture}
    \captionsetup{margin=2cm}
    \caption{Relevant graphon-signal topologies.}\label{fig:different_topologies}
\end{figure}
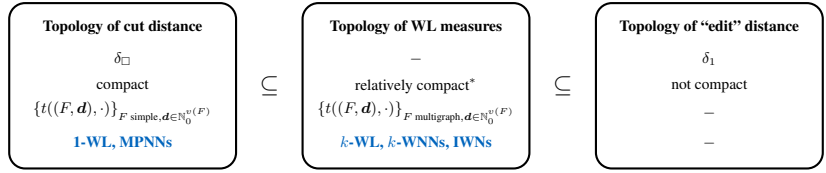

We want to briefly elaborate on $(\ast)$ regarding relative compactness in the middle of \autoref{fig:different_topologies}. For 1-WL via distributions of iterated degree measures defined on a color space $\mathbb{M}$, the assignment $(W,f) \mapsto \nu_{(W,f)}$ is continuous in cut distance, and, as such, the above set can be interpreted as the continuous image of a compact set into a Hausdorff space. Hence, $\{\nu_{(W,f)}\}_{(W,f)} \subset \mathcal{P}(\mathbb{M})$ is again compact, and distinguishing between the two sets is not crucial, as remarked by \citet{boker_fine-grained_2023}. Moreover, the pullback of this map yields a compact topology on the original graphon-signal space $\igraphonsignal{r}$ which corresponds precisely to convergence of simple signal-weighted homomorphism densities w.r.t.\ trees.
For $k$-WL this is slightly more subtle, as the map $(W,f) \mapsto \nu_{(W,f)}^{k\text{-}\WL}$ is \emph{not} continuous in $\delta_\square$ but only in $\delta_1$, and the space $(\igraphonsignal{r}, \delta_1)$ is not compact. Therefore, one cannot apply the same argument to show that $\{\nu_{(W,f)}^{k\text{-}\WL}\}_{(W,f)} \subset \mathcal{P}(\mathbb{M}^k)$ is closed.

The \enquote{natural} tool modeling IWNs and $k$-WL as continuous functions on a compact set would appear to be \emph{probability graphons} \citep{lovasz_large_2012, abraham_probability-graphons_2023}, whose values are probability measures of the edge weights (i.e., $p$-Erdős–Rényi graphs would converge to a constant graphon of value $p \delta_1 + (1-p) \delta_0$, with $\delta_x$ indicating the Dirac measure at $x$ here).

\paragraph{Simple $k$-WL \citep{boker_weisfeiler-leman_2023}.}

\citet{boker_weisfeiler-leman_2023} also proposes a \emph{simple} $k$-WL test for graphons, which iteratively updates the colors using a pre-defined set of \enquote{shift-like} operators that explicitly parametrize \emph{simple} homomorphism densities. They also remark that while simple vs.\ multigraph homomorphism densities yield the same notion of weak isomorphism (see also \autoref{thm:cut_distance_equivalences}), simple $k$-WL is strictly weaker than $k$-WL for finite $k$. Analogous to \autoref{apd:k_wl_idms} and \autoref{apd:refinement_based_gnns}, it would be straightforward to extend the simple $k$-WL test to graphon-signals and to design higher-order WNNs based on the simple $k$-WL test which would be continuous in cut distance. We also note that the parametrization of operators given by \citet[§ 5.2]{boker_weisfeiler-leman_2023} for simple $k$-$\FWL$ can be reduced from $(k+1)^2 2^{k}$ to $(k^2 + 1) 2^k$ operators by resolving ambiguities. While both formulations are \emph{asymptotically} equivalent, for small, feasible values of $k$, this would yield an improvement when implementing this method (20 vs.\ 36 operators to consider for $k=2$). Yet, \citet{boker_weisfeiler-leman_2023} contends that the definition of simple $k$-WL is rather complicated and \enquote{artificial}. In this work, however, we argue that cut distance discontinuity can be \emph{fixed} and is, indeed, of limited practical concern.

\newpage
\section{Experiment Details}\label{apd:experiment_details}

In this section, we provide details for the toy experiments shown in \autoref{fig:graphons}, \autoref{fig:abs_errors}, and \autoref{fig:transferability}. The purpose of these experiments is to empirically validate the findings from \autoref{sec:cut_distance_transferability} regarding the cut distance discontinuity and the transferability of IWNs.

\subsection{Setup}
\paragraph{Data.} We keep the signal fixed at a constant value of $1$ and look at the following 4 different graphons:
\vspace{-3pt}
\begin{itemize}
    \setlength{\itemsep}{0pt}
    \setlength{\parskip}{0pt}
    \item \textbf{Erdős–Rényi (ER):} $W := 1/2$ constant.
    \item \textbf{Stochastic Block Model (SBM):} We take $5$ blocks, each with intra-cluster edge probabilities of $p = 0.8$ and inter-cluster edge probabilites $q = 0.3$.
    \item \textbf{Triangular:} Here, $W(x, y) = (x+y)/2$ (the sets $\{W \geq z\}, z \in [0,1]$ are triangles).
    \item \textbf{Narrow:} $W(x, y) = \exp\left( \sin\left(\frac{(x-y)^2}{\gamma}\right)^2\right)$, with $\gamma := 0.05$.
\end{itemize}
\vspace{-3pt}
From each of these graphons, we sample $100$ simple graphs of each of the sizes $\{200, 400, 600, 800, 1000\}$. We use a \emph{weighted} graph of $1000$ nodes sampled from each graphon as an approximation of the respective graphon itself (convergence of weighted graphs is typically much faster than of simple graphs).
\paragraph{Models.} The following two models are compared:
\vspace{-3pt}
\begin{itemize}
    \setlength{\itemsep}{0pt}
    \setlength{\parskip}{0pt}
    \item \textbf{MPNN}: A standard MPNN with mean aggregation over the \emph{entire} node set, as used in the analyses of \citet{levie_graphon-signal_2023, boker_fine-grained_2023}.
    \item \textbf{2-IWN}: An IWN of order $2$, i.e., with a basis dimension of $7$.
\end{itemize}
\vspace{-3pt}
For both models, we use a simple setup of $2$ layers, a hidden dimension of $16$, and the sigmoid function as activation.
\paragraph{Experiment.} In \autoref{fig:abs_errors}, we plot the absolute errors of the model outputs for the sampled simple graphs in comparison to their graphon limits. Due to the cut distance continuity of MPNNs and the sampling lemma \citep[Theorem 4.3]{levie_graphon-signal_2023}, the MPNN outputs decrease as the graph size grows. While the convergence is slow, it is still significantly faster than the worst-case bound. This is expected as the considered graphons are fairly regular, and the convergence rates can be improved under additional regularity assumptions \citep{le_limits_2023, ruiz_transferability_2023}. The IWN, however, is discontinuous in the cut distance (\autoref{prop:iwn_discontinuity}) and, as such, the errors do not decrease. In \autoref{fig:transferability}, we further plot the difference between the $0.95$ and $0.05$ quantiles of the output distributions on simple graphs for each of the considered sizes. Notably, there are only minor differences visible between the MPNN and the IWN. This validates \autoref{thm:transferability} and suggests that IWNs can have similar transferability properties as MPNNs (also beyond the worst-case bound), and $\delta_p$-continuity suffices for transferability.

\paragraph{Code.} The code used for the toy experiment is provided under \url{https://github.com/dan1elherbst/Higher-Order-WNNs}.

\newpage
\subsection{Pseudocode of Models}

\subsubsection{MPNN}

\begin{algorithm}[H]
\caption{MPNN forward pass}
\begin{algorithmic}[1]
\Procedure{$\mathcal{N}^{\:\!\textsc{mpnn}}$}{$\mW \in [0,1]^{n \times n}, \mX \in \R^{n \times d_0}$}
    \State $\mH \gets \mX$
    \For{\(\step \gets 1\) to \(\tstep\)}
        \State $\mH \gets 1/n \cdot \mW \cdot \mH$
        \State $\mH \gets \textsc{Linear}^{(\step)}_{d_{\step-1} \to d_\step}(\mH)$
        \If{$\step < \tstep$}
            \State $\mH \gets \varrho(\mH)$
        \EndIf
    \EndFor
    \State \Return $1/n \cdot \sum_{i=1}^n \mH_{i \ast}$
\EndProcedure
\end{algorithmic}
\end{algorithm}

\subsubsection{2-IWN}

\begin{algorithm}[H]
\caption{2-IWN forward pass}
\begin{algorithmic}[1]
\Procedure{$\mathcal{N}^{\:\!2\textsc{-iwn}}$}{$\mW \in [0,1]^{n \times n}, \mX \in \R^{n \times d_0}$}
    \State $\mH \gets \textsc{Stack}(\mW_{\ast \ast \varnothing}, \mX_{\ast \varnothing \ast}) \in \R^{n^2 \times d_0}$
    \For{$\step \gets 1$ to $\tstep$}
        \State $\mH \gets \textsc{2-Iwn-Linear}_{d_{\step-1} \to d_\step}^{(\step)}(\mH)$
        \If{$\step < \tstep$}
            \State $\mH \gets \varrho(\mH)$
        \EndIf
    \EndFor
    \State \Return $1/n^2 \cdot \sum_{i,j=1}^n \mH_{i j \ast}$
\EndProcedure
\end{algorithmic}
\end{algorithm}

\begin{algorithm}[H]
\caption{2-IWN linear operators}
\begin{algorithmic}[1]
\Procedure{$\textsc{2-Iwn-Linear}^{(\step)}_{d_{\step-1} \to d_\step}$}{$\mH \in \R^{n^2 \times d_{\step-1}}$}

    \State \textcolor{tumblue}{\textbf{\# operators in $\LE{2}{2}$}}
    \State $\mH_1 \gets 1/n \cdot \textsc{Einsum}(\scriptsize{\texttt{'ijd->id'}}, \mH)_{\ast \varnothing \ast}$
    \State $\mH_2 \gets 1/n \cdot \textsc{Einsum}(\scriptsize{\texttt{'ijd->id'}}, \mH)_{\varnothing \ast \ast}$
    \State $\mH_3 \gets 1/n \cdot \textsc{Einsum}(\scriptsize{\texttt{'ijd->jd'}}, \mH)_{\ast \varnothing \ast}$
    \State $\mH_4 \gets 1/n \cdot \textsc{Einsum}(\scriptsize{\texttt{'ijd->jd'}}, \mH)_{\varnothing \ast \ast}$
    \State $\mH_5 \gets \textsc{Einsum}(\scriptsize{\texttt{'ijd->ijd'}}, \mH)$
    \State $\mH_6 \gets \textsc{Einsum}(\scriptsize{\texttt{'ijd->jid'}}, \mH)$
    \State $\mH_7 \gets 1/n^2 \cdot \textsc{Einsum}(\scriptsize{\texttt{'ijd->d}}, \mH)_{\varnothing\varnothing\ast}$
    \State \textcolor{tumblue}{\textbf{\# combine operators and return linear}}
    \State $\mH \gets \textsc{Stack}(\mH_1, \dots, \mH_7) \in \R^{n^2 \times 7 d_{\step-1}}$
    \State \Return $\textsc{Linear}^{(\step)}_{7d_{\step-1}\to d_\step}(\mH)$
\EndProcedure
\end{algorithmic}
\end{algorithm}

\end{document}